\newcommand{\eps}{\epsilon}
\def\C{\mathcal{C}}
\def\N{\mathbb{N}}
\def\E{\mathbb{E}}
\def\R{\mathbb{R}}
\def\Z{\mathbb{Z}}
\def\P{\mathbb{P}}
\def\eps{\epsilon}
\def\cS{\mathcal{S}}
\def\cA{\mathcal{A}}
\def\cardS{S}
\def\cardA{A}
\def\G{\mathcal{G}}
\def\X{\mathcal{X}}
\def\V{\mathcal{V}}
\def\F{\mathcal{F}}
\def\B{\mathcal{B}}
\newtheorem{theorem}{Theorem}
\newtheorem{proposition}[theorem]{Proposition}
\newtheorem{corollary}[theorem]{Corollary}
\newtheorem{assumption}{Assumption}
\newtheorem{thm}{Theorem}
\newtheorem{lem}{Lemma}
\newtheorem{definition}{Definition}
\algrenewcommand\algorithmicrequire{\textbf{Input:}}
\global\long\def\sgn{\operatorname*{sgn}}%
\global\long\def\diag{\operatorname*{diag}}%
\global\long\def\opnorm#1{\left\Vert #1\right\Vert _{\textup{op}}}%
\title{The Limits of Transfer Reinforcement Learning with Latent Low-rank Structure}
\author{%
  Tyler Sam \\
Cornell University\\
  \texttt{tjs355@cornell.edu} \\
  \And
  Yudong Chen\\
    University of Wisconsin-Madison\\
    \texttt{yudong.chen@wisc.edu}\\
    \And
    Christina Lee Yu \\
    Cornell University\\
  \texttt{cleeyu@cornell.edu} 
}
\begin{document}

\maketitle

\begin{abstract}
Many reinforcement learning (RL) algorithms are too costly to use in practice due to the large sizes $\cardS,\cardA$ of the problem's state and action space. To resolve this issue, we study transfer RL with latent low rank structure. We consider the problem of transferring a latent low rank representation when the source and target MDPs have transition kernels with Tucker rank $(\cardS , d, \cardA )$, $(\cardS , \cardS , d), (d, \cardS , \cardA )$, or $(d , d , d )$. In each setting, we introduce the transfer-ability coefficient $\alpha$ that measures the difficulty of representational transfer. Our algorithm learns latent representations in each source MDP and then exploits the linear structure to remove the dependence on $\cardS , \cardA $, or $\cardS \cardA $ in the target MDP regret bound. We complement our positive results with information theoretic lower bounds that show our algorithms (excluding the ($d, d, d$) setting) are minimax-optimal with respect to $\alpha$.
\end{abstract}

\section{Introduction}
Recently, reinforcement learning (RL) algorithms have exhibited great success on a variety of problems, e.g., video games \cite{shao2019survey}, robotics \cite{rl_robotics}, etc, that require a series of decisions over time. These algorithms can be used on any problem that can be modeled as a Markov decision process (MDP) with state space cardinality $\cardS$, action space cardinality $\cardA$, and horizon $H$. However, RL algorithms' practical application is constrained due to the large amounts of data and computational resources required to train models that outperform heuristic approaches. This limitation arises due to the inefficient scaling of RL algorithms with respect to the size of the state and action spaces. In an online setting where rewards are incurred during the process of learning, we may want to minimize the regret, or the difference between the reward of an optimal policy and the reward collected by the learner. In the finite-horizon online episodic MDP setting with $K$ episodes, any algorithm must incur regret of at least $\Tilde{\Omega}(\sqrt{\cardS \cardA H^3 K})$ \cite{qproveEff}. This regret bound is often too large in practice as many problems modeled as MDPs have large state and action spaces. For example, in the Atari games \cite{mnih2013playing}, the state space is the set of all images as a vector of pixel values. 

One way to remove the dependence on the state or action space in RL algorithms is to leverage existing data or models from similar problems through transfer learning. In the transfer learning setting for MDPs, the learner can interact with or query from multiple source MDPs, which the learner can utilize at low-cost to improve their performance in the target MDP \cite{transfer_learning}. The efficacy of the transfer learning approach depends both on the similarity between the source and the target MDPs, and how information is utilized when transferring knowledge between the source and target.

\tsedit{As feature learning in MDPs is difficult and expensive, reusing learned low rank representations can greatly improve the performance on new problems.} Thus, in this work, we consider the setting when the source and target MDPs exhibit latent low rank structure, and the transfer RL task is to learn latent low rank representation of the state, action, or state-action pair from the source MDPs, and subsequently use it to improve learning on the target MDP.  If the learned representations were perfect, then one could remove all dependence on the state space or action space in the learning task on the target MDP. This approach relies on a critical assumption that the transition kernels of the source and target MDPs have low Tucker rank when viewed as an $\cardS $-by-$\cardS $-by-$\cardA $ tensor. 

The work \cite{agarwal2023provable} studied this problem when the transition kernel is low rank along the first mode, i.e., with Tucker rank $(d,\cardS ,\cardA )$ where $d < \min\{\cardS, \cardA\}$; this is also referred to as the Low Rank MDP model.
We also consider this setting, as well as the complementary settings in which the transition kernel has low Tucker rank along the second or third mode, i.e., Tucker rank $(\cardS , d, \cardA )$ or $(\cardS , \cardS , d)$. In addition, we study the $(d, d, d)$ Tucker rank setting, where we can further exploit the low rank structure to improve performance in the target problem.
\ycedit{(As we elaborate later, up to a factor of $d$, the $(d,d,d)$ setting subsumes the remaining settings with Tucker ranks $(d,d,\cardA), (\cardS, d,d)$ and $(d,\cardS, d).$)}

The different modes are not interchangeable algorithmically as there is a directionality due to the time of the transition between the current and future state. Additionally, the algorithm in \cite{agarwal2023provable} is not computationally tractable as it utilizes an optimization oracle which may not be practical. They assume that the latent representations are within a known finite function class. Our work proposes a computationally efficient algorithm that can handle latent low rank structure on the transition kernel along any of the three different modes.

As the success of a transfer learning approach depends on the similarity between the source and target problems, we introduce the \emph{transfer-ability coefficient} $\alpha$, which quantifies the similarity between the source and target problems under our low rank assumptions. This coefficient, genearlizing the task-relatedness in \cite{agarwal2023provable}, captures the unique challenge in the transfer reinforcement learning setting. 

\paragraph{Our Contributions:} We propose new computationally efficient transfer RL algorithms that admit the desired efficient regret bounds under all low Tucker rank settings $(\cardS , d, \cardA )$, $(\cardS , \cardS , d), (d, \cardS , \cardA )$, and $(d , d , d )$. With enough samples from the source MDPs, we remove the dependence on $\cardS , \cardA , $ or $\cardS \cardA $ in the regret bound on the target MDP. In addition, we introduce the transfer-ability coefficient $\alpha$ that measures the ease of transferring a latent representation from the source MDPs to target MDP. We establish information theoretic lower bounds showing that our algorithms are optimal with respect to $\alpha$ (excluding the $(d, d, d)$ Tucker rank setting). In particular, in the $(d, \cardS , \cardA )$ Tucker rank setting, we achieve the optimal dependence of $\alpha^2$ while \cite{agarwal2023provable} is sub-optimal with a dependence on $\alpha^5$. 
\cycomment{We also do better wrt $S$ and $A$?}

Table~\ref{tab:tr_all} summarizes our main theoretical results, suppressing the dependence on common matrix estimation terms, and compares them with the results from \cite{agarwal2023provable}.\footnote{Since $\bar{\alpha} \geq \alpha $,  moving $\bar{\alpha}$ into the source sample complexity yields a sample complexity that scales with $\alpha^5$. $\Phi$ and $\Gamma$ refer to the function classes that contain the true $\phi$ and $\mu$, respectively. As we make no assumptions on the function class of $\phi$, we replace $\log(|\Phi||\Gamma|)$ with $|S||A|$.} In the appendix, we further consider the setting where one has access to generative models in both the source and target problems, and show that \ycedit{the benefits of transfer learning go beyond alleviating exploration burden.}

\begin{table}
    \centering
    \begin{tabular}{c|ccc}
    \toprule
        & Tucker Rank & Source Sample Complexity & Target Regret Bound  \\
       \midrule
        Theorem \ref{thm:tl_ssd} &$(\cardS , \cardS , d)$ & $\Tilde{O}  \left(d^4 (1+\cardA /\cardS )M^2  H^4\alpha^2  T \right) $ & $\Tilde{O}\left( \sqrt{(dMH)^3 \cardS T}\right)$ \\
        Theorem \ref{thm:tl_sda} &$(\cardS , d, \cardA )$ & $\Tilde{O}  \left(d^4 (\cardS /\cardA +1)M^2  H^4\alpha^2  T \right) $ & $\Tilde{O}\left( \sqrt{(dMH)^3 \cardA T}\right)$ \\
        Theorem \ref{thm:tl_dsa}  & $(d, \cardS , \cardA )$ & $\Tilde{O} \left(\alpha^2 d M^2 T  \cardS \cardA  \right)$& $\Tilde{O}\left( \sqrt{(dMH)^3T}\right)$  \\
        Theorem \ref{thm:tl_sdd} & $(d, d, d)$ & $\Tilde{O} \left( d^{10} (\cardS +\cardA )M^2  H^4 \alpha^4 T\right)$& $\Tilde{O}\left( \sqrt{d^6M^6 H^3T}\right)$\\
        \midrule
        Theorem 3.1 \cite{agarwal2023provable} & $(d, \cardS , \cardA )$ & $\Tilde{O}(\alpha^5\cardS\cardA ^5 H^7 d^5M^2T)$ & $\Tilde{O}(\sqrt{d^3H^4T})$\\
        \bottomrule
    \end{tabular}
    \caption{Theoretical guarantees of our algorithms  alongside results from literature in the different Tucker rank settings. See Table \ref{tab:tr_regret} for the regret bounds of algorithms in each Tucker rank setting with known latent representations.}
    \label{tab:tr_all}
\end{table}

\section{Related Work}

\textbf{RL in the $(d, \cardS , \cardA )$ Tucker rank Setting:}
There exists an extensive literature on Linear MDPs and Low-rank MDPs, which correspond to the $(d, \cardS , \cardA )$ Tucker rank setting. Linear MDPs assume that the dynamics are linear with respect to a known feature mapping $\phi$. This allows one to remove all dependence on the size of the state and action spaces and replace it with the rank $d$ of the latent space \cite{Jin2020ProvablyER, yang2019sampleoptimal, pmlr-v119-yang20h, wang2021sampleefficient}. The work \cite{he2023nearly} proposes an algorithm that admits a regret bound of $\Tilde{O}(\sqrt{d^2H^2T})$, matching the known lower bounds. Our algorithm in the target phase modifies existing linear MDP algorithms to take advantage of the knowledge learned in the source phase. Low Rank MDPs impose the same low-rank structure as linear MDPs but assume that $\phi$ is unknown. To avoid dependence on the size of the state space, many works in this setting construct algorithms that utilize a computationally inefficient oracle to admit sample complexity or regret bounds that scale with the size of the action space and the log of the size of the function class the true low rank representation \cite{flambe, uehara2021representation, moffle, briee}. \cite{pro_rl} incurs sample complexity on $\cardS \cardA $ instead of the size of the function class to learn near-optimal policies in reward free low rank MDPs. 

\textbf{Transfer RL:}
As feature learning in low rank MDPs is difficult, \cite{agarwal2023provable, cheng2022provable, lu2021power} study representational transfer in low rank MDPs. The work most closely related to ours is \cite{agarwal2023provable}, which performs reward-free exploration in the source problem to learn a sufficient representation to use in the target problem in the $(d, \cardS , \cardA )$ Tucker rank setting. They propose a task relatedness coefficient that captures the existing representational transfer RL settings \cite{cheng2022provable}. This coefficient is similar to our transfer-ability coefficient but differs \cyedit{as our low rank assumption is imposed on different modes of the probability transition tensor}. Their algorithm admits a source sample complexity bound of $\Tilde{O}(\cardA^4 \alpha^3 d^5 H^7 K^2 T\log(|\Phi||\Gamma|))$ with a target regret bound of $\Tilde{O}(\sqrt{d^3 \alpha^2 H^4 T})$. Other transfer RL works \cite{cheng2022provable, lu2021power} require reach-ability assumptions to learn the latent representations in the source phase. Similar to transfer RL, reward free RL studies the problem of giving the learner access to only the transition kernel in the first phase, and in the second phase the learner must output a good policy when given multiple reward functions \cite{Jin2020RewardFreeEF, zhang2020taskagnostic, cheng2023improved, pro_rl}. While reward free RL is similar to our setting, enabling the dynamics to differ between the source and target phase greatly increases the difficulty and changes the objective in the source problem, i.e., learning a good latent representation instead of collecting trajectories that sufficiently explore the state space.

\textbf{RL in the $(\cardS , d, \cardA )$ or $(\cardS , \cardS , d)$ Tucker Rank Setting: }
The work \cite{sam2023overcoming} introduces the RL setting in which the transition kernel has Tucker rank $(\cardS , d, \cardA )$ or $(\cardS , \cardS , d)$. Assuming access to a generative model, their algorithms learn near-optimal polices with sample complexities that scale with $\cardS +\cardA $, thereby circumventing the $\cardS \cardA $ lower bound for tabular reinforcement learning \cite{azarLowerBound}. The work \cite{xi2023matrix} consider the same low Tucker rank assumption on the dynamics but in the offline RL setting. The works \cite{Yang2020Harnessing, lrVfa, mlrvfa} empirically show the benefits of combining matrix/tensor estimation with RL algorithms on common stochastic control tasks. \cite{jedra2024lowrank} consider the low-rank bandits setting and use a similar algorithm to ours by estimating the singular subspaces of the reward matrix to improve the regret bound's dependence on the dimension of the problem.

\section{Preliminaries}

We consider the transfer reinforcement learning setting introduced by \cite{agarwal2023provable}, in which
the learner interacts with $M$ finite-horizon source MDPs and one finite horizon episodic target MDP. All MDPs share the same discrete state space $\cS$ with cardinality $\cardS$, discrete action space $\cA$ with cardinality $\cardA$, and finite horizon $H\in \Z_+$.\footnote{It is possible to relax the assumption of sharing the same state/action spaces. In particular, depending on the Tucker rank of the transition kernel, we only require the source and target MDPs share only one of the spaces.} Let $r = \{r_h\}_{h \in [H]}$ be the deterministic unknown reward function of the target MDP, where $r_h: \cS \times \cA \to [0, 1]$. For $m\in[M]$, let $r_m = \{r_{m, h}\}_{h\in [H]}$ be the deterministic unknown reward function for the $m$th source MDP,  where $r_{m, h}: \cS \times \cA \to [0, 1]$.\footnote{Our results can easily be extended to stochastic reward functions.} Let $\mathcal{P} = \{P_h\}$ be the transition kernel for the target MDP, where $P_h(s'|s, a)$ is the probability of transitioning to state $s'$ from state-action pair $(s, a)$ at time step $h$. Similarly, for $m \in [M]$, $\mathcal{P}_m$ is the transition kernel for $m$th source MDP and defined in the same way as $\mathcal{P}$. An agent's policy has the form $\pi = \{\pi_h\}_{h\in [H]}, \pi_h: \cS\to \Delta(\cA)$, where $\pi_h(a|s)$ is the probability that the agent chooses action $a$ at state $s$ and time step $h$.

The value function and action-value ($Q$) function of a policy $\pi$ are the expected reward of following $\pi$ given a starting state and starting action, respectively, beginning at time step $h$. We define these as 
\[
V^\pi_h(s) \coloneqq \E\Big[\textstyle\sum_{t = h}^H r_t(s_t, \pi_t(s_t))| s_h = s \Big], \quad Q^\pi_h(s, a) \coloneqq \E\Big[\textstyle\sum_{t = h}^H r_t(s_t, a_t)| s_h = s, a_h = a \Big]
\]
where $a_t \sim \pi_t(s_t), s_{t+1} \sim P_t(\cdot|s_t, a_t)$. The optimal value function and action value function are $V^*_h(s) \coloneqq \sup_\pi V^\pi_h(s)$ and $ Q^*_h(s, a) \coloneqq \sup_\pi Q^\pi_h(s, a)$ for all $(s, a, h) \in \cS \times \cA \times [H]$. They satisfy the Bellman equations $V^*_h(s) = \max_{a \in \cA}Q^*_h(s, a), Q^*_h(s, a) = r_h(s, a) + E_{s'\sim P_h(\cdot|s, a)}[V^*_{h+1}(s')]$ for all $(s, a, h) \in \cS \times \cA \times [H]$. We define an $\eps$-optimal policy $\pi$ for $\eps > 0$ as any policy that satisfies $V^*_h(s) - V^\pi_h(s) \leq \eps$ for all $s \in \cS, h \in [H]$. Similarly, $Q$ is $\eps$-optimal if $Q^*_h(s, a) - Q_h(s, a) < \eps$ for all $(s, a, h) \in \cS \times \cA \times [H]$. Also, we use the shorthand $P_h V (s, a) = \E_{s'\sim P_h(\cdot|s, a)}[V(s')]$. 

In the transfer RL problem, the learner first interacts with the $M$ source MDPs without access to the target MDP. In this phase, called the source phase, the learner is given access to a generative model in each of the source MDPs. The generative model takes in a state-action pair $(s, a)$ and time step $h$ and outputs $r_h(s, a)$ and an independent sample from $P_h(\cdot|s, a)$ \cite{NIPS1998_99adff45}. Access to a generative model in the source phase is necessary; in particular, it has been shown in  \cite[Theorem 4.1]{agarwal2023provable} that without access to a generative model in the source phase, one cannot learn a near-optimal policy in the target phase using the learned representation. Then, in the next phase, called the target phase, the learner loses access to the source MDPs and interacts only with the target MDP in a standard online model, without access to a generative model. The performance of the learner is evaluated by two metrics: i) the regret in the target MDP, i.e., $Regret(T) = \sum_{k \in [K]} V^*_1(s) - V^{\pi^k}_1(s)$, where $\pi^k$ is the learner's policy in episode $k$, and ii) the number of samples taken in the source phase. 

\subsection{The Tucker Rank of a Tensor}

For the reader's convenience, we first recall the standard definition of the Tucker rank of a tensor.
\begin{definition}[Tucker Rank~\cite{tucker}] 
\label{def:tr}
A tensor $M \in \R^{n_1 \times n_2 \times n_3}$ has Tucker rank $(d_1, d_2, d_3)$ if $(d_1, d_2, d_3)$ is the smallest such that there exists a core tensor $X \in \R^{d_1 \times d_2 \times d_3}$ and matrices $G_i \in \R^{n_i \times d_i}$ for $i \in [3]$ with orthonormal columns such that 
\begin{align*}
&M(a, b, c) = \textstyle\sum_{i \in [d_1], j \in [d_2], k \in [d_3]} X(i, j, k) G_1(a, i)G_2(b, j) G_3(c, k).
\end{align*}
\end{definition}
\ycedit{We note in passing that all our sample complexity and regret bounds in Table~\ref{tab:tr_all} remain valid when $d$ is an upper bound on the exact rank.}
In standard MDPs, the transition kernel can have Tucker rank $(\cardS , \cardS , \cardA )$ as there is no low rank structure imposed on the tensor. In this work, we investigate the benefits of assuming that the transition kernel is low rank along each mode/dimension of the transition kernel, which corresponds to the $(\cardS , d, \cardA ), (\cardS , \cardS , d),$ and $(d, \cardS , \cardA )$ Tucker rank settings. To illustrate the difference in these three settings, we consider $d=1$ and present the corresponding structure of the transition kernel in Equations \eqref{eq:sda}, \eqref{eq:ssd}, and \eqref{eq:dsa}, respectively:
\begin{align}\label{eq:sda}
\text{Tucker rank $(\cardS , 1, \cardA )$:}\qquad P(s'|s, a) &= \mu(s)\phi(s', a), \\
\label{eq:ssd}
\text{Tucker rank $(\cardS , \cardS, 1 )$:}\qquad P(s'|s, a) &= \mu(a)\phi(s', s), \\
\label{eq:dsa}
\text{Tucker rank $(1, \cardS , \cardA )$:}\qquad P(s'|s, a) &= \mu(s')\phi(s, a),
\end{align}
for some functions $\mu,\phi$ defined on the appropriate domains.
Figure \ref{fig:tr} pictorially displays the $(\cardS , \cardS, d )$ Tucker rank assumption on the transition kernel. While the three settings may seem similar, the $(\cardS , d, \cardA )$ and $(\cardS , \cardS , d)$ Tucker rank settings differ significantly from the $(d, \cardS , \cardA )$ Tucker rank setting (i.e., Low Rank MDP). In particular, the $(d, \cardS , \cardA )$  setting assumes low rank structure across time, i.e., the current state and action vs.\ the next state, while the other Tucker rank assumptions impose low rank structure between the current state and action. These differences are significant as each setting requires different algorithms in the target problem and construction for the lower bounds.

Additionally, we analyze the $(d , d , d )$ Tucker rank setting, where  the transition kernel fully factorizes, in which case  the next state, current state, and action all have separate low rank representations. This assumption allows us to further exploit the low rank structure to improve the regret bound in the target phase when compared to the $(\cardS , d, \cardA )$ and $(\cardS , \cardS , d)$ Tucker rank settings. 
Note that this $(d , d , d)$ Tucker rank assumption is equivalent to imposing low rank along any two modes of the transition kernel, up to factors of $d$ in the sample complexity and regret bounds. For example, for an $(\cardS , d, d)$ Tucker rank transition kernel $P$, one can factor the $\cardS  \times d \times d$ core tensor into a $\cardS \times d$ orthonormal matrix and $d^2 \times d\times d$ core tensor, which implies $P$ has Tucker rank $(d^2, d, d)$. 

While the sample complexity and regret bound of each algorithm differ in each Tucker rank setting, one should choose which algorithm to use based on the low rank structure inherent in their problem. \tsedit{For example, in the $(\cardS , \cardS , d)$ Tucker rank setting, one assumes that each action has a low rank representation. One potential application of this setting is a personal movie recommender system. Specifically, the states are user's different states of mind while each action chooses a movie from a large collection. As movies in the same genre evoke similar responses, each action can be mapped to a lower dimensional space with the axis being each genre. In contrast, each state of mind is distinct. } In the $(\cardS , d, \cardA )$ Tucker rank setting, one assumes that the state one transitions from has a low rank representation. The $(d, d, d)$ Tucker rank setting combines the two previous assumptions and asserts that current states and actions have separate low dimensional representations while low rank MDPs with $(d, \cardS , \cardA )$ Tucker rank assume that each state-action pair has a joint low rank representation. For sake of brevity, we will only present our assumption and algorithm for the $(\cardS , \cardS, d )$ Tucker rank setting in the main body of this work and defer the other settings to the appendix. 

\subsection{Low Tucker Rank MDPs}
\label{sec:assumptions}

We first define the incoherence $\mu$ of a matrix, which is used in the regularity conditions of our Tucker rank assumption. 
\begin{definition}[Incoherence]\label{def:inco}
    Let $X \in \R^{m \times n}$ have rank $d$ with singular value decomposition $X = U\Sigma V^\top$ with $U \in \R^{m \times d}, V\in \R^{n \times d}$. Then, $X$ is $\mu$-incoherent if  for all $i \in [m], j \in [n]$, $    \|U(i, :)\|_2 \leq \sqrt{d\mu/m}$ and $ \|V(j, :)\|_2 \leq \sqrt{d\mu/n}.$
\end{definition}

A small $\mu$ guarantees that the signal in $U$ and $V$ is sufficiently spread out among the rows in those matrices. The notion of incoherence is pervasive in the low-rank estimation literature~\cite{candes2012exact,candes2007sparsity}.

Introduced in \cite{sam2023overcoming}, we assume that reward functions are low rank and the transition kernels have low Tucker rank in each of the source and target MDPs. The main benefit of Assumption \ref{asm:tr_s} is that for any value function $V$, the Bellman update of any value function, $Q_h = r_h + P_hV$, is a low rank matrix due to Proposition 4 \cite{sam2023overcoming}. 

\begin{assumption}\label{asm:tr_s}
In each of the $M$ source MDPs, the reward functions have rank $d$, and the transition kernels have Tucker rank $(\cardS , \cardS , d)$. The target MDP has reward function with rank $d'$ and transition kernels with Tucker rank $(\cardS , \cardS , d')$ where $d' \leq dM$. Thus, there exists $\cardS  \times \cardS  \times d$ tensors $U_{m, h}, $ $\cardS  \times \cardS  \times d'$ tensors $ U_h$ s, $\cardA  \times d$  $\mu$-incoherent matrices $G_{m, h}$ with orthonormal columns,  $\cardA  \times d'$ $\mu$-incoherent matrices $ G_h$ with orthonormal columns, and $\cardS  \times d$ matrices $W_{m, h},$ $\cardS  \times d'$ matrices $ W_{ h}$ such that 
\begin{align*}
P_{m, h}(s'|s, a) = \textstyle\sum_{i \in [d]} G_{m, h}(a, i)U_{m, h}(s',s, i),& \quad r_{m, h}(s, a)
= \textstyle\sum_{i \in [d]} G_{m, h}(a, i) W_{m, h}(s, i),\\
P_{h}(s'|s, a) = \textstyle\sum_{i \in [d']} G_h(a, i)U_h(s', s, i) ,& \quad 
r_{ h}(s, a)= \textstyle\sum_{i \in [d']} G_h(a, i) W_{ h}(s, i)
\end{align*}
where $\|\sum_{s' \in S} g(s') U_{m, h}(s', s, :)\|_2$,$ \|\sum_{s' \in S}g(s') U_{ h}(s', s,:)\|_2, \|W_{h}(s)\|_2, \|W_{m, h}(s)\|_2 \leq \sqrt{\cardA /(d\mu)}$ for all  $s \in \cS$, $a \in \cA$, $h\in[H]$, and $m\in [M]$ for any function $g:S \to [0, 1]$. 
\end{assumption}
The latent representations of the reward function and transition kernel are not unique under our assumption. Figure \ref{fig:tr} visualizes the low rank structure Assumption \ref{asm:tr_s} imposes on the transition kernel. 
\begin{figure}
\centering
\begin{minipage}[t]{.45\textwidth}
  \centering
\includegraphics[width =\textwidth]{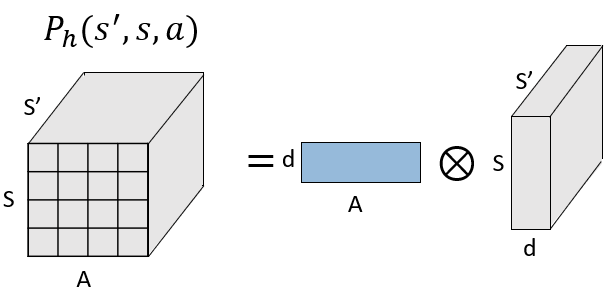}
\caption{Transition kernel with Tucker rank  $(\cardS , \cardS, d )$}
\label{fig:tr}
\end{minipage}%
\quad
\begin{minipage}[t]{.45\textwidth}
  \centering
\resizebox{\textwidth}{!}{%
\begin{circuitikz}
\tikzstyle{every node}=[font=\small]
\draw [](7.25,8.5) to[short] (9,8.5);
\draw [](7.5,10) to[short] (7.5,8.25);
\draw [->, >=Stealth] (7.5,8.5) -- (7.5,10);
\node [font=\tiny] at (7.5,10.25) {$G_T = [0, 1]$};
\draw [->, >=Stealth] (7.5,8.5) -- (9,8.5);
\node [font=\tiny] at (9.75,8.5) {$G_1 = [1, 0]$};
\draw [->, >=Stealth] (7.5,8.5) -- (9,8.75);
\node [font=\tiny] at (10.45,8.875) {$G_2 = [\sqrt{1 - 1/\alpha^2}, 1/\alpha]$};
\end{circuitikz}
}%
\caption{A hard transfer learning example where one estimates $G_T$ with $G_1$ and $G_2$}
\label{fig:alpha}
\end{minipage}
\end{figure}
While Assumption \ref{asm:tr_s} states that the source MDPs and target MDPs have latent low rank structure, the assumption does not guarantee that low rank representations are similar. Additionally, we allow target MDP to be more complex than a single source MDP (at most Tucker rank $(\cardS , \cardS , dM) $ compared to Tucker rank $(\cardS , \cardS , d)$, respectively). Thus, to allow for transfer learning, we assume that for each step $h\in[H]$, the space spanned by the target latent factors is a subset of the space spanned by the latent factors from all the source MDPs.

\begin{assumption}\label{asm:transfer_s}
Suppose Assumption \ref{asm:tr_s} holds. The target MDP latent factors $G_h$ and source MDP latent factors $G_{m, h}$ satisfy for all $h \in [H]$, $\mathrm{Span}(\{G_h(:, i)\}_{i \in [d']}) \subseteq \mathrm{Span}(\{G_{m, h}(:, i)\}_{i \in [d], m \in [M]}).$
\end{assumption}

\ycedit{Under Assumption \ref{asm:transfer_s}, it is possible that no single source MDP captures the target MDP, but the union of all source MDP does. It is also possible that the source MDPs span a strictly larger subspace than the target MDP.}

Thanks to the above assumptions, by estimating the latent features $\{G_{m, h}\}$  of the source MDPs, we can construct an approximation of the target features $\{G_h\}$ \textbf{without interacting} with the target MDP, which is essential to transfer learning. \tsedit{Assumption \ref{asm:tr_s} only upper bounds the rank of $Q^*_{m, h}$ by $d$.} However, to learn each source latent factor, one needs to obtain a good estimate of a rank $d$ matrix with at least one large entry. Therefore, we assume that the optimal $Q$-functions in each source MDP are full rank with the largest entry lower bounded by a constant. 

\begin{assumption}\label{asm:full_rank}
    For each $m \in [M], h \in [H]$, the optimal Q-function of the $m$-th source MDP at step $h$ satisfies $rank(Q^*_{m, h}) = d$ and $\|Q^*_{m, h}\|_\infty \geq C$ for some $C \in \R^{+}$.
\end{assumption}

This assumption allows one to learn the near-optimal $Q$-functions from noisy samples of the source MDPs. Assumption~\ref{asm:full_rank} and its variants are common in the literature of noisy low-rank estimation~\cite{abbeEntrywise,chen2019noisy}.
Without the above assumption (the rank of $Q^*$ is strictly less than $d$), then one cannot learn every latent factor $G_{m, h}(:, i)$ since it may have a zero singular value, and thus using the approximate feature mapping would result in regret linear in $T$. Similarly,  $\|Q^*\|_\infty \geq C$ ensures that some entries of $Q^*$ are sufficiently large and prevents the noise from dominating the low rank signal that one learns in the source phase.



\section{Transfer-ability}
\tsedit{

Our transfer-ability coefficient is motivated by the following observation. Under Assumption \ref{asm:transfer_s}, we can construct each target latent factor $G_h$ with a linear combination of the source latent factors $G_{m, h}$. The magnitudes of the coefficients correspond to the difficulty in estimating the target latent factor because the estimation error of $G_{m, h}$ is amplified by its coefficient. Thus, when these coefficients are too large, transfer learning is essentially impossible; one example is when the source latent factors are almost identical or parallel to each other while the target latent factor is orthogonal to the source latent factors (see Figure \ref{fig:alpha}).

However, these coefficients are not unique and are a function of the source and target latent factors. Furthermore, as the source MDPs latent factors of $ Q^*_{m, h}$ and target latent factors $\{G_h\}$ are not unique, each choice of latent factor results in a different set of coefficients. Thus, we introduce $\B_h$ as the set containing all such coefficients given the subspaces from the target and source MDPs. Let $\mathcal{G}_{S, h}$ be the set of all $\R^{\cardS  \times d}$ orthonormal latent factor matrices with columns that span the column space of $Q^*_{m, h}$ for all $m \in [M]$, and let $\mathcal{G}_{T, h}$ be the set of all $\R^{\cardS  \times d'}$ orthonormal latent factor matrices with columns that span the column space of $ P_h(s')$ for all $s' \in \cS$.
Then, we define the set $\B_h(\mathcal{G}_{T, h}, \mathcal{G}_{S, h})$ to contain all such coefficients, i.e., 
$B_h \in \R^{d', d, M}$ and $B_h \in \B_h(\mathcal{G}_{T, h}, \mathcal{G}_{S, h})$ if there exists $G_h \in \mathcal{G}_{T, h}, G_{m, h} \in \mathcal{G}_{S, h}$  such that for all $i \in [d'],  h \in [H]$,
$
G_h(\cdot, i) = \textstyle\sum_{j \in [d], m \in [M]} B_h(i, j, m) G_{m, h}(\cdot, j).
$
Now, we define $\alpha$, which precisely measures the challenge involved in transferring the latent representation. }

\begin{definition}[Transfer-ability Coefficient]\label{def:tc_s}
Given a transfer RL problem that satisfies Assumptions~\ref{asm:tr_s}, \ref{asm:transfer_s}, and \ref{asm:full_rank}, with the definition of $\B_h$ above, we define $\alpha$ as 
\[
\alpha \coloneqq \max_{h \in [H]}\min_{B \in \B_h(\mathcal{G}_{T, h}, \mathcal{G}_{S, h})} \max_{i \in[d'], j \in [d], m\in [M]} |B(i, j, m)|.
\]
\end{definition}
Note that  the minimum is taken over all latent factor matrices. When there is only one source MDP, $\alpha$ is small, i.e., less than or equal to one, because the columns of $G$ form an orthonormal basis of the space containing the target latent factor. In the best case, $\alpha$ is  $1/(dM)$ (see Appendix \ref{app:alpha}). On the other hand, when there are multiple source MDPs, $\alpha$ can be arbitrarily large as in Figure \ref{fig:alpha}; in this case, the estimation error of $G_1$ and $G_2$ are amplified by $\alpha$ when estimating $G_T$ using $G_1$ and $G_2$. We remark that adding more source MDPs or increasing the rank of $Q^*_{m, h}$ will not increase $\alpha$ as one can use the same set of coefficients without the additional source MDP or larger subspace of $Q^*_{m, h}$. See Appendix \ref{app:alpha} for more discussion on $\alpha$.

Our transfer-ability coefficient is similar to the quantity $\alpha_{\max}$ defined in \cite{agarwal2023provable}. While these terms both quantify the similarity between the source and target MDPs, in our setting, we allow for the source MDPs' state latent factors, $\{G_{m, h}\}$, to differ from target MDP's state latent factors while \cite{agarwal2023provable} assumes that all MDPs share the same latent representation $\phi$. Our setting is more general than the one in \cite{agarwal2023provable} even when assuming the same Tucker rank assumption.

\subsection{Information Theoretic Lower Bound}
To formalize the importance of $\alpha$, we prove a lower bound that shows that a dependence on $\alpha$ in the sample complexity of the source phase is necessary to benefit from transfer learning. 
Before proving our main result, we first present an intermediate lower bound to provide intuition on how $\alpha$ measures the difficulty in transferring a learned representation.

\begin{thm}\label{thm:lb_ssd}
 There exist two transfer RL instances such that (i) they satisfy Assumptions \ref{asm:tr_s} and \ref{asm:transfer_s}, (ii) they cannot be distinguished without observing $\Omega(\alpha^2)$ samples in the source phase, and (3) they have target action latent features that are orthogonal to each other.
\end{thm}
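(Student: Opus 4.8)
The plan is to exhibit two explicit $(\cardS,\cardS,d)$ instances that are identical except for a sign in one source transition feature, chosen so that (a) the feature is perturbed by only $\Theta(1/\alpha)$, yet (b) the induced target representations are forced to be orthogonal by the $\alpha$-amplification inherent in Definition~\ref{def:tc_s}. Concretely, I take $M=2$ source MDPs and, for concreteness, $\cardA=3$, working in the plane $\Pi=\one^{\perp}\subset\R^{\cardA}$ with a fixed orthonormal basis $\{f_1,f_2\}$; the direction $\one/\sqrt{\cardA}$ serves only to normalize transition rows. Set $\sin\theta=1/\alpha$. In both instances, source $1$ has action feature $[\,\one/\sqrt{\cardA},\,f_1\,]$, while source $2$ has action feature $[\,\one/\sqrt{\cardA},\,\cos\theta\,f_1+\epsilon\sin\theta\,f_2\,]$, where $\epsilon=+1$ in instance $\mathcal I_+$ and $\epsilon=-1$ in instance $\mathcal I_-$. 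I keep all source reward functions and all source $U$-tensors identical across the two instances, placing the $\epsilon$-dependence only in the (stochastic) transition kernel of source $2$; this is essential, since deterministic rewards differing across instances would be separated by a single query. Finally the target action feature is $[\,\one/\sqrt{\cardA},\,T_\epsilon\,]$ with $T_+=\tfrac1{\sqrt2}(f_1+f_2)$ and $T_-=\tfrac1{\sqrt2}(f_1-f_2)$, so that the nontrivial target features obey $\langle T_+,T_-\rangle=0$, giving condition (3).

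Next I would verify the structural requirements. For Assumption~\ref{asm:tr_s}, the all-ones column guarantees $\one\in\mathrm{col}(G)$, so for each $s$ one can choose $\sum_{s'}U(s',s,\cdot)$ to make every transition row a valid distribution while preserving the prescribed action feature; the $U$-tensors and $W$-matrices are then scaled to meet the stated norm and incoherence bounds, giving source Tucker rank $(\cardS,\cardS,2)$ and target Tucker rank $(\cardS,\cardS,2)$ with $d'=2\le dM$. For Assumption~\ref{asm:transfer_s}, the span of each instance's source features is $\{\one\}\oplus\Pi=\R^{\cardA}$, which contains the two-dimensional target span. To keep $\alpha$ well-defined I also pick $W$ so that each $Q^{*}_{m,h}$ is full rank $d$ with $\|Q^{*}_{m,h}\|_\infty\ge C$ (Assumption~\ref{asm:full_rank}). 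For the coefficient itself, the two source interesting directions $f_1$ and $\cos\theta f_1+\epsilon\sin\theta f_2$ differ by angle $\theta$ with $\sin\theta=1/\alpha$, so expressing $T_\epsilon$ in this nearly-parallel pair forces a coefficient of magnitude $\tfrac1{\sqrt2\,\sin\theta}=\Theta(\alpha)$ on source $2$, whereas the $\one$ column contributes only an $O(1)$ coefficient; since this angular deficit is invariant under the choice of orthonormal bases in $\mathcal G_{S,h}$ and $\mathcal G_{T,h}$, the minimum in Definition~\ref{def:tc_s} remains $\Theta(\alpha)$, so both instances have transfer-ability $\Theta(\alpha)$.

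The crux is the indistinguishability claim (ii). Because the two instances share rewards and differ only in the source-$2$ transition feature by $\Theta(1/\alpha)$, I would fix the base $U$-tensor so the transition probabilities stay bounded away from $0$ and $1$ and differ entrywise by $O(1/\alpha)$; then for every query $(s,a,h)$ the per-sample divergence between the two source kernels is $\mathrm{KL}\big(P^{+}_{h}(\cdot|s,a)\,\Vert\,P^{-}_{h}(\cdot|s,a)\big)=O(1/\alpha^2)$. Since a generative-model learner's transcript of $n$ possibly adaptive queries has total KL at most $n\cdot\max_{s,a,h}\mathrm{KL}=O(n/\alpha^2)$ by the chain rule, Pinsker (or Bretagnolle--Huber) gives total variation bounded away from $1$ unless $n=\Omega(\alpha^2)$, so no test separates $\mathcal I_+$ from $\mathcal I_-$ with fewer than $\Omega(\alpha^2)$ source samples. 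I expect the main obstacle to be simultaneous realizability: engineering $U,W$ so that the kernels are genuine probability distributions with exactly the prescribed action features, so that the only cross-instance difference is a $\Theta(1/\alpha)$ stochastic perturbation in one source (preventing rewards from leaking the identity), and so that this same $\Theta(1/\alpha)$ perturbation is amplified into orthogonal target features—balancing the row-normalization constraint against the $\alpha$-amplification is where the construction must be checked most carefully.
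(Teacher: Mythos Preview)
Your geometric picture—two nearly-parallel source directions separated by angle $\theta$ with $\sin\theta=1/\alpha$, forcing $\Theta(\alpha)$ coefficients to hit a target at $45^\circ$—is exactly the mechanism behind Figure~\ref{fig:alpha} and the paper's construction. The fatal gap is your plan to keep the source rewards identical while letting source~$2$'s action factor $G_{2,h}$ depend on $\epsilon$. Assumption~\ref{asm:tr_s} forces $r_{2,h}(s,a)=\sum_i G_{2,h}(a,i)W_{2,h}(s,i)$ with $r_{2,h}$ of rank exactly $d$; viewing $r_{2,h}$ as an $\cardS\times\cardA$ matrix, its row space must then \emph{equal} $\mathrm{col}(G_{2,h})$. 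In your construction $\mathrm{col}(G_{2,h}^{+})=\mathrm{span}\{\one/\sqrt{\cardA},\,\cos\theta f_1+\sin\theta f_2\}$ and $\mathrm{col}(G_{2,h}^{-})=\mathrm{span}\{\one/\sqrt{\cardA},\,\cos\theta f_1-\sin\theta f_2\}$ are distinct whenever $\sin\theta\neq 0$, so an $r_{2,h}$ that is the same in both instances can have row space at most in the intersection $\mathrm{span}\{\one/\sqrt{\cardA}\}$, i.e.\ rank $\le 1<d=2$. Thus either the deterministic reward leaks $\epsilon$ in one query, or Assumption~\ref{asm:tr_s} is violated. This is not a bookkeeping issue you can engineer around: at \emph{any} step $h$ where $G_{2,h}$ varies with $\epsilon$, the rank-$d$ reward at that step pins down $\mathrm{col}(G_{2,h})$ and hence reveals $\epsilon$. (A secondary point: your two target factor matrices share the column $\one/\sqrt{\cardA}$, so the target \emph{subspaces} are not orthogonal; only the $T_\epsilon$ columns are.)

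The paper avoids this by taking $d=1$, $H=1$, and \emph{allowing the source $Q^{*}$ functions (i.e.\ the rewards, since $H=1$) to differ across the two instances by $\Theta(1/\alpha)$}. It then adopts a stochastic observation model in which a source query returns a Bernoulli sample with mean $(Q^{*}_{m,1}(s,a)+1)/2$; distinguishing Bernoulli parameters $\Theta(1/\alpha)$ apart requires $\Omega(\alpha^2)$ samples by standard testing bounds. If you want a transition-only argument under deterministic rewards, you would have to relax the exact-rank-$d$ reward requirement of Assumption~\ref{asm:tr_s} or otherwise decouple the reward's row space from $\mathrm{col}(G_{2,h})$—neither of which is available under the stated assumptions.
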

\tsedit{Learning a latent representation $G$ that is orthogonal to the true representation is no better than randomly guessing a representation $G'$ in the source phase; using either $G$ or $G'$ in the target phase incurs regret linear in $T$. Thus, Theorem \ref{thm:lb_ssd} states that without observing $\Omega(\alpha^2)$ samples in the source MDPs, one cannot learn a good policy in the target phase with $G$ or $G'$, and one should disregard the information from the source MDPs and use a non-transfer learning RL algorithm to achieve regret bound that scales with $\sqrt{T}$. Therefore, $\Omega(\alpha^2)$ samples are required to benefit from transfer learning.}

To prove Theorem \ref{thm:lb_ssd}, we construct two $(\cardS, \cardS, 1)$ transfer RL problems that satisfies Assumption \ref{asm:transfer_s} with similar source action latent representations $G^i_{m, h} \in \mathcal{G}$ but orthogonal target action latent representations $G^i_T$ for $i \in \{1, 2\}$. To avoid learning the incorrect representation, one must identify the transfer RL problem by differentiating the $Q^*$s. By construction, the $Q^*$s for the two transfer RL problems differ by at most $1/\alpha$ entrywise, so one must observe $\Omega(\alpha^2)$ samples in the source problem from standard hypothesis testing lower bounds \cite{NN_book}. Observing less samples in the source problem results in an estimate orthogonal to the true target representation with constant probability. 
See Appendix \ref{app:tr_ssd} for the formal proof. One can easily prove a similar result in the $(\cardS , d , \cardA)$ and $(d, \cardS , \cardA )$ Tucker rank setting with similar constructions by switching the states and actions (see Appendices \ref{app:tr_sda} and \ref{app:tr_dsa}).  

\section{Algorithm}
We now present our algorithm that achieves the optimal dependence on $\alpha$. In the source phase, our algorithm can use any algorithm that learns $\eps$-optimal $Q$ functions on each source MDP. As the error bound on the estimated feature representation depends on the incoherence of $Q^*_{m, h}$, we use LR-EVI, which combines empirical value iteration with low rank matrix estimation \cite{sam2023overcoming}.  After obtaining estimates $\bar{Q}_{m, h}$ of $Q^*_{m, h}$, our algorithm computes the singular value decompositions (SVDs) of each $\bar{Q}_{m, h}$ to construct the latent feature representation $\Tilde{G}$ by scaling and concatenating the singular vectors. Then, in the target phase, our algorithm deploys a modification of LSVI-UCB \cite{Jin2020ProvablyER} adapted to our Tucker rank setting called LSVI-UCB-(S, S, d) using the  $\Tilde{G}$.  For ease of notation, we define $T_{k, h}^s = \{k' \in [k] | s_h^k = s \} $, which is the set of episodes before $k$, in which one was at $s$ at time step $h$.

\begin{algorithm}
\caption{Source Phase}\label{alg:src_ssd}
\begin{algorithmic}[1]
\Require $\{N_h\}_{h \in [H]}$
\For{ $m \in [M]$}
\State Run LR-EVI($\{N_h\}_{h \in [H]})$ on source MDP $m$ to obtain $\bar{Q}_{m, h}$.
\State Compute the singular value decomposition of $\bar{Q}_{m, h} = \hat{F}_{m, h} \hat{\Sigma}_{m, h} \hat{G}_{m,h}^\top$ .
\EndFor
\State Compute feature mapping $\hat{\Tilde{G}} = \{\hat{\Tilde{G}}_h \}_{h \in [H]}$ with $\hat{\Tilde{G}}_h  = \sqrt{\frac{\cardA }{d\mu}} \begin{bmatrix}\hat{G}_{1, h} & \ldots & \hat{G}_{M, h}  \end{bmatrix}. $
\end{algorithmic}
\end{algorithm}

\begin{algorithm}
\caption{Target Phase: LSVI-UCB-(S, S, d)}\label{alg:target_s}
\begin{algorithmic}[1]
\Require $\lambda, \beta_{k, h}^s, \hat{\Tilde{G}} $
\For{ $k \in [K]$}
\State Receive initial state $s_1^k$.
\For {$h = H, \ldots, 1$}
\For {$s \in S$} 
\State \texttt{/* Compute Gram matrix $\Lambda_h^s$ for each state $s$ */}
\State Set $\Lambda_h^s \xleftarrow{} \sum_{t \in T_{k-1, h}^s} \hat{\Tilde{G}}_h(a_h^t)\hat{\Tilde{G}}_h(a_h^t)^\top+ \lambda \textbf{I}$.
\State \texttt{/* Estimate $w_h^s$ via regularized least squares */}
\State Set $w_h^s \xleftarrow{} (\Lambda_h^s)^{-1} \sum_{t \in T_{k-1, h}^s} \hat{\Tilde{G}}_h(a_h^t)$ $\Big[r_h(s, a_h^t)+ \max_{a' \in A} Q_{h+1}(s_{h+1}^t, a') \Big]$. 
\EndFor
\State \texttt{/* Estimate $Q^*$ via the low rank structure with optimism */}
\State Set $Q_h(\cdot, \cdot) \xleftarrow[]{} \min\Big(H, \langle w_h^\cdot, \hat{\Tilde{G}}_h(\cdot)\rangle + \beta_{k, h}^\cdot \sqrt{ \hat{\Tilde{G}}_h(\cdot)^\top (\Lambda_h^\cdot)^{-1} \hat{\Tilde{G}}_h(\cdot)}\Big)  $. 
\EndFor
\For{$h \in [H]$}
\State Take action $a_h^k \xleftarrow[]{} \max_{a \in A} Q_h(s_h^k, a), $ and observe $s_{h+1}^k$.
\EndFor
\EndFor
\end{algorithmic}
\end{algorithm}

LSVI-UCB-(S, S, d) uses the same mechanisms as LSVI-UCB, but we compute coefficients and Gram matrices for each action. Specifically, we update the weights $w_h^s$ with the solution to the regularized least squares problem, 
\begin{align*}
w_h^s &\xleftarrow[]{} \arg\min_{w \in \R^d} \textstyle\sum_{t \in T_{k, h}^s} (r_h(s, a_h^t) + \max_{a' \in \cA} Q_{h+1}(s_{h+1}^t, a') - w^\top \Tilde{G}_h(a_h^t))^2 +  \lambda \|w\|_2^2,
\end{align*}
using the reward when $s$ was observed and add an exploration bonus $\beta_{k, h}^s \sqrt{ \hat{\Tilde{G}}_h(a)^\top (\Lambda_h^s)^{-1} \hat{\Tilde{G}}_h(a)}$ to our estimate of $Q^*$, which is common in linear MDP/bandits algorithms \cite{Jin2020ProvablyER, lin_band}. Multiplying the latent factors by $ \sqrt{\cardA /(d\mu)}$ to construct $\hat{\Tilde{G}}$ ensures that the feature mapping and coefficients are of similar magnitude, which is crucial in adapting the mechanisms of LSVI-UCB to our setting. 

While one may wonder why our modification of LSVI-UCB is necessary, Algorithm \ref{alg:target_s} improves the regret bound by a factor $\sqrt{\cardS }$ compared to using any linear MDP algorithm off the shelf; reducing our setting to a linear MDP results in using a feature mapping with dimension $d\cardS $. Thus, using any linear MDP algorithm results in a target regret bound of $\Tilde{O}(\sqrt{d^2\cardS^2H^2T})$ \cite{zhou2021nearly}. However, our algorithm shaves off a factor of $\sqrt{\cardS }$ as our algorithm computes $\cardS $ copies of $d \times d$ dimensional gram matrices in contrast to the $d^2\cardS ^2$ dimensional gram matrices used in linear MDP algorithms.

\section{Theoretical Results}
In this section, we prove our main theoretical results that show with enough samples in the source problem, our algorithms admit regret bounds that are independent of $\cardS , \cardA ,$ or both $\cardS $ and $\cardA $ in the target problem. We first state our main result in the $(\cardS , \cardS, d )$ Tucker rank setting.

\begin{thm}\label{thm:tl_ssd}
Suppose Assumptions \ref{asm:tr_s}, \ref{asm:transfer_s}, and \ref{asm:full_rank} hold, and set $\delta \in (0, 1)$. Furthermore, assume that, for any $\eps \in (0, \sqrt{\cardS H/T})$,  $Q_{m, h} = r_{m, h} + P_{m, h} V_{m, h+1}$ has rank $d$ and is $\mu$-incoherent with condition number $\kappa$ for all $\eps$-optimal value functions $V_{h+1}$. Let $\lambda = 1$ and $\beta_{k, h}^s$ be a function of $d, H, |T_{k, h}^s|, |S|, M, T$.  Then, for $T \geq  \frac{\cardS }{ \alpha^2}$, using at most $\Tilde{O} ( d^4 \mu^5\kappa^4(1+\cardA /\cardS )M^2  H^4\alpha^2  T ) $
samples in the source problems, our algorithm has regret 
$\Tilde{O}( \sqrt{(dMH)^3 \cardS T})$
with probability at least $1 - \delta$.  
\end{thm}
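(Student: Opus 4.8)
The plan is to treat the target phase as learning in a \emph{misspecified} linear MDP whose feature map, for each fixed state $s$, is the learned action embedding $a\mapsto\hat{\Tilde G}_h(a)\in\R^{dM}$, and to control two quantities: the misspecification level $\zeta$, bounded in terms of the source estimation accuracy $\eps$ and the transfer-ability coefficient $\alpha$; and the regret of LSVI-UCB-(S, S, d) as a function of $\zeta$, $\cardS$, $dM$, and $H$. The two are then balanced by choosing $\eps$ (equivalently the inputs $\{N_h\}$) just small enough that the misspecification contribution does not dominate $\Tilde O(\sqrt{(dMH)^3\cardS T})$. Concretely, I would first invoke the guarantee of LR-EVI \cite{sam2023overcoming}: run on each source MDP with enough generative-model calls to return $\bar Q_{m,h}$ with $\infnorm{\bar Q_{m,h}-Q^*_{m,h}}\le\eps$, which costs $\Tilde O(\mathrm{poly}(d,\mu,\kappa,H)(\cardS+\cardA)/\eps^2)$ samples per source MDP (this is where the theorem's hypothesis that $r_{m,h}+P_{m,h}V_{m,h+1}$ is rank-$d$, $\mu$-incoherent, and of condition number $\kappa$ for every $\eps$-optimal $V_{m,h+1}$ is used — it keeps the matrix-estimation subroutine valid all along the value-iteration trajectory). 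Then Assumption~\ref{asm:full_rank} ($\mathrm{rank}\,Q^*_{m,h}=d$ and $\infnorm{Q^*_{m,h}}\ge C$) together with incoherence lower-bounds $\sigma_d(Q^*_{m,h})$ by $\Omega(C\sqrt{\cardS\cardA}/(d\mu\kappa))$, so that $\opnorm{\bar Q_{m,h}-Q^*_{m,h}}\le\sqrt{\cardS\cardA}\,\eps$ is a small perturbation and a Wedin/Davis--Kahan bound places $\mathrm{col}(\hat G_{m,h})$ within principal-angle distance $\Tilde O(d\mu\kappa\eps/C)$ of $\mathrm{Span}(\{G_{m,h}(:,i)\})$; a row-wise ($\ell_{2,\infty}$) refinement of this estimate, of the kind in \cite{abbeEntrywise}, controls each row $\hat{\Tilde G}_h(a)$ of the rescaled, concatenated feature matrix.

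For the transfer step, by Proposition~4 of \cite{sam2023overcoming} the backup $Q_h(s,\cdot)=r_h(s,\cdot)+P_hV(s,\cdot)$ is, for each fixed $s$ and any value function $V$, a linear combination $\sum_i c_h^s(i)G_h(\cdot,i)$ of the columns of $G_h$, with $\twonorm{c_h^s}=\Tilde O(H\sqrt{\cardA/(d\mu)})$ by the magnitude bounds in Assumption~\ref{asm:tr_s}. By Assumption~\ref{asm:transfer_s} we may write $G_h(\cdot,i)=\sum_{j,m}B_h(i,j,m)G_{m,h}(\cdot,j)$ with $B_h$ a minimizer in Definition~\ref{def:tc_s}, so $\max_{i,j,m}|B_h(i,j,m)|\le\alpha$; substituting, and absorbing the $\sqrt{\cardA/(d\mu)}$ rescaling of $\hat{\Tilde G}_h$, expresses $Q_h(s,\cdot)$ as an \emph{exact} linear function of the true rescaled features with coefficient vector $\theta_h^s$ satisfying $\twonorm{\theta_h^s}=\Tilde O(\alpha H\,\mathrm{poly}(d,M))$. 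Combining with the subspace/row-wise error from the previous step gives $|Q_h(s,a)-\langle\theta_h^s,\hat{\Tilde G}_h(a)\rangle|\le\zeta:=\Tilde O(\alpha\,\eps\,\mathrm{poly}(d,\mu,\kappa,M,H)/C)$ for all $a$; since this holds simultaneously for every $V$, $\hat{\Tilde G}_h$ is a $\zeta$-misspecified linear model for all the $Q$-functions that can arise in the target phase (uniformity handled by a covering argument over that function class).

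Finally I would run the optimistic least-squares analysis of LSVI-UCB \cite{Jin2020ProvablyER} adapted to the per-state Gram matrices $\Lambda_h^s\in\R^{dM\times dM}$: a self-normalized martingale bound over the $|T^s_{k-1,h}|$ visits to $s$, plus the covering bound, shows $w_h^s$ concentrates around $\theta_h^s$ up to $\beta_{k,h}^s\sqrt{\hat{\Tilde G}_h(a)^\top(\Lambda_h^s)^{-1}\hat{\Tilde G}_h(a)}$ plus an $\Tilde O(\zeta)$ bias, with $\beta_{k,h}^s=\Tilde O(dMH)$; optimism then gives the usual per-episode regret decomposition into a sum of bonuses, a martingale, and an $\Tilde O(\zeta)$-per-step misspecification term. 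Grouping the bonuses by state and applying the elliptical-potential lemma bounds the contribution of state $s$ by $\Tilde O(\sqrt{(dM)^3H^3|T^s|})$, with $|T^s|$ its total visit count, and Cauchy--Schwarz over states, $\sum_s\sqrt{|T^s|}\le\sqrt{\cardS\sum_s|T^s|}=\sqrt{\cardS T}$, yields a leading term $\Tilde O(\sqrt{(dMH)^3\cardS T})$ and a misspecification term $\Tilde O(\zeta\,\mathrm{poly}(d,M,H)\,T)$. Requiring the latter be dominated by the former forces $1/\eps^2=\Tilde\Theta(\alpha^2 T\,\mathrm{poly}(d,\mu,\kappa,M,H)/\cardS)$, which fed back into the source sample count gives a bound of the claimed form $\Tilde O(d^4\mu^5\kappa^4(1+\cardA/\cardS)M^2H^4\alpha^2 T)$ — the $(\cardS+\cardA)/\cardS$ ratio producing the $(1+\cardA/\cardS)$ factor — while the requirement that such a small $\eps$ be admissible (in particular lie in $(0,\sqrt{\cardS H/T})$ and in the perturbation regime of the Davis--Kahan step) translates into the stated condition $T\ge\cardS/\alpha^2$.

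The main obstacle is the transfer step: turning the $M$ separate noisy-SVD subspace errors into a single \emph{uniform} (over all value functions, all states, all actions) misspecification bound for the per-state linear model, while tracking (a) the non-uniqueness of the latent factors, so that only the minimizing $B_h$ of Definition~\ref{def:tc_s} — hence $\alpha$ to the \emph{first} power — enters $\zeta$ (and $\alpha^2$ the sample complexity); (b) the $\sqrt{\cardA/(d\mu)}$ rescaling, which is precisely what cancels would-be $\cardA$ factors in the target regret; and (c) the $\cardS$-versus-$\cardA$ bookkeeping that distinguishes this source sample complexity from the $(\cardS,d,\cardA)$ case. A secondary burden is re-deriving the LSVI-UCB self-normalized concentration and covering bounds for $\cardS$ simultaneous $dM$-dimensional regressions rather than a single $d$-dimensional one.
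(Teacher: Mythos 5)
Your proposal matches the paper's proof essentially step for step: LR-EVI on each source MDP under the rank/incoherence hypothesis, a row-wise ($\ell_{2,\infty}$) singular-subspace perturbation bound in the style of \cite{abbeEntrywise} (the paper's Proposition~\ref{lem:Linf_perturb} and Corollary~\ref{cor:sv}), the linear-combination argument via Assumption~\ref{asm:transfer_s} with the minimizing $B_h$ contributing a single factor of $\alpha$ to the misspecification level $\xi$, and then the misspecified LSVI-UCB-(S,S,d) analysis (Theorem~\ref{thm:miss_s}) with per-state Gram matrices, the elliptical potential lemma grouped by state, and Cauchy--Schwarz over states yielding the $\sqrt{\cardS T}$ factor. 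The balancing of $\eps$ against the $\xi dHT$ term and the resulting condition $T\ge\cardS/\alpha^2$ are also exactly as in the paper, so this is the same argument.
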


Theorem \ref{thm:tl_ssd} states that using transfer learning improves the performance on the target problem by removing the regret bound's dependence on the action space. Thus, with enough samples from the source MDPs one can recover a regret bound in the target problem that matches the best regret bound for any algorithm in the $(\cardS ,  \cardS, d )$ Tucker rank setting with \textbf{known} latent feature representation $G$ concerning $\cardS $ and $T$. See Appendix \ref{app:tr_ssd} for the proof of Theorem \ref{thm:tl_ssd}.

For sake of brevity, we present our results in our other Tucker rank settings in Table \ref{tab:tr_all} (see Appendices \ref{app:tr_sda}, \ref{app:tr_dsa}, and \ref{app:tr_sdd} for the formal statements and proofs).
In each of our Tucker rank settings, our algorithms use transfer learning to remove the dependence on $\cardS , \cardA , $ or $\cardS \cardA $ in the target regret bound, which matches the dependence on $\cardS $ and $\cardA $ of the best regret bounds of algorithms with \textbf{known} latent representation. While the source sample complexities of Theorems \ref{thm:tl_ssd} and \ref{thm:tl_sda} may seem to not scale linearly in $\cardA $ and $\cardS$, respectively, we require the horizon in the target phase to be large enough, i.e., $T\alpha^2(\cardS/\cardA + 1) \geq \cardS + \cardA$ and $T\alpha^2( 1 + \cardA/\cardS)  \geq \cardS + \cardA $, respectively, so that one observes at least $\Omega(\cardS +\cardA )$ samples in the source phase. 

While our dependence on $M$ in the target regret bound may seem sub optimal, our algorithm cannot distinguish which of the $d$ dimensions (out of $dM$ possible dimensions) match the ones in the target MDP without interacting with it. Thus, our algorithm must include all of them, which results in using at worst a $dM$-dimensional feature representation. However, in the case when the feature representations from source MDPs lie in subspaces that intersect, we can apply a singular value thresholding procedure to remove the unneeded dimensions used in the feature representation at the cost of an increased sample complexity (by a factor of $\cardA$) in the source phase (see Appendix \ref{app:sv_thresh}). 


\textbf{Comparison with \cite{agarwal2023provable}:} In comparison to Theorem 3.1 from \cite{agarwal2023provable}, our result, Theorem \ref{thm:tl_dsa}, has similar dependence on $\cardA , d, H,$ and $T$ while our dependence on $\alpha$ is optimal compared to their $\alpha^5$ dependence \footnote{Since $\bar{\alpha} \geq \alpha $,  moving $\bar{\alpha}$ into the source sample complexity yields a sample complexity that scales with $\alpha^5$.}. While our dependence on $M$ in the regret bound is worse, this is due to our transfer learning setting being more general; we only require that the space spanned by the target features be a subset of the space spanned by the source features instead of having the target features being a linear combination of the source features. Furthermore, our source sample complexity scales with the size of the state space instead of scaling logarithmically in the size of the given function class.

The proofs of  \ref{thm:tl_ssd}, Theorem \ref{thm:tl_sda}, \ref{thm:tl_sdd}, and \ref{thm:tl_dsa} synthesize the theoretical guarantees of LR-EVI, singular vector perturbation bounds, and LSVI-UCB or our modification of LSVI-UCB. Given $N$ samples from the source MDPs, we first bound the error between the singular subspaces up to a rotation by $ \sqrt{d^3M(\cardS  + \cardA )/N}$ (suppressing the dependence on common matrix estimation terms) to construct our feature mapping. 
Thus, with enough samples in the source MDPs,  LR-EVI returns $Q$ functions with singular vectors that can be used to construct a sufficient feature representation; the additional regret incurred from using the approximate feature representation in LSVI-UCB or Algorithm \ref{alg:target_s} is dominated by the original regret of using the true representation.

\subsection{Discussion of Optimality}
\label{sec:discussion-optimality}
In the $(\cardS , d, \cardA ), (\cardS , \cardS , d), (d, \cardS , \cardA ),$ and $(d, d, d)$ Tucker rank settings, the source sample complexity's scalings concerning $\cardS , \cardA , H,$ and $T$ are reasonable when learning  $\sqrt{\cardA /T}, \sqrt{\cardS /T}, \sqrt{1/T},$ and $\sqrt{1/T}$-optimal $Q$ functions, respectively, given the $Q$ learning lower bounds in \cite{sam2023overcoming}. Similarly, the dependence on $\mu$ and $\kappa$ are standard for RL algorithms incorporating matrix estimation.  

In the $(\cardS , d, \cardA ), (\cardS , \cardS , d),$ and $(d, \cardS , \cardA )$ Tucker rank settings, our $\alpha^2$ dependence in the source sample complexity is optimal due to our lower bounds. In the $(d, d, d)$ Tucker rank setting, our $\alpha^4$ dependence is likely optimal when eliminating the dependence on $\cardS $ and $\cardA $ in the target problem; \tsreplace{one must multiply both sets of singular vectors together to create a feature mapping $\phi:\cardS \cardA  \to d^2M^2$}{one must combine both sets of singular vectors together, $F: \cS \to \R^{dM}$ and $G: \cA \to \R^{dM}$, to create a feature mapping $\phi:\cS \times \cA  \to \R^{d^2M^2}$,}
which causes the $\alpha^4$ dependence in our upper bound. However, if $\alpha^4$ is too large, one can use the algorithms from the $(\cardS , d, \cardA )$ or $(\cardS , \cardS , d)$ Tucker rank settings to benefit from transfer learning. Proving an $\alpha^4$ lower bound is an interesting open question. Additionally, the dependence on $d$ and $M$ in this setting is worse as our low rank latent representation of each state-action pair lies in a $d^2M^2$ dimensional space in contrast to a space with dimension $dM$.

The regret bounds of our algorithms used in the target phase, i.e., Algorithm \ref{alg:target_s}, Algorithm \ref{alg:target}, and LSVI-UCB \cite{Jin2020ProvablyER}, in each Tucker rank setting are optimal with respect to $\cardS ,\cardA ,$ and $T$. Since each Tucker rank setting captures tabular MDPs, we can construct lower bounds with a reduction from the bound in \cite{qproveEff}, e.g., in the $(\cardS , \cardS, d )$ setting,   the regret is at least $\Tilde{\Omega}(\sqrt{d\cardS H^2 T})$. However, our dependence on $d$ and $H$ are sub-optimal. 
The extra factor of $\sqrt{H}$ is likely due to our use of Hoeffding's inequality instead of tracking the variance of the estimates to use Bernstein's inequality. Our dependence on $d$ is sub-optimal as LSVI-UCB is sub-optimal with respect to $d$; for linear MDPs, LSVI-UCB++ \cite{he2023nearly} admits a regret bound of $\Tilde{O}(\sqrt{d^2H^2T})$, which matches the lower bound from \cite{zhou2021nearly}. Thus, to obtain the optimal dependence on $d$ and $H$, we would need to adapt LSVI-UCB++ to our Tucker rank setting. 




\section{Conclusion}
We study transfer RL, in which one transfers a latent representation between source MDPs and target MDP with Tucker rank $ (\cardS , \cardS , d), (\cardS , d, \cardA ), (d, \cardS , \cardA )$ or $(d, d, d)$ transition kernels. As \cite{agarwal2023provable} studied transfer RL in the $(d, \cardS , \cardA )$ Tucker rank setting, our study completes the analysis of representational transfer in RL assuming that one or more modes of the transition kernel is low rank. Furthermore, we propose the transfer-ability coefficient that quantifies the difficulty of transferring latent representations between the source and target problems with information theoretic lower bounds. We propose computationally simple algorithms that admit regret bounds that are independent of $\cardS , \cardA $, or $\cardS \cardA $ depending on the Tucker rank setting given enough samples from the source MDPs. Furthermore, these regret bounds match the bounds of algorithms that are given the true latent  representation. Our dependence on $d$ and $H$ is not optimal and can be improved by adapting the methods in \cite{he2023nearly} to our setting, which is an interesting open problem. Furthermore, proving theoretical guarantees of other forms of transfer RL is worthwhile and an interesting future direction.

\paragraph{Acknowledgements:}

Y.\ Chen is partially supported by NSF CCF-1704828 and NSF CCF-2233152.

\bibliographystyle{plain}
\bibliography{references}

\newpage


\appendix

\appendix
\tableofcontents

\section{Notation Table}
\begingroup
    \renewcommand{\arraystretch}{1.5}
\begin{table}[h]
    \centering
    
    \begin{tabular}{c|l}
    \hline
         Symbol & Definition \\
         \hline
        $\cS, \cardS$ & State space, Cardinality of $\cS$ \\
        $\cA, \cardA$ & Action space, Cardinality of $\cA$ \\
        $H, K$ & Finite Horizon, Number of Episodes $(T = KH)$\\
        $M$ & Number of Source MDPs\\
        $r_{m, h}, P_{m, h}$ & Reward Function, Transition Kernel for Source MDP $m$ at step $h$\\
        $r_{ h}, P_{ h}$ & Reward Function, Transition Kernel for the Target MDP at step $h$\\
        $Q^{\pi}_{m, h}, Q_{h}^\pi$ & Action-value Function at Step $h$ Following $\pi$ for  Source MDP $m$ and the Target MDP\\
        $Q^{*}_{m, h}, Q_{h}^*$ & Optimal Action-value Function at Step $h$ for Source MDP $m$ and the Target MDP\\
        $V^{\pi}_{m, h}, V_{h}^\pi$ & Value Function at Step $h$ Following $\pi$ for  Source MDP $m$ and the Target MDy\\
        $V^{*}_{m, h}, V_{h}^*$ & Optimal Value Function at Step $h$ for Source MDP $m$ and the Target MDP\\
        $d$ & Rank of the Low-dimensional Mode of $P_{m, h}$ \\
        $d'$ & Rank of the Low-dimensional Mode of $P_{h}$ \\
        $\alpha$ & Transfer-ability Coefficient \\
        $G_{m, h}(a), G_{h}(a)$ & True Latent Representation of Each Action in the Source and  \\
        & Target MDPs in the $(\cardS, \cardS, d)$ Tucker Rank Setting\\
        $\Tilde{G}_h$ & Concatenation of $G_{m, h}$ for $m \in [M]$\\
        $\hat{\Tilde{G}}_h$ & Latent Representation Estimate Computed in the Source Phase\\
            LR-EVI($\cdot$) & Low Rank Empirical Value Iteration \cite{sam2023overcoming}\\
        $\Lambda_h^s$ & Gram Matrix for State $s$ at Step $h$\\
        $w_h^s$ & Regularized Least Squares Solution for State $s$ at Step $h$\\
        $T_{k, h}^s$ & Episodes before Episode $k$, in which one was at State $s$ at Step $h$\\
        $\mu(X)$ & Incoherence of Matrix $X$\\
        $\kappa(X)$ & Condition Number of Matrix $X$\\
        $\sigma_i(X)$ & $i$-th Singular Value of $X$\\
        $\|X\|_{op}$ & Operator Norm of Matrix $X$\\
        $\|X\|_{\infty}$ & Maximum of the Absolute Value of Each Entry of $X$\\
        \hline
    \end{tabular}
    \caption{List of notation}
    \label{tab:notation}
\end{table}
\endgroup

\section{Table of Regret Bounds with Known Latent Representations}
\begin{table}[H]
    \centering
    \begin{tabular}{c|cc}
    \toprule
        & Tucker Rank & Regret Bound with Known Latent Representation \\
       \midrule
        Theorem \ref{thm:reg_s} &$(\cardS , \cardS , d)$ &  $\Tilde{O}(\sqrt{d^3H^3\cardS T})$ \\
        Theorem \ref{thm:reg} &$(\cardS , d, \cardA )$ &  $\Tilde{O}(\sqrt{d^3H^3\cardA T})$ \\
        Theorem 3.1 \cite{Jin2020ProvablyER} & $(d, \cardS , \cardA )$ & $\Tilde{O}(\sqrt{d^3H^3T})$\\
        Theorem 5.1 \cite{he2023nearly}, Theorem 5.6 \cite{zhou2021nearly}  & $(d, \cardS , \cardA )$ & $\Tilde{\Theta}\left( \sqrt{d^2H^2T}\right)$  \\
        \bottomrule
    \end{tabular}
    \caption{Theoretical guarantees of our algorithms  alongside results from literature in the different Tucker rank settings. To the best of our knowledge, there are no algorithms tailored to the $(d, d, d)$ Tucker rank setting. Instead, one should use a linear MDP algorithm. }
    \label{tab:tr_regret}
\end{table}

Since each Tucker rank setting captures tabular MDPs, we can construct lower bounds with a reduction from the bound in \cite{qproveEff}; in the $(\cardS , d, \cardA )$ Tucker rank setting,   the regret is at least $\Tilde{\Omega}(\sqrt{d\cardA H^2 T})$, and in the $(\cardS , \cardS, d )$ setting,   the regret is at least $\Tilde{\Omega}(\sqrt{d\cardA H^2 T})$. As mentioned, our dependence on $d$ and $H$ are sub optimal because we modify LSVI-UCB \cite{Jin2020ProvablyER}, which has a sub-optimal dependence on $d$ and $H$. Modifying the algorithm from \cite{he2023nearly} to achieve the optimal dependence on $d$ and $H$ in the $(\cardS , \cardS, d )$ and $(\cardS , d, \cardA )$  Tucker rank settings is an interesting problem for future research.

\section{Thresholding Procedure in the $(\cardS , \cardS , d)$ Tucker Rank Setting}\label{app:sv_thresh}

When the feature representations of each source MDP lies in orthogonal $d$-dimensional subspaces, we cannot discard any unique dimension of the $M$ subspaces as we cannot tell which of the $d$-dimensions match the ones in the target MDP without interacting with it. However, when the feature mappings from the source MDPs lie in intersecting subspaces, i.e., 
\[
\text{rank}\left( \Tilde{G}_h\right)  =  \text{rank}\left(\sqrt{\frac{\cardA }{d\mu}} \begin{bmatrix}G_{1, h} & \ldots & G_{M, h}  \end{bmatrix} \right) = d''
\]
for $d' \leq d'' \leq dM$ (recall the target MDP has Tucker rank $(\cardS, \cardS, d')$), we use the following procedure: after computing $\Tilde{G}_h$ via concatenation of $\hat{G}_{m, h}$ for all $m \in [M]$, we perform a singular value decomposition of $\Tilde{G}_h$ and threshold the singular values to keep only $d''$ dimensions. With this procedure, our regret bound now depends on $d''$ instead of $dM$ at the cost of an increased source sample complexity. We only present the theorem and proof in the $(\cardS, \cardS, d)$ Tucker rank setting, but our results and methods extend to the other Tucker rank settings.

\begin{thm}\label{thm:tl_ssd_thresh}
Suppose Assumptions \ref{asm:tr_s}, \ref{asm:transfer_s}, and \ref{asm:full_rank} hold, and assume the source MDP latent factors span a $d''$-dimensional space. Let $\delta \in (0, 1)$. Furthermore, assume that, for any $\eps \in (0, \sqrt{\cardS H/T})$,  $Q_{m, h} = r_{m, h} + P_{m, h} V_{m, h+1}$ has rank $d$ and is $\mu$-incoherent with condition number $\kappa$ for all $\eps$-optimal value functions $V_{h+1}$. Let $\lambda = 1$ and $\beta_{k, h}^s$ be a function of $d, H, |T_{k, h}^s|, |S|, M, T$.  Then, for $T \geq  \frac{\cardS }{ \alpha^2}$, using at most $\Tilde{O} ( d^7 \mu^5\kappa^4(\cardA + \cardA^2)M^3  H^4\alpha^2  T/(\cardS d'' )) $
samples in the source problems, our algorithm has regret 
$\Tilde{O}( \sqrt{(d''H)^3 \cardS T})$
with probability at least $1 - \delta$.  
\end{thm}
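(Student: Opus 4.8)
\textbf{Proof proposal for Theorem~\ref{thm:tl_ssd_thresh}.}

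The plan is to mirror the proof of Theorem~\ref{thm:tl_ssd}, but insert a singular-value thresholding step between the source and target phases and re-track the error budget so that the effective feature dimension becomes $d''$ rather than $dM$. First, I would run Algorithm~\ref{alg:src_ssd} on each source MDP for $N_h$ samples per $(s,a,h)$, obtaining $\bar Q_{m,h}$ from LR-EVI together with its SVD $\bar Q_{m,h} = \hat F_{m,h}\hat\Sigma_{m,h}\hat G_{m,h}^\top$. By the LR-EVI guarantee (combined with the incoherence/condition-number hypotheses on $Q_{m,h}$) and a Davis--Kahan / Wedin $\sin\Theta$ bound, the estimated right singular subspace $\hat G_{m,h}$ is within $\tilde O(\sqrt{d^3 M(\cardS+\cardA)/N})$ of the true $G_{m,h}$ (up to rotation), suppressing $\mu,\kappa$ factors, exactly as in the non-thresholded proof.

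Next comes the new ingredient. Form $\hat{\Tilde G}_h = \sqrt{\cardA/(d\mu)}\,[\hat G_{1,h}\ \cdots\ \hat G_{M,h}]$ and take its SVD. Under the assumption that the \emph{true} concatenation $\Tilde G_h$ has rank exactly $d''$, the matrix $\hat{\Tilde G}_h$ is a perturbation of a rank-$d''$ matrix, so there is a spectral gap at the $d''$-th singular value of size $\Omega(\sigma_{d''}(\Tilde G_h))$; the perturbation has operator norm at most $\sqrt{M}$ times the per-source subspace error above. Applying Wedin's theorem a second time, the top-$d''$ left singular subspace $\hat P_h$ of $\hat{\Tilde G}_h$ is within a controlled distance of $\mathrm{Span}(\Tilde G_h) \supseteq \mathrm{Span}(G_h)$. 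This is the step I expect to be the main obstacle: the second spectral gap depends on $\sigma_{d''}(\Tilde G_h)$, which can be small when the source subspaces only barely overlap, and I must propagate that quantity cleanly (or absorb it into $\mu,\kappa$-type constants) while also accounting for the $\sqrt{\cardA/(d\mu)}$ rescaling, which inflates both the signal and the noise. Getting the final source sample complexity to read $\tilde O(d^7\mu^5\kappa^4(\cardA+\cardA^2)M^3 H^4\alpha^2 T/(\cardS d''))$ requires carefully matching the two $\sin\Theta$ bounds so that the error of the $d''$-dimensional representation, when fed into the target phase, contributes regret dominated by the leading $\tilde O(\sqrt{(d''H)^3\cardS T})$ term; this fixes $N$, and the extra $\cardA$ factor (relative to Theorem~\ref{thm:tl_ssd}) comes precisely from the rescaling amplifying the noise before thresholding.

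Finally, I would run Algorithm~\ref{alg:target_s} (LSVI-UCB-$(S,S,d)$) in the target phase using the thresholded feature map $\hat P_h$ of dimension $d''$, choosing $\beta_{k,h}^s$ as in Theorem~\ref{thm:tl_ssd} but with $d$ replaced by $d''$ and $\lambda=1$. The regret analysis is identical to that of Theorem~\ref{thm:tl_ssd}: decompose the regret into the ``known-representation'' term $\tilde O(\sqrt{(d''H)^3\cardS T})$ (Theorem~\ref{thm:reg_s} with feature dimension $d''$) plus a misspecification term controlled by the subspace error, and choose $N$ (hence the source sample budget) large enough that the latter is of lower order. Because Assumption~\ref{asm:transfer_s} guarantees $\mathrm{Span}(G_h)\subseteq\mathrm{Span}(\Tilde G_h)$ and thresholding to $d''$ dimensions retains exactly this span (in the noiseless limit), no representational power is lost, so the transfer-ability coefficient $\alpha$ enters only through the source sample complexity exactly as before. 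A union bound over $h\in[H]$, $m\in[M]$, and the high-probability events of LR-EVI, the two Wedin bounds, and LSVI-UCB yields the claim with probability at least $1-\delta$.
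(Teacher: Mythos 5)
Your first and last steps match the paper's proof (LR-EVI plus the row-wise singular vector perturbation bound of Corollary~\ref{cor:sv} to control each $\hat G_{m,h}$, then LSVI-UCB-(S,S,d) under the misspecification Assumption~\ref{asm:miss_s} with $\xi=\sqrt{d''H\cardS/T}$). The gap is in your middle step, and you have correctly located it yourself: the second Wedin/$\sin\Theta$ application to $\hat{\Tilde G}_h$ needs a spectral gap of order $\sigma_{d''}(\Tilde G_h)$, and nothing in Assumptions~\ref{asm:tr_s}--\ref{asm:full_rank} bounds that quantity away from zero --- the $M$ source subspaces can overlap so marginally that $\sigma_{d''}(\Tilde G_h)$ is arbitrarily small, in which case the top-$d''$ singular subspace of $\hat{\Tilde G}_h$ is not stable under the perturbation and your $\hat P_h$ need not approximately contain $\mathrm{Span}(G_h)$. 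This cannot be absorbed into $\mu,\kappa$-type constants, so as written the argument does not close.

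The paper's proof shows this second subspace-alignment step is unnecessary. It keeps the rotated, concatenated estimate $\hat{\Tilde G}_h R_h$ as the representation of $r_h$ and $P_h$ (exactly as in Theorem~\ref{thm:tl_ssd}) and only needs to argue that discarding the components beyond index $d''$ costs little. For that it suffices to bound $\sigma_{d''+1}(\hat{\Tilde G}_h)$: since the true concatenation $\Tilde G_h$ has rank $d''$, Weyl's inequality gives $\sigma_{d''+1}(\hat{\Tilde G}_h)\le\|\hat{\Tilde G}_h-\Tilde G_h\|_{op}\le\sqrt{\cardA dM}\,\|\hat{\Tilde G}_h-\Tilde G_h\|_{2,\infty}$, which is controlled by the first perturbation bound alone. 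The misspecification then splits into the usual estimation term plus a truncation term of the form $|\hat{\Tilde G}_h(a,d''+1{:})\,W_h'(s,d''+1{:})^\top|\le\alpha\,\sigma_{d''+1}(\hat{\Tilde G}_h)\sqrt{dM}\sum_m\|W_{m,h}\|_2$, with no dependence on $\sigma_{d''}(\Tilde G_h)$. If you replace your second Wedin step with this Weyl-plus-tail-term argument, the rest of your outline (including the origin of the extra $\cardA$ factor in the source sample complexity) goes through as you describe.
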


The proof of Theorem \ref{thm:tl_ssd_thresh} follows the proof of Theorem \ref{thm:tl_ssd}, except we require a larger source sample complexity to guarantee with high probability that the $d''+1$-th singular value of the concatenated feature representation is sufficiently small. With this procedure, we ensure that our feature representation removes the unneeded dimensions. See Appendix \ref{app:tr_ssd} for proof of the above theorem.

\subsection{Thresholding Procedure without Knowledge of $d''$}

While the above procedure requires knowledge of the dimension of the subspace spanned by the latent features of the source MDPs, one can also threshold small singular values of the concatenated feature mapping as the misspecification error will still be small. The specific procedure is as follows: after performing the singular value decomposition on the concatenation of $\hat{G}_{m, h}$, we first find the smallest value of $t \in [dM]$
that satisfies
\[
\sigma_{t+1} \leq \sqrt{\frac{tHSd\mu}{\alpha^2 T (dM - t)M^2 A}},
\]
and threshold $\sigma_{t+1}, \ldots, \sigma_{dM}$. Let 
\[
\mathcal{T} = \left\{t \in [dM] | \sigma_{t+1} \leq \sqrt{\frac{tHSd\mu}{\alpha^2 T (dM - t)M^2 A}}\right\}
\]
be the set of $t$ that satisfies the inequality on the singular values. Then, with the specified procedure, our algorithm admits a target regret bound that depends on $t$ instead of $dM$. Note that $t$ is minimally $1$ and at most $dM$.

\begin{thm}\label{thm:tl_ssd_thresh_adhoc}
Suppose Assumptions \ref{asm:tr_s}, \ref{asm:transfer_s}, and \ref{asm:full_rank} hold. Let $\delta \in (0, 1)$ and $t = \min_{t' \in \mathcal{T}} t'$. Furthermore, assume that, for any $\eps \in (0, \sqrt{\cardS H/T})$,  $Q_{m, h} = r_{m, h} + P_{m, h} V_{m, h+1}$ has rank $d$ and is $\mu$-incoherent with condition number $\kappa$ for all $\eps$-optimal value functions $V_{h+1}$. Let $\lambda = 1$ and $\beta_{k, h}^s$ be a function of $d, H, |T_{k, h}^s|, |S|, M, T$.  Then, for $T \geq  \frac{\cardS }{ \alpha^2}$, using at most $\Tilde{O} ( d^5 \mu^5\kappa^4(1+\cardA /\cardS )M^3  H^4\alpha^2  T ) $
samples in the source problems, our algorithm has regret 
$\Tilde{O}( \sqrt{(tH)^3 \cardS T})$
with probability at least $1 - \delta$.  
\end{thm}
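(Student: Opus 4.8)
The plan is to mirror the proof of Theorem~\ref{thm:tl_ssd}, inserting the data-dependent thresholding step between the source-phase guarantee and the target-phase regret analysis, and tracking one additional error source: the misspecification incurred by discarding the singular directions of the concatenated feature map whose singular values fall below the cutoff. Concretely, I would decompose the argument into three parts: (i) a source-phase bound certifying that each $\hat{G}_{m,h}$ is close, up to rotation, to the true action latent factor $G_{m,h}$; (ii) a perturbation/thresholding analysis showing that the thresholded rank-$t$ feature map approximately contains the span of the target factors $\{G_h\}$ with a controlled misspecification level $\zeta$; and (iii) a misspecified-linear-MDP regret analysis for Algorithm~\ref{alg:target_s} run with the thresholded map.

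For part (i), I would invoke the LR-EVI guarantee together with the full-rank and incoherence hypotheses (Assumption~\ref{asm:full_rank} and the $\mu$-incoherence/condition-number assumption in the statement): with the stated number of source samples, a Wedin/Davis--Kahan-type bound yields a subspace error $\lesssim \sqrt{d^3 M(\cardS+\cardA)/N}$ up to the usual $\mu,\kappa,H$ factors, exactly as in Theorem~\ref{thm:tl_ssd}. The extra factor of $dM$ in the source sample complexity, relative to Theorem~\ref{thm:tl_ssd}, is spent precisely here: it drives the per-source error well below the threshold scale, so that directions that are genuinely redundant across the source MDPs end up with singular values below the cutoff while directions the target actually needs stay above it. Combining this with Definition~\ref{def:tc_s} — each target column $G_h(\cdot,i)$ is a combination of source columns with coefficients of magnitude at most $\alpha$ — gives that $\mathrm{Span}(\{G_h(:,i)\})$ is contained, up to error $O(\alpha\sqrt{dM}\cdot(\text{subspace error}))$, in the column span of the full concatenation $\hat{\Tilde{G}}_h$.

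For part (ii), I would argue that any singular direction of $\hat{\Tilde{G}}_h$ with singular value at most $\sigma_{t+1}$ contributes at most $\sigma_{t+1}$, scaled by $\sqrt{\cardA/(d\mu)}$ and by the coefficient bound $\alpha$, to the reconstruction of the relevant Bellman backups $Q_h = r_h + P_h V_{h+1}$, which are rank $\le d'$ with action dependence in $\mathrm{Span}(\{G_{m,h}\})$ by Assumption~\ref{asm:tr_s} and Proposition~4 of \cite{sam2023overcoming}. Hence after thresholding we have a $t$-dimensional feature map under which every such backup is linear up to an $\ell_\infty$ error $\zeta \lesssim \alpha\,\sigma_{t+1}\sqrt{\cardA/(d\mu)}$ per step. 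The cutoff $\sigma_{t+1}\le \sqrt{tH\cardS d\mu/(\alpha^2 T(dM-t)M^2\cardA)}$ is reverse-engineered so that, propagated through the misspecified regret bound of part (iii), the extra regret is dominated by the leading term $\Tilde{O}(\sqrt{(tH)^3\cardS T})$. For part (iii) I would run the LSVI-UCB-(S,S,d) analysis underlying Theorem~\ref{thm:tl_ssd} (and its known-representation counterpart Theorem~\ref{thm:reg_s}) with the standard $\zeta$-misspecification modification of \cite{Jin2020ProvablyER}: a $\zeta$-misspecified instance with effective dimension $t$ and per-state Gram matrices incurs regret $\Tilde{O}(\sqrt{t^3H^3\cardS T} + \zeta\, tH\sqrt{\cardS T})$, and the cutoff inequality makes the second term $\Tilde{O}(\sqrt{(tH)^3\cardS T})$; the hypothesis $T\ge \cardS/\alpha^2$ is exactly what is needed (as in Theorem~\ref{thm:tl_ssd}) so that the inflated bonuses $\beta_{k,h}^s$ absorb the feature-estimation error.

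The main obstacle I anticipate is that $t=\min_{t'\in\mathcal{T}}t'$ is itself a data-dependent random quantity, since the cutoff involves the random singular values of $\hat{\Tilde{G}}_h$, so I cannot fix $t$ in advance. I would handle this on the high-probability event of part (i) by proving a statement valid for all candidate cutoffs simultaneously — ``for every $j\in[dM]$, if $\sigma_{j+1}$ is below its threshold then discarding directions $j{+}1,\dots,dM$ costs misspecification at most the corresponding $\zeta(j)$'' — and then applying the regret bound with whichever $t$ the algorithm selects; a union bound over the at most $dM$ values of $j$ and over $h\in[H]$ costs only logarithmic factors. The secondary subtlety is guaranteeing that thresholding never removes a direction the target genuinely requires: this is exactly where $\sigma_{j+1}\le(\text{cutoff})$ being an \emph{upper} bound on the discarded mass, together with the transfer-ability lower bound encoded in $\alpha$, keeps the misspecification on the target's own subspace under control.
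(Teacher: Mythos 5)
Your proposal follows essentially the same route as the paper's proof: LR-EVI plus the singular-vector perturbation bound in the source phase, then bounding the extra misspecification from the discarded directions by $\alpha\,\sigma_{t+1}\sqrt{dM-t}\,\|W'_h\|_2$ and using the cutoff inequality defining $\mathcal{T}$ to make this $O(\sqrt{tH\cardS/T})$, and finally invoking the misspecified LSVI-UCB-(S,S,d) regret bound with effective dimension $t$. The data-dependence of $t$ that you flag is handled in the paper without a union bound, since on the source-phase high-probability event the selected $t$ satisfies the cutoff inequality deterministically by construction, so the misspecification bound applies to whichever $t$ is chosen.
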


With an increase in a factor of $dM$ to the source sample complexity, we can potentially improve the regret bound of the target by decreasing the dimension of the feature representation from $dM$ to $t$.
See Appendix \ref{app:tr_ssd} for the proof of the above theorem.
\section{Properties of the Transfer-ability Coefficient $\alpha$}\label{app:alpha}
In this section, we prove properties of $\alpha$ and provide easy and hard instances. We first bound the values of $\alpha$. 
\begin{lem}\label{lem:alpha_bound}
Let $\alpha$ be defined according to Definition \ref{def:tc}. Then, we have that $\frac{1}{dM}\leq \alpha < \infty$.
\end{lem}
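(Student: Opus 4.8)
The plan is to prove the two bounds separately, both directly from Definition~\ref{def:tc_s} (note the statement refers to Definition~\ref{def:tc}, which I take to be the same transfer-ability coefficient in the relevant setting). Recall $\alpha = \max_{h}\min_{B \in \B_h}\max_{i,j,m}|B(i,j,m)|$, where $B$ records the coefficients expressing an orthonormal target factor matrix $G_h \in \mathcal{G}_{T,h}$ in terms of the orthonormal source factor matrices $G_{m,h} \in \mathcal{G}_{S,h}$, i.e. $G_h(\cdot,i) = \sum_{j\in[d],m\in[M]} B(i,j,m)\, G_{m,h}(\cdot,j)$.

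For the \textbf{upper bound} $\alpha < \infty$: fix $h$. Since Assumption~\ref{asm:transfer_s} guarantees that $\mathrm{Span}(\{G_h(:,i)\}_i) \subseteq \mathrm{Span}(\{G_{m,h}(:,j)\}_{j,m})$, the set $\B_h(\mathcal{G}_{T,h},\mathcal{G}_{S,h})$ is nonempty: for any choice of $G_h$ and $\{G_{m,h}\}$ each column $G_h(\cdot,i)$ lies in the span of the source columns, hence admits at least one representation as a (finite) linear combination. Thus the inner $\min$ is over a nonempty set of tuples $B$, each with finite entries, so the min is finite; taking the max over the $H$ finitely many steps keeps it finite. (One can be slightly more careful and note the min is attained, e.g. by taking the minimum-norm coefficient solution $B = G_{\mathrm{all},h}^\dagger G_h$ where $G_{\mathrm{all},h}$ stacks the source factors, but finiteness is all that is claimed.)

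For the \textbf{lower bound} $\alpha \geq \frac{1}{dM}$: fix $h$ and any admissible $B \in \B_h$ with associated orthonormal matrices $G_h,\{G_{m,h}\}$. Take any column $G_h(\cdot,i)$; it has unit Euclidean norm since $G_h$ has orthonormal columns. Writing $G_h(\cdot,i) = \sum_{j\in[d],m\in[M]} B(i,j,m) G_{m,h}(\cdot,j)$ and using the triangle inequality together with $\|G_{m,h}(\cdot,j)\|_2 = 1$ (orthonormal columns), we get
\[
1 = \|G_h(\cdot,i)\|_2 \;\leq\; \sum_{j\in[d],\,m\in[M]} |B(i,j,m)|\,\|G_{m,h}(\cdot,j)\|_2 \;=\; \sum_{j\in[d],\,m\in[M]} |B(i,j,m)| \;\leq\; dM \cdot \max_{j,m}|B(i,j,m)|.
\]
Hence $\max_{i,j,m}|B(i,j,m)| \geq 1/(dM)$ for every admissible $B$, so the inner $\min$ is $\geq 1/(dM)$, and therefore $\alpha \geq 1/(dM)$.

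I do not expect a genuine obstacle here — the argument is elementary. The only subtlety worth getting right is making sure $\B_h$ is nonempty (so the $\min$ makes sense), which is exactly what Assumption~\ref{asm:transfer_s} buys us, and being careful that the norm bounds on source/target columns used in the lower bound come from orthonormality (unit columns), not from incoherence. A secondary point is whether the $\min$ over $B\in\B_h$ is attained rather than merely an infimum; since each $B$ may be taken from the closed set of least-squares solutions over the (closed) sets $\mathcal{G}_{T,h},\mathcal{G}_{S,h}$, a compactness argument gives attainment, but even treating it as an infimum the stated inequalities $\tfrac{1}{dM}\le\alpha<\infty$ hold verbatim.
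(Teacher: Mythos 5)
Your proof is correct, and the lower bound argument is essentially identical to the paper's: both derive $1 \le \sum_{j,m}|B(i,j,m)|$ from the triangle inequality applied to a unit-norm target column expressed in unit-norm source columns, then conclude $\max_{i,j,m}|B(i,j,m)| \ge 1/(dM)$ by pigeonhole, uniformly over admissible $B$. The two arguments diverge only on the second half of the claim. You prove $\alpha < \infty$ literally: Assumption~\ref{asm:transfer_s} makes $\B_h$ nonempty, every element has finite entries, and the min/inf over a nonempty family of finite quantities bounded below is finite. The paper instead spends its second half on a two-source, rank-one construction ($F_1, F_2$ nearly antipodal about the target direction) showing that $\alpha = \frac{1}{2-2\gamma}$ can be driven above any prescribed constant; that establishes that no instance-independent finite upper bound is possible, which is the substantive content behind writing ``$<\infty$'' rather than a constant, but it does not literally verify finiteness for a general instance. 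Your version nails down the literal statement (including the attainment-vs-infimum caveat, which the paper glosses over); the paper's version additionally conveys why the bound cannot be sharpened. Neither observation is deep, and combining them would give the most complete treatment.
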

\begin{proof}
Since Assumption \ref{asm:transfer} holds and that the latent factors are normalized, it follows that for all $i \in [d']$ and any latent factor matrix,
\begin{align*}
    \|F_h(:, i)\|_2 &= \|\sum_{j\in [d], m \in [M]} B_h(i, j, m)F_{m, h}(:, j) \|_2\\
    1 &\leq  \sum_{j\in [d], m \in [M]}\| B_h(i, j, m)F_{m, h}(:, j) \|_2\\
 &\leq \sum_{j\in [d], m \in [M]} |B_h(i, j, m)| \| F_{m, h}(:, j) \|_2\\
 &= \sum_{j\in [d], m \in [M]} |B_h(i, j, m)|.
\end{align*}
Given that $1 \leq \sum_{j\in [d], m \in [M]} |B_h(i, j, m)|$, it follows that the minimum of $\max_{j \in [d], m \in [M]} |B_h(i, j, m)|$ is attained when $|B_h(i, j, m)| = \frac{1}{dM}$. Thus, $\alpha \geq \frac{1}{dM}$
\\\\
Let $\gamma \in (0, 1)$. Consider the construction where the target latent factor is $F_T = [1, 0]$ and the source latent factors from source MDPs one and two are 
\[
F_1 = [1-\gamma, \gamma \sqrt{2/\gamma - 1}], \quad F_2 = [1-\gamma, -\gamma \sqrt{  2/\gamma - 1}].
\]
Clearly, the latent factors are orthonormal. Then, it follows that the only linear combination of $F_1$ and $F_2$ that results in $F$ is 
\[
F_T = c F_1 + cF_2,
\]
which implies that $c = \frac{1}{2 - 2\gamma}$. Thus, $\alpha = \frac{1}{2 - 2\gamma}$, and it follows that we can make $\alpha$ arbitrarily large because for any positive constant $C$, we can choose $\gamma$ so that $\alpha > C$. 
\end{proof}

As the proof of Lemma \ref{lem:alpha_bound} uses a construction in which $\alpha$ can be arbitrarily large, we next provide simple constructions where $\alpha$ is small. In the rank one case where all source latent factors equal the target latent factor, $\alpha$ attains its minimum of $\alpha = \frac{1}{M}$. Next, we consider the construction where 
\[
F_T = \left[ \frac{1}{\sqrt{d}}, \ldots, \frac{1}{\sqrt{d}} \right], 
\]
and there are $M$ source MDPs with the standard basis vectors in $\R^d$ as the latent factors equally distributed among the $M$ rank one source MDPs. Thus, 
\[
F_T = \sum_{m \in M} \frac{\sqrt{d}}{M} F_m,
\]
and $\alpha = \sqrt{d}/M$.

We next present a transfer RL instance with large $\alpha$. Consider the following construction with the goal of trying to transfer the state latent factors. Without loss of generality, let the size of the state space and action space be $4n$ for some $n \in \N_+$. Furthermore, we refer to the $i$-th quarter of the states as $s_i$ for $i \in [4]$ and the $i$-th half of the actions as $a_i$ for $i \in [2]$ because our construction is a block MDP with four latent states and two latent actions. Let $H = 2$. In each MDP, the transition kernel does not vary with $h$ while the reward function is time dependent. Assume that the transition kernels have Tucker rank $(\cardS, d, \cardA)$ (Assumption \ref{asm:tr}). For ease of notation, we refer to the $i$-th row as any state in $s_i$ and the $i$-th column as any state in $a_i$. The target MDP's latent factors are 
\[
F_h = \left[ \begin{tabular}{cc}
         $\sqrt{1/(2n)}$&0\\
          $\sqrt{1/(2n)}$&0\\
        0&  $\sqrt{1/(2n)}$\\
        0 &  $\sqrt{1/(2n)}$
    \end{tabular}\right], \quad W_{1} =  \left[ \begin{tabular}{cc}
          $\sqrt{1/(2n)}$&$\sqrt{1/(8n)}$\\
          $\sqrt{1/(2n)}$&$\sqrt{1/(8n)}$
    \end{tabular}\right], \quad W_{2} =  \left[ \begin{tabular}{cc}
         $\sqrt{1/(8n)}$& $\sqrt{1/(2n)}$\\
           $\sqrt{1/(8n)}$& $\sqrt{1/(2n)}$\\
    \end{tabular}\right]
\]
for all $h \in [H]$, and 
\begin{align*}
   U_h(s_1| a_1) &= U_h(s_2| a_1) = [3\sqrt{2}/(8\sqrt{n}), \sqrt{2}/(8\sqrt{n})] \\
U_h(s_1 |a_2) &=  U_h(s_2 |a_2) =[\sqrt{2}/(8\sqrt{n}), 3\sqrt{2}/(8\sqrt{n})]\\
U_h(s_3 |a_1) &=  U_h(s_4 |a_1) =[3\sqrt{2}/(8\sqrt{n}), \sqrt{2}/(8\sqrt{n})] \\
U_h(s_3 |a_2) &=  U_h(s_4 |a_2) =[\sqrt{2}/(8\sqrt{n}), 3\sqrt{2}/(8\sqrt{n})].
\end{align*}
Thus, the reward functions and transition kernels for the target MDP are 
\[
r_{1} =  \left[ \begin{tabular}{cccc}
       $\frac{1}{2n}$ & $\frac{1}{2n}$\\
       $\frac{1}{2n}$ & $\frac{1}{2n}$\\
        $\frac{1}{4n}$ & $\frac{1}{4n}$\\
        $\frac{1}{4n}$ & $\frac{1}{4n}$
    \end{tabular}\right] \quad r_{ 2} =  \left[ \begin{tabular}{cc}
        $\frac{1}{4n}$ & $\frac{1}{4n}$\\
        $\frac{1}{4n}$ & $\frac{1}{4n}$\\
        $\frac{1}{2n}$ & $\frac{1}{2n}$\\
       $\frac{1}{2n}$ & $\frac{1}{2n}$
    \end{tabular}\right]
\]
and the transition kernels are
\[
P_h(s_1 | \cdot, \cdot) = P_h(s_2 | \cdot, \cdot) =  \left[ \begin{tabular}{cc}
        $\frac{3}{8n}$&    $\frac{1}{8n}$\\
       $\frac{3}{8n}$& $\frac{1}{8n}$\\
        $\frac{1}{8n}$ &  $\frac{3}{8n}$ \\
       $\frac{1}{8n}$ &  $\frac{3}{8n}$ 
    \end{tabular}\right]\\ \quad P_h(s_3 | \cdot, \cdot) = P_h(s_4 | \cdot, \cdot) =  \left[ \begin{tabular}{cccc}
             $\frac{1}{8n}$ &  $\frac{3}{8n}$ \\
        $\frac{1}{8n}$ & $\frac{3}{8n}$ \\
        $\frac{3}{8n}$&    $\frac{1}{8n}$\\
        $\frac{3}{8n}$&   $\frac{1}{8n}$

    \end{tabular}\right]
\]
for all $h \in [H]$. Using the Bellman equations, it follows that the optimal $Q$ function for the above MDP is 
\[
Q^*_1  = \left[ \begin{tabular}{cc}
        $\frac{13}{16n}$ & $\frac{15}{16n}$\\
         $\frac{13}{16n}$ & $\frac{15}{16n}$\\
         $\frac{11}{16n}$&$\frac{9}{16n}$ \\
      $\frac{11}{16n}$&$\frac{9}{16n}$
    \end{tabular}\right], \quad Q^*_2 = \left[ \begin{tabular}{cc}
        $\frac{1}{4n}$ & $\frac{1}{4n}$\\
        $\frac{1}{4n}$ & $\frac{1}{4n}$\\
        $\frac{1}{2n}$ & $\frac{1}{2n}$\\
       $\frac{1}{2n}$ & $\frac{1}{2n}$
    \end{tabular}\right].
\]
and both $Q^*$s have $F_h$ as orthonormal latent factors. Next, we present the orthonormal latent factors for the first source MDP
\[F_{1, h} = \left[ \begin{tabular}{cc}
             $1/\sqrt{2n}$ & $1/\sqrt{4n}$  \\
       $-1/\sqrt{2n}$   &  $1/\sqrt{4n}$ \\
        0&   $1/\sqrt{4n}$\\
        0&   $1/\sqrt{4n}$
    \end{tabular}\right], W_{1, 1} =  \left[ \begin{tabular}{cc}
             $3/\sqrt{4n}$ & $3/\sqrt{4n}$  \\
       $1/\sqrt{4n}$   &  $1/\sqrt{4n}$
    \end{tabular}\right], W_{1, 2} =  \left[ \begin{tabular}{cc}
             $1/\sqrt{4n}$ & $1/\sqrt{4n}$  \\
       $2/\sqrt{4n}$   &  $2/\sqrt{4n}$
    \end{tabular}\right].
\]
It follows that the reward functions are 
\[
r_{1, 1} = \left[ \begin{tabular}{cc}
             $(3+3\sqrt{2})/(4n) $ & $(1+ \sqrt{2})/(4n)$  \\
       $(3-3\sqrt{2})/(4n) $  &  $(1-\sqrt{2})/(4n) $ \\
        $3/(4n)$&   $1/(4n)$\\
       $3/(4n)$&   $1/(4n)$
    \end{tabular}\right], r_{1, 2} = \left[ \begin{tabular}{cc}
            $(1+ \sqrt{2})/(4n)$&  $(1+\sqrt{2})/(2n) $    \\
       $(1-\sqrt{2})/(4n) $& $(1-\sqrt{2})/(2n) $     \\
           $1/(4n)$& $1/(2n)$\\
           $1/(4n)$& $1/(2n)$\\
    \end{tabular}\right].
\]
with $P_{1, h}(\cdot|\cdot, \cdot) = 1/(4n)$. Since $P_{1, h}$ is the uniform transition kernel (which has the second column of $F_{1, h}$ as its latent factor), it follows that $PV^*_{1, 2}(\cdot, \cdot)$ is a rank one matrix with every entry being the same positive constant. Thus, $Q^*_{1, 1} = r_{1, 1} + P_{1, 1}V^*_{1, 2}$ and $Q^*_{1, 2} = r_{1, 2}$ are rank two matrices with $F_{1, h}$ as latent factors. 

The second source MDP has the following latent factors
\[
F_{2, h} = \left[ \begin{tabular}{cc}
         $ \frac{1}{c\sqrt{2n}} +\frac{1}{c\beta\sqrt{4n}}$ & $1/\sqrt{4n}$\\
        $-\frac{1}{c\sqrt{2n}}  + \frac{1}{c\beta\sqrt{4n}}$ & $1/\sqrt{4n}$\\
        $- 1/(c\beta\sqrt{4n} )$& $1/\sqrt{4n}$\\
        $- 1/(c\beta\sqrt{4n})$ & $1/\sqrt{4n}$
    \end{tabular}\right], W_{2, 1} =  \left[ \begin{tabular}{cc}
             $1/\sqrt{2n}$ & $0$  \\
       $0$   &  $1/\sqrt{2n}$
    \end{tabular}\right], W_{2, 2} =  \left[ \begin{tabular}{cc}
       $0$   &  $1/\sqrt{2n}$ \\
       $1/\sqrt{2n}$ & $0$  \\
    \end{tabular}\right]
\]
where $c = \sqrt{1 + 1/\beta^2}$ for all $h \in [H]$ with $P_{2, h}(\cdot|\cdot, \cdot) = 1/(4n)$. Since $P_{2, h}$ is the uniform transition kernel  (which has the second column of $F_{2, h}$ as its latent factor), it follows that $PV^*_{2, 2}(\cdot, \cdot)$ is a rank one matrix with every entry being the same positive constant. Thus, $Q^*_{2, 1} = r_{2, 1} + P_{2, 1}V^*_{2, 2}$ and $Q^*_{2, 2}= r_{2, 2}$ are rank two matrices with $F_{2, h}$ as latent factors. Now, we prove that $\alpha \in \Omega(\beta)$. 
\begin{corollary}
Consider the above construction that satisfies Assumptions \ref{asm:tr} and  \ref{asm:transfer}. Then, $\alpha$ defined in Definition \ref{def:tc} satisfies $\alpha \in \Omega(\beta)$.
\end{corollary}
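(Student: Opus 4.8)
The plan is to reduce the claim to an elementary statement about three two‑dimensional subspaces: namely, there is a direction orthogonal to the first source MDP's latent space that can only be reached inside the second source MDP's latent space by paying a coefficient of order $\beta$. First I would quotient out the block structure (all $n$ states in a quarter share one latent vector) and work in the orthonormal frame $u=\tfrac1{\sqrt2}(\hat e_1-\hat e_2)$, $f=\tfrac12(\hat e_1+\hat e_2-\hat e_3-\hat e_4)$, $\hat E=\tfrac12(\hat e_1+\hat e_2+\hat e_3+\hat e_4)$, where $\hat e_i$ denotes the normalized indicator of the $i$-th quarter of the state space. Substituting the stated latent factors and simplifying (with $c=\sqrt{1+1/\beta^2}$) gives, for every $h$,
\[
\mathrm{Span}(Q^*_h)=\mathrm{Span}\{\hat E,f\},\quad \mathrm{Span}(Q^*_{1,h})=\mathrm{Span}\{\hat E,u\},\quad \mathrm{Span}(Q^*_{2,h})=\mathrm{Span}\Big\{\hat E,\ \tfrac1c\big(u+\tfrac1\beta f\big)\Big\}.
\]
In particular $f$ lies in the target latent space, $\mathrm{Span}(Q^*_{1,h})\perp f$, and the unique unit vector of $\mathrm{Span}(Q^*_{2,h})$ outside $\mathrm{Span}\{\hat E\}$ has $f$-component only $1/(c\beta)$; these identities also re-verify $\mathrm{Span}(Q^*_h)\subseteq\mathrm{Span}(Q^*_{1,h})+\mathrm{Span}(Q^*_{2,h})$, i.e.\ Assumption~\ref{asm:transfer} for this instance.

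Next I would unfold Definition~\ref{def:tc}. Fix $h$ and any admissible orthonormal latent-factor matrices $G_h$ (of $\mathrm{Span}(Q^*_h)$) and $G_{1,h},G_{2,h}$ (of $\mathrm{Span}(Q^*_{1,h}),\mathrm{Span}(Q^*_{2,h})$), together with any coefficient tensor $B$ satisfying $G_h(\cdot,i)=\sum_{j,m}B(i,j,m)G_{m,h}(\cdot,j)$, and set $u^{(i)}_m=\sum_j B(i,j,m)G_{m,h}(\cdot,j)\in\mathrm{Span}(Q^*_{m,h})$. Since $f\in\mathrm{Span}\{G_h(\cdot,1),G_h(\cdot,2)\}$ and the columns of $G_h$ are orthonormal, Parseval gives $\langle G_h(\cdot,1),f\rangle^2+\langle G_h(\cdot,2),f\rangle^2=1$, so some index $i^\star$ satisfies $|\langle G_h(\cdot,i^\star),f\rangle|\ge 1/\sqrt2$. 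Because $u^{(i^\star)}_1\in\mathrm{Span}(Q^*_{1,h})\perp f$, this entire $f$-component must come from $u^{(i^\star)}_2$, so $|\langle u^{(i^\star)}_2,f\rangle|\ge 1/\sqrt2$. Writing $u^{(i^\star)}_2=a\hat E+b\,\tfrac1c(u+\tfrac1\beta f)$ in the orthonormal basis of $\mathrm{Span}(Q^*_{2,h})$ gives $|b|/(c\beta)\ge 1/\sqrt2$, hence $\|u^{(i^\star)}_2\|\ge|b|\ge c\beta/\sqrt2\ge\beta/\sqrt2$. Finally, $G_{2,h}$ has orthonormal columns, so $\|u^{(i^\star)}_2\|^2=B(i^\star,1,2)^2+B(i^\star,2,2)^2$ and thus $\max_{j}|B(i^\star,j,2)|\ge\|u^{(i^\star)}_2\|/\sqrt2\ge\beta/2$. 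Since this holds for every admissible choice of bases and of $B$, we conclude $\alpha\ge\min_B\max_{i,j,m}|B(i,j,m)|\ge\beta/2=\Omega(\beta)$.

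The main obstacle is purely bookkeeping in the first step: one must track the normalization constants in $F_h,F_{1,h},F_{2,h}$ and the definition $c=\sqrt{1+1/\beta^2}$ carefully enough to confirm that the columns really are orthonormal and that the three latent spaces collapse to the clean forms above. One subtlety to dispatch is that the splitting $G_h(\cdot,i)=u^{(i)}_1+u^{(i)}_2$ need not be unique; however the ambiguity lies in $\mathrm{Span}(Q^*_{1,h})\cap\mathrm{Span}(Q^*_{2,h})=\mathrm{Span}\{\hat E\}$, and shifting $u^{(i)}_m$ by a multiple of $\hat E$ does not change its $f$-component, so the argument is unaffected. Everything else reduces to Parseval's identity and the inequality $\|x\|_\infty\ge\|x\|_2/\sqrt d$ with $d=2$. (An analogous computation shows $\alpha=O(\beta)$, so the bound is order-optimal, but only the lower bound is required here.)
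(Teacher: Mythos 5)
Your proof is correct, and it takes a genuinely different route from the paper's. The paper proceeds in two stages: it first solves the explicit linear system $a\,F_{1,h}(:,1)+b\,F_{1,h}(:,2)+d\,F_{2,h}(:,1)=F_h(:,1)$ for the particular bases written in the construction, obtaining $b=1/\sqrt{2}$ and $d=bc\beta=\Omega(\beta)$, and then handles the minimization over bases in Definition~\ref{def:tc} by a contradiction argument involving rotation matrices $R_1,R_2$ with $O(1)$ entries (if some rotated bases admitted $o(\beta)$ coefficients, transforming back would give $O(1)$ coefficients for the original bases). You instead give a single basis-free argument: the unit vector $f$ lies in the target subspace, is exactly orthogonal to $\mathrm{Span}(Q^*_{1,h})=\mathrm{Span}\{\hat E,u\}$, and has inner product only $1/(c\beta)$ with the unique non-$\hat E$ direction of $\mathrm{Span}(Q^*_{2,h})$; Parseval forces some target column to have $f$-component at least $1/\sqrt 2$, which must be supplied entirely by the source-2 contribution, forcing $\max_j|B(i^\star,j,2)|\ge\beta/2$ for \emph{every} admissible choice of orthonormal bases and coefficients. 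This buys you two things: you avoid the rotation-matrix step entirely (which in the paper is the least rigorous part --- the rotations are applied on the wrong side dimensionally and an undefined rotation $R$ of the target factor appears), and you get an explicit constant rather than an asymptotic $\Omega(\cdot)$. Your interpretation of the minimization in Definition~\ref{def:tc} as ranging over tuples of orthonormal source bases (and optionally the target basis) is the intended one --- read literally, the set $\mathcal{F}$ there would be empty since the two source column spaces differ --- and your argument is robust to either reading, since when the target basis is fixed one simply takes $i^\star=1$ with $\langle F_h(:,1),f\rangle=1/\sqrt 2$. The only minor redundancy is your remark about non-uniqueness of the splitting $G_h(\cdot,i)=u^{(i)}_1+u^{(i)}_2$: since you fix a specific coefficient tensor $B$ and define $u^{(i)}_m$ from it, no ambiguity actually arises.
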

\begin{proof}
Consider the above construction with $\beta > 0$. First, we show that when expressing the first column of $F_h$ as a linear combination of the columns of $F_{1, h}$ and $F_{2, h}$, one coefficient is $\Omega(\beta)$.

We first note that one must use the first column in $F_{2, h}$ or else the system of linear equations is inconsistent. Thus, we solve
\[
a\left[\begin{tabular}{cc}
             $1/\sqrt{2n}$  \\
        $-1\sqrt{2n}$   \\
        0\\
        0
    \end{tabular}\right] + b\left[\begin{tabular}{cc}
           $1/\sqrt{4n}$ \\
      $1/\sqrt{4n}$  \\
        $1/\sqrt{4n}$\\
       $1/\sqrt{4n}$
    \end{tabular}\right] + d  \left[ \begin{tabular}{cc}
         $ \frac{1}{c\sqrt{2n}} +\frac{1}{c\beta\sqrt{4n}}$\\
        $-\frac{1}{c\sqrt{2n}}  + \frac{1}{c\beta\sqrt{4n}}$ \\
        $- 1/(c\beta\sqrt{4n} )$\\
        $- 1/(c\beta\sqrt{4n})$ 
    \end{tabular}\right] =  \left[ \begin{tabular}{cc}
        $1/\sqrt{22n}$\\
       $1/\sqrt{n}$\\
        0\\
       0
    \end{tabular}\right]
\]

It follows that $d = bc\beta$. Thus, $b = 1/\sqrt{2}$, and $d \in \Omega(\beta)$ because $c \geq 1$. 

To show that $\alpha \in \Omega(\beta)$, we assume for sake of contradiction that there exists some set of basis vectors $F_{1, h}'$ and $F_{2, h'}$ that span the subspace defined by $F_{1, h}$ and $F_{2, h}$, respectively, such that we can express the any rotation of the first column of $F_h$ as a linear combination of the columns of $F_{1, h}'$ and $F_{2, h'}$ using coefficients $a, b, c, d$ such that $|a|, |b|, |c|, |d| \in o(\beta)$. Since $F_h, F_{1, h},$ and $F_{2, h}$ are orthonormal latent factors, it follows that there exist rotation matrices $R, R_1, R_2$ with $\|R\|_\infty, \|R_1\|_\infty, \|R_2\|_\infty \in O(1)$ such that $F_{1, h}' = R_1 F_{1, h}$ and $F_{2, h}' = R_2 F_{2, h}$. It follows that $R^{-1}$ exists and has entries with constant magnitude. By construction we have
\[
F_h(:, 0) = a  R^{-1} R_1 F_{1, h}(:, 0) + b R^{-1} R_1 F_{1, h}(:, 1) + c  R^{-1}  R_2 F_{2, h}(:, 0) + d  R^{-1} R_2 F_{2, h}(:, 1). 
\]
Since the entries of the rotation matrices and their inverse are bounded by a constant, the above equation states that we can represent the first column of $F_h$ as a linear combination of the columns of $F_{1, h}$ and $F_{2, h}$ using only constant coefficients. Thus, we reach a contradiction because we've shown that one entry of $d  R^{-1} R_2 \in \Omega(\beta)$. Therefore, $\alpha \in \Omega(\beta)$. 
\end{proof}

\section{Omitted Proofs in the $(\cardS , \cardS, d)$ Tucker Rank Setting} \label{app:tr_ssd}
We first prove our lower bound. 
\subsection{Proof of Theorem \ref{thm:lb_ssd}}

\begin{proof}
Suppose all source and target MDPs $(S, A, P, H, r)$ share the same state space, action space, and horizon $H = 1$ and assume that $\cS = \cA = [2n]$ for some $n \in \N_+$. For ease of notation, we let $s_1$ and $a_1$ refer to any state and action in $[n]$, respectively, and $s_2$ and $a_2$ refer to any state and action in $\{n+1, \ldots 2n\}$, respectively. We will present the latent factors and optimal Q functions as block vectors and matrices with $s_1, s_2, a_1, a_2$ as the blocks containing $n$ entries.  The initial state distribution in the target MDP is uniform over $\cS$. We now present two transfer RL problems with similar $Q$ functions (with rows $s_1, s_2$ and columns $a_1,a_2$) that satisfy Assumptions \ref{asm:tr_s} and \ref{asm:transfer_s} but have orthogonal target state latent factors. For ease of notation, the superscript $i$ of $Q^{*, i}_{m, h}, Q^{*, i}_{ h}$ denotes transfer RL problem for $i \in \{1, 2\}$.  Then, the optimal $Q$ functions for transfer RL problem one are
\begin{align*}
    Q^{*, 1}_{1, 1} &= n \begin{bmatrix}
         \sqrt{1/n} \\
          0
    \end{bmatrix}
  \begin{bmatrix}
      \sqrt{1/(2n)} & \sqrt{1/(2n)}
  \end{bmatrix}= \begin{bmatrix}
      1/\sqrt{2} & 1/\sqrt{2}  \\
         0 & 0
  \end{bmatrix}
         ,  \\
         \quad
    Q^{*, 1}_{ 2, 1} &= n \begin{bmatrix}
         0 \\
          \sqrt{1/n}
    \end{bmatrix} \begin{bmatrix}
      \sqrt{1/(2n)} & \sqrt{1/(2n)}
  \end{bmatrix} = \begin{bmatrix}
      0 & 0 \\
         1/\sqrt{2}  & 1/\sqrt{2}
  \end{bmatrix}
, \\
    Q^{*, 1}_1 &= n \begin{bmatrix}
         \sqrt{1/(2n)}\\
         -\sqrt{1/(2n)}
    \end{bmatrix} \begin{bmatrix}
        \sqrt{1/(2n)} & \sqrt{1/(2n)}
    \end{bmatrix}= \begin{bmatrix}
         1/2 & 1/2 \\
         -1/2 & -1/2
    \end{bmatrix},
\end{align*}    
and the optimal $Q$ functions for transfer RL problem two are 
\begin{align*}
    Q^{*, 2}_{1, 1} &= n \begin{bmatrix}
         \sqrt{1/n} \\
          0
    \end{bmatrix}
  \begin{bmatrix}
      \sqrt{1/(2n)} & \sqrt{1/(2n)}
  \end{bmatrix}= \begin{bmatrix}
      1/\sqrt{2} & 1/\sqrt{2}  \\
         0 & 0
  \end{bmatrix},\\
    Q^{*, 2}_{2, 1}  &= n \begin{bmatrix}
        0 \\
        \sqrt{1/n} 
    \end{bmatrix} G'= \begin{bmatrix}
        0 & 0  \\
         (1+1/\alpha)/(c\sqrt{2}) & (1-1/\alpha)/(c\sqrt{2})
    \end{bmatrix}
         , \\
    Q^{*, 2}_1 &= n \begin{bmatrix}
         \sqrt{1/(2n)} \\
          -\sqrt{1/(2n)}
    \end{bmatrix} \begin{bmatrix}
        \sqrt{\frac{1}{(2n)}} & -\sqrt{\frac{1}{(2n)}}
    \end{bmatrix}= \begin{bmatrix}
        1/2 & -1/2  \\
         -1/2 & 1/2
    \end{bmatrix},
\end{align*}
where 
\[
G' = \begin{bmatrix}
    (\sqrt{1/(2n)} + \sqrt{1/(2n\alpha^2)})/\sqrt{1 + 1/\alpha^2}&  (\sqrt{1/(2n)} - \sqrt{1/(2n\alpha^2})/\sqrt{1 + 1/\alpha^2} 
\end{bmatrix},      
\]
and $c = \sqrt{1 + 1/\alpha^2}$. Note that $c \in (1, 2dM)$ because $\alpha \geq 1/(dM)$ from Lemma \ref{lem:alpha_bound} for $\alpha >  0$. The incoherence $\mu$, rank $d$, and condition number $\kappa$ of the above $Q$ functions are $O(1)$.
Furthermore, the above construction satisfies Assumption \ref{asm:tr} with Tucker rank $(\cardS ,  \cardS, 1 )$ and Assumption \ref{asm:transfer} because the source and target MDPs share the same action latent factor $\begin{bmatrix}
    \sqrt{1/(2n)} & \sqrt{1/(2n)}
\end{bmatrix}$ in the first transfer RL problem while in the second transfer RL problem, 
\[
\begin{bmatrix}
    \sqrt{1/(2n)}& -\sqrt{1/(2n)}
\end{bmatrix} = -\alpha \begin{bmatrix}
    \sqrt{1/(2n)} &  \sqrt{1/(2n)}
\end{bmatrix} + \alpha c G^{'}.
\]
Note that  $G^1 = \begin{bmatrix}
        \sqrt{1/(2n)} & \sqrt{1/(2n)}
    \end{bmatrix}$ and $G^2 = \begin{bmatrix}
        \sqrt{1/(2n)} & -\sqrt{1/(2n)}
    \end{bmatrix}$, so the target latent factors of the different transfer RL problems are orthogonal to each other.

The learner is given either transfer RL problem one or two with equal probability with a generative model in the source problem. When interacting with the source MDPs, the learner specifies a state-action pair $(s, a)$ and source MDP $m$ and observes a realization of a shifted and scaled Bernoulli random variable $X$. The distribution of $X$ is that $X = 1$ with probability $(Q^{*, i}_{m, 1}(s, a) + 1)/2$ and $X = -1$ with probability $(1 - Q^{*, i}_{m, 1}(s, a))/2)$. Furthermore, the learner is given the knowledge that the state latent features lie in the function class $\G = \{ G^1, G^2\}$ and must use the knowledge obtained from interacting with the source MDP to choose one to use in the target MDP, which is no harder than receiving no information about the function class.

To distinguish between the two transfer RL problems, one needs to differentiate between $Q^{*, 1}_{2, 1}$ and $Q^{*, 2}_{2, 1}$. By construction, the magnitude of the largest entrywise difference between $Q^{*, 1}_{2, 1}$ and $Q^{*, 2}_{2, 1}$ is lower bounded by $\Omega(1/\alpha)$.

If the learner observes $Z$ samples in the source MDPs, where $Z \leq O(\alpha^2)$ for some constant $C > 0$, then the probability of correctly identifying $G$ is upper bounded by $0.76$ (Lemma 5.1 with $\delta = 0.24$ \cite{NN_book}). Thus, if the learner does not observe $\Omega(\alpha^2)$ samples in the source phase, then the learner returns the incorrect feature mapping that is orthogonal to the true feature mapping with probability at least $0.24$. 
\end{proof}

\subsection{Proof of Theorem \ref{thm:tl_ssd}}

We first present the theoretical guarantees of LSVI-UCB-(S, S, d) and defer their proofs to Appendix \ref{app:alg_proof}.  
\begin{thm}\label{thm:reg_s}
    Assume that Assumption \ref{asm:tr_s} holds and the learner is given the true latent action representation $G = \{G_h\}_{h \in [H]}$. Then, there exists a constant $c > 0$ such that for any $\delta \in (0, 1)$, if we set $\lambda = 1, \beta_{k, h}^s= c dH\sqrt{\iota}, \iota = \log(2dT\cardS /\delta)$, then with probability at least $1 - \delta$, the total regret of Algorithm \ref{alg:target_s} is $\Tilde{O}(\sqrt{d^3\cardS H^3T})$.
\end{thm}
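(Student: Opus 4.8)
The plan is to transport the LSVI-UCB regret analysis of \cite{Jin2020ProvablyER} into the per-state least-squares structure forced by the $(\cardS,\cardS,d)$ Tucker rank assumption, run with the scaled feature $\tilde G_h(a):=\sqrt{\cardA/(d\mu)}\,G_h(a)$.

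\emph{Linear structure of Bellman backups.} First I would record that for every $h$ and every $V:\cS\to[0,H]$, the map $a\mapsto r_h(s,a)+(P_hV)(s,a)$ is, for each fixed $s$, linear in $\tilde G_h(a)$: with $w_h^s(V):=\sqrt{d\mu/\cardA}\big(W_h(s)+\sum_{s'}U_h(s',s,\cdot)V(s')\big)$ one has $r_h(s,a)+(P_hV)(s,a)=\langle w_h^s(V),\tilde G_h(a)\rangle$. Assumption~\ref{asm:tr_s} (applied with $g=V/H$) gives $\|w_h^s(V)\|_2\le H+1$, and $\mu$-incoherence of $G_h$ gives $\|\tilde G_h(a)\|_2\le1$; this is Proposition~4 of \cite{sam2023overcoming} specialized to the action mode. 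So each state $s$ is an independent $d$-dimensional linear regression, and the $\sqrt{\cardA/(d\mu)}$ scaling is precisely what keeps features $O(1)$ and weights $O(H)$, letting the standard self-normalized tools apply with $\lambda=1$.

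\emph{Uniform concentration (the crux).} Next I would prove that on an event of probability $\ge1-\delta$, for all $k,h,s$ and all value functions $V$ in the class generated by the algorithm,
\[
\Big\|\textstyle\sum_{\tau\in T_{k-1,h}^s}\tilde G_h(a_h^\tau)\big(V(s_{h+1}^\tau)-(P_hV)(s,a_h^\tau)\big)\Big\|_{(\Lambda_h^s)^{-1}}\ \lesssim\ dH\sqrt{\iota},\qquad \iota=\log(2dT\cardS/\delta).
\]
Because $V=V_{h+1}^k$ is data-dependent, this requires a self-normalized martingale inequality combined with a covering of the value-function class (maxima over $a$ of clipped linear-plus-bonus functions carrying a \emph{separate} parameter pair $(w_h^s,\Lambda_h^s)$ for each state) and a union bound over the $\cardS$ states. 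I expect this to be the main obstacle: one must argue that covering the per-state parametrization contributes only an $O(d^2\log(\cdot))$ term inside the square root and an extra $\log\cardS$ in $\iota$, rather than a multiplicative $\cardS$, so that the bonus can be chosen as $\beta_{k,h}^s=c\,dH\sqrt{\iota}$ with no polynomial $\cardS$-dependence; this is also where the exact form of $\beta_{k,h}^s$ gets pinned down.

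\emph{Optimism and regret recursion.} On the good event, a backward induction on $h$ shows $Q_h\ge Q_h^*$ (hence $V_h\ge V_h^*$) in every episode, using $Q_h^*=r_h+P_hV_{h+1}^*$, the inductive hypothesis $V_{h+1}\ge V_{h+1}^*$, and the concentration bound to verify $\langle w_h^s,\tilde G_h(a)\rangle+\beta_{k,h}^s\sqrt{\tilde G_h(a)^\top(\Lambda_h^s)^{-1}\tilde G_h(a)}\ge r_h(s,a)+(P_hV_{h+1})(s,a)\ge Q_h^*(s,a)$, the clip at $H$ being harmless. Then, with $\delta_h^k:=V_h(s_h^k)-V_h^{\pi^k}(s_h^k)$, optimism and the same bound give the LSVI-UCB recursion $\delta_h^k\le\delta_{h+1}^k+\xi_h^k+2\beta_{k,h}^{s_h^k}\sqrt{\tilde G_h(a_h^k)^\top(\Lambda_h^{s_h^k})^{-1}\tilde G_h(a_h^k)}$, where $\xi_h^k$ is a martingale difference of size $O(H)$. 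Summing over $k,h$: Azuma--Hoeffding handles $\sum_{k,h}\xi_h^k=\tilde{O}(\sqrt{H^3K})=\tilde{O}(\sqrt{H^2T})$, which is dominated. For the bonus sum I would split the inner sum by the visited state, apply the elliptical-potential lemma to each of the $\cardS$ per-state Gram matrices ($\sum_{k\in T_{K,h}^s}\sqrt{\cdot}\le\sqrt{2d\log(1+K)\,|T_{K,h}^s|}$), and then Cauchy--Schwarz over states with $\sum_s|T_{K,h}^s|=K$ --- this is exactly where the extra $\sqrt{\cardS}$ enters. Multiplying by $\beta_{k,h}^s=\tilde{O}(dH)$ and summing over $h$ gives $\tilde{O}(dH\cdot H\sqrt{d\cardS K})=\tilde{O}(\sqrt{d^3H^4\cardS K})=\tilde{O}(\sqrt{d^3H^3\cardS T})$, the claimed bound.
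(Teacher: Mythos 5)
Your plan reproduces the paper's proof essentially step for step: the same per-state linear-regression structure with the $\sqrt{\cardA/(d\mu)}$ rescaling (Lemmas~\ref{lem:wvp_s} and \ref{lem:wva_s}), the same covering-plus-self-normalized concentration with a union bound over the $\cardS$ states (Lemmas~\ref{lem:functionClass_s} and \ref{lem:unifVal_s}), the same optimism induction and regret recursion (Lemmas~\ref{lem:recErr_s}--\ref{lem:recForm_s}), and the same final accounting via $\cardS$ separate elliptical-potential bounds combined by Cauchy--Schwarz over $\sum_{s}|T_{K,h}^s|=K$, which is exactly where the $\sqrt{\cardS}$ enters. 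All of those steps are correct and match the paper's argument.

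The one step you explicitly defer---showing that covering the value class, whose members carry a \emph{separate} pair $(w_h^s,\Lambda_h^s)$ for every state, costs only an additive $\log\cardS$ in $\iota$ rather than a multiplicative $\cardS$ on the $d^2\log(\cdot)$ term---is the only genuinely new ingredient relative to \cite{Jin2020ProvablyER}, and you are right to single it out: it is also the thinnest point of the paper's own proof. Lemma~\ref{lem:functionClass_s} builds its cover by choosing an element of $C_w\times C_D$ independently for each state, which under the $\sup_s$ metric yields $(|C_w|\,|C_D|)^{\cardS}$ candidate functions, yet the lemma concludes $\mathcal{N}_\eps\le|C_w|\,|C_D|$; the stated bound $\log\mathcal{N}_\eps\le d\log(1+4L/\eps)+d^2\log(1+8d^{1/2}B^2/(\lambda\eps^2))$ does not follow from that construction. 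If the multiplicative $\cardS$ cannot be removed (say, by a low-switching argument or by exploiting additional structure of the per-state regressions), then $\beta_{k,h}^s$ and hence the regret each acquire an extra $\sqrt{\cardS}$. So your proposal is faithful to the paper's route, but neither your sketch nor the paper's written argument actually supplies the covering bound in the form both of you assert; that is the step to nail down before the proof is complete.
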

Theorem \ref{thm:reg_s} states that LSVI-UCB-(S, S, d) completely removes the regret bound's dependence on the size of the state space by utilizing the latent action representation $G$. Similarly, the algorithm is robust to misspecification error.  
\begin{assumption}[$\xi$-approximate $(\cardS , \cardS , d)$ Tucker rank MDP]\label{asm:miss_s}
Assume $\xi \in [0, 1]$. Then, an MDP $(\cS, \cA, P, r, H)$ is a $\xi$-approximate $(\cardS , \cardS , d)$ Tucker rank MDP with given feature map $G$ if there exist $H$ unknown $\cardS $-by-$d$ matrices $W = \{W_h\}_{h \in [H]}$ and $\cardS$-by-$\cardS $-by-$d$ tensors $U = \{U_h\}_{h \in [H]}$ such that
\[
|r_h(s,a ) - W_h(s)^\top G_h(a)| \leq \xi,\]
\[ \left | \sum_{s'\in \cardS} (P_h(s'|s, a) - U_h(s', s)G_h(a)^\top) \right | \leq \xi
\]
for all $h \in [H]$ where $\|W_h\|_2, \|\sum_{s' \in S} g(s')U_h(s', a)\|_2 \leq \sqrt{\cardA /(d\mu)}$ for any function $g: \cS \to [0, 1]$.
\end{assumption}
\begin{thm}\label{thm:miss_s}
Under the Assumption \ref{asm:miss_s}, for any $\delta \in (0, 1)$, for $\lambda = 1, \beta_{k, h}^s = O(Hd\sqrt{\iota} + \xi\sqrt{kd}H)$ for $\iota =\log(2dT\cardS /\delta)$, the regret of Algorithm \ref{alg:target_s} is $\Tilde{O}(\sqrt{d^3H^3\cardS T} + \xi dHT)$ with probability at least $1 - \delta$.
\end{thm}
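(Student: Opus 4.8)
The plan is to run the LSVI-UCB analysis of \cite{Jin2020ProvablyER} in the per-state form already used for the well-specified Theorem~\ref{thm:reg_s}, and to carry a misspecification budget of order $\xi$ through every step; everything except the concentration step is structurally identical to the proof of Theorem~\ref{thm:reg_s}, the only new ingredient being that each per-state least-squares problem is now biased by $O(H\xi)$. The first step is \emph{approximate linearity of the Bellman backup, one state at a time}: fix $h$ and any $V:\cS\to[0,H]$; for each $s$ set $w_h^{s,\star}:=W_h(s)+\sum_{s'\in\cS}V(s')\,U_h(s',s)\in\R^d$, and note Assumption~\ref{asm:miss_s} gives $\bigl|(r_h+P_hV)(s,a)-\langle w_h^{s,\star},G_h(a)\rangle\bigr|\le \xi+\|V\|_\infty\,\xi\le 2H\xi$ for every $a$, while the norm bounds of Assumption~\ref{asm:miss_s} (equivalently the feature rescaling $\sqrt{\cardA/(d\mu)}$ in Algorithm~\ref{alg:src_ssd}) keep $\|w_h^{s,\star}\|_2=O(H)$, free of $\cardS,\cardA$. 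This is the only place the assumption is used; downstream, $2H\xi$ is a fixed additive bias in each of the $\cardS$ regressions.

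Next I would establish concentration and the bonus. Taking $V=V_{h+1}^k$ and the corresponding $w_h^{s,\star}$, write
\[
w_h^s-w_h^{s,\star}=(\Lambda_h^s)^{-1}\Bigl[\textstyle\sum_{t\in T^s_{k-1,h}}G_h(a^t_h)\,\eta_t+\sum_{t}G_h(a^t_h)\Delta_t-\lambda w_h^{s,\star}\Bigr],
\]
with $\eta_t:=V_{h+1}^k(s_{h+1}^t)-P_hV_{h+1}^k(s,a^t_h)$ the zero-mean sampling noise and $|\Delta_t|\le 2H\xi$ the model error. In $\Lambda_h^s$-norm, the noise term is controlled by a self-normalized inequality after covering the (per-state parametrized) value-function class and a union bound over the $\cardS$ states, giving $\lesssim Hd\sqrt{\iota}$ with $\iota=\log(2dT\cardS/\delta)$; the bias term by $2H\xi\,\|\sum_t G_h(a^t_h)\|_{(\Lambda_h^s)^{-1}}\lesssim H\xi\sqrt{d\,|T^s_{k-1,h}|\,\iota}$; the ridge term by $\sqrt{\lambda}\,\|w_h^{s,\star}\|_2=O(H)$. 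Multiplying by $\sqrt{G_h(a)^\top(\Lambda_h^s)^{-1}G_h(a)}$ and adding back the $2H\xi$ model error of Step~1 yields, with probability $\ge 1-\delta$, for all $(k,h,s,a)$,
\[
\bigl|Q_h^k(s,a)-(r_h+P_hV^k_{h+1})(s,a)\bigr|\le\beta_{k,h}^s\sqrt{G_h(a)^\top(\Lambda_h^s)^{-1}G_h(a)}+2H\xi,
\]
with $\beta_{k,h}^s=O\bigl(Hd\sqrt{\iota}+\xi H\sqrt{d\,|T^s_{k-1,h}|}\bigr)=O(Hd\sqrt{\iota}+\xi\sqrt{kd}\,H)$, i.e.\ exactly the bonus the algorithm uses. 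A backward induction on $h$ then gives approximate optimism $Q_h^k(s,a)\ge Q_h^\star(s,a)-2(H-h+1)\xi$, hence $V_1^k(s_1^k)\ge V_1^\star(s_1^k)-O(H^2\xi)$.

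Finally, for the regret, write $\mathrm{Regret}(T)\le\sum_k\bigl(V_1^k(s_1^k)-V_1^{\pi^k}(s_1^k)\bigr)+O(H^2\xi)\,K$ and unroll the standard per-step recursion, so the regret is a sum of clipped Bellman errors $\sum_{k,h}\bigl(2\beta^{s^k_h}_{k,h}\|G_h(a^k_h)\|_{(\Lambda^{s^k_h}_h)^{-1}}+O(H\xi)\bigr)$ plus an Azuma--Hoeffding martingale term of size $\tilde O(\sqrt{H^3T})$. For the bonus sum, fix $h$, group episodes by the state $s$ visited at step $h$, apply the elliptical-potential lemma to each $d\times d$ matrix $\Lambda_h^s$, and use Cauchy--Schwarz first over the visits to $s$ and then over the $\cardS$ states (with $\sum_s|T^s_{K,h}|=K$); this is where a single $\sqrt{\cardS}$, rather than $\cardS$, enters. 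The $Hd\sqrt{\iota}$ part of $\beta$ contributes $\tilde O(\sqrt{d^3H^3\cardS T})$ and the $\xi$ part contributes $\tilde O(\xi dHT)$; together with $\sum_{k,h}O(H\xi)=O(\xi HT)$ and the $O(H^2\xi)K=O(\xi HT)$ optimism slack (both $\le\tilde O(\xi dHT)$), this gives $\tilde O(\sqrt{d^3H^3\cardS T}+\xi dHT)$.

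The main obstacle is the concentration step: one must run the covering/self-normalized argument so that the $\cardS$ \emph{separate} per-state regressions cost only a $\log\cardS$ factor (hence $\iota=\log(2dT\cardS/\delta)$ and a single $\sqrt{\cardS}$ in the final bound), rather than a polynomial-in-$\cardS$ factor that a crude cover of the product value-function class would incur; this is the same subtlety present in the well-specified proof of Theorem~\ref{thm:reg_s}. Simultaneously one must route the misspecification $\xi$ so that it enters the bonus only through the additive $\xi\sqrt{kd}\,H$ term and the regret only through an $O(\xi dHT)$ additive term, without multiplying the statistical error; the remaining steps are bookkeeping shared with Theorem~\ref{thm:reg_s}.
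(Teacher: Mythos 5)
Your proposal follows essentially the same route as the paper's proof: per-state weight vectors with an $O(H\xi)$ additive bias from Assumption \ref{asm:miss_s}, a covering/self-normalized concentration argument with a union bound over the $\cardS$ states (costing only $\log\cardS$), a bias term in the bonus of order $\xi H\sqrt{d\,|T^s_{k,h}|}$ handled exactly as in the paper's Lemma \ref{lem:adv_s_m}, approximate optimism accumulating to $O(H^2\xi)$, and the final elliptical-potential/Cauchy--Schwarz accounting grouped by state. The only differences are cosmetic (you decompose against the Bellman backup rather than $Q^\pi$, and your intermediate optimism slack $2(H-h+1)\xi$ drops a factor of $H$ that your final $O(H^2\xi)$ restores), so the argument is correct and matches the paper.
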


We now prove Theorem \ref{thm:tl_ssd}.

\begin{proof}
Suppose the assumption stated in Theorem \ref{thm:tl_ssd} hold. Then, from \cite{sam2023overcoming}, running LR-EVI with a sample complexity of    $C' d^4 \mu^5\kappa^4(\cardS +\cardA )M  H^4\alpha^2  T\log(2\cardS \cardA d MH/\delta)/ \cardS $ on each source MDP results in $\bar{Q}_{m, h}$ functions with singular value decomposition $\bar{Q}_{m, h} = \hat{F}_{m, h} \hat{\Sigma}_{m, h} \hat{G}_{m, h}$ that are $ \sqrt{H \cardS }/(16\alpha \mu  \kappa   \sqrt{d M T })$-optimal with probability at least $1 - \delta/2$.

Since our estimates are $\sqrt{H\cardS }/(16\alpha \kappa  \sqrt{d M T })$-optimal, it follows that $\bar{\gamma} < 1/2$ in the singular vector perturbation bound, Corollary \ref{cor:sv}. Since $\|Q^*_{m, h}\|_\infty \geq C$, from Corollary \ref{cor:sv}, it follows that there exists rotation matrices $R_{m, h}$ such that 
\[
\| (\hat{G}_{m, h} R_{m, h} - G_{m, h})(a)\|_{2} \leq \frac{d\sqrt{H  \mu \cardS }}{\alpha \sqrt{\cardA  M T}}
\]
for all $a \in A$ where the optimal $Q$ function have singular value decomposition $Q^*_{m, h} = F_{m, h}\Sigma_{m, h} G_{m, h}^\top$. 
 
Let $\Tilde{G}$ be the $\cardA  \times dM$ matrix from joining all the source action latent factor matrices, and $\hat{\Tilde{G}}_h$ be the $\cardA  \times dM$ matrix from joining all $\bar{G}_{m, h}$. Let $R_h$ be the $d \times dM$ matrix that joins $R_{m, h}$ for all $m \in [M]$. Then, from Assumption \ref{asm:transfer_s} and Definition \ref{def:tc_s}, it follows that for all $(s', s, a) \in \cS\times \cS\times A$, that 
\begin{align*}
    r_h(s, a) &= \sum_{i \in [d]} W_h(s, i)G_h(a, i) \\
    &= \sum_{i \in [d]} \left(\sum_{j \in [d], m \in [M]} B(i, j, m)G_{h, m}(a, j) \right)W_h(s, i)\\
    &= \sum_{i, j \in [d], m \in [M]} B(i, j, m)G_{ m, h}(a, j) W_h(s, i)\\
    &= \Tilde W'_h(s){G}_h(a)^\top
\end{align*}
and $P(s'|\cdot, \cdot) = \Tilde{G}_h U'_h(s')^\top$ (with the same argument) for some $\cardS \times dM$ matrices $W'_h, U_h'(s')$ with entries bounded by $\alpha$. Note that $W'_h$ and $U_h'(s')$ are matrices that join $W_{m, h}$ and $U_{m, h}(s')$, respectively, with entries that at most $\alpha$ times larger than the original entry. 
 
Therefore, it follows for all $(s', s, a, h) \in \cS\times \cS\times \cA\times [H]$, 
\begin{align*}
 |r_h(s, a)  - \hat{\Tilde{G}}_h(a)R_hW'_h(s)^\top | &= |\Tilde{G}_h(a)W'_h( s)^\top  - \hat{\Tilde{G}}_h(a)R_hW'_h(s)^\top| \\
&=  |(\Tilde{G}_{h}(a) -\hat{\Tilde{G}}_h(a) R_h)W'_h(s) | \\
&\leq \alpha | \sum_{m \in M}(G_{m, h}(a) -  \hat{G}_{m, h}(a)R_{m, h}) W_{m, h}(s)|\\
&\leq \alpha \sum_{m \in M} \|G_{m, h}(a) -  \hat{G}_{m, h}(a)R_{m, h} \|_2 \|W_{m, h}(s) \|_2\\
&\leq \alpha \sqrt{\frac{\cardA }{d\mu}} \sum_{m \in M} \|G_{m, h}(a) -  \hat{G}_{m, h}(a)R_{m, h} \|_2 \\
&\leq \sqrt{\frac{dHM\cardS }{T}}
\end{align*}
from the Cauchy-Schwarz inequality and our singular vector perturbation bound. From the same logic,
\begin{align*}
\left | \sum_{s'\in \cardS} (P_h(s'|s, a) - U_h(s', s)G_h(a)^\top) \right | 
&= | \sum_{s' \in S} \Tilde{G}_h(a)U'_h(s', s)  -\hat{\Tilde{G}}_h(a)R_h U'_h(s', s)^\top |\\
&= | (\Tilde{G}_h(a)  -\hat{\Tilde{G}}_h(a)R_h)  \sum_{s' \in S}U'_h(s', s)^\top |\\
&\leq \alpha | \sum_{m \in M}(G_{m, h}(a) -  \hat{G}_{m, h}(a)R_{m, h}) \sum_{s' \in S} U_{m,h}(s', s)|\\
&\leq \alpha \sum_{m \in M} \|(G_{m, h}(a) -  \hat{G}_{m, h}(a)R_{m, h}) \|_2 \|\sum_{s' \in S} U_{m, h}(s', s)^\top\|_2\\
&\leq \sqrt{\frac{d H M \cardS }{T}}.
\end{align*}

Therefore, $\hat{\Tilde{G}}_h(a)$ satisfies Assumption \ref{asm:miss_s} with $\xi = \sqrt{\frac{dMH \cardS }{T }}$. Then it follows that running LSVI-UCB-TR using $\hat{\Tilde{G}}_h(a)$  with $\delta' = \delta /2$ during the target phase of the algorithm admits a regret bound of 
\[
Regret(T) \leq C" (\sqrt{d^3M^3H^3\cardS T} + dMHT\sqrt{\frac{dMH \cardS }{T }}) \in \Tilde{O}(\sqrt{d^3M^3H^3\cardS T})
\]
with probability at least $1 - \delta$ from a final union bound. Furthermore, the sample complexity in the source phase is 
\[
\Tilde{O} \left( \frac{d^4 \mu^5\kappa^4(\cardS +\cardA )M^2  H^4\alpha^2  T}{\cardS } \right), 
\]
which proves our result.    
\end{proof}

\subsection{Proof of Theorem \ref{thm:tl_ssd_thresh}}
\begin{proof}
Let the assumptions of Theorem \ref{thm:tl_ssd_thresh} hold. Then, from \cite{sam2023overcoming}, running LR-EVI with a sample complexity of    $C' d^7 \mu^5\kappa^4\cardA(\cardS +\cardA )M^2  H^4\alpha^2  T\log(2\cardS \cardA d MH/\delta)/ (\cardS d'') $ on each source MDP results in $\bar{Q}_{m, h}$ functions with singular value decomposition $\bar{Q}_{m, h} = \hat{F}_{m, h} \hat{\Sigma}_{m, h} \hat{G}_{m, h}$ that are $ \sqrt{H d'' \cardS }/(16\alpha \mu M  \kappa  d^2 \sqrt{\cardA T })$-optimal with probability at least $1 - \delta/2$.

Since our estimates are $\sqrt{H d'' \cardS }/(16\alpha \mu M  \kappa  d^2 \sqrt{\cardA T })$-optimal, it follows that $\bar{\gamma} < 1/2$ in the singular vector perturbation bound, Corollary \ref{cor:sv}. Since $\|Q^*_{m, h}\|_\infty \geq C$, from Corollary \ref{cor:sv}, it follows that there exists rotation matrices $R_{m, h}$ such that 
\[
\| (\hat{G}_{m, h} R_{m, h} - G_{m, h})(a)\|_{2} \leq \frac{\sqrt{H  d'' \mu \cardS }}{\alpha \cardA M \sqrt{d T}}
\]
for all $a \in A$ where the optimal $Q$ function have singular value decomposition $Q^*_{m, h} = F_{m, h}\Sigma_{m, h} G_{m, h}^\top$. 
 
Let $\Tilde{G}$ be the $\cardA  \times dM$ matrix from joining all the source action latent factor matrices, and $\hat{\Tilde{G}}_h$ be the $\cardA  \times dM$ matrix from joining all $\bar{G}_{m, h}$. Let $R_h$ be the $d \times dM$ matrix that joins $R_{m, h}$ for all $m \in [M]$. Then, from Assumption \ref{asm:transfer_s} and Definition \ref{def:tc_s}, it follows that for all $(s', s, a) \in \cS\times \cS\times A$, that 
\begin{align*}
    r_h(s, a) &= \sum_{i \in [d]} W_h(s, i)G_h(a, i) \\
    &= \sum_{i \in [d]} \left(\sum_{j \in [d], m \in [M]} B(i, j, m)G_{h, m}(a, j) \right)W_h(s, i)\\
    &= \sum_{i, j \in [d], m \in [M]} B(i, j, m)G_{ m, h}(a, j) W_h(s, i)\\
    &= \Tilde W'_h(s){G}_h(a)^\top
\end{align*}
and $P(s'|\cdot, \cdot) = \Tilde{G}_h U'_h(s')^\top$ (with the same argument) for some $\cardS \times dM$ matrices $W'_h, U_h'(s')$ with entries bounded by $\alpha$. Note that $W'_h$ and $U_h'(s')$ are matrices that join $W_{m, h}$ and $U_{m, h}(s')$, respectively, with entries that at most $\alpha$ times larger than the original entry. 

Let $E = \hat{\Tilde{G}}_h - \Tilde{G}_h$, where $E = \begin{bmatrix}
    E_1 \ldots E_M
\end{bmatrix}.
$
We next bound the $d''+1$ smallest singular value of $\hat{\Tilde{G}}_h$ with our singular vector perturbation bound ( $\|E\|_\infty \leq \|E\|_{2, \infty}$). 
\begin{align*}
\sigma_{d''+1}(\hat{\Tilde{G}}_h) &= \sigma_{d''+1}(\Tilde{G}_h) + \|E\|_{op} \leq \sqrt{\cardA dM} \|E\|_\infty \leq \frac{\sqrt{H d'' \mu \cardS }}{\alpha   \sqrt{\cardA M T}}
\end{align*}

With singular value decomposition $\hat{\Tilde{G}}_h = X \Sigma Y^\top $, let $G_h^{d''}$ be the thresholded feature representation estimate, i.e., $G_h^{d''}  = X(:, :d'') \Sigma(:d'', :d'') Y(:, d'')^\top$. Therefore, it follows for all $(s', s, a, h) \in \cS\times \cS\times \cA\times [H]$, 
\begin{align*}
 |r_h(s, a)  &- G_h^{d''}(a)R_hW'_h(s, :d'')^\top | \\
 &= |\Tilde{G}_h(a)W'_h( s)^\top  -  G_h^{d''}(a)R_hW'_h(s, :d'')^\top| \\
 &\leq  |\Tilde{G}_h(a)W'_h( s)^\top  -  \hat{\Tilde{G}}_h(a)R_hW'_h(s)^\top +  |\hat{\Tilde{G}}_h(a)R_hW'_h(s)^\top -  G_h^{d''}(a)R_hW'_h(s, :d'')^\top| \\
&\leq \alpha | \sum_{m \in M}(G_{m, h}(a) -  \hat{G}_{m, h}(a)R_{m, h}) W_{m, h}(s)|     + |\hat{\Tilde{G}}_h(a, d'' + 1:)W'_h( s, d'' + 1:)^\top |\\
&\leq \alpha \sum_{m \in M} \|G_{m, h}(a) -  \hat{G}_{m, h}(a)R_{m, h} \|_2 \|W_{m, h}(s) \|_2 + \alpha \sigma_{d''+1}(\hat{\Tilde{G}}_h) \sqrt{dM} \sum_{m \in [M]}\|W_{m, h}\|_2\\
&\leq \alpha \sqrt{\frac{\cardA }{d\mu}} \left(  \sum_{m \in M} \|G_{m, h}(a) -  \hat{G}_{m, h}(a)R_{m, h} \|_2  +   \sigma_{d''+1}(\hat{\Tilde{G}}_h) \sqrt{dM}
\right)\\
&\leq \sqrt{\frac{d''H\cardS }{T}}
\end{align*}
from the Cauchy-Schwarz inequality and our singular vector perturbation bound. From the same logic,
\begin{align*}
\left | \sum_{s'\in \cardS} (P_h(s'|s, a) - U_h(s', s)G_h^{d''}(a)^\top) \right | 
&\leq  \sqrt{\frac{d''H\cardS }{T}}.
\end{align*}

Therefore, $G_h^{d''} (a)$ satisfies Assumption \ref{asm:miss_s} with $\xi = \sqrt{\frac{d''H \cardS }{T }}$. Then it follows that running LSVI-UCB-TR using $G_h^{d''} (a)$  with $\delta' = \delta /2$ during the target phase of the algorithm admits a regret bound of 
\[
Regret(T) \leq C" (\sqrt{(d'')^3H^3\cardS T} + d''HT\sqrt{\frac{d''H \cardS }{T }}) \in \Tilde{O}(\sqrt{(d'')^3H^3\cardS T})
\]
with probability at least $1 - \delta$ from a final union bound. Furthermore, the sample complexity in the source phase is 
\[
\Tilde{O} \left( \frac{d^7 \mu^5\kappa^4(\cardA + \cardA^2)M^3  H^4\alpha^2  T}{\cardS d''} \right) , 
\]
which proves our result.

\end{proof}

We now prove Theorem \ref{thm:tl_ssd_thresh_adhoc}.

\begin{proof}
Suppose the assumption stated in Theorem \ref{thm:tl_ssd_thresh_adhoc} hold. Then, from \cite{sam2023overcoming}, running LR-EVI with a sample complexity of    $C' d^5 \mu^5\kappa^4(\cardS +\cardA )M^2  H^4\alpha^2  T\log(2\cardS \cardA d MH/\delta)/ \cardS $ on each source MDP results in $\bar{Q}_{m, h}$ functions with singular value decomposition $\bar{Q}_{m, h} = \hat{F}_{m, h} \hat{\Sigma}_{m, h} \hat{G}_{m, h}$ that are $ \sqrt{H \cardS }/(16\alpha \mu  \kappa   \sqrt{d^2 M^2 T })$-optimal with probability at least $1 - \delta/2$.

Since our estimates are $\sqrt{H\cardS }/(16\alpha \kappa  \sqrt{d^2 M^2 T })$-optimal, it follows that $\bar{\gamma} < 1/2$ in the singular vector perturbation bound, Corollary \ref{cor:sv}. Since $\|Q^*_{m, h}\|_\infty \geq C$, from Corollary \ref{cor:sv}, it follows that there exists rotation matrices $R_{m, h}$ such that 
\[
\| (\hat{G}_{m, h} R_{m, h} - G_{m, h})(a)\|_{2} \leq \frac{\sqrt{H d  \mu \cardS }}{\alpha \sqrt{\cardA  M^2 T}}
\]
for all $a \in A$ where the optimal $Q$ function have singular value decomposition $Q^*_{m, h} = F_{m, h}\Sigma_{m, h} G_{m, h}^\top$. 
 
Let $\Tilde{G}$ be the $\cardA  \times dM$ matrix from joining all the source action latent factor matrices, and $\hat{\Tilde{G}}_h$ be the $\cardA  \times dM$ matrix from joining all $\bar{G}_{m, h}$. Let $R_h$ be the $d \times dM$ matrix that joins $R_{m, h}$ for all $m \in [M]$. Then, from Assumption \ref{asm:transfer_s} and Definition \ref{def:tc_s}, it follows that for all $(s', s, a) \in \cS\times \cS\times A$, that 
\begin{align*}
    r_h(s, a) &= \sum_{i \in [d]} W_h(s, i)G_h(a, i) \\
    &= \sum_{i \in [d]} \left(\sum_{j \in [d], m \in [M]} B(i, j, m)G_{h, m}(a, j) \right)W_h(s, i)\\
    &= \sum_{i, j \in [d], m \in [M]} B(i, j, m)G_{ m, h}(a, j) W_h(s, i)\\
    &= \Tilde W'_h(s){G}_h(a)^\top
\end{align*}
and $P(s'|\cdot, \cdot) = \Tilde{G}_h U'_h(s')^\top$ (with the same argument) for some $\cardS \times dM$ matrices $W'_h, U_h'(s')$ with entries bounded by $\alpha$. Note that $W'_h$ and $U_h'(s')$ are matrices that join $W_{m, h}$ and $U_{m, h}(s')$, respectively, with entries that at most $\alpha$ times larger than the original entry.  Let $\hat{\Tilde{G}}^t$ be the thresholded feature mapping, where $t$ is defined as $t = \min_{t' \in \mathcal{T}} t'$ for 
\[
\mathcal{T} = \left\{t \in [dM] | \sigma_{t+1} \leq \sqrt{\frac{tHSd\mu}{\alpha^2 T (dM - t)M^2 A}}\right\}.
\]
 
Therefore, with the same logic as the previous proof, it follows for all $(s', s, a, h) \in \cS\times \cS\times \cA\times [H]$, 
\begin{align*}
 |r_h(s, a)  - \hat{\Tilde{G}}^t(a)R_hW'_h(s)^\top | &= |\Tilde{G}_h(a)W'_h( s)^\top  - \hat{\Tilde{G}}^t R_hW'_h(s)^\top| \\
&\leq  |(\Tilde{G}_{h}(a) -\hat{\Tilde{G}}_h(a) R_h)W'_h(s) | + |\hat{\Tilde{G}}_h(a)R_hW'_h(s)^\top  - \hat{\Tilde{G}}_h^t(a)(R_hW'_h(s))^{t, \top}|\\
&\leq \sqrt{\frac{H\cardS}{T}}+ |\hat{\Tilde{G}}_h(a, t+1 :)(R_hW'_h)(s, t+1 :)^\top|\\
&\leq \sqrt{\frac{H\cardS}{T}}+  \sigma_{t+1}\alpha \sqrt{dM-t}\| W'_h\|_2\\
&\leq  \sqrt{\frac{H\cardS}{T}}+ \sqrt{\frac{tH \cardS }{T}}\\
&\leq  2\sqrt{\frac{tH \cardS }{T}}
\end{align*}
where the second and third inequalities comes from the same logic used in the proofs of Theorems \ref{thm:tl_ssd} and \ref{thm:tl_ssd_thresh_adhoc}, and the fourth inequality comes from the definition of $t$. From the same logic,
\begin{align*}
\left | \sum_{s'\in \cardS} (P_h(s'|s, a) - U_h(s', s) \hat{\Tilde{G}}^t(a)^\top) \right | 
&\leq 2\sqrt{\frac{t H  \cardS }{T}}.
\end{align*}

Therefore, $\hat{\Tilde{G}}_h(a)$ satisfies Assumption \ref{asm:miss_s} with $\xi = \sqrt{\frac{tH \cardS }{T }}$. Then it follows that running LSVI-UCB-TR using $\hat{\Tilde{G}}^t_h(a)$  with $\delta' = \delta /2$ during the target phase of the algorithm admits a regret bound of 
\[
Regret(T) \leq C" (\sqrt{t^3H^3\cardS T} + 2 tHT\sqrt{\frac{tH \cardS }{T }}) \in \Tilde{O}(\sqrt{t^3H^3\cardS T})
\]
with probability at least $1 - \delta$ from a final union bound. Furthermore, the sample complexity in the source phase is 
\[
\Tilde{O} \left( \frac{d^5 \mu^5\kappa^4(\cardS +\cardA )M^3  H^4\alpha^2  T}{\cardS } \right), 
\]
which proves our result.    
\end{proof}

\section{$(\cardS, d, \cardA)$ Tucker Rank Setting}\label{app:tr_sda}
In this section, we present the assumptions, algorithm, and results for the $(\cardS , d , \cardA)$ Tucker rank setting. In this setting, we assume the following structure on the source and target MDPs.
\begin{assumption}\label{asm:tr}
In each of the $M$ source MDPs, the reward functions have rank $d$, and the transition kernels have Tucker rank $(\cardS , d, \cardA )$. Let the target MDP's reward functions have rank $d'$ and transition kernels have Tucker rank $(\cardS , d', \cardA )$ where $ d' \leq dM$. Thus, there exists $\cardS  \times d \times \cardA $ tensors $U_{m, h},$ $\cardS  \times d' \times \cardA $ tensors $ U_h$, $\cardS  \times d$  $\mu$-incoherent matrices $F_{m, h},$ $\cardS  \times d'$  $\mu$-incoherent matrices $ F_h$ with orthonormal columns, and $\cardA  \times d$ matrices $W_{m, h},$  $\cardA  \times d'$ matrices $ W_{ h}$ such that 
\begin{align*}
P_{m, h}(s'|s, a) &= \textstyle\sum_{i \in [d]} F_{m, h}(s, i)U_{m, h}(s', a, i) , \quad 
r_{m, h}(s, a) = \textstyle\sum_{i \in [d]} F_{m, h}(s, i) W_{m, h}(a, i)
\end{align*}
and 
\begin{align*}
P_{h}(s'|s, a) &= \textstyle\sum_{i \in [d']} F_h(s, i)U_h(s', a, i) , \quad
r_{ h}(s, a) = \textstyle\sum_{i \in [d']} F_h(s, i) W_{ h}(a, i)
\end{align*}
where $\|\sum_{s' \in\cS} g(s') U_{m, h}(s', :, a)\|_2$, $ \|\sum_{s' \in\cS}g(s') U_{ h}(s', :, a)\|_2, \|W_{h}(a)\|_2, \|W_{m, h}(a)\|_2 \leq \sqrt{\cardS /(d\mu)} $ for all $s \in\cS$, $a \in \cA$, $h\in[H]$, and $m\in [M]$ for any function $g:\cS \to [0, 1]$. 
\end{assumption}
Similarly, to allow transfer of a latent representation between the source and target MDPs, we assume that the subspace spanned by the set of all source latent factors contains the space spanned by the set of target latent factors.

\begin{assumption}\label{asm:transfer}
Suppose Assumption \ref{asm:tr} holds. The target MDP latent factors $F_h$ and source MDP latent factors $F_{m, h}$ satisfy for all $h \in [H]$, $\mathrm{Span}(\{F_h(:, i)\}_{i \in [d']}) \subseteq \mathrm{Span}(\{F_{m, h}(:, i)\}_{i \in [d], m \in [M]}).$
\end{assumption}

Before defining our transfer-ability coefficient, we first introduce the following notation. We define the set $\B_h(F_h, \{F_{h, m}\}_{m\in M})$ to contain all coefficients of the linear combinations, i.e., 
$B_h \in \R^{d', d, M}$ and $B_h \in \B_h(F_h, \{F_{h, m}\}_{m\in M})$ if  for all $i \in [d'],  h \in [H]$,
$
F_h(\cdot, i) = \textstyle\sum_{j \in [d], m \in [M]} B_h(i, j, m) F_{m, h}(\cdot, j).
$
Furthermore, as the source MDPs latent factors of $Q^*_{m, h}$ are not unique, we define $\alpha$ to measure the difficulty given the best set of latent factors from the source MDPs. Thus, $\alpha$ precisely measures the challenge involved in transferring the latent representation.

\begin{definition}[Transfer-ability Coefficient]\label{def:tc}
Given a transfer RL problem that satisfies Assumptions~\ref{asm:tr}, \ref{asm:transfer}, and \ref{asm:full_rank}, let $\mathcal{F}$ be the set of all $\R^{\cardS  \times d}$  matrices with orthonormal columns that span the column space of $Q^*_{m, h}$ for all $m \in [M], h \in [H]$. Then, we define $\alpha$ as 
\[
\alpha \coloneqq \max_{h \in [H]}\min_{F \in \mathcal{F}}\min_{B \in \B_h(F_h, F)} \max_{i \in [d'], j \in [d], m\in [M]} |B(i, j, m)|.
\]
\end{definition}

We now prove our lower bound in this Tucker rank setting.

\subsection{Information Theoretic Lower Bound}
To formalize the importance of $\alpha$ in transfer learning, we prove a lower bound that shows that a dependence on $\alpha$ in the sample complexity of the source phase is necessary to benefit from transfer learning. 

\begin{thm}\label{thm:lb_sda}
 There exist two transfer RL instances such that (i) they satisfy Assumptions \ref{asm:tr} and \ref{asm:transfer}, (ii) they cannot be distinguished without observing $\Omega(\alpha^2)$ samples in the source phase, and (3) they have target state latent features that are orthogonal to each other.
\end{thm}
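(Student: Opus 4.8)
The plan is to mirror the construction used for Theorem~\ref{thm:lb_ssd} but with the roles of states and actions transposed, since the $(\cardS, d, \cardA)$ Tucker rank setting imposes low rank along the \emph{state} mode rather than the action mode. First I would set $H = 1$ and take $\cS = \cA = [2n]$, using the block notation $s_1, s_2$ and $a_1, a_2$ for the two halves of the state and action spaces. I would build two transfer RL instances indexed by $i \in \{1,2\}$ in which the source MDPs share (almost) the same state latent factor $F_{m,h}$, but the target state latent factors $F^1_T$ and $F^2_T$ are orthogonal to each other, playing the role that $G^1, G^2$ played in the earlier proof.

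The core of the construction: in instance~1 all source and target MDPs use the state latent factor $\begin{bmatrix}\sqrt{1/(2n)} & \sqrt{1/(2n)}\end{bmatrix}^\top$ (blockwise), so transfer is trivial; in instance~2 the target factor is $F^2_T = \begin{bmatrix}\sqrt{1/(2n)} & -\sqrt{1/(2n)}\end{bmatrix}^\top$, which is orthogonal to $F^1_T$, and it is expressed as $F^2_T = -\alpha F^1_{m,h} + \alpha c\, F'$ where $F'$ is the second source factor and $c = \sqrt{1 + 1/\alpha^2}$, exactly as in the $(\cardS,\cardS,d)$ case. The optimal $Q$-functions of one source MDP in the two instances then differ entrywise by $\Theta(1/\alpha)$: in instance~1 a row of $Q^*_{2,1}$ has entries proportional to $\begin{bmatrix}\sqrt{1/(2n)} & \sqrt{1/(2n)}\end{bmatrix}$ while in instance~2 the same row is proportional to $F'$, whose two entries are $(\sqrt{1/(2n)} \pm \sqrt{1/(2n\alpha^2)})/c$, so the gap is $\Theta(1/(n^{1/2}\alpha))$. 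I would verify that both instances satisfy Assumption~\ref{asm:tr} with Tucker rank $(\cardS, 1, \cardA)$ (checking the norm bounds $\|W\|_2, \|\sum_{s'} g(s') U\|_2 \le \sqrt{\cardS/(d\mu)}$, which is where the state/action swap changes the normalization compared to the $(\cardS,\cardS,d)$ proof) and Assumption~\ref{asm:transfer} via the explicit linear combination above, with $O(1)$ incoherence, rank, and condition number.

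Finally I would invoke the standard two-point hypothesis testing lower bound: give the learner instance~$i$ with probability $1/2$ and a generative model in the source phase returning shifted-scaled Bernoulli samples with mean $Q^{*,i}_{m,1}(s,a)$; since the KL divergence between the two sampling distributions scales like the squared gap $\Theta(1/\alpha^2)$, distinguishing the instances with any constant advantage requires $\Omega(\alpha^2)$ source samples (Lemma~5.1 of \cite{NN_book}). With fewer samples the learner's chosen state latent factor is orthogonal to the truth with constant probability, establishing property~(ii) and (iii) simultaneously. The main obstacle I anticipate is purely bookkeeping: transposing states and actions changes which norm bound in Assumption~\ref{asm:tr} is binding (now it is $\sqrt{\cardS/(d\mu)}$ on the action-indexed $W$ and on $\sum_{s'} g(s') U$, scaled by $n^{1/2}$), so I would need to rescale the $Q^*$ blocks and the $U$-tensor entries accordingly and recheck that the $\Theta(1/\alpha)$ entrywise gap survives the rescaling — the earlier $(\cardS,\cardS,d)$ proof did the analogous check, so this is a routine but careful adaptation rather than a new idea.
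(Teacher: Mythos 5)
Your proposal is correct and follows essentially the same route as the paper's proof: the paper's argument for Theorem~\ref{thm:lb_sda} is exactly the transposed version of the $(\cardS,\cardS,d)$ construction, with $H=1$, block state/action spaces of size $2n$, a shared state latent factor $[\sqrt{1/(2n)},\sqrt{1/(2n)}]^\top$ in instance one, a perturbed factor $F'$ with $F^2_T = -\alpha F^1 + \alpha c F'$ in instance two, an entrywise gap of $\Theta(1/\alpha)$ in $Q^{*}_{2,1}$ after the $n$-rescaling you flag, and the same two-point Bernoulli hypothesis-testing bound from \cite{NN_book}. The only caution is the one you already identified: the gap between the raw latent-factor entries is $\Theta(1/(\sqrt{n}\,\alpha))$, and it is only after multiplying by the $\Theta(\sqrt{n})$ scaling of the other factor that the $Q^*$ entries differ by $\Theta(1/\alpha)$, which is what the paper's explicit matrices confirm.
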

\begin{proof}
Suppose all source and target MDPs $(S, A, P, H, r)$ share the same state space, action space, and horizon $H = 1$ and assume that $\cS = \cA = [2n]$ for some $n \in \N_+$. For ease of notation, we let $s_1$ and $a_1$ refer to any state and action in $[n]$, respectively, and $s_2$ and $a_2$ refer to any state and action in $\{n+1, \ldots 2n\}$, respectively. We will present the latent factors and optimal Q functions as block vectors and matrices with $s_1, s_2, a_1, a_2$ as the blocks containing $n$ entries. The initial state distribution in the target MDP is uniform over $\cS$. We now present two transfer RL problems with similar $Q$ functions (with rows $s_1, s_2$ and columns $a_1,a_2$) that satisfy Assumptions \ref{asm:tr} and \ref{asm:transfer} but have orthogonal target state latent factors. For ease of notation, the superscript $i$ of $Q^{*, i}_{m, h}, Q^{*, i}_{ h}$ denotes transfer RL problem.  Then, the optimal $Q$ functions for transfer RL problem one are
\begin{align*}
    Q^{*, 1}_{1, 1} &= n \begin{bmatrix}
         \sqrt{1/(2n)} \\
          \sqrt{1/(2n)}
    \end{bmatrix}
  \begin{bmatrix}
      \sqrt{1/n} & 0
  \end{bmatrix}= \begin{bmatrix}
      1/\sqrt{2} & 0  \\
         1/\sqrt{2} & 0
  \end{bmatrix}
         ,   \quad
    Q^{*, 1}_{2, 1} = n \begin{bmatrix}
         \sqrt{1/(2n)} \\
          \sqrt{1/(2n)}
    \end{bmatrix} \begin{bmatrix}
      0 & \sqrt{1/n} 
  \end{bmatrix} = \begin{bmatrix}
      0 & 1/\sqrt{2}  \\
         0 & 1/\sqrt{2}
  \end{bmatrix}
, \\
    Q^{*, 1}_1 &= n \begin{bmatrix}
         \sqrt{1/(2n)} \\
          \sqrt{1/(2n)}
    \end{bmatrix} \begin{bmatrix}
        \sqrt{\frac{1}{n}} & -\sqrt{\frac{1}{n}}
    \end{bmatrix}= \begin{bmatrix}
         1/2 & -1/2  \\
         1/2 & -1/2
    \end{bmatrix},
\end{align*}    
and the optimal $Q$ functions for transfer RL problem two are 
\begin{align*}
    Q^{*, 2}_{1, 1} &= n \begin{bmatrix}
         \sqrt{1/(2n)} \\
          \sqrt{1/(2n)}
    \end{bmatrix}
  \begin{bmatrix}
      \sqrt{1/n} & 0
  \end{bmatrix}= \begin{bmatrix}
      1/\sqrt{2} & 0  \\
         1/\sqrt{2} & 0
  \end{bmatrix} \quad 
    Q^{*, 2}_{2, 1}  = n F' \begin{bmatrix}
        0 & \sqrt{1/n} 
    \end{bmatrix} = \begin{bmatrix}
        0 & (1+1/\alpha)/(c\sqrt{2})  \\
         0 & (1-1/\alpha)/(c\sqrt{2})
    \end{bmatrix}
         , \\
    Q^{*, 2}_1 &= n \begin{bmatrix}
         \sqrt{1/(2n)} \\
          -\sqrt{1/(2n)}
    \end{bmatrix} \begin{bmatrix}
        \sqrt{\frac{1}{n}} & -\sqrt{\frac{1}{n}}
    \end{bmatrix}= \begin{bmatrix}
        1/2 & -1/2  \\
         -1/2 & 1/2
    \end{bmatrix},
\end{align*}
where 
\[
F' = \begin{bmatrix}
    (\sqrt{1/(2n)} + \sqrt{1/(2n\alpha^2})/\sqrt{1 + 1/\alpha^2}&  (\sqrt{1/(2n)} - \sqrt{1/(2n\alpha^2})/\sqrt{1 + 1/\alpha^2} 
\end{bmatrix}      
\]
and $c \in (1, 2dM)$ because $\alpha \geq 1/(dM)$ from Lemma \ref{lem:alpha_bound} for $\alpha >  0$. The incoherence $\mu$, rank $d$, and condition number $\kappa$ of the above $Q$ functions are $O(1)$.
Furthermore, the above construction satisfies Assumption \ref{asm:tr} with Tucker rank $(\cardS , 1, \cardA )$ and Assumption \ref{asm:transfer} because the source and target MDPs share the same state latent factor $[\sqrt{1/(2n)}, \sqrt{1/(2n)}]$ in the first transfer RL problem while in the second transfer RL problem, 
\[
\begin{bmatrix}
    \sqrt{1/(2n)}& -\sqrt{1/(2n)}
\end{bmatrix} = -\alpha \begin{bmatrix}
    \sqrt{1/(2n)} &  \sqrt{1/(2n)}
\end{bmatrix} + \alpha c F^{'\top}.
\]
Furthermore, note that the target latent factors, 
\[
F^1 = \begin{bmatrix}
         \sqrt{1/(2n)} \\
          \sqrt{1/(2n)}
    \end{bmatrix} \text{ and } F^2 = \begin{bmatrix}
         \sqrt{1/(2n)} \\
          -\sqrt{1/(2n)}
    \end{bmatrix}
\]
are orthogonal to each other. 

The learner is given either transfer RL problem one or two with equal probability with a generative model in the source problem. When interacting with the source MDPs, the learner specifies a state-action pair $(s, a)$ and source MDP $m$ and observes a realization of a shifted and scaled Bernoulli random variable $X$. The distribution of $X$ is that $X = 1$ with probability $(Q^{*, i}_{m, 1}(s, a) + 1)/2$ and $X = -1$ with probability $(1 - Q^{*, i}_{m, 1}(s, a))/2)$. Furthermore, the learner is given the knowledge that the state latent features lie in the function class $\F = \{ F^1, F^2\}$ and must use the knowledge obtained from interacting with the source MDP to choose one to use in the target MDP, which is no harder than receiving no information about the function class.  

To distinguish between the two transfer RL problems, one needs to differentiate between $Q^{*, 1}_{2, 1}$ and $Q^{*, 2}_{2, 1}$. By construction, the magnitude of the largest entrywise difference between $Q^{*, 1}_{2, 1}$ and $Q^{*, 2}_{2, 1}$ is lower bounded by $\Omega(1/\alpha)$.

If the learner observes $Z$ samples in the source MDPs, where $Z \leq O(\alpha^2)$ for some constant $C > 0$, then the probability of correctly identifying $F$ is upper bounded by $0.76$ (Lemma 5.1 with $\delta = 0.24$ \cite{NN_book}). Thus, if the learner does not observe $\Omega(\alpha^2)$ samples in the source phase, then the learner returns the incorrect feature mapping that is orthogonal to the true feature mapping with probability at least $0.24$.   
\end{proof}

Theorem \ref{thm:lb_sda} states that unless one incurs a sample complexity of $\Omega(\alpha^2)$ in the source phase, the latent representation $F$ learned in the source phase is useless in the target phase as one would incur linear regret using $F$.  

Our algorithm in this setting is essentially the same as the one used in the $(\cardS, \cardS, d)$ Tucker rank setting except we construct our latent factors using the other set of singular vectors and compute Gram matrices and coefficients for each action instead of each state. 

\begin{algorithm}
\caption{Source Phase}\label{alg:src}
\begin{algorithmic}[1]
\Require $\{N_h\}_{h \in [H]}$
\For{ $m \in [M]$}
\State Run LR-EVI($\{N_h\}_{h \in [H]})$ on source MDP $m$ to obtain $\bar{Q}_{m, h}$.
\State Compute the singular value decomposition of $\bar{Q}_{m, h} = \hat{F}_{m, h} \hat{\Sigma}_{m, h} \hat{G}_{m,h}^\top$ .
\EndFor
\State Compute feature mapping $\hat{\Tilde{F}} = \hat{\Tilde{F}} _h \}_{h \in [H]}$ with
$
\hat{\Tilde{F}} _h  = \left \{ \sqrt{\frac{\cardS }{d\mu}}\hat{F}_{m, h}(:, i)|i \in [d],  m \in [M] \right\}.$
\end{algorithmic}
\end{algorithm}

\begin{algorithm}
\caption{Target Phase: LSVI-UCB-(S, d, A)}\label{alg:target}
\begin{algorithmic}[1]
\Require $\lambda, \beta_{k, h}^a, \hat{\Tilde{F}}  $
\For{ $k \in [K]$}
\State Receive initial state $s_1^k$.
\For {$h = H, \ldots, 1$}
\For {$a \in \cA$}
\State Set $\Lambda_h^a \xleftarrow{} \sum_{t \in T_{k-1, h}^a} \hat{\Tilde{F}} _h(s_h^t)\hat{\Tilde{F}} _h(s_h^t)^\top+ \lambda \textbf{I}$. 
\State \texttt{/* Compute Gram matrix $\Lambda_h^a$ */}
\State Set $w_h^a \xleftarrow{} (\Lambda_h^a)^{-1} \sum_{t \in T_{k-1, h}^a} \hat{\Tilde{F}} _h(s_h^t)$ $\Big[r_h(s_h^t, a)+ \max_{a' \in \cA} Q_{h+1}(s_{h+1}^t, a') \Big]$. 
\State \texttt{/* Estimate $w_h^a$ via regularized least squares */}
\EndFor
\State Set $Q_h(\cdot, \cdot) \xleftarrow[]{} \min\Big(H, \langle w_h^\cdot, \hat{\Tilde{F}} _h(\cdot)\rangle + \beta_{k, h}^\cdot \sqrt{ \hat{\Tilde{F}} _h(\cdot)^\top (\Lambda_h^\cdot)^{-1} \hat{\Tilde{F}} _h(\cdot)}\Big)  $. 
\State \texttt{/* Estimate $w_h^s$ via regularized least squares */}
\EndFor
\For{$h \in [H]$}
\State Take action $a_h^k \xleftarrow[]{} \max_{a \in \cA} Q_h(s_h^k, a), $ and observe $s_{h+1}^k$.
\EndFor
\EndFor
\end{algorithmic}
\end{algorithm}

We first present the theoretical guarantees of LSVI-UCB-(S, d, A) and defer their proofs to Appendix \ref{app:alg_proof}.

Before proving our main positive result in the $(\cardS , d, \cardA )$ Tucker rank setting, we first present the definitions and theorems of LSVI-UCB-(S, d, A) and defer their proofs to Appendix \ref{app:alg_proof}. If one is given the true latent low-rank representation of each state, then LSVI-UCB-(S, d, A) admits the following regret bound. 
\begin{thm}\label{thm:reg}
    Assume that Assumption \ref{asm:tr} holds and the learner is given the true latent state representation $F = \{F_h\}_{h \in [H]}$. Then, there exists a constant $c > 0$ such that for any $\delta \in (0, 1)$, if we set $\lambda = 1, \beta_{k, h}^a = c dH\sqrt{\iota}, \iota = \log(2dT\cardA /\delta)$, then with probability at least $1 - \delta$, the total regret of LSVI-UCB-TR is $\Tilde{O}(\sqrt{d^3\cardA H^3T})$.
\end{thm}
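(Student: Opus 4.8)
The plan is to adapt the LSVI-UCB analysis of \cite{Jin2020ProvablyER} to the per-action least-squares scheme of Algorithm~\ref{alg:target}. The first step is to establish the \emph{linear parametrization} that makes the algorithm well-posed: under Assumption~\ref{asm:tr}, for \emph{any} value function $V:\cS\to[0,H]$ the one-step Bellman backup factorizes through the state features, namely $r_h(s,a)+(P_hV)(s,a)=\tilde F_h(s)^\top\theta_h^{a}$ with $\tilde F_h(s)=\sqrt{\cardS/(d\mu)}\,F_h(s)$ and $\theta_h^{a}=\sqrt{d\mu/\cardS}\big(W_h(a)+\textstyle\sum_{s'\in\cS}U_h(s',a,:)\,V(s')\big)$. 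The regularity bounds in Assumption~\ref{asm:tr} (the norm constraints on $W_h(a)$ and on $\sum_{s'}g(s')U_h(s',:,a)$) give $\|\theta_h^a\|_2=O(H)$, while $\mu$-incoherence of $F_h$ gives $\|\tilde F_h(s)\|_2\le1$. This is the $(\cardS,d,\cardA)$ analogue of Proposition~4 of \cite{sam2023overcoming}, stated at the level of the linear weights, so that for each fixed action the regression in Algorithm~\ref{alg:target} is a genuine $d$-dimensional ridge problem with bounded target, and $T^a_{k-1,h}=\{t<k:a_h^t=a\}$ supplies exactly the transitions whose next state is drawn from $P_h(\cdot\mid\cdot,a)$.

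Next I would prove the \textbf{concentration / good-event} step: with probability at least $1-\delta$, for every episode $k$, step $h$, action $a$, and the value function $V_{h+1}^k$ actually produced by the algorithm,
\[
\big|\tilde F_h(s)^\top w_h^{a}-\big(r_h(s,a)+(P_hV_{h+1}^k)(s,a)\big)\big|\;\le\;\beta_{k,h}^{a}\sqrt{\tilde F_h(s)^\top(\Lambda_h^{a})^{-1}\tilde F_h(s)}\qquad\forall s\in\cS ,
\]
with $\beta_{k,h}^{a}=cdH\sqrt\iota$ as prescribed. As in \cite{Jin2020ProvablyER}, split the ridge-regression error into a bias term bounded by $\sqrt\lambda\,\|\theta_h^a\|_2\,\|\tilde F_h(s)\|_{(\Lambda_h^a)^{-1}}$ and a noise term $\big\|\sum_{t\in T_{k-1,h}^{a}}\tilde F_h(s_h^t)\,\xi_t\big\|_{(\Lambda_h^a)^{-1}}$ where $\xi_t=V_{h+1}^k(s_{h+1}^t)-(P_hV_{h+1}^k)(s_h^t,a)$ is a bounded martingale difference; bound the latter by a self-normalized Hoeffding-type inequality together with a union bound over an $\epsilon$-net of the class of value functions $s\mapsto\max_{a'}\min\{H,\langle w^{a'},\tilde F_{h+1}(s)\rangle+\beta\,\|\tilde F_{h+1}(s)\|_{(\Lambda^{a'})^{-1}}\}$ and over the $\cardA$ actions being regressed, chosen at scale $\epsilon\asymp dH/T$ so that the net's metric entropy and the union bound together contribute only $\log(2dT\cardA/\delta)=\iota$ under the square root. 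Given the good event, the \textbf{optimism} step follows by backward induction on $h$: the added bonus dominates the estimation error, hence $Q_h^k(s,a)\ge Q_h^*(s,a)$ for all $(s,a)$ and $V_1^k(s_1^k)\ge V_1^*(s_1^k)$.

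Finally, for the \textbf{regret bound} I would unroll the standard recursion $\delta_h^k:=V_h^k(s_h^k)-V_h^{\pi^k}(s_h^k)\le\delta_{h+1}^k+2\beta_{k,h}^{a_h^k}\|\tilde F_h(s_h^k)\|_{(\Lambda_h^{a_h^k})^{-1}}+\zeta_h^k$, where $\{\zeta_h^k\}$ is a bounded martingale-difference sequence, so that after summing over $h$ and $k$,
\[
\mathrm{Regret}(T)\;\le\;\sum_{k\in[K]}\sum_{h\in[H]}2\beta_{k,h}^{a_h^k}\sqrt{\tilde F_h(s_h^k)^\top(\Lambda_h^{a_h^k})^{-1}\tilde F_h(s_h^k)}\;+\;O\!\big(H\sqrt{T\iota}\big),
\]
the last term by Azuma--Hoeffding. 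The only genuinely new point is the per-action bookkeeping: the Gram matrices live per action, so for fixed $h$ I would first group the episodes by the action $a_h^k$, apply the elliptical-potential (log-determinant) lemma inside each of the $\cardA$ buckets to get $\sum_{t\in T_{K,h}^a}\|\tilde F_h(s_h^t)\|_{(\Lambda_h^{a})^{-1}}^2=O(d\log(K/\lambda))$, and then apply Cauchy--Schwarz \emph{across} the buckets: $\sum_a\sqrt{K_a\cdot d\log}\le\sqrt{\cardA\,K\,d\log}$, which is where the extra $\sqrt{\cardA}$ (not $\cardA$) enters. Plugging in $\beta_{k,h}^{a}=\tilde O(dH)$ yields $\mathrm{Regret}(T)=\tilde O\big(dH\cdot H\sqrt{\cardA K d}\big)=\tilde O(\sqrt{d^3\cardA H^3T})$. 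I expect the main obstacle to be precisely this bookkeeping — checking that maintaining $\cardA$ separate confidence sets does not inflate $\beta$ beyond $\tilde O(dH)$ (i.e., $\cardA$ stays inside $\iota$) and that the bucketed elliptical-potential argument contributes exactly $\sqrt{\cardA}$; a secondary technical check is that the norm bounds of Assumption~\ref{asm:tr} are what keep $\|\theta_h^a\|_2=O(H)$, so the bias term of the ridge estimate is of the same order as in the vanilla linear-MDP analysis.
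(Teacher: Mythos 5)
Your proposal is correct and follows essentially the same route as the paper's proof: the same linear parametrization of the Bellman backup through the rescaled state features, the same covering-number plus self-normalized concentration argument with a union bound over the $\cardA$ per-action confidence sets (so that $\cardA$ enters only inside $\iota$), optimism by backward induction, and the same per-action elliptical-potential bookkeeping combined with Cauchy--Schwarz across action buckets to produce the $\sqrt{\cardA}$ factor. The only cosmetic difference is the order in which you apply Cauchy--Schwarz and the bucketing (and your $O(H)$ versus the paper's $2H\sqrt d$ bound on the weight vectors), neither of which changes the final bound since the bias term is dominated by $\beta_{k,h}^a = cdH\sqrt{\iota}$ in both arguments.
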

Theorem \ref{thm:reg} states that LSVI-UCB-TR completely removes the regret bound's dependence on the size of the state space by utilizing the latent state representation $F$. To measure the algorithm's robustness with respect to misspecification error of the low-rank representation, we first define $\xi$-approximate $(\cardS , d, \cardA )$ Tucker rank MDPs (similarly to Assumption B from \cite{Jin2020ProvablyER}). 

\begin{assumption}[$\xi$-approximate $(\cardS , d, \cardA )$ Tucker rank MDP]\label{asm:miss}
Let $\xi \in [0, 1]$. Then, an MDP $(\cS, \cA, P, r, H)$ is a $\xi$-approximate $(\cardS , d, \cardA )$ Tucker rank MDP with given feature map $F$ if there exist $H$ unknown $\cardA $-by-$d$ matrices $W = \{W_h\}_{h \in [H]}$ and $\cardS$-by-$d$-by-$\cardA $ tensors $U = \{U_h\}_{h \in [H]}$ such that
\[
|r_h(s,a ) - F_h(s)^\top W_h(a)| \leq \xi,\]
\[
\left | \sum_{s' \in \cardS} (P_h(s'|s, a) - F_h(s)^\top U_h(s', a)) \right |\leq \xi
\]
for all $h \in [H]$ where $\|W_h\|_2, \|\sum_{s' \in \cS} g(s')U_h(s', a)\|_2 \leq \sqrt{\cardS /(d\mu)}$ for any function $g: \cS \to [0, 1]$. 
\end{assumption}  
The following result shows that similar to LSVI-UCB, our algorithm is robust to the misspecified setting.
\begin{thm}\label{thm:miss}
Suppose that the learner is given a feature mapping $F'$ that satisfies Assumption \ref{asm:miss}. Then, using $F = \sqrt{\cardS /(d\mu)}F'$ as the feature mapping for any $\delta \in (0, 1)$ and $\lambda = 1, \beta_{k, h}^a = O(Hd\sqrt{\iota} + \xi\sqrt{d |T^a_{k, h}|}H)$ for $\iota =\log(2dT\cardA /\delta)$, the regret of LSVI-UCB-TR is $\Tilde{O}(\sqrt{d^3H^3\cardA T} + \xi dHT)$ with probability at least $1 - \delta$.
\end{thm}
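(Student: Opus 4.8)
The plan is to follow the proof of Theorem~\ref{thm:reg} (the exact $(\cardS,d,\cardA)$ case, taken as given) and carry the $\xi$-misspecification through each stage. The first step is to record what the rescaling $F=\sqrt{\cardS/(d\mu)}\,F'$ buys us: it is calibrated so that $\|F_h(s)\|_2=O(1)$, and the norm bounds in Assumption~\ref{asm:miss} then give, for every step $h$, action $a$, and value function $V:\cS\to[0,H]$, a vector $w_h^a(V)\in\R^d$ with $\|w_h^a(V)\|_2=O(H)$ such that
\[
\bigl|\,r_h(s,a)+(P_hV)(s,a)-\langle F_h(s),w_h^a(V)\rangle\,\bigr|=O(H\xi)\qquad\text{for all }s\in\cS .
\]
Thus, for each fixed action $a$, predicting the Bellman backup from $F_h(s)$ is a $d$-dimensional linear regression that is correct up to an additive bias of order $H\xi$; all of the $\cardA$-dependence is pushed into the fact that there are $\cardA$ such regressions, each carrying its own Gram matrix $\Lambda_h^a$.

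Next I would establish the per-action confidence bounds for the least-squares iterates $w_h^a$ of Algorithm~\ref{alg:target}. Following the LSVI-UCB decomposition, $\langle w_h^a,F_h(s)\rangle-r_h(s,a)-(P_hV_{h+1})(s,a)$ splits into a ridge term, a bias term inherited from the $O(H\xi)$ approximation, and a stochastic term $\sum_{t\in T_{k-1,h}^a}F_h(s_h^t)\bigl[V_{h+1}(s_{h+1}^t)-(P_hV_{h+1})(s_h^t,a)\bigr]$. The stochastic term is controlled by a self-normalized martingale bound along the subsequence of episodes in which $a$ was played, together with a union bound over an $\eps$-net of the value-function class the algorithm produces (functions $s\mapsto\min\{H,\max_{a'}\langle w^{a'},F_{h+1}(s)\rangle+\beta^{a'}\sqrt{F_{h+1}(s)^\top(\Lambda_{h+1}^{a'})^{-1}F_{h+1}(s)}\}$), whose log-covering number is polynomial in $d,H,K,\cardA$ and so enters only through $\iota$; this yields the $O(dH\sqrt{\iota})$ part of $\beta_{k,h}^a$. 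The bias term, measured in the $(\Lambda_h^a)^{-1}$-norm, is of size $O\!\bigl(\xi H\sqrt{d\,|T_{k,h}^a|}\bigr)$, which is exactly the second summand in the prescribed $\beta_{k,h}^a$. On the resulting $(1-\delta)$-event, for all $k,h,a,s$,
\[
\bigl|\langle w_h^a,F_h(s)\rangle-r_h(s,a)-(P_hV_{h+1})(s,a)\bigr|\le\beta_{k,h}^a\sqrt{F_h(s)^\top(\Lambda_h^a)^{-1}F_h(s)}+cH\xi .
\]

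From here the remaining steps are standard. A backward induction in $h$ gives approximate optimism, $Q_h(s,a)\ge Q_h^*(s,a)-cH(H-h)\xi$, together with $Q_h(s,a)\le(r_h+P_hV_{h+1})(s,a)+2\beta_{k,h}^a\sqrt{F_h(s)^\top(\Lambda_h^a)^{-1}F_h(s)}+cH\xi$. Expanding $V_1^*(s_1^k)-V_1^{\pi^k}(s_1^k)$ recursively along episode $k$'s trajectory, summing over $k$, and absorbing the martingale differences into a further Azuma event bounds $\mathrm{Regret}(T)$ by the total bonus $\sum_{k,h}\beta_{k,h}^{a_h^k}\sqrt{F_h(s_h^k)^\top(\Lambda_h^{a_h^k})^{-1}F_h(s_h^k)}$ plus an $O(\xi H^2K)$ term and an $O(H\sqrt{T\iota})$ term. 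To bound the bonus sum I would group episodes by the action played at step $h$, apply the elliptical-potential lemma within each group (valid since $\|F_h(s)\|_2=O(1)$ and $\lambda=1$) to get $\sum_{k:a_h^k=a}\sqrt{F_h(s_h^k)^\top(\Lambda_h^a)^{-1}F_h(s_h^k)}=\Tilde{O}(\sqrt{d\,|T_{K,h}^a|})$, then apply Cauchy--Schwarz over actions, $\sum_a\sqrt{|T_{K,h}^a|}\le\sqrt{\cardA K}$, and sum over $h$. With the $O(dH\sqrt{\iota})$ part of $\beta$ this contributes $\Tilde{O}(\sqrt{d^3H^3\cardA T})$ (using $T=KH$), and with the $O(\xi H\sqrt{d|T_{k,h}^a|})$ part it contributes $\Tilde{O}(\xi dHT)$, which dominates the $O(\xi H^2K)$ term; combining gives $\mathrm{Regret}(T)=\Tilde{O}(\sqrt{d^3H^3\cardA T}+\xi dHT)$.

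The main obstacle I anticipate is the concentration step: one must make the self-normalized bound hold simultaneously over the algorithm-generated value-function class and over all $\cardA$ per-action filtrations, and -- the genuinely delicate part -- separate the $O(H\xi)$ approximation bias from the true noise cleanly enough that the bias lands in $\beta_{k,h}^a$ as the $\xi H\sqrt{d|T_{k,h}^a|}$ summand rather than as a coarser $\xi H|T_{k,h}^a|$ term; it is precisely the $\sqrt{|T_{k,h}^a|}$ (rather than $|T_{k,h}^a|$) scaling of that summand, propagated through the elliptical-potential and Cauchy--Schwarz steps, that keeps the final misspecification term at $\xi dHT$.
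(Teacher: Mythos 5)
Your proposal follows essentially the same route as the paper's proof: the same rescaling to get $O(1)$ features and $O(H)$ weight vectors with $O(H\xi)$ Bellman bias, the same per-action self-normalized concentration with a covering-number union bound, the same decomposition of the least-squares error into ridge, stochastic, and bias terms with the bias entering $\beta_{k,h}^a$ as $\xi H\sqrt{d|T_{k,h}^a|}$ (the paper's Lemma~\ref{lem:adv_m}), the same approximate-optimism induction, and the same elliptical-potential-plus-Cauchy--Schwarz accounting grouped by action. The proposal is correct and matches the paper's argument, including the key point that the $\sqrt{|T_{k,h}^a|}$ scaling of the bias summand is what keeps the misspecification contribution at $\xi dHT$.
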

The above theorem states that the misspecification error adds a term that is linear in $T$ to the regret bound of LSVI-UCB-(S, d, A). Thus, if one's latent state representation is close enough, i.e., $\xi$ is so small that there exists some positive constant $c$ such that $\xi dHT\leq c\sqrt{d^3H^3\cardA T}$, then LSVI-UCB-(S, d, A) admits a regret bound that is independent of the state space, which is the motivation for our main result. We now present our main theorem.

\begin{thm}\label{thm:tl_sda}
Suppose Assumptions \ref{asm:tr}, \ref{asm:transfer}, and \ref{asm:full_rank} hold, and set $\delta \in (0, 1)$. Furthermore, assume that, for any $\eps \in (0, \sqrt{\cardA H/T})$,  $Q_{m, h} = r_{m, h} + P_{m, h} V_{m, h+1}$ has rank $d$ and is $\mu$-incoherent with condition number $\kappa$ for all $\eps$-optimal value functions $V_{h+1}$. Also, let $\lambda = 1$ and $\beta_{k, h}^a$ be a function of $d, H, |T^a_{k, h}|, |A|, M, T$. Then, for $T \geq  \frac{\cardA }{ \alpha^2}$, using at most $\Tilde{O} \left( d^4 \mu^5 \kappa^4(\cardS /\cardA +1)M^2  H^4\alpha^2  T \right) $ samples in the source problems, our algorithm has regret at most $\Tilde{O}\left( \sqrt{(dMH)^3 \cardA T}\right)$ with probability at least $1 - \delta$.
\end{thm}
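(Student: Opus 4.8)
The plan is to mirror the proof of Theorem~\ref{thm:tl_ssd} almost verbatim, transposing the roles of the state and action spaces, since the $(\cardS, d, \cardA)$ setting is the ``mirror image'' of the $(\cardS, \cardS, d)$ setting: here the transferable low-rank structure lives in the \emph{left} singular subspace of the Bellman-backed-up matrices (indexed by the current state $s$), so Algorithm~\ref{alg:src} scales and concatenates the left singular vectors $\hat F_{m,h}$, and the target phase runs LSVI-UCB-(S, d, A) (Algorithm~\ref{alg:target}) which maintains one $dM\times dM$ Gram matrix per action. The three ingredients are: (i) the LR-EVI guarantee of~\cite{sam2023overcoming}, (ii) the singular-subspace perturbation bound of Corollary~\ref{cor:sv}, and (iii) the misspecification-robust regret bound of Theorem~\ref{thm:miss}.

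First I would handle the source phase. By~\cite{sam2023overcoming}, running LR-EVI on each source MDP with a per-MDP budget of order $d^4\mu^5\kappa^4(\cardS+\cardA)MH^4\alpha^2 T\log(\cdot)/\cardA$ produces estimates $\bar Q_{m,h}$ that are $\eps$-optimal with $\eps = \Theta\bigl(\sqrt{H\cardA}/(\alpha\mu\kappa\sqrt{dMT})\bigr)$, with probability $1-\delta/2$; because $T\ge\cardA/\alpha^2$ and $\alpha\ge 1/(dM)$ (Lemma~\ref{lem:alpha_bound}), this $\eps$ lies in the admissible window $(0,\sqrt{\cardA H/T})$ so the incoherence/rank/condition-number hypothesis on $Q_{m,h}=r_{m,h}+P_{m,h}V_{m,h+1}$ applies to exactly the matrices whose leading $d$ left singular vectors LR-EVI estimates. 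Summing over $m\in[M]$ gives the stated source sample complexity $\Tilde O(d^4\mu^5\kappa^4(\cardS/\cardA+1)M^2H^4\alpha^2T)$, and $T\ge\cardA/\alpha^2$ is precisely what makes this budget at least of order $\cardS+\cardA$, the regime where the matrix-estimation guarantees are meaningful. Since $\|Q^*_{m,h}\|_\infty\ge C$ and $\eps$ is this small, the parameter $\bar\gamma$ in Corollary~\ref{cor:sv} satisfies $\bar\gamma<1/2$, so there exist rotations $R_{m,h}$ with $\|(\hat F_{m,h}R_{m,h}-F_{m,h})(s)\|_2\le d\sqrt{H\mu\cardA}/(\alpha\sqrt{\cardS M T})$ for all $s\in\cS$.

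Next I would establish that the learned feature map satisfies the $\xi$-approximate $(\cardS,d,\cardA)$ condition (Assumption~\ref{asm:miss}). By Assumption~\ref{asm:transfer} and Definition~\ref{def:tc}, writing $\Tilde F_h$ for the concatenation of the true source factors, one has $r_h(s,a)=\Tilde F_h(s)W'_h(a)^\top$ and $P_h(s'|s,a)=\Tilde F_h(s)U'_h(s',a)^\top$ where $W'_h,U'_h$ concatenate $W_{m,h},U_{m,h}$ with every entry inflated by at most $\alpha$; in particular $\|W'_{m,h}(a)\|_2,\ \|\sum_{s'}g(s')U'_{m,h}(s',\cdot,a)\|_2\le\alpha\sqrt{\cardS/(d\mu)}$. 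Plugging in the scaled concatenated estimate $\hat{\Tilde F}_h$ from Algorithm~\ref{alg:src} and applying Cauchy--Schwarz together with the subspace bound above,
\[
|r_h(s,a)-\hat{\Tilde F}_h(s)R_h W'_h(a)^\top|\;\le\;\alpha\sqrt{\tfrac{\cardS}{d\mu}}\sum_{m}\|(F_{m,h}-\hat F_{m,h}R_{m,h})(s)\|_2\;\le\;\sqrt{\tfrac{dMH\cardA}{T}},
\]
and the analogous computation (summing over $s'$) bounds $|\sum_{s'}(P_h(s'|s,a)-\hat{\Tilde F}_h(s)R_hU'_h(s',a)^\top)|$ by the same quantity. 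Hence $\hat{\Tilde F}_h$ meets Assumption~\ref{asm:miss} with ambient dimension $dM$ and $\xi=\sqrt{dMH\cardA/T}$.

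Finally, applying Theorem~\ref{thm:miss} to the target-phase run of LSVI-UCB-(S, d, A) with this feature map, dimension $dM$, $\lambda=1$, $\beta^a_{k,h}$ chosen as in that theorem, and confidence $\delta/2$, the target regret is $\Tilde O\bigl(\sqrt{(dM)^3H^3\cardA T}+\xi\cdot dM\cdot HT\bigr)$; substituting $\xi=\sqrt{dMH\cardA/T}$ makes the misspecification term $dMHT\sqrt{dMH\cardA/T}=\sqrt{d^3M^3H^3\cardA T}$, of the same order as the leading term, so the bound is $\Tilde O(\sqrt{(dMH)^3\cardA T})$. A union bound over the two failure events ($\delta/2$ each) finishes the argument. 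I expect the main obstacle to be bookkeeping rather than a new idea: carefully propagating the transfer-ability coefficient $\alpha$ through the concatenated factors $W'_h,U'_h$, choosing the LR-EVI accuracy $\eps$ so that after amplification by $\alpha\sqrt{\cardS/(d\mu)}\cdot M$ and the Davis--Kahan factor in Corollary~\ref{cor:sv} the misspecification error lands at \emph{exactly} $\sqrt{dMH\cardA/T}$ (so that the $\xi dMHT$ contribution is dominated), and verifying that this $\eps$ stays inside the window $(0,\sqrt{\cardA H/T})$ demanded by the hypothesis — which is where the conditions $T\ge\cardA/\alpha^2$ and $\alpha\ge 1/(dM)$ are used.
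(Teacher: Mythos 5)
Your proposal is correct and follows essentially the same route as the paper's own proof: LR-EVI accuracy calibrated so that Corollary~\ref{cor:sv} gives the row-wise subspace bound $d\sqrt{H\mu\cardA}/(\alpha\sqrt{\cardS MT})$, a Cauchy--Schwarz argument showing the concatenated scaled factors satisfy Assumption~\ref{asm:miss} with $\xi=\sqrt{dMH\cardA/T}$, and an application of Theorem~\ref{thm:miss} plus a union bound. Your explicit check that the required accuracy $\eps$ lies in the window $(0,\sqrt{\cardA H/T})$ via $T\ge\cardA/\alpha^2$ and $\alpha\ge 1/(dM)$ is a detail the paper leaves implicit, but it does not change the argument.
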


Theorem \ref{thm:tl_sda} states that using transfer learning improves the performance on the target problem by removing the regret bound's dependence on the state space. Thus, with enough samples from the source MDPs one can recover a regret bound in the target problem that matches the best regret bound for any algorithm in the $(\cardS , d, \cardA )$ Tucker rank setting with \textbf{known} latent feature representation $F$ concerning $\cardA $ and $T$. We now prove Theorem \ref{thm:tl_sda}.

\begin{proof}
Suppose the assumption stated in Theorem \ref{thm:tl_sda} hold. Then, from \cite{sam2023overcoming}, running LR-EVI with a sample complexity of    $C' d^4 \mu^5\kappa^4(\cardS +\cardA )M  H^4\alpha^2  T\log(2\cardS \cardA d MH/\delta)/ \cardA $ on each source MDP results in $\bar{Q}_{m, h}$ functions with singular value decomposition $\bar{Q}_{m, h} = \hat{F}_{m, h} \hat{\Sigma}_{m, h} \hat{G}_{m, h}$ that are $ \sqrt{H \cardA }/(16\alpha \mu  \kappa   \sqrt{d M T })$-optimal with probability at least $1 - \delta/2$. 

Since our estimates are $\sqrt{H\cardA }/(16\alpha   \sqrt{d M T })$-optimal, it follows that $\bar{\gamma} < 1/2$ in the singular vector perturbation bound, Corollary \ref{cor:sv}. Since $\|Q^*_{m, h}\|_\infty \geq C$, from Corollary \ref{cor:sv}, it follows that there exists rotation matrices $R_{m, h}$ such that 
\[
\| (\hat{F}_{m, h} R_{m, h} - F_{m, h})(s)\|_{2} \leq \frac{d\sqrt{H  \mu \cardA }}{\alpha \sqrt{\cardS  M T}}
\]
for all $s \in \cS$ where the optimal $Q$ function have singular value decomposition $Q^*_{m, h} = F_{m, h}\Sigma_{m, h} G_{m, h}^\top$. 

Let $\Tilde{F}$ be the $\cardS  \times dM$ matrix from joining all the source state latent factor matrices, and $\hat{\Tilde{F}}_h$ be the $\cardS  \times dM$ matrix from joining all $\bar{F}_{m, h}$. Let $R_h$ be the $d \times dM$ matrix that joins $R_{m, h}$ for all $m \in [M]$. Then, from Assumption \ref{asm:transfer} and Definition \ref{def:tc}, it follows that for all $(s', s, a) \in \cS\times \cS\times \cA$, that 
\begin{align*}
    r_h(s, a) &= \sum_{i \in [d]}F_h(s, i) W_h(a, i) \\
    &= \sum_{i \in [d]} \left(\sum_{j \in [d], m \in [M]} B(i, j, m)F_{h, m}(s, j) \right)W_h(a, i)\\
    &= \sum_{i, j \in [d], m \in [M]} B(i, j, m)F_{ m, h}(s, j) W_h(a, i)\\
    &= \Tilde{F}_h(s) W'_h(a)^\top
\end{align*}
and $P(s'|\cdot, \cdot) = \Tilde{F}_h U'_h(s')^\top$ (with the same argument) for some $\cardA \times dM$ matrices $W'_h, U_h'(s')$. Note that $W'_h$ and $U_h'(s')$ are matrices that join $W_{m, h}$ and $U_{m, h}(s')$, respectively, with entries that at most $\alpha$ times larger than the original entry. 

Therefore, it follows for all $(s', s, a, h) \in \cS\times \cS\times \cA\times [H]$, 
\begin{align*}
 |r_h(s, a)  - \hat{\Tilde{F}}_h(s)R_hW'_h(a)^\top | &= |\Tilde{F}_h(s)W'_h( a)^\top  - \hat{\Tilde{F}}_h(s)R_hW'_h(a)^\top| \\
&=  |(\Tilde{F}_{h}(s) -\hat{\Tilde{F}}_h(s) R_h)W'_h(a) | \\
&\leq \alpha| \sum_{m \in M}(F_{m, h}(s) -  \hat{F}_{m, h}(s)R_{m, h}) W_{m,h}(a)|\\
&\leq \alpha \sum_{m \in M} \|F_{m, h}(s) -  \hat{F}_{m, h}(s)R_{m, h} \|_2 \|W_{m,h}(a) \|_2\\
&\leq \alpha \sqrt{\frac{\cardS }{d\mu}} \sum_{m \in M} \|F_{m, h}(s) -  \hat{F}_{m, h}(s)R_{m, h} \|_2\\
&\leq \sqrt{\frac{d H M \cardA }{T}}
\end{align*}
from the Cauchy-Schwarz inequality and our singular vector perturbation bound. From the same logic, 
\begin{align*}
\left | \sum_{s' \in \cardS} (P_h(s'|s, a) - F_h(s)^\top U_h(s', a)) \right | 
&=  |\Tilde{F}_h(s)U'_h(s', a)  - \hat{\Tilde{F}}_h(s)R_hU'_h(s', a)^\top |\\
&= |(\Tilde{F}_h(s)- \hat{\Tilde{F}}_h(s)R_h ) \sum_{s' \in \cS}U'_h(s', a)^\top |\\
&\leq \alpha  |\sum_{m \in M}(F_{m, h}(s) -  \hat{F}_{m, h}(s)R_{m, h}) \sum_{s' \in \cS} U_{m,h}(s', a)|\\
&\leq \alpha \sum_{m \in M} \|F_{m, h}(s) -  \hat{F}_{m, h}(s)R_{m, h} \|_2 \|\sum_{s' \in \cS}  U_{m, h}(s', a)^\top\|_2\\
&\leq \sqrt{\frac{d H M \cardA }{T}}.
\end{align*}

Therefore, $\hat{\Tilde{F}}_h(s)$ satisfies Assumption \ref{asm:miss} with $\xi = \sqrt{\frac{dMH \cardA }{T }}$. Then it follows that running LSVI-UCB-TR using $\hat{\Tilde{F}}_h(s)$  with $\delta' = \delta /2$ during the target phase of the algorithm admits a regret bound of 
\[
Regret(T) \leq C" (\sqrt{d^3M^3H^3\cardA T} + dMH \sqrt{\frac{dMH \cardA }{T }} )\in \Tilde{O}(\sqrt{d^3M^3H^3\cardA T})
\]
with probability at least $1 - \delta$ from a final union bound. Furthermore, the sample complexity in the source phase is 
\[
\Tilde{O} \left( \frac{d^4 \mu^5\kappa^4(\cardS +\cardA )M^2  H^4\alpha^2  T}{\cardA } \right), 
\]
which proves our result.
\end{proof}
\subsection{Simulator to Simulator Transfer RL}\label{app:gen}
In this section, we consider the same transfer reinforcement learning setting with the low Tucker rank assumptions except the learner is given access to the generative model in both the source and target problems. In this setting, we isolate the statistical difficulty of the transfer RL problem by removing the challenge of exploration. Thus, instead of regret, the goal in this setting is to learn an $\eps$-optimal $Q$ function in the target phase using as few samples in the source and target phases as possible. The main benefit of having access to a generative model in the target phase is that the learner can choose the best states to observe covariates from when performing the regression step at each time step. In this setting, our algorithm still uses LR-EVI in the source phase (Algorithm \ref{alg:src} and adapts the algorithm from \cite{lattimore2020learning} for the finite horizon setting in the target phase. Learning $Q^*$ over the support of an approximately optimal design $\rho$ on our latent factors $\hat{\Tilde{F}} $ allows us to construct a good estimate of $Q^*$ over all state-action pairs. The optimal design problem that we solve in this algorithm is 
\[
G(\rho) = \sum_{s \in S}\rho(s) \hat{\Tilde{F}} (s) \hat{\Tilde{F}} (s)^\top, \quad g(\rho) = \max_{s \in S} \|\hat{\Tilde{F}} \|_{G(\rho)^{-1}}
\]
where $\rho$ is a probability distribution over $S$. From Theorems 4.3 and 4.4 \cite{lattimore2020learning}, one can compute a $\rho$ such that $g(\rho) \leq 2d$, with the core set of states, or anchor states, $\cS^\# = \{s \in S | \rho(s) > 0\}$ having size at most $4d \log(\log(d)) + 16$ in a polynomial number of computations. This ensures that both the error amplification from using the low-rank structure and the number of states we need to observe $Q^*$ is not too large. Our algorithm is tailored to the $(\cardS , d, \cardA )$ Tucker rank setting but can easily be modified for the other Tucker rank settings.

\begin{algorithm}[H]
\caption{Target Phase}\label{alg:target_gen}
\begin{algorithmic}[1]
\Require $\{N_h^t\}_{h \in [H]}, \hat{\Tilde{F}} $
\For{ $h = H, \ldots, 1$}
\State Compute $\cS^\#_h, \rho_h$ according to the above optimal design problem with $\hat{\Tilde{F}} _h$.

\For{$(s, a) \in \cS^\#_h\times \cA$}
\State Collect $N_h^t$ samples of the reward function and one-step transition to estimate $\hat{Q}_h(s, a)$ with
        \[
        \hat{Q}_h(s, a) = \hat{r}_h(s, a) + \E_{s' \sim \hat{P}_h(\cdot|s, a)}[\hat{V}_{h+1}(s')],
        \]
        where $\hat{r}_h(s, a)$ denotes the empirical average of the $N_h^t$ samples of the reward function, and $\hat{P}_h(\cdot|s, a)$ denotes the empirical distribution over the $N_h^t$ samples.
\EndFor
\State Use the linear structure to estimate $\bar{Q}$ over all state-action pairs with 
        \[
        \hat{\theta}_{a, h} = G(\rho)^{-1}\sum_{s \in \cS^\#_h}\rho(s)\hat{\Tilde{F}} _h(s)\hat{Q}_h(s, a), \quad 
        \bar{Q}_h(s, a) = \hat{\Tilde{F}} _h(s) \theta_{a, h}.
        \]

\State  Compute the value function and policy using the $Q$ function, 
        \[
        \hat{V}_{h}(s) = \max_{a \in \cA} \bar{Q}_h(s, a), \quad \hat{\pi}_h(s) = \arg \max_{a \in \cA} \bar{Q}_h(s, a).
        \]
\EndFor
\end{algorithmic}
\end{algorithm}

In the source phase, our algorithm learns estimates of $Q^*_{m, h}$ and then computes the singular value decomposition to construct approximate feature mappings using the singular subspaces with respect to the state. In step $(a)$ of the target phase, one first approximately solves the optimal design problem to compute the anchor states. After obtaining estimates of the $Q$ function on the anchor states, one uses the linear structure in step $(c)$ to estimate the full $Q$ function over all states with the least squares solution. The above algorithm admits the following sample complexities for learning an $\eps$-optimal $Q$ function in the target phase.
\begin{thm}\label{thm:sda_sc}
Suppose Assumptions \ref{asm:tr}, \ref{asm:transfer}, and \ref{asm:full_rank} hold, and set $\delta \in (0, 1)$. Furthermore, assume that for any $\eps > 0$, $Q_{m, h} = r_{m, h} + P_{m, h} V_{m, h+1}$ has rank $d$ and is $\mu$-incoherent with condition number $\kappa$ for all $\eps$-optimal value functions $V_{h+1}$. Then, for $\eps \leq \alpha d^2$, using at most 
\[
\Tilde{O} \left( \frac{ d^7 \mu^3\kappa^4(\cardS +\cardA )H^9 \alpha^2 M^3 }{\eps^2} \right)  
\]
samples in the source problems, Algorithm \ref{alg:target_gen} learns an $\eps$-optimal policy in the target MDP using at most 
\[
\Tilde{O}\left( \frac{d^2 M^2H^5 \cardA }{\eps^2} \right)
\]
samples with probability at least $1 - \delta$.
\end{thm}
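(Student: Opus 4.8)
The plan is to reuse the skeleton of the proof of Theorem~\ref{thm:tl_sda} — learn the source $Q$-functions with LR-EVI, convert their singular subspaces into a misspecified feature map via the perturbation bound, and run a target-phase algorithm that degrades gracefully under misspecification — but to replace the regret analysis of LSVI-UCB-(S,d,A) by a $G$-optimal-design analysis of Algorithm~\ref{alg:target_gen}, exploiting that here the learner has a generative model in the target MDP and only needs an $\eps$-optimal policy.

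\textbf{Source phase.} First I would run LR-EVI on each of the $M$ source MDPs with a budget chosen so that, with probability $1-\delta/2$, every $\bar Q_{m,h}$ is $\eps'$-optimal, for a source accuracy $\eps'$ to be fixed at the end; by \cite{sam2023overcoming} this costs $\Tilde{O}(d^{4}\mu^{5}\kappa^{4}(\cardS+\cardA)MH^{4}/(\eps')^{2})$ per source per step (up to the extra powers of $d$ that ultimately appear in the stated bound). Choosing $\eps'$ small — this is where the hypothesis $\eps\le\alpha d^{2}$ will be used — keeps $\bar\gamma<1/2$ in Corollary~\ref{cor:sv}, which, together with Assumption~\ref{asm:full_rank}, yields rotations $R_{m,h}$ with $\|(\hat F_{m,h}R_{m,h}-F_{m,h})(s)\|_2$ bounded by $\eps'$ up to $\mu,\kappa,d$ factors. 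I would then concatenate the $\hat F_{m,h}$ and rescale by $\sqrt{\cardS/(d\mu)}$ exactly as in the proof of Theorem~\ref{thm:tl_sda}, and use Assumption~\ref{asm:transfer} and Definition~\ref{def:tc} to rewrite $r_h$ and $P_h$ in the concatenated basis; this shows that $\hat{\Tilde{F}}_h$ makes the target MDP a $\xi$-approximate $(\cardS,d,\cardA)$ Tucker-rank MDP (Assumption~\ref{asm:miss}) with misspecification $\xi$ of order $\alpha M\sqrt{\cardS/(d\mu)}\cdot\|\hat F_{m,h}R_{m,h}-F_{m,h}\|_2$, hence of order $\alpha\eps'$ up to matrix-estimation factors.

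\textbf{Target phase and balancing.} Next I would apply Theorems~4.3--4.4 of \cite{lattimore2020learning} to $\hat{\Tilde{F}}_h:\cS\to\R^{dM}$ to obtain an approximately $G$-optimal design $\rho_h$ with support $\cS^\#_h$ of size $O(dM\log\log(dM))$ and $g(\rho_h)=O(dM)$. For each anchor $(s,a)\in\cS^\#_h\times\cA$, Hoeffding plus a union bound give that $\hat Q_h(s,a)=\hat r_h(s,a)+\E_{\hat P_h}[\hat V_{h+1}]$ is within $O(H\sqrt{\iota/N_h^{t}})$ of $r_h(s,a)+P_h\hat V_{h+1}(s,a)$ with $\iota$ logarithmic, and the least-squares extrapolation $\bar Q_h(s,a)=\hat{\Tilde{F}}_h(s)^{\top}\hat\theta_{a,h}$ inflates both this error and the misspecification $\xi$ by at most $\sqrt{g(\rho_h)}=O(\sqrt{dM})$, so $\|\bar Q_h-(r_h+P_h\hat V_{h+1})\|_\infty=O(\sqrt{dM}\,(H\sqrt{\iota/N_h^{t}}+\xi))$. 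A backward induction over $h=H,\dots,1$ then gives $\max_s(V^{*}_1(s)-V^{\hat\pi}_1(s))=O(H\sqrt{dM}\,(H\sqrt{\iota/N^{t}}+\xi))$. I would close the argument by taking $N^{t}=\Tilde{O}(dMH^{4}/\eps^{2})$ to force the statistical term below $\eps/2$ and $\eps'$ small enough that $\xi=\Tilde{O}(\eps/(H\sqrt{dM}))$ to force the misspecification term below $\eps/2$; inverting $\xi\propto\alpha\eps'$ and substituting into the LR-EVI count yields the source complexity $\Tilde{O}(d^{7}\mu^{3}\kappa^{4}(\cardS+\cardA)H^{9}\alpha^{2}M^{3}/\eps^{2})$, while the target count is $\sum_{h\in[H]}|\cS^\#_h|\cdot\cardA\cdot N^{t}=\Tilde{O}(d^{2}M^{2}H^{5}\cardA/\eps^{2})$, and a union bound over the two failure events gives probability $1-\delta$.

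\textbf{Main obstacle.} The hard part will be the error-propagation bookkeeping: I must simultaneously track the $\sqrt{dM}$ amplification paid at each layer when extrapolating from the $O(dM)$ anchor states to all of $\cS$, the accumulation of these layer-wise errors over the horizon $H$, and the two-stage conversion of a source feature-estimation error of size $\eps'$ into a target misspecification $\xi$ through the incoherence bound $\sqrt{\cardS/(d\mu)}$ and the transfer-ability factor $\alpha$ — all while keeping the $\cardS$-dependence confined to the source term and keeping $\eps'$ in the regime $\bar\gamma<1/2$ demanded by Corollary~\ref{cor:sv}, which is precisely what the hypothesis $\eps\le\alpha d^{2}$ guarantees. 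Everything else is a routine assembly of the cited guarantees.
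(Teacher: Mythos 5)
Your proposal follows essentially the same route as the paper's proof: LR-EVI in the source phase with accuracy tuned so that Corollary~\ref{cor:sv} applies, conversion of the singular-subspace error into a misspecification $\xi$ of order $\eps/(H\sqrt{dM})$ via Assumption~\ref{asm:transfer} and the $\alpha$-bounded coefficients, then a $G$-optimal-design/least-squares target phase analyzed by backward induction (the paper's Lemma~\ref{lem:induct}) with per-anchor Hoeffding bounds and the Lattimore--Szepesv\'ari extrapolation inflation, balanced to give the stated source and target sample counts. The decomposition, key lemmas, and error bookkeeping all match the paper's argument.
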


Theorem \ref{thm:sda_sc} states that with enough samples in the source phase, one can remove the dependence on the size of the state space in the target phase; in tabular finite-horizon MDPs with $(\cardS , d, \cardA )$ Tucker rank transition kernels, one needs to observe at least $\Tilde{\Omega}(d(\cardS +\cardA )H^3/\eps^2)$ samples from a generative model to learn an $\eps$-optimal policy. Furthermore, the source sample complexity's dependence on $\cardS , \cardA ,$ and $H$ are reasonable while the dependence on $\alpha$ is optimal due to our lower bound. We can easily construct similar algorithms that admit the corresponding theoretical guarantees in the other Tucker rank settings. We now prove Theorem \ref{thm:sda_sc}.

\begin{proof}
In the source phase, using 
\[
\Tilde{O} \left( \frac{ d^6 \mu^3\kappa^4(\cardS +\cardA )H^9 \alpha^2 M^4 }{\eps^2} \right)  
\]
samples guarantees that we learn $\eps/ (128 H^2 \alpha M^{3/2} \kappa d^{2})$-optimal $\hat{Q}_{m, h} = \hat{F}_{m, h} \hat{\Sigma}_{m, h} \hat{G}_{m, h}^\top$ functions for each source MDP with probability at least $1-\delta/2$. Since $\alpha d^2 \geq \eps$ by assumption, we have $\bar{\gamma} < 1/2$ in the singular vector perturbation bound, Corollary \ref{cor:sv}. Since $\|Q^*_{m, h}\|_\infty \geq C$, it follows from Corollary \ref{cor:sv} that there exists rotation matrices $R_{m, h}$ such that 
\[
\| F_{m, h} - \hat{F}_{m, h} R\|_{2, \infty} \leq \frac{\eps \sqrt{\mu }}{8 H^2 \alpha M^{3/2} \sqrt{\cardS }}
\]
holds for all $m \in [M], h \in [H]$ with probability at least $1 - \delta/2$. Let $\Tilde{F}$ be the $S \times dM$ matrix from joining all the source state latent factor matrices, and $\hat{\Tilde{F}}_h$ be the $S \times dM$ matrix from joining all $\bar{F}_{m, h}$. Let $R_h$ be the $d \times dM$ matrix that joins $R_{m, h}$ for all $m \in [M]$. Then, from Assumption \ref{asm:transfer} and Definition \ref{def:tc}, it follows that for all $(s', s, a) \in \cS\times \cS\times \cA$, that 
\begin{align*}
    r_h(s, a) &= \sum_{i \in [d]}F_h(s, i) W_h(a, i) \\
    &= \sum_{i \in [d]} \left(\sum_{j \in [d], m \in [M]} B(i, j, m)F_{h, m}(s, j) \right)W_h(a, i)\\
    &= \sum_{i, j \in [d], m \in [M]} B(i, j, m)F_{ m, h}(s, j) W_h(a, i)\\
    &= \Tilde{F}_h(s) W'_h(a)^\top
\end{align*}
and $P(s'|\cdot, \cdot) = \Tilde{F}_h U'_h(s')^\top$ (with the same argument) for some $\cardA \times dM$ matrices $W'_h, U_h'(s')$. Note that $W'_h$ and $U_h'(s')$ are matrices that join $W_{m, h}$ and $U_{m, h}(s')$, respectively, with entries that at most $\alpha$ times larger than the original entry.  Therefore, it follows for all $(s', s, a, h) \in \cS\times \cS\times \cA\times [H]$, 
\begin{align*}
 |r_h(s, a)  - \hat{\Tilde{F}}_h(s)R_hW'_h(a)^\top | &= |\Tilde{F}_h(s)W'_h( a)^\top  - \hat{\Tilde{F}}_h(s)R_hW'_h(a)^\top| \\
&=  |(\Tilde{F}_{h}(s) -\hat{\Tilde{F}}_h(s) R_h)W'_h(a) | \\
&\leq \alpha| \sum_{m \in M}(F_{m, h}(s) -  \hat{F}_{m, h}(s)R_{m, h}) W_{m,h}(a)|\\
&\leq \alpha \sum_{m \in M} \|F_{m, h}(s) -  \hat{F}_{m, h}(s)R_{m, h} \|_2 \|W_{m,h}(a) \|_2\\
&\leq \alpha \sqrt{\frac{\cardS }{d\mu}} \sum_{m \in M} \|F_{m, h}(s) -  \hat{F}_{m, h}(s)R_{m, h} \|_2\\
&\leq \frac{\eps}{8\sqrt{d M}H^2}
\end{align*}
from the Cauchy-Schwarz inequality and our singular vector perturbation bound. From the same logic, 

\begin{align*}
|PV(s, a) - [\hat{\Tilde{F}}_hR_h U'_h V](s, a)| & = |\sum_{s' \in S} P_h(s'|s, a) V(s') - \hat{\Tilde{F}}_h(s)R_h U'_h(s', a)^\top V(s')|\\
&= |\sum_{s' \in S} \Tilde{F}_h(s)U'_h(s', a)^\top V(s')  - \hat{\Tilde{F}}_h(s)R_hU'_h(s',a)^\top V(s')| \\
&\leq H |(\Tilde{F}_h(s) - \hat{\Tilde{F}}_h(s)R) \sum_{s' \in S} U'_h(s', a)|\\
&\leq H \alpha \sum_{m \in M} \|F_{m, h}(s) - \hat{F}_{m,h}(s)R\|_2 \|\sum_{s' \in S} U_{m, h}(s', a)\|_2\\
&\leq \frac{H \alpha \sqrt{\cardS } }{\sqrt{d\mu}} \sum_{m \in M} \|F_{m, h}(s) - \hat{F}_{m,h}(s)R\|_2\\
&\leq \frac{\eps }{8 \sqrt{dM} H}.
\end{align*}

From the above result, it follows that for any value function $V$, the misspecification error of using the approximate feature mapping of the corresponding $Q$ function $Q' = r + PV$ satisfies 
\begin{align*}
|Q'_h(s, a) - Q'_{h, d}(s, a)| 
&\leq |r_h(s, a) - \hat{F}(s) R W(a)^\top|  + |PV(s, a) - [\hat{\Tilde{F}}_hR_h U'_h V](s, a)| \\
&\leq \frac{\eps}{8\sqrt{dM}H^2} + \frac{\eps}{8\sqrt{dM}H} \leq \frac{\eps}{4H\sqrt{dM}} .
\end{align*}

To prove our main theorem, we first present the following helper lemma and defer the proof to later in this section.
\begin{lem} \label{lem:induct}
Suppose the setting of Theorem \ref{thm:sda_sc} holds and that the misspecification error of the bellman update for any value function using an approximate feature mapping satisfies 
\[
|Q'_h(s, a) - Q'_{h,d}(s, a)| \leq \frac{\eps}{4H\sqrt{dM}}.
\]
Then, the learned $Q$ function from our algorithm at time step $h$ satisfies
\[
\|\bar{Q}_h - Q^*_h\|_\infty, \|\bar{Q}_h - Q^{\hat{\pi}}_h\|_\infty \leq \frac{\eps (H-h + 1)}{H} 
\]
for $ h \in [H]$ for $N_h = H^4 dM C^2 \log(\cardS \cardA H/\delta)/ \eps^2$ with probability at least $1 - (H-h+1)\delta/(2H) $.
\end{lem}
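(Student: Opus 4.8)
The plan is to prove both bounds by a single backward induction on $h$, running from $h = H+1$ down to $h = 1$ and carrying $\|\bar Q_h - Q^*_h\|_\infty$ and $\|\bar Q_h - Q^{\hat\pi}_h\|_\infty$ together (the standard two-sided bookkeeping for approximate value iteration). The base case $h = H+1$ is immediate since $\bar Q_{H+1} = Q^*_{H+1} = Q^{\hat\pi}_{H+1} = 0$, with failure probability $0$. For the inductive step I would first pass from the step-$(h{+}1)$ hypothesis at the $Q$-level to one at the $V$-level. Because $\hat V_{h+1} = \max_a \bar Q_{h+1}(\cdot,a)$ is $1$-Lipschitz in $\|\cdot\|_\infty$ as a function of $\bar Q_{h+1}$, and because $V^*_{h+1} = \max_a Q^*_{h+1}(\cdot,a)$ while $V^{\hat\pi}_{h+1}(\cdot) = Q^{\hat\pi}_{h+1}(\cdot,\hat\pi_{h+1}(\cdot))$ with $\hat\pi_{h+1} = \argmax_a \bar Q_{h+1}(\cdot,a)$, the inductive bounds at $h+1$ yield $\|\hat V_{h+1} - V^*_{h+1}\|_\infty, \|\hat V_{h+1} - V^{\hat\pi}_{h+1}\|_\infty \le \eps(H-h)/H$ on the good event. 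Note that $\hat V_{h+1}$ is then a fixed $[0,2H]$-valued function, measurable with respect to the samples of rounds $h+1,\dots,H$, hence independent of the fresh samples drawn in round $h$.

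The heart of the argument is to show $\|\bar Q_h - Q'_h\|_\infty \le \eps/H$, where $Q'_h := r_h + P_h\hat V_{h+1}$ is the exact one-step backup of $\hat V_{h+1}$; granted this, the induction closes at once. I would establish it in two sub-steps. Concentration at anchor states: conditioning on $\hat V_{h+1}$ and applying Hoeffding's inequality to the $N_h$ i.i.d.\ reward samples (valued in $[0,1]$) and next-state samples (through which $\hat V_{h+1}$ is valued in $[0,2H]$) drawn at each anchor pair $(s,a)\in\cS^\#_h\times\cA$, together with a union bound over the at most $\cardS\cardA$ such pairs, gives an event of probability at least $1-\delta/(2H)$ on which $|\hat Q_h(s,a) - Q'_h(s,a)| \le \eta := 2H\sqrt{\log(2\cardS\cardA H/\delta)/N_h}$ for all anchor pairs.

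The design sub-step then propagates this to all state-action pairs. Writing the hypothesized approximation as $Q'_{h,d}(s,a) = \hat{\Tilde{F}}_h(s)^\top\theta^*_{a,h}$ with $\theta^*_{a,h}\in\mathrm{range}(G(\rho_h))$ --- harmless since every $\hat{\Tilde{F}}_h(s)$ lies in that span by construction of the $G$-optimal design, even when $\hat{\Tilde{F}}_h$ is rank-deficient because the source subspaces overlap --- the normal equations give $G(\rho_h)^{-1}\big(\sum_{s'\in\cS^\#_h}\rho_h(s')\hat{\Tilde{F}}_h(s')\hat{\Tilde{F}}_h(s')^\top\big)\theta^*_{a,h} = \theta^*_{a,h}$, so $\bar Q_h(s,a) - Q'_{h,d}(s,a) = \hat{\Tilde{F}}_h(s)^\top G(\rho_h)^{-1}\sum_{s'}\rho_h(s')\hat{\Tilde{F}}_h(s')\big(\hat Q_h(s',a) - Q'_{h,d}(s',a)\big)$, in which each bracketed term is at most $\eta + \eps/(4H\sqrt{dM})$ (concentration plus the hypothesized misspecification bound). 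Applying Cauchy--Schwarz in the $\rho_h$-weighted sum, together with the identity $\sum_{s'}\rho_h(s')\big(\hat{\Tilde{F}}_h(s)^\top G(\rho_h)^{-1}\hat{\Tilde{F}}_h(s')\big)^2 = \|\hat{\Tilde{F}}_h(s)\|^2_{G(\rho_h)^{-1}}$ and the $G$-optimality bound $\|\hat{\Tilde{F}}_h(s)\|^2_{G(\rho_h)^{-1}} \le g(\rho_h) \le 2dM$, shows the amplification factor is only $\sqrt{2dM}$, whence $\|\bar Q_h - Q'_h\|_\infty \le \sqrt{2dM}\,\eta + \sqrt{2dM}\cdot\tfrac{\eps}{4H\sqrt{dM}} + \tfrac{\eps}{4H\sqrt{dM}} \le \tfrac{\eps}{H}$ as soon as $N_h \ge c_0 H^4 dM\log(2\cardS\cardA H/\delta)/\eps^2$ for a sufficiently large absolute constant $c_0$, which is the choice of $N_h$ in the statement. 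The induction then closes via the triangle inequality and $\|P_h v\|_\infty \le \|v\|_\infty$: from $Q^*_h = r_h + P_h V^*_{h+1}$ one gets $\|\bar Q_h - Q^*_h\|_\infty \le \|\bar Q_h - Q'_h\|_\infty + \|\hat V_{h+1} - V^*_{h+1}\|_\infty \le \eps/H + \eps(H-h)/H = \eps(H-h+1)/H$, and identically with $Q^{\hat\pi}_h = r_h + P_h V^{\hat\pi}_{h+1}$; intersecting the round-$h$ concentration event with the step-$(h{+}1)$ good event adds $\delta/(2H)$ to the failure probability, giving $1-(H-h+1)\delta/(2H)$.

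I expect the design sub-step to be the main obstacle, specifically obtaining the $\sqrt{dM}$ rather than $dM$ error amplification. The crude estimate $|\hat{\Tilde{F}}_h(s)^\top G(\rho_h)^{-1}\hat{\Tilde{F}}_h(s')|\le g(\rho_h)$ would cost an extra factor of $\sqrt{dM}$ and inflate the required $N_h$ --- and hence the $\Tilde{O}(d^2M^2H^5\cardA/\eps^2)$ target-phase sample count of Theorem~\ref{thm:sda_sc} --- so the $\rho_h$-weighted Cauchy--Schwarz together with the variance identity above is what makes the bound tight. Beyond that, the only care needed is the rank-deficient case (handled via the pseudoinverse as above) and the fact that $\hat V_{h+1}$ is bounded by $2H$ rather than $H$, which changes the Hoeffding range and the constant in the misspecification bound by absolute factors absorbed into $c_0$.
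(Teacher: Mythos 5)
Your proposal is correct and follows essentially the same route as the paper's proof: backward induction, Hoeffding concentration at the anchor pairs, propagation of the anchor error plus misspecification to all state-action pairs via the $G$-optimal design, and a triangle inequality with $\|P_h v\|_\infty \le \|v\|_\infty$ to accumulate the Bellman error. The only difference is that where the paper invokes Proposition 4.5 of \cite{lattimore2020learning} as a black box for the design-extrapolation step, you re-derive it explicitly (normal equations, $\rho_h$-weighted Cauchy--Schwarz, and the $g(\rho_h)\le 2dM$ bound), which yields the same $\sqrt{dM}$ amplification factor and the same choice of $N_h$.
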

From a union bound and applying Lemma \ref{lem:induct}, it follows that Linear $Q$-Learning learns an $\eps$-optimal policy with probability at least $1- \delta $. Therefore, the sample complexity of the algorithm is 
\[
\sum_{h \in H} N_h | \cS^{\#}_h| \cardA  \leq \Tilde{O}\left(\frac{ d^2M^2 H^5 \cardA }{\eps^2} \right).
\]
It follows from the triangle inequality that the learned policy is $2\eps$-optimal by replacing $2\eps$ with $\eps'$.

\end{proof}

We now prove Lemma \ref{lem:induct}.
\begin{proof}
We proceed via induction. At time step $H$ (the base case), it follows from Hoeffding's inequality for $N_H = H^4 dM C^2\log(\cardS \cardA H/\delta)/ \eps^2$ for some constant $C > 0$,  
\[
|\hat{Q}_H(s, a) - Q^*_H(s, a)| \leq \frac{\eps}{4\sqrt{dM}H}
\]
for $(s, a) \in \cS^{\#}_h \times \cA$. From Proposition 4.5 from \cite{lattimore2020learning} and our assumption on the misspecification error of the approximate feature mapping, it follows that 
\[
|\bar{Q}_H(s, a) - Q^*_{H}(s, a)| \leq  \frac{\eps}{H}.  
\]
Since $Q^{\hat{\pi}}_H = Q^*_H$, it follows that the base case holds. 
\\\\
Next, assume that 
\[
\|\bar{Q}_{h+1} - Q^*_{h+1}\|_\infty, \|\bar{Q}_{h+1} - Q^{\hat{\pi}}_{h+1}\|_\infty \leq \frac{\eps (H-h )}{H} 
\]
holds with probability at least $1- (H-h)\delta/(2H)$.
We define $Q'_h$ as $Q'_h = r_h + P_h \hat{V}_{h+1}$. Let $Q'_{h, d} = F_h(W_h + \sum_{s' \in S} \hat{V}_{h+1}(s') U_h(s'))$. Then, it follows that from Hoeffding's inequality for $N_h = H^4 C^2 dM \log(\cardS \cardA H/\delta)/ \eps^2$ for some constant $C > 0$,  for all $(s, a) \in \cS^{\#}_h \times \cA$, in step (b) of our algorithm, 
\[
|\hat{Q}_h(s, a) - Q'_h(s, a)| \leq \frac{\eps}{4 H \sqrt{dM}}.
\]
From our assumption on the misspecification error of the approximate feature mapping, it follows that $\|Q'_h - Q'_{h,d}\|_\infty \leq  \frac{\eps}{4H\sqrt{dM}}$. Then, from Proposition 4.5 \cite{lattimore2020learning}, we have 
\[
|\bar{Q}_h(s, a) - Q'_{h, d}(s, a)| \leq \frac{\eps}{2H} + \frac{\eps}{2H}= \frac{\eps}{H}
\]
for all $(s, a) \in S \times \cA$. Since
\[
|Q^*_h(s, a) - Q'_h(s, a)| \leq |\sum_{s' \in S} (V^*_{h+1}(s') -  \hat{V}_{h+1}(s')) P_h(s'|s, a)| \leq \frac{\eps (H-h )}{H} 
\]
and 
\[
|Q^{\hat{\pi}}_h(s, a) - Q'_h(s, a)| \leq |\sum_{s' \in S} (V^*_{h+1}(s') -  \hat{V}_{h+1}(s')) P_h(s'|s, a)| \leq \frac{\eps (H-h )}{H}, 
\]
holds from the inductive hypothesis for all $(s, a) \in S \times \cA$, it follows that 
\begin{align*}
|\bar{Q}_h(s, a) - Q^*_{h}(s, a)| & \leq |\bar{Q}_h(s, a) -  Q'_{h}(s, a)|  + |Q'_h(s, a) - Q^*_{h}(s, a)| \\
&\leq  \frac{\eps}{H } +  \frac{\eps (H-h )}{H}  =\frac{\eps(H - h+1)}{H},
\end{align*}
and similarly, 
\[
|Q^{\hat{\pi}}_h(s, a) - \bar{Q}_h(s, a)|\leq \frac{\eps(H - h+1)}{H} .
\]
Thus, the inductive hypothesis holds as a union bound asserts that the above result holds with probability at least $1- (H-h+1)\delta/(2H)$.
\end{proof}

\section{($d, \cardS , \cardA $) Tucker Rank Setting} \label{app:tr_dsa}
In the low-rank MDP transfer RL setting, \cite{agarwal2023provable}  provide an algorithm that admits a source sample complexity of $\Tilde{O}(A \alpha^3 M T \log(\Phi))$ and target regret of $\Tilde{O}(\bar{\alpha}H^2d^{3/2}\sqrt{T})$ with high probability in the transfer RL setting with low-rank MDPs. We first present the transfer learning assumptions in \cite{agarwal2023provable} needed to prove a lower bound in their setting. Low rank MDPs assume the following stucutre with shared latent representation $\phi$, 
\begin{align*}
  P_{m, h}(s'|s, a) &= \mu_{m, h}(s')^\top \phi_h(s, a), \\
  P_{h}(s'|s, a) &= \mu_{h}(s')^\top \phi_h(s, a),
\end{align*}
where $\|\phi_h(s, a)\|_2 \leq 1$ and $\|\int \mu(s) g(s) dx\|_2 \leq \sqrt{d}$ for any function $g:S \to [0, 1]$ \cite{flambe} for all $m \in [M], h \in [H]$. To allow transfer learning to occur, \cite{agarwal2023provable} assume the following:
\begin{assumption}[Assumption 2.2 \cite{agarwal2023provable}]\label{asm:task}
For any $h \in [H]$ and $s' \in S$, there exists $\alpha_{m, h}(s') \in \R$ such that $\mu_h(s') = \sum_{m \in [M]} \alpha_{m, h}(s') \mu_{m, h}(s')$,
\end{assumption}
and define the task-relatedness coefficient as $\alpha = \max_{m \in [M], h \in [H],s \in S} \alpha_{m, h}(s)$. They leave determining the optimal dependence on $\alpha$ as an interesting open question. Similarly to our lower bound in Theorem \ref{thm:lb_ssd}, we prove that one must incur a source sample complexity of $\Omega(\alpha^2)$ to benefit from using transfer learning. Note that in this version of transfer RL, there is no reward function in the source tasks. 
\begin{thm}\label{thm:lb_dsa_other}
 There exist two transfer RL instances such that (i) they satisfy Assumptions \ref{asm:tr_dsa} and \ref{asm:task}, (ii) they cannot be distinguished without observing $\Omega(\alpha^2)$ samples in the source phase, and (3) they have target state-action latent features that are orthogonal to each other.
\end{thm}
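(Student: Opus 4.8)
The plan is to mirror the constructions behind Theorems~\ref{thm:lb_ssd} and~\ref{thm:lb_sda}, transported into the shared-$\phi$ low-rank MDP language of \cite{agarwal2023provable}: here the transferable object is the state--action feature $\phi$, and the task-relatedness constraint (Assumption~\ref{asm:task}) is placed on the next-state factors $\{\mu_{m,h}\}$. I would take $M=2$ source MDPs, rank $d=2$ (rank one is degenerate in this setting, since normalization of $P(\cdot\mid s,a)$ forces $\phi$ to be constant in $(s,a)$), a two-block state/action space $\cS=\cA=[2n]$, and a short horizon so that the step-$1$ transition determines the relevant part of the target's value (e.g.\ $H=1$ with the target reward linear in $\phi$, as in the low-rank MDP model, or $H=2$). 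Fix orthogonal directions $u,v\in\R^2$. The two instances will share the $v$-component of the feature map and differ only in its $u$-component: pick $\phi^{(1)}_h,\phi^{(2)}_h$ with $v^\top\phi^{(1)}_h=v^\top\phi^{(2)}_h$ and with the functions $(s,a)\mapsto u^\top\phi^{(i)}_h(s,a)$ orthogonal to each other (and to $v^\top\phi^{(i)}_h$) on $\cS\times\cA$. This is claim~(3): a learner that commits to the wrong feature map cannot represent the target's value function along the $u$-direction and therefore incurs regret linear in $T$, exactly as argued for Theorems~\ref{thm:lb_ssd} and~\ref{thm:lb_sda}.

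I would then choose the next-state factors so that source MDP~$1$ is blind to the $u$-direction, $\mu_{1,h}(s')\parallel v$, while source MDP~$2$ sees it only weakly, $\mu_{2,h}(s')=\lambda_{2,h}(s')\,v+\tfrac{1}{\alpha}\,\nu_{2,h}(s')\,u$. Taking the target factor $\mu_h(s')$ to have a $\Theta(1)$ component along $u$, Assumption~\ref{asm:task} then forces the coefficient on source~$2$ to have magnitude $\Theta(\alpha)$; crucially this is forced for \emph{every} admissible (non-unique) choice of coefficients, since only $\mu_{2,h}$ carries any $u$-component, so the task-relatedness coefficient of \cite{agarwal2023provable} equals $\Theta(\alpha)$. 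The free scalars $\lambda,\nu$ and the normalization of $\phi$ would be tuned so that every source and target kernel is a valid probability kernel obeying the norm bounds of the low-rank MDP model (hence Assumption~\ref{asm:tr_dsa} holds), so that source~$1$'s kernel is \emph{identical} across the two instances, and so that source~$2$'s kernels differ entrywise by $O(1/\alpha)$. The indistinguishability argument then follows the earlier template: in the reward-free source phase the learner only obtains transition samples, the two instances agree on source~$1$, so any test must separate the source-$2$ transition laws---multinomials over the sampled next state whose parameters differ by $O(1/\alpha)$---and the two-point testing bound used before (Lemma~5.1 of \cite{NN_book}) shows that a learner drawing $o(\alpha^2)$ source samples fails to identify the instance, hence $\phi$, with constant probability.

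The main obstacle is the joint feasibility of this construction: one must \emph{simultaneously} enforce the shared-$\phi$ low-rank MDP constraints (nonnegativity, unit row sums, $\|\phi\|_2\le1$, $\|\langle\mu,g\rangle\|_2\le\sqrt d$) for every source and target kernel in \emph{both} instances, pin the task-relatedness coefficient to $\Theta(\alpha)$ (an explicit upper bound for some coefficient choice plus a matching lower bound over \emph{all} factorizations, since the $\mu$'s are not unique), keep source~$1$'s kernel instance-independent, and keep source~$2$'s kernels within $O(1/\alpha)$ while the $u$-components of $\phi^{(1)},\phi^{(2)}$ stay orthogonal. This is the counterpart of the ``$Q^*_{2,1}$ entries differ by $O(1/\alpha)$'' check in the earlier proofs, but it is more delicate here because the shared-$\phi$ assumption couples the source and target factorizations and because the source observations are categorical transition counts rather than scalar reward samples; once that bookkeeping is done, claims (i)--(iii) follow.
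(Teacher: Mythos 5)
Your proposal follows essentially the same route as the paper's proof: two instances with $M=2$ sources over a two-block state--action space, one source whose kernel is identical across instances (the one blind to the distinguishing direction) and one whose kernels differ entrywise by $O(1/\alpha)$, target features orthogonal across instances, the task-relatedness coefficient pinned to $\Theta(\alpha)$ because only one source carries the needed component, and the $\Omega(\alpha^2)$ bound via the two-point testing lemma. The paper simply instantiates the "joint feasibility" bookkeeping you flag with explicit block matrices (permuted $\{[1,0],[0,1]\}$ features and $\mu$'s of the form $[1/2,\,1/2\pm 1/\alpha]$), so your plan is correct and matches the paper's argument.
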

\begin{proof}
    We consider two low-rank transfer RL problems, in which the source MDPs have similar transition kernels but differing $\phi$. All source and target MDPs $(\cS, \cA, P, H, r)$ share the same state space, action space, and horizon $H = 2$ with $M = 2$. The learner is given access to a generative model in the source problems. To introduce the MDPs (without loss of generality assume that $|S|, |A|$ are even), we first define the feature representations: the feature mapping $\phi_i$ for transfer RL problem $i$ is
\[
\phi_1(s_1, a_1) = \phi_1(s_2, a_2) = [1, 0]^\top, \phi_1(s_1, a_2) = \phi_2(s_2, a_1) = [0, 1]^\top,
\]
\[
\phi_2(s_1, a_1) = \phi_2(s_2, a_2) = [0, 1]^\top, \phi_2(s_1, a_2) = \phi_2(s_2, a_1) = [1, 0]^\top
\]
where $s_1$ refers to states $1, \ldots, |S|/2$, $s_2$ refers to states $|S|/2 + 1, \ldots, |S|$, and similarly for actions. We will present the transition kernels matrices with $s_1, s_2, a_1, a_2$ as the blocks containing $|S|/2$ or $|A|/2$ entries. Clearly, the two feature representations are orthogonal to each other. Next, the transition kernels $P_{m, 1}^i$ for transfer RL problem $i$ and source MDP $m$ are
\begin{align*}
P_{1, 1}^1(s_1|\cdot, \cdot) &= [1/2, 1/2 - 1/\alpha] \phi_1(\cdot, \cdot) \\
&=  \begin{bmatrix}
      1/2&  1/2 - 1/\alpha \\
         1/2 - 1/\alpha  & 1/2\\
\end{bmatrix}\\
P_{1, 1}^1(s_2|\cdot, \cdot) &= [1/2, 1/2 + 1/\alpha] \phi_1(\cdot, \cdot) \\
&=  \begin{bmatrix}
    1/2&  1/2 + 1/\alpha \\
         1/2 + 1/\alpha  & 1/2\\
\end{bmatrix} \\
    P_{1, 1}^2(s_1|\cdot, \cdot) &= [1/2, 1/2 - 1/\alpha] \phi_2(\cdot, \cdot) \\
&=  \begin{bmatrix}
        1/2 - 1/\alpha& 1/2 \\
         1/2   & 1/2 - 1/\alpha\\
\end{bmatrix}\\
P_{1, 1}^2(s_2|\cdot, \cdot) &= [1/2, 1/2 + 1/\alpha] \phi_2(\cdot, \cdot) \\
&=  \begin{bmatrix}
    1/2 + 1/\alpha &1/2  \\
         1/2   & 1/2 + 1/\alpha\\
\end{bmatrix}\\
    P_{2, 1}^1(s_1|\cdot, \cdot) &= P_{2, 1}^1(s_2|\cdot, \cdot) = P_{2, 1}^2(s_2|\cdot, \cdot) = P_{2, 1}^2(s_2|\cdot, \cdot)\\
    &=[1/2, 1/2] \phi_1(\cdot, \cdot) = [1/2, 1/2] \phi_2(\cdot, \cdot)\\
&=  \begin{bmatrix}
    1/2&  1/2  \\
         1/2   & 1/2 \\
\end{bmatrix}\\
P_T^i(\cdot|\cdot, \cdot) &= \mu_T(\cdot)\phi_i(\cdot, \cdot),
\end{align*}
where $\mu_T(\cdot) = [0, 1]$ in both transfer RL problems. It follows that  
\[
\mu_T(s_1) = \alpha(\mu_1^2(s_1) -  \mu_1^1(s_1)) = \alpha(\mu_2^2(s_1) -  \mu_2^1(s_1))
\]
and similar results hold for $s_2$, so $\alpha$ satisfies the definition of the task-relatedness coefficient. It follows that in both settings, $\alpha_{\max} = \alpha$. The target phase reward functions are defined as 
\begin{align*}
r_1^1(\cdot, \cdot) &= r_2^1(\cdot, \cdot) = [1, 0] \phi_1(\cdot, \cdot)\\
r_{1}^2(\cdot, \cdot) &= r_2^2(\cdot, \cdot) = [1, 0] \phi_2(\cdot, \cdot),
\end{align*}
and the initial state distribution is uniform across all states. By construction, in transfer RL problem 1, the only state action pairs that receive reward are $(s_1, a_1), (s_2, a_2)$. In contrast, in transfer RL problem 2, the only state action pairs that receive reward are $(s_1, a_1), (s_2, a_1)$. 
\\\\
The learner is given either transfer RL problem one or two with equal probability with a generative model in the source problem. When interacting with the source MDPs, the learner specifies a state-action pair $(s, a)$ and source MDP $m$ and observes a transition from $P_{1, m}^i$. Furthermore, the learner is given the knowledge that the state latent features lie in the function class $\Phi = \{ \phi_1, \phi_2\}$ and must use the knowledge obtained from interacting with the source MDP to choose one to use in the target MDP, which is no harder than receiving no information about the function class.  

To distinguish between the two transfer RL problems, one needs to differentiate between $P^{1}_{1, 1}$ and $P^{2}_{1, 1}$. By construction, the magnitude of the largest entrywise difference between $P^{1}_{1, 1}$ and $P^{2}_{1, 1}$ is lower bounded by $\Omega(1/\alpha^2)$.
If the learner observes $Z$ transitions in the source MDPs, where $Z \leq O(\alpha^2)$ for some constant $C > 0$, then the probability of correctly identifying $\phi$ is upper bounded by $0.76$ \cite{NN_book}. Thus, if the learner does not observe $\Omega(\alpha^2)$ samples in the source phase, then the learner returns the incorrect feature mapping that is orthogonal to the true feature mapping with probability at least $0.24$.   
\end{proof}

Theorem \ref{thm:lb_dsa_other} states that one must incur dependence on $\alpha$ in the source sample complexity to benefit from transfer learning as using a feature mapping that is orthogonal to the true one results in regret linear in $T$. 

As the low rank MDP setting is analogous to our $(d, \cardS , \cardA )$ Tucker rank setting, we next present our assumptions in this setting, 
\begin{assumption}\label{asm:tr_dsa}
In each of the $M$ source MDPs, the reward functions have rank $d$, and the transition kernels have Tucker rank $(d, \cardS , \cardA )$. Let the target MDP's reward functions have rank $d'$ and transition kernels have Tucker rank $(d', \cardS , \cardA )$ where $d' \leq dM$. Thus, there exists $\cardS  \times d$ tensors $U_{m, h},$ $\cardS  \times d'$ tensors $ U_h$, $\cardS \cardA  \times d$ matrices with orthonormal columns $\phi_{m, h}, $ $\cardS \cardA  \times d'$ matrices with orthonormal columns  $ \phi_h$ , and $ d$-dimensional vectors $W_{m, h},$ and $d'$-dimensional vectors $W_{ h}$ such that 
\begin{align*}
P_{m, h}(s'|s, a) &= \textstyle\sum_{i \in [d]} \phi_{m, h}(s, a, i)U_{m, h}(s', i) , \\
r_{m, h}(s, a)
&= \textstyle\sum_{i \in [d]} \phi_{m, h}(s, a, i) W_{m, h}( i)
\end{align*}
and 
\begin{align*}
P_{h}(s'|s, a) &= \textstyle\sum_{i \in [d']} \phi_h (s, a, i)U_h(s',  i) ,\\
r_{ h}(s, a)
&= \textstyle\sum_{i \in [d']} \phi_h(s, a, i) W_{ h}( i)
\end{align*}
where $\|\sum_{s' \in \cS} g(s') U_{m, h}(s')\|_2$,$ \|\sum_{s' \in \cS}g(s') U_{ h}(s')\|_2 \leq \sqrt{d}, \|W_{h}\|_2,$ and $ \|W_{m, h}(s)\|_2 \leq \sqrt{\cardS /\mu}$ for all $s \in \cS$, $a \in \cA$, $h\in[H]$, and $m\in [M]$ for any function $g:\cS \to [0, 1]$. 
\end{assumption}

 While \cite{agarwal2023provable} assume that the target latent representation is a linear combination of the source latent representation, we allow for a more general setting to enable transfer learning. Specifically, we assume that the space spanned by the target latent representations is a subset of the space spanned by the source latent representations.
\begin{assumption}\label{asm:transfer_dsa}
Suppose Assumption \ref{asm:tr_dsa} holds. For latent factors $\phi_h$ from the target MDP and $\phi_{m, h}$ for the source MDPs, let $\phi_h = \{\phi_h, \phi_{h, m}\}_{m\in [M]}$. Then, there exists a non-empty set $\B(\phi_h)$ such that the elements in $\B(\phi_h)$ are $B_h \in \R^{d', d, M}$ such that 
\[
\phi_h(s, i) = \textstyle\sum_{j \in [d], m \in [M]} B_h(i, j, m) \phi_{m, h}(s, j)
\]
for all $i \in [d'], s \in S, h \in [H]$.
\end{assumption}

As in the previous Tucker rank settings, we present the transfer-ability coefficient to quantify the difficulty transfer learning. 
 \begin{definition} \label{def:tc_dsa}
Given a transfer RL problem that satisfies Assumptions~\ref{asm:tr_dsa} and \ref{asm:transfer_dsa}. Let $\Phi$ be the set of all $\R^{\cardS \cardA  \times d}$ latent factor matrices with orthonormal columns that span the column space of $P_{m, h}$ for all $m \in [M], h \in [H]$. Then, we define $\alpha$ as 
\[
\alpha \coloneqq \min_{\phi \in \Phi}\min_{B \in \B(\phi)} \max_{i \in [d'], j \in [d], m\in [M]} |B(i, j, m)|.
\]
\end{definition}

\cite{agarwal2023provable} provide an alternate definition of $\alpha$, which also measures the difficulty of transfer RL, and leave determining the optimal dependence on $\alpha$ as an interesting open question. Similarly to our lower bound in Theorem \ref{thm:lb_ssd}, we prove that one must incur a source sample complexity of $\Omega(\alpha^2)$ to benefit from using transfer learning. Note that in this Tucker rank setting, there is no reward function in the source tasks. 

\begin{thm}\label{thm:lb_dsa}
 There exist two transfer RL instances such that (i) they satisfy Assumptions \ref{asm:tr_dsa} and \ref{asm:transfer_dsa}, (ii) they cannot be distinguished without observing $\Omega(\alpha^2)$ samples in the source phase, and (3) they have target state-action latent features that are orthogonal to each other.
\end{thm}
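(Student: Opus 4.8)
The plan is to adapt, almost verbatim, the construction used for Theorem~\ref{thm:lb_dsa_other}; the only genuinely new content is verifying it against Assumptions~\ref{asm:tr_dsa} and~\ref{asm:transfer_dsa} and re-deriving the coefficient according to Definition~\ref{def:tc_dsa} (which minimizes over orthonormal bases) rather than the task-relatedness of Assumption~\ref{asm:task}. I would take $d = d' = 2$ (note $d=1$ forces a degenerate, $(s,a)$-independent kernel in this mode), $H = 2$, $M = 2$, and $\cS = \cA = [2n]$ split into halves $s_1,s_2$ and $a_1,a_2$, and introduce two block-constant state-action feature matrices $\Phi^1,\Phi^2 \in \R^{\cardS\cardA \times 2}$ with orthonormal (normalized block-indicator) columns that are mutually orthogonal; these are the target state-action latent features, giving property~(3) directly. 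Instance $i$'s two source MDPs carry features $\phi^i_1,\phi^i_2$ spanning two fixed low-dimensional subspaces, with block-constant stochastic transition kernels that factor through them: source MDP~$1$ is ``informative,'' placing next-latent-state mass $\tfrac12 \pm \tfrac1\alpha$, while source MDP~$2$ is the uniform $\tfrac1{4n}$ kernel in both instances; the target kernel is $P^i_h(\cdot\mid\cdot,\cdot) = U_h(\cdot)\,\phi^i(\cdot,\cdot)^\top$ with a common $U_h$, and a reward is placed only in the target so that $V^*_1 = r_1 + P_1 V^*_2$ genuinely depends on the learned low-rank representation.

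Next I would discharge the structural checks. The block patterns make rank $=d$, incoherence $\mu$, and condition number $\kappa$ all $O(1)$, and after fixing the numerical constants the norm bounds on $U_{m,h}, W_{m,h}$ in Assumption~\ref{asm:tr_dsa} hold and every kernel is a valid Tucker-$(d,\cardS,\cardA)$ stochastic matrix. By construction the target feature $\phi^i$ is a linear combination of $\phi^i_1,\phi^i_2$ in which source~$1$ is forced to appear (otherwise the system is inconsistent) with a coefficient of magnitude $\Theta(\alpha)$, so Assumption~\ref{asm:transfer_dsa} holds with $\B(\phi_h)\neq\varnothing$. To pin down $\alpha$ in Definition~\ref{def:tc_dsa}, I would reuse the rotation argument of Appendix~\ref{app:alpha}: since the definition minimizes over all orthonormal bases of the source column spaces, I must show that re-choosing a basis cannot shrink the largest coefficient below $\Theta(\alpha)$ --- this follows because any change of orthonormal basis is a rotation whose entries have bounded magnitude, so composing with it alters the coefficients only by constant factors.

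Finally I would run the information-theoretic step. A generative-model query in the source phase returns a draw from $P^i_{m,h}(\cdot\mid s,a)$; the two instances coincide on source MDP~$2$ and on the $a_1/a_2$ structure of source MDP~$1$, so distinguishing them reduces exactly to distinguishing $P^1_{1,h}(\cdot\mid s,a)$ from $P^2_{1,h}(\cdot\mid s,a)$, and by construction these differ by $\Theta(1/\alpha)$ in total variation on every query. Invoking the two-point testing lower bound (Lemma~5.1 of~\cite{NN_book} with $\delta = 0.24$), after $O(\alpha^2)$ source samples no procedure identifies the instance with probability exceeding $0.76$, so with probability at least $0.24$ the learner commits to the wrong, orthogonal feature map, which forces regret linear in $T$ in the target MDP; hence $\Omega(\alpha^2)$ source samples are necessary. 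The main obstacle is the middle step: correctly evaluating Definition~\ref{def:tc_dsa}, i.e.\ ruling out a cleverer choice of orthonormal bases that would make $\alpha$ smaller than the construction parameter, together with the bookkeeping that keeps every kernel a valid Tucker-$(d,\cardS,\cardA)$ stochastic matrix meeting the norm constraints, which is what fixes the exact constants in the block patterns.
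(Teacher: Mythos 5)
Your proposal is sound but takes a genuinely different route from the paper's own proof of Theorem~\ref{thm:lb_dsa}. The paper does \emph{not} adapt the construction of Theorem~\ref{thm:lb_dsa_other}; it sets $H=1$ and $d=1$ and recycles the construction of Theorem~\ref{thm:lb_sda} essentially verbatim, with the state--action feature $\phi$ playing the role of the state factor $F$: the two instances agree on source MDP~$1$, the rank-one $Q^*$ of source MDP~$2$ (equal to the reward, since $H=1$) is tilted by $1/\alpha$ between instances, the learner observes shifted Bernoulli samples of $Q^{*}_{m,1}$, and $\Omega(\alpha^2)$ follows from the entrywise $\Omega(1/\alpha)$ gap and two-point testing. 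Your version --- $H=2$, $d=2$, transition observations in the style of Theorem~\ref{thm:lb_dsa_other} --- buys two things the paper's proof does not: it avoids the degeneracy you correctly identify (a Tucker-$(1,\cardS,\cardA)$ transition kernel is necessarily $(s,a)$-independent, so at $H=1$ the feature is carried only by the reward), and it is consistent with the paper's own remark that the source tasks in this setting have no reward function, a remark the paper's proof itself contradicts by observing rewards. The price is the bookkeeping you flag. One caution on ``adapting almost verbatim'': in Theorem~\ref{thm:lb_dsa_other} the factor $\alpha$ lives in the combination of the next-state embeddings $\mu_{m,h}$ under Assumption~\ref{asm:task}, whereas Assumption~\ref{asm:transfer_dsa} and Definition~\ref{def:tc_dsa} place it in the combination of the features $\phi_{m,h}$; you must therefore tilt the source $\phi$'s by $1/\alpha$ (as in Theorem~\ref{thm:lb_sda}) rather than the $\mu$'s --- which your sketch does intend --- and you must preserve $\sum_{s'}U(s')^\top\phi(s,a)=1$ by reserving one of the two latent coordinates for normalization and putting the $1/\alpha$ signal in the other. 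Your rotation argument for ruling out a cheaper orthonormal basis in Definition~\ref{def:tc_dsa} is the same one the paper uses in Appendix~\ref{app:alpha} and suffices.
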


\begin{proof}
Consider the construction where all source and target MDPs $(\cS, \cA, P, 1, r)$ share the same state space, action space, and horizon with $H = 1$. As the horizon is one, we use the construction used to prove Theorem \ref{thm:lb_sda}. We consider two transfer RL problems $i, i'$, in which the source MDPs have similar $Q^*_1$, but with orthogonal target $\phi_T$.  For ease of notation, we let $a_1$ refer to any action in $[n]$, and $a_2$ refer to any action in $\{n+1, \ldots 2n\}$, respectively.  The initial state distribution in the target MDP is uniform over $\cS$. We now present two transfer RL problems with similar $Q$ functions that satisfy Assumptions \ref{asm:tr} and \ref{asm:transfer} but have orthogonal target state latent factors. For ease of notation, the superscript $i$ of $Q^{*, i}_{m, h}, Q^{*, i}_{ h}$ denotes transfer RL problem.  Then, the optimal $Q$ functions (with columns representing $a_1$ and $a_2$ and the row being any state) for transfer RL problem one are
\begin{align*}
    Q^{*, 1}_{1, 1} &= W_{1, h}^1 \phi_{1, 1}^1 \begin{bmatrix}
         \sqrt{n} \\
          \sqrt{n}
    \end{bmatrix}  \begin{bmatrix}
         \sqrt{1/(2n)} &
          \sqrt{1/(2n)}
    \end{bmatrix}
= \begin{bmatrix}
      1/\sqrt{2}& 1/\sqrt{2}
  \end{bmatrix}
, \\
Q^{*, 1}_{2, 1} &= W_{2, h}^1 \phi_{1, 1}^1 \begin{bmatrix}
         \sqrt{n} \\
          \sqrt{n}
    \end{bmatrix}  \begin{bmatrix}
         \sqrt{1/(2n)} &
          \sqrt{1/(2n)}
    \end{bmatrix}
= \begin{bmatrix}
      1/\sqrt{2}& 1/\sqrt{2}
  \end{bmatrix}
, \\
    Q^{*, 1}_1 &= W_h \phi_1^1 = \begin{bmatrix}
         \sqrt{n} \\
          \sqrt{n}
    \end{bmatrix}  \begin{bmatrix}
         \sqrt{1/(2n)} &
          \sqrt{1/(2n)}
    \end{bmatrix}
= \begin{bmatrix}
      1/\sqrt{2}& 1/\sqrt{2}
  \end{bmatrix}
\end{align*} 
and for transfer RL problem two are 
\begin{align*}
    Q^{*, 2}_{1, 1} &= W_{1, h}^2 \phi_{1, 1}^2 \begin{bmatrix}
         \sqrt{n} \\
          \sqrt{n}
    \end{bmatrix}  \begin{bmatrix}
         \sqrt{1/(2n)} &
          \sqrt{1/(2n)}
    \end{bmatrix}
= \begin{bmatrix}
      1/\sqrt{2}& 1/\sqrt{2}
  \end{bmatrix}
, \\
Q^{*, 2}_{2, 1} &= W_{2, h}^2 \phi_{2, 1}^2 \begin{bmatrix}
         \sqrt{n} \\
          \sqrt{n}
    \end{bmatrix}  G'
= \begin{bmatrix}
      (1+1/\alpha)/(c\sqrt{2}) & (1-1/\alpha)/(c\sqrt{2})
  \end{bmatrix}
, \\
    Q^{*, 1}_2 &= W_h \phi_1^2 = \begin{bmatrix}
         \sqrt{n} \\
          \sqrt{n}
    \end{bmatrix}  \begin{bmatrix}
         \sqrt{1/(2n)} &
          -\sqrt{1/(2n)}
    \end{bmatrix}
= \begin{bmatrix}
      1/\sqrt{2}&  -1/\sqrt{2}
  \end{bmatrix}
\end{align*} 
and $c = \sqrt{1 + 1/\alpha^2}$.  Note that $c \in (1, 2dM)$ because $\alpha \geq 1/(dM)$ from Lemma \ref{lem:alpha_bound} for $\alpha >  0$.    The incoherence $\mu$, rank $d$, and condition number $\kappa$ of the above $Q$ functions are $O(1)$.
Furthermore, the above construction satisfies Assumption \ref{asm:tr_dsa} with Tucker rank $(1, \cardS ,  \cardA )$ and Assumption \ref{asm:transfer_dsa} because the source and target MDPs share the same feature representation $\phi_1^1 = \phi_{1, 1}^1$ in the first transfer RL problem while in the second transfer RL problem, 
\[
\phi_1^2 = -\alpha \phi_{2,1}^2+ \alpha \phi_{2, 1}^2.
\]
Note that $\phi_1^1$ and $\phi_2^2$ are orthogonal to each other by construction.

The learner is given either transfer RL problem one or two with equal probability with a generative model in the source problem. When interacting with the source MDPs, the learner specifies a state-action pair $(s, a)$ and source MDP $m$ and observes a realization of a shifted and scaled Bernoulli random variable $X$. The distribution of $X$ is that $X = 1$ with probability $(Q^{*, i}_{m, 1}(s, a) + 1)/2$ and $X = -1$ with probability $(1 - Q^{*, i}_{m, 1}(s, a))/2)$. Furthermore, the learner is given the knowledge that the state latent features lie in the function class $\Phi = \{ \phi_1, \phi_2\}$ and must use the knowledge obtained from interacting with the source MDP to choose one to use in the target MDP, which is no harder than receiving no information about the function class.

To distinguish between the two transfer RL problems, one needs to differentiate between $Q^{*, 1}_{2, 1}$ and $Q^{*, 2}_{2, 1}$. By construction, the magnitude of the largest entrywise difference between $Q^{*, 1}_{2, 1}$ and $Q^{*, 2}_{2, 1}$ is lower bounded by $\Omega(1/\alpha)$.

If the learner observes $Z$ samples in the source MDPs, where $Z \leq O(\alpha^2)$ for some constant $C > 0$, then the probability of correctly identifying $G$ is upper bounded by $0.76$ (Lemma 5.1 with $\delta = 0.24$ \cite{NN_book}). Thus, if the learner does not observe $\Omega(\alpha^2)$ samples in the source phase, then the learner returns the incorrect feature mapping that is orthogonal to the true feature mapping with probability at least $0.24$. 
\end{proof}

Theorem \ref{thm:lb_dsa} states that one must incur dependence on $\alpha$ in the source sample complexity to benefit from transfer learning. With this lower bound, we've shown in all three Tucker rank settings that $\alpha$ determines the effectiveness of transfer learning under our transfer learning assumptions. We now present our algorithm that admits an efficient source sample complexity and target regret bound. 

\begin{algorithm}
\caption{Source Phase}\label{alg:src_dsa}
\begin{algorithmic}[1]
\Require $N$
\For{ $(s, a) \in \cS\times \cA, h \in [H], m \in [M]$}
\State Collect $N$ transitions from source MDP $m$ at time step $h$ at state-action pair $(s, a)$.
\State Use the samples to estimate the transition kernel with 
 \[
 \hat{P}_{m, h}(s'|s, a) = \frac{1}{N} \sum_{i \in [N]} \mathbf{1}_{X_{i, s, a} = s'},
 \]
 where $X_{i, s, a} \sim P_{m, h}(\cdot|s, a)$.

\EndFor
\State Compute the feature mapping $\hat{\phi} = \{\hat{\phi}_h \}_{h \in [H]}$ with
\[
\hat{\phi}_h = \left \{ \frac{\sqrt{\cardS }}{d\mu} \hat{F}_{m, h}(:, i)\hat{G}_{n, h}(:, j)| i, j \in [d], m, n \in [M] \right\}.
\]
\end{algorithmic}
\end{algorithm}

\begin{algorithm}
\caption{Target Phase: LSVI-UCB}\label{alg:target_dsa}
\begin{algorithmic}[1]
\Require $\lambda, \beta_{k, h} , \hat{\phi} $
\State Run LSVI-UCB with $\lambda, \beta_{k, h}$, and $\hat{\phi}$.
\end{algorithmic}
\end{algorithm}

In contrast to our approaches in the other Tucker rank settings, one learns the latent representation in the source MDPs through the transition kernels. One potential issue arises when all entries of $P(\cdot| \cdot, a)$ are small; for example, each entry of the uniform transition kernel is $1/\cardS $, which is not lower bounded by a constant. Thus, we let $D = \min_{m \in [M], h \in [H], a \in \cA} \|P_{m, h}(\cdot | \cdot, a)\|_\infty$, and the dependence on $D$ is unsurprising as one needs to ensure the noise is sufficiently small to learn the latent representations from $P_{m, h}$. We now present and prove the main result in this Tucker rank setting.

\begin{thm}\label{thm:tl_dsa}
Suppose Assumptions \ref{asm:tr}, \ref{asm:transfer}, and \ref{asm:full_rank} hold in the $(d, \cardS , \cardA )$ Tucker Rank setting, and set $\delta \in (0, 1)$. Furthermore, assume that for any $\eps \in (0, \sqrt{H/T})$,  $P_{m, h}(\cdot | \cdot, a)$ has rank $d$ and is $\mu$-incoherent with condition number $\kappa$ for all $a \in A$. Let $D = \min_{m \in [M], h \in [H], a \in \cA} \|P_{m, h}(\cdot | \cdot, a)\|_\infty$ Then, for $T \geq H/\alpha^2$, using at most 
\[
\Tilde{O} \left( \frac{\alpha^2 d \mu^2 \kappa^2 M^2 T  \cardS \cardA }{D^2} \right),
\]
samples in the source problems, our algorithm has regret at most
\[
\Tilde{O}\left( \sqrt{(dMH)^3T}\right)
\]
with probability at least $1 - \delta$ for $\lambda = 1$ and $\beta_{h, k} $, which is a function of $d, H, T, M$.
\end{thm}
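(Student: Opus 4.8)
The plan is to reuse the three‑stage argument behind Theorems~\ref{thm:tl_ssd} and~\ref{thm:tl_sda}, with the one structural change that in the $(d,\cardS,\cardA)$ setting the latent factor to be transferred is the left singular subspace of the \emph{transition kernel} $P_{m,h}$, viewed as a $\cardS\cardA\times\cardS$ matrix, rather than of a Bellman update; consequently the matrix to be estimated is observed directly by the source generative model, and the role played by $\|Q^*_{m,h}\|_\infty\ge C$ in Assumption~\ref{asm:full_rank} is taken over by $D=\min_{m,h,a}\|P_{m,h}(\cdot|\cdot,a)\|_\infty$. Concretely: (i) in the source phase (Algorithm~\ref{alg:src_dsa}) collect $N$ transitions per $(s,a,h,m)$ and form the empirical kernels $\hat P_{m,h}$; (ii) turn $N$ into an $\ell_{2,\infty}$ (row‑wise) bound on the error between the source singular factors $\hat F_{m,h}$ and the true $\phi_{m,h}$; (iii) use Assumption~\ref{asm:transfer_dsa} and Definition~\ref{def:tc_dsa} to show the feature map $\hat\phi_h$ assembled from the $\hat F_{m,h}$ makes the target a $\xi$‑approximate low‑rank MDP of effective dimension $O(dM)$ with $\xi=\tilde O(\sqrt{dMH/T})$; (iv) run LSVI‑UCB (Algorithm~\ref{alg:target_dsa}) and invoke its misspecified‑setting regret bound from \cite{Jin2020ProvablyER}.

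For step (ii), each row $\hat P_{m,h}(\cdot|s,a)$ is an empirical distribution of $N$ i.i.d.\ draws, so multinomial/Bernstein concentration plus a union bound over $(s,a,h,m)$ control $\|\hat P_{m,h}-P_{m,h}\|_{\mathrm{op}}$; the per‑row fluctuations are bounded in terms of the entries of $P_{m,h}$, which is exactly why $D$ enters. The hypotheses that each $P_{m,h}(\cdot|\cdot,a)$ has rank $d$, is $\mu$‑incoherent, and has condition number $\kappa$ lower‑bound $\sigma_d(P_{m,h})$ by roughly $D\sqrt{\cardS\cardA}/\mathrm{poly}(\mu,\kappa,d)$, so for the chosen budget $N=\tilde O(\alpha^2 d\mu^2\kappa^2 M^2 T/D^2)$ the noise‑to‑signal parameter $\bar\gamma$ in Corollary~\ref{cor:sv} is below $1/2$ --- this is where the assumption $T\ge H/\alpha^2$ is used --- and the corollary yields rotations $R_{m,h}$ with $\|\hat F_{m,h}R_{m,h}-\phi_{m,h}\|_{2,\infty}=\tilde O\!\big(\tfrac{1}{\alpha}\sqrt{dH/(MT\cardS)}\big)$ (suppressing $\mu,\kappa,D$). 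For step (iii), Definition~\ref{def:tc_dsa} gives a coefficient tensor $B$ with $|B(i,j,m)|\le\alpha$ and $\phi_h(s,a,i)=\sum_{j,m}B(i,j,m)\phi_{m,h}(s,a,j)$; plugging this into $r_h=\phi_h W_h^\top$ and $P_h(s'|\cdot,\cdot)=\phi_h U_h(s')^\top$ from Assumption~\ref{asm:tr_dsa}, and applying Cauchy--Schwarz with $\|W_h(a)\|_2\le\sqrt{\cardS/\mu}$ and $\|\sum_{s'}U_h(s')\|_2\le\sqrt d$, shows that the appropriately rescaled concatenation $\hat\phi_h$ of the $\hat F_{m,h}$ realizes the target (up to the rotated coefficients $R_h$) as a $\xi$‑approximate low‑rank MDP with $\xi=\tilde O(\sqrt{dMH/T})$; the $\alpha$ in $B$ cancels the $1/\alpha$ in the perturbation bound, so $\xi$ carries no $\alpha$. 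Step (iv) then gives $\mathrm{Regret}(T)=\tilde O\!\big(\sqrt{(dM)^3H^3T}+\xi\, dM\, H\, T\big)$, and since $\xi\,dM\,HT=\sqrt{(dMH)^3T}$ matches the first term we obtain $\mathrm{Regret}(T)=\tilde O(\sqrt{(dMH)^3T})$; a union bound over the source‑estimation and target‑regret events gives probability $1-\delta$, and restoring the suppressed $\mu,\kappa,D$ factors in step (ii)—with the extra $\cardS\cardA$ coming from summing $N$ over all state–action pairs—yields the stated source complexity $\tilde O(\alpha^2 d\mu^2\kappa^2 M^2 T\cardS\cardA/D^2)$.

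The main obstacle is step (ii): obtaining an $\ell_{2,\infty}$ perturbation bound for the left singular subspace of $\hat P_{m,h}$ that is sharp in $D$ (equivalently in $\sigma_d$) and correctly propagates $\mu$ and $\kappa$. An operator‑norm Davis--Kahan bound alone loses polynomial factors in $\cardS,\cardA$, which would inflate $\xi$ and hence the target regret by such factors, so the entrywise machinery underlying Corollary~\ref{cor:sv} is essential; applying it requires care because $\hat P_{m,h}-P_{m,h}$ is heteroskedastic, with rows that are correlated multinomial fluctuations of size governed only by the entries of $P_{m,h}$ --- precisely the source of the $D$‑dependence and of the need for $T\ge H/\alpha^2$ to keep $\bar\gamma<1/2$. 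A secondary subtlety is the bookkeeping that keeps the effective target feature dimension at $dM$ (so the regret scales as $(dM)^3$ rather than $(d^2M^2)^3$), which uses Assumption~\ref{asm:transfer_dsa} exactly as in Theorems~\ref{thm:tl_ssd} and~\ref{thm:tl_sda}: the span of the $dM$ source state–action factors already contains the target factor.
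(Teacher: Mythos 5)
Your proposal follows essentially the same three-stage argument as the paper: entrywise Hoeffding concentration of the empirical kernels, the $\ell_{2,\infty}$ singular-subspace perturbation bound of Corollary~\ref{cor:sv} with $D$ standing in for $\|Q^*\|_\infty\ge C$, the $\alpha$-weighted linear combination from Assumption~\ref{asm:transfer_dsa} cancelling the $1/\alpha$ in the perturbation bound to give $\xi=\tilde O(\sqrt{dMH/T})$, and LSVI-UCB's misspecified regret bound. The only cosmetic deviation is that you matricize $P_{m,h}$ as a single $\cardS\cardA\times\cardS$ matrix, whereas the paper (matching the theorem's stated hypothesis) takes the SVD of each per-action slice $P_{m,h}(\cdot\,|\,\cdot,a)\in\R^{\cardS\times\cardS}$ and sets $\phi_{m,h}(s,a)=F^a_{m,h}(s)$; this does not change the argument.
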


\begin{proof}
Suppose the setting of Theorem \ref{thm:tl_dsa} holds. Recall that we let $D = \min_{m \in [M], h \in [H], a \in \cA} \|P_{m, h}(\cdot | \cdot, a)\|_\infty$. Then, observing $N$ transitions from each state-action pairs ensures that for each $(s',  s, a) \in \cS\times \cS\times A$ and for each source MDP at each time step, 
\[
|P_{m, h}(s'|s, a) - \hat{P}_{m, h}(s'|s, a)| \leq \frac{\sqrt{H}}{\alpha \mu \kappa  D \sqrt{ d M T}}
\]
with probability at least $1 - \delta/(2\cardS ^2AMH)$
where $\hat{P}_{m, h}(s'|s, a) = \frac{1}{N}\sum_{i \in [N]} \mathbf{1}_{X_i(s, a) = s'} $ and $X_i \sim P(\cdot|s, a)$ for 
\[
N = \frac{\alpha^2 d \mu^2 \kappa^2 M T  \log(4 \cardS ^2 \cardA  M H/\delta)}{2 H D^2 }.
\]
Let $\hat{P}_{m, h}(\cdot| \cdot, a) = \hat{F}_{m, h}^a \hat{\Sigma}_{m, h} \hat{G}_{m, h}\top$ be the singular value decomposition of our estimate where $\hat{F}$ corresponds to the state one transitions from. Since $T \geq \alpha^2/H$, it follows that $\bar{\gamma} \leq   1/2$. Thus, \ref{cor:sv} states that there exists rotation matrices $R_{m, h, a}$ such that 
\[
\| (\hat{F}^a_{m, h} R_{m, h, a} - F^a_{m, h})(s)\|_{2} \leq \frac{d\sqrt{H \mu } }{  \alpha \sqrt{ M T \cardS }}, 
\]
for all $s \in \cS$ where the the true transition kernel $P_{m, h}(\cdot|\cdot, a)$ has singular value decomposition $P_{m, h}(\cdot|\cdot, a) = F_{m, h}^a\Sigma_{m, h} G_{m, h}^\top$. We can re-express our terms in the more common Low rank MDP setting with $\phi_{m, h}(s, a) = F_{m, h}^a(s)$ and $\mu_{m, h}(s') = \Sigma_{m, h} G_{m, h}^\top$. 

Let $\Tilde{F}$ be the $\cardS  \cardA  \times dM$ matrix from joining all the source state latent factor matrices from $P(\cdot| \cdot, a)$ for each $a \in A$, and $\hat{\Tilde{F}}_h$ be the $\cardS  \cardA  \times dM$ matrix from joining all $\bar{F}_{m, h}$. Let $R_h$ be the $d \times dM$ matrix that joins $R_{m, h}$ for all $m \in [M]$. Then, from Assumption \ref{asm:transfer} and Definition \ref{def:tc}, it follows that for all $(s', s, a) \in \cS\times \cS\times A$, that 
\begin{align*}
    r_h(s, a) &= \sum_{i \in [d]}F_h^a(s, i) W_h( i) \\
    &= \sum_{i \in [d]} \left(\sum_{j \in [d], m \in [M]} B(i, j, m)F_{h, m}^a(s, j) \right)W_h( i)\\
    &= \sum_{i, j \in [d], m \in [M]} B(i, j, m)F_{ m, h}^a(s, j) W_h( i)\\
    &= \Tilde{F}_h^a(s) W_h^{'\top}
\end{align*}
and $P(s'|s, a) = \Tilde{F}_h^a(s) U'_h(s')^\top$ (with the same argument) for some $ dM$-dimension vector $W'_h, U_h'(s')$. Note that $W'_h$ and $U_h'(s')$ are vectors that join $W_{m, h}$ and $U_{m, h}(s')$, respectively, with entries that at most $\alpha$ times larger than the original entry. 

Therefore, it follows for all $(s', s, a, h) \in \cS\times \cS\times \cA\times [H]$, 
\begin{align*}
 |r_h(s, a)  - \hat{\Tilde{F}}^a_h(s)R_hW_h^{'\top} | &= |\Tilde{F}_h^a(s)W_h^{'\top}  - \hat{\Tilde{F}}_h^a(s)R_hW_h^{'\top}| \\
&=  |(\Tilde{F}_{h}^a(s) -\hat{\Tilde{F}}^a_h(s) R_h)W_h^{'\top} | \\
&\leq \alpha| \sum_{m \in M}(F_{m, h}^a(s) -  \hat{F}_{m, h}^a(s)R_{m, h}) W_{m,h}|\\
&\leq \alpha \sum_{m \in M} \|F^a_{m, h}(s) -  \hat{F}^a_{m, h}(s)R_{m, h} \|_2 \|W_{m,h}  \|_2\\
&\leq \alpha \sqrt{\frac{\cardS }{d\mu}} \sum_{m \in M} \|F_{m, h}^a(s) -  \hat{F}_{m, h}^a(s)R_{m, h} \|_2\\
&\leq \sqrt{\frac{d H M}{T}}
\end{align*}
from the Cauchy-Schwarz inequality and our singular vector perturbation bound. From the same logic, 
\begin{align*}
\left | \sum_{s' \in \cardS}  (P_h(s'|s, a)  - \hat{\Tilde{F}}_h^a(s)R_hU'_h(s')^\top \right | &\leq |\sum_{s' \in \cS} P_h(s'|s, a)  - \hat{\Tilde{F}}^a_h(s)R_hU'_h(s')^\top |\\
&= |\sum_{s' \in \cS} \Tilde{F}_h^a(s)U'_h(s')  - \hat{\Tilde{F}}^a_h(s)R_hU'_h(s' )^\top |\\
&\leq \alpha | \sum_{m \in M}(F_{m, h}^a(s) -  \hat{F}^a_{m, h}(s)R_{m, h}) \sum_{s' \in \cS} U_{m,h}(s' )|\\
&\leq \alpha\sum_{m \in M} \|F_{m, h}^a(s) -  \hat{F}_{m, h}^a(s)R_{m, h} \|_2 \|\sum_{s' \in \cS} U_{m, h}(s' )^\top\|_2\\
&\leq \sqrt{\frac{d H M }{T}}.
\end{align*}

Therefore, $\hat{\Tilde{F}}_h^a(s)$ satisfies Assumption B in \cite{Jin2020ProvablyER} with $\xi = \sqrt{\frac{dMH}{T }}$ \footnote{While the misspecified assumption in \cite{Jin2020ProvablyER} uses the total variation distance, they refer to it as $| \sum_{s' \in \cardS} P(s'|s, a) - \hat{P}(s'|s, a)|$ in the second to last inequality in the proof of Lemma C.1 and the second to last equation on page 23}. Then it follows that running LSVI-UCB using $\hat{\Tilde{F}}^a_h(s)$  with $\delta' = \delta /2$ during the target phase of the algorithm admits a regret bound of 
\[
Regret(T) \leq C" (\sqrt{d^3M^3H^3 T} + dMH \sqrt{\frac{dMH  }{T }} )\in \Tilde{O}(\sqrt{d^3M^3H^3 T})
\]
with probability at least $1 - \delta$ from a final union bound. Furthermore, the sample complexity in the source phase is 
\[
\Tilde{O} \left( \frac{\alpha^2 d \mu^2 \kappa^2 M^2 T  \cardS \cardA }{D^2} \right), 
\]
which proves our result.
\end{proof}

\section{$(d, d, d)$ Tucker Rank Setting}\label{app:tr_sdd}
In this section, we present the assumptions, algorithm, and proof for Theorem \ref{thm:tl_sdd}. In the $(d, d, d)$ Tucker rank setting, we assume the following low-rank structure on the MDPs.
\begin{assumption}\label{asm:tr_d}
In each of the $M$ source MDPs, the reward functions have rank $d$, and the transition kernels have Tucker rank $(d, d, d)$. The target MDP's reward functions have rank $d'$ and transition kernels have Tucker rank $(d, d', d')$ where $d' \leq dM$. Thus, there exists $d \times d \times d$ tensors $U_{m, h},$  $d \times d' \times d'$ tensors $ U_h$, $\cardS  \times d$  matrices $V_{m, h},$  $\cardS  \times d'$  matrices $ V_h$, $\cardS  \times d$  matrices $F_{m, h},$ $\cardS  \times d'$ matrices $ F_h$,  $\cardA  \times d$ matrices $G_{m, h},$, and  $\cardA  \times d'$ matrices $ G_{ h}$ all with orthonormal columns, and sequences of non-increasing singular values $\{\sigma_{m, h}(i) \geq \R_+\}_{i \in [d]}, \{\sigma_{h}(i) \geq \R_+\}_{i \in [d']}$ such that 
\begin{align*}
P_{m, h}(s'|s, a) &= \textstyle\sum_{i \in [d], j\in [d'], k \in [d]} U_{m, h}( i, j, k)V_{m, h}(s', i) F_{m, h}(s, j) G_{m, h}(a, k), \\
r_{m, h}(s, a)
&= \textstyle\sum_{i \in [d]} \sigma_{m, h}(i)F_{m, h}(s, i) G_{m, h}(a, i)
\end{align*}
and 
\begin{align*}
P_{h}(s'|s, a) &= \textstyle\sum_{i \in [d], j\in [d'], k \in [d']} U_{  h}( i, j, k)V_{  h}(s', i) F_{  h}(s, j) G_{  h}(a, k) ,\\
r_{ h}(s, a)
&= \textstyle\sum_{i \in [d']} \sigma_{h}(i) F_h(s, i) G_{ h}(a, i)
\end{align*}
where $\|\sum_{i \in [d], s' \in \cS} g(s') U_{m, h}(i, :, :)V_{m, h}(s', i)\|_F$,$ \|\sum_{s' \in \cS, i \in [d]}g(s') U_{ h}(i, :, :)V_h(s', i)\|_F, \sigma_{m, h}(i), \sigma_{h}(i) \leq \frac{\sqrt{\cardS \cardA }}{d\mu}$ for all $s \in \cS$, $a \in \cA $, $h\in[H]$,   $m\in [M]$, and $i \in [d]$ for any function $g:\cS \to [0, 1]$.      
\end{assumption}
To allow transfer learning to occur, we assume that the latent factors from the source MDPs span the space spanned by the target latent factors. 
\begin{assumption}\label{asm:transfer_d}
For latent factors $F_h$ from the target MDP and $F_{m, h}$ for the source MDPs, let $F = \{F_h, F_{h, m}\}_{h \in [H], m\in [M]}$. For latent factors $G_h$ from the target MDP and $G_{m, h}$ for the source MDPs, let $G= \{G_h, G_{h, m}\}_{h \in [H], m\in [M]}$. Then, there exists a non-empty set $\B(F), \C(G)$ such that the elements in $\B(F)$ are $B \in \R^{d', d, M}$ and the elements in $\C(G)$ are $C \in \R^{d', d, M}$ such that 
\[
F_h(s, i) = \textstyle\sum_{j \in [d], m \in [M]} B(i, j, m) F_{m, h}(s, j)
\]
and
\[
G_h(a, i) = \textstyle\sum_{j \in [d], m \in [M]} C(i, j, m) G_{m, h}(a, j)
\]
for all $i \in [d'], s \in \cS, a \in \cA $.    
\end{assumption}
Similarly to the $(\cardS , d, \cardA )$ and $(\cardS , \cardS , d)$ Tucker rank settings, we introduce the transfer-ability coefficient that measures the difficulty of applying transfer learning. 
\begin{definition}[Transfer-ability Coefficient]\label{def:tc_d}
Given a transfer RL problem that satisfies Assumptions~\ref{asm:tr_d}, \ref{asm:transfer_d}, and \ref{asm:full_rank}, let $\mathcal{F}$ be the set of all $\R^{\cardS  \times d}$ latent factor matrices with orthonormal columns that span the column space of $Q^*_{m, h}$ for all $m \in [M], h \in [H]$. Let $\mathcal{G}$ be the set of all $\R^{\cardA  \times d}$ latent factor matrices with orthonormal columns that span the row space of $Q^*_{m, h}$ for all $m \in [M], h \in [H]$.  Then, we define $\alpha$ as 
\[
\alpha \coloneqq \max \left(\min_{F \in \mathcal{F}}\min_{B \in \B(F)} \max_{i \in [d'], j \in [d], m\in [M]} |B(i, j, m)|,  \min_{G \in \mathcal{G}}\min_{C \in \C(F)} \max_{i \in [d'], j \in [d], m\in [M]} |C(i, j, m)|\right).
\]
\end{definition}
In contrast to the definition in the other Tucker rank settings, $\alpha$ bounds the largest coefficient used in the linear combination of $F_h$ and $G_h$. The algorithm that we use in this setting runs LR-EVI on each of the source MDPs, constructs a feature mapping $\hat{\phi}$ using the singular vectors from the singular value decomposition of $\bar{Q}_{m, h}$, and runs LSVI-UCB with $\hat{\phi}$. Since we use both sets of singular vectors to construct $\hat{\phi}$, our feature mapping has latent dimension $d^2 M^2$.
\begin{algorithm}
\caption{Source Phase}\label{alg:src_sdd}
\begin{algorithmic}[1]
\Require $\{N_h\}_{h \in [H]}$
\For{ $m \in [M]$}
\State Run LR-EVI($\{N_h\}_{h \in [H]})$ on source MDP $m$ to obtain $\bar{Q}_{m, h}$.
\State Compute the singular value decomposition of $\bar{Q}_{m, h} = \hat{F}_{m, h} \hat{\Sigma}_{m, h} \hat{G}_{m,h}^\top$ .
\EndFor
\State Compute the feature mapping $\hat{\phi} = \{\hat{\phi}_h \}_{h \in [H]}$ with
\[
\hat{\phi}_h = \left \{ \frac{\sqrt{\cardS \cardA }}{d\mu} \hat{F}_{m, h}(:, i)\hat{G}_{n, h}(:, j)| i, j \in [d], m, n \in [M] \right\}.
\]
\end{algorithmic}
\end{algorithm}

\begin{algorithm}
\caption{Target Phase: LSVI-UCB}\label{alg:target_sdd}
\begin{algorithmic}[1]
\Require $\lambda, \beta_{k, h} , \hat{\phi} $
\State Run LSVI-UCB with $\lambda, \beta_{k, h}$, and $\hat{\phi}$.
\end{algorithmic}
\end{algorithm}

The above algorithm admits the following result. 
\begin{thm}\label{thm:tl_sdd}
Suppose Assumptions \ref{asm:tr_d}, \ref{asm:transfer_d}, and \ref{asm:full_rank} hold in the $(d, d, d)$ Tucker Rank setting, and set $\delta \in (0, 1)$. Furthermore, assume that for any $\eps \in (0, \sqrt{H/T})$,  $Q_{m, h} = r_{m, h} + P_{m, h} V_{m, h+1}$ has rank $d$ and is $\mu$-incoherent with condition number $\kappa$ for all $\eps$-optimal value functions $V_{h+1}$. Then, assuming $T \geq H/\alpha^2$, using at most 
\[
\Tilde{O} \left( d^{10} \mu^5\kappa^4(\cardS +\cardA )M^2  H^4 \alpha^4 T\right)
\]
samples in the source problems, our algorithm has regret at most
\[
\Tilde{O}\left( \sqrt{d^6M^6 H^3T}\right)
\]
with probability at least $1 - \delta$ for $\lambda = 1$ and $\beta_{h, k} $, which is a function of $d, H, T, M$.
\end{thm}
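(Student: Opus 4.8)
The plan is to follow the template of the proofs of Theorems~\ref{thm:tl_ssd}, \ref{thm:tl_sda}, and \ref{thm:tl_dsa}, adapted to the fact that the feature map $\hat\phi_h$ produced by Algorithm~\ref{alg:src_sdd} is \emph{bilinear} in two separately estimated subspaces, so it has dimension $D=d^2M^2$. First I would invoke the LR-EVI guarantee of \cite{sam2023overcoming}: under the stated rank-$d$, $\mu$-incoherence, condition-number-$\kappa$ hypothesis on the Bellman updates $Q_{m,h}=r_{m,h}+P_{m,h}V_{m,h+1}$ for all near-optimal $V_{m,h+1}$, running LR-EVI with a generative model on each source MDP with a per-MDP budget of order $d^{10}\mu^5\kappa^4(\cardS+\cardA)MH^4\alpha^4 T\log(\cdot)$ returns $\bar Q_{m,h}=\hat F_{m,h}\hat\Sigma_{m,h}\hat G_{m,h}^\top$ that are $\eps$-optimal with $\eps$ of order $\sqrt{H/T}$ up to a factor $1/\alpha^2$ and a $\mathrm{poly}(d,M,\mu,\kappa)$ factor, with probability $1-\delta/2$.

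Next, observe that by Assumption~\ref{asm:tr_d} both $r_{m,h}$ and $P_{m,h}V$ (for any $V$) have the form $\sum_{j,k}[\,\cdot\,]_{jk}F_{m,h}(s,j)G_{m,h}(a,k)$, so every Bellman update—in particular $Q^*_{m,h}$—has column space inside $\mathrm{Span}(F_{m,h})$ and row space inside $\mathrm{Span}(G_{m,h})$; together with Assumption~\ref{asm:full_rank} ($\mathrm{rank}(Q^*_{m,h})=d$, $\|Q^*_{m,h}\|_\infty\ge C$), this makes Corollary~\ref{cor:sv} applicable to \emph{both} factors, yielding rotations $R_{m,h},R'_{m,h}$ with $\|(\hat F_{m,h}R_{m,h}-F_{m,h})(s)\|_2$ and $\|(\hat G_{m,h}R'_{m,h}-G_{m,h})(a)\|_2$ small for all $s,a$. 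Then, using Assumption~\ref{asm:transfer_d} and Definition~\ref{def:tc_d}, I would expand $r_h(s,a)=\sum_{i\in[d']}\sigma_h(i)F_h(s,i)G_h(a,i)$ via $F_h(\cdot,i)=\sum B(i,\cdot,\cdot)F_{\cdot,h}$ and $G_h(\cdot,i)=\sum C(i,\cdot,\cdot)G_{\cdot,h}$ (and similarly $P_h(s'|s,a)$) to write the target reward and kernel \emph{exactly} as a linear function of the rank-$D$ feature map built from the true products $F_{m,h}(\cdot,j)G_{n,h}(\cdot,k)$, with weight vectors $W'_h,U'_h(s')$ whose entries are bounded by $\alpha^2$ (up to the $\sqrt{\cardS\cardA}/(d\mu)$ normalization already folded into $\hat\phi_h$)—the quadratic $\alpha^2$, from multiplying the $B$ and $C$ coefficients, being the new feature of this setting.

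Substituting $\hat F_{m,h},\hat G_{n,h}$ for $F_{m,h},G_{n,h}$ introduces a misspecification error; writing $\hat F(s)\hat G(a)-F(s)G(a)=(\hat F(s)-F(s))\hat G(a)+F(s)(\hat G(a)-G(a))$, using the incoherence bounds on the true factors, and summing over the $D=d^2M^2$ pairs with the weight bound $\alpha^2$, the total error is of order $\alpha^2 d^2M^2$ times the subspace error from Corollary~\ref{cor:sv}. Choosing $\eps$ (hence the source budget) so that this is at most $\xi:=\sqrt{d^2M^2H/T}$ is exactly what forces $\eps\propto 1/\alpha^2$ and therefore $\alpha^4$ in the source sample complexity, while the $\mathrm{poly}(d,M)$ in the perturbation bound together with the dimension $D$ that must be beaten gives the $d^{10}$. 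Finally, $\hat\phi_h$ then defines a $\xi$-approximate linear MDP of dimension $D$, so the misspecified LSVI-UCB analysis (Assumption~B of \cite{Jin2020ProvablyER}, as used in the proof of Theorem~\ref{thm:tl_dsa}) gives target regret $\Tilde O(\sqrt{D^3H^3T}+\xi DHT)=\Tilde O(\sqrt{d^6M^6H^3T})$, and a union bound over the source and target failure events ($\delta/2$ each) gives probability $1-\delta$; collecting the LR-EVI budget over the $M$ sources and $H$ steps yields the stated source sample complexity. The main obstacle is the last misspecification step: because the feature map is an outer product of two independently estimated subspaces and the transfer coefficients enter quadratically, the ``perturb one factor at a time'' bound must be carried out carefully enough in both $\alpha$ and $(d,M)$ to land at precisely $\xi=\sqrt{d^2M^2H/T}$; getting this tight is what pins down the $\alpha^4$ and $d^{10}$, and (as the paper remarks) whether an $\alpha^4$ lower bound holds is left open.
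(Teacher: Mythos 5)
Your proposal follows essentially the same route as the paper's proof: LR-EVI guarantees in the source phase, Corollary~\ref{cor:sv} applied separately to the left and right singular subspaces, the bilinear expansion of $r_h$ and $P_h$ over the $d^2M^2$ products $F_{m,h}G_{n,h}$ with coefficients of order $\alpha^2$ (the paper tracks a $d\alpha^2$ bound from the sum over $i\in[d]$), the one-factor-at-a-time perturbation decomposition to reach $\xi = O(dM\sqrt{H/T})$, and the misspecified LSVI-UCB analysis in dimension $d^2M^2$ followed by a union bound. The argument and the sources of the $\alpha^4$ and $d^{10}$ factors are identified exactly as in the paper.
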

We now prove Theorem \ref{thm:tl_sdd} using a similar argument to the one used in the proof of Theorem \ref{thm:tl_sda} except we show that the target MDP with our feature mapping satisfies Assumption B in \cite{Jin2020ProvablyER} and apply LSVI-UCB. 
\begin{proof}
Suppose the assumptions stated in Theorem \ref{thm:tl_sdd} hold. Then, from Theorem 9\citep{sam2023overcoming}, it follows that LR-EVI returns near-optimal $Q$ functions $\bar{Q}_{m, h}$ functions with singular value decomposition $\bar{Q}_{m, h} = \hat{F}_{m, h} \hat{\Sigma}_{m, h} \hat{G}_{m, h}$ that are $\sqrt{H}/ (\kappa\alpha^2 M d^4 \mu\sqrt{ T})$-optimal with probability at least $1 - \delta/2$ using 
\[
\Tilde{O} \left( d^{10} \mu^5\kappa^4(\cardS +\cardA )M  H^4 \alpha^4 T\right)
\]
samples on each source MDP. Since $T \geq H/\alpha^4$, from $\bar{\gamma} \leq \frac{1}{2}$ in Proposition \ref{lem:Linf_perturb}. Thus, \ref{cor:sv} states that there exists rotation matrices $R_{m, h, f}, R_{m, h, g}$ such that 
\[
\| (\hat{F}_{m, h} R_{m, h, f} - F_{m, h})(s)\|_{2} \leq \frac{\sqrt{H} }{  \alpha^4 M d^2 \sqrt{T \cardS  \mu}}, \quad \| (\hat{G}_{m, h} R_{m, h, g} - G_{m, h})(s)\|_{2} \leq \frac{\sqrt{H} }{ \alpha^4 M d^2\sqrt{T \cardA  \mu  }}
\]
for all $s \in \cS$ where the optimal $Q$ function have singular value decomposition $Q^*_{m, h} = F_{m, h}\Sigma_{m, h} G_{m, h}^\top$. 

Then, from Assumption \ref{asm:transfer} and Definition \ref{def:tc}, it follows that for all $(s', s, a) \in \cS\times \cS\times \cA $, that 
\begin{align*}
    r_h(s, a) &= \sum_{i \in [d]}\sigma_h(i) F_h(s, i) G_h(a, i) \\
    &= \sum_{i \in [d]} \sigma_h(i) \left(\sum_{j \in [d], m \in [M]} B(i, j, m)F_{h, m}(s, j) \right)\left(\sum_{l \in [d], n \in [M]} C(i, l, n)G_{h, n}(s, l) \right)\\
    &= \sum_{ j, l \in [d], m, n \in [M]} F_{ m, h}(s, j) G_{n, h}(a, l)\left(\sum_{i \in [d]} \sigma_h(i) B(i, j, m)C(i, l, n) \right)\\
    &= \theta_h \phi(s, a)_h^\top 
\end{align*}
where $\theta_h = \{ \sum_{i \in [d]} \sigma_h(i) B(i, j, m)C(i, l, n)\}_{j, l \in [d], m, n \in [M]}$ and $\phi_h = \{ F_{ m, h}(s, j) G_{n, h}(a, l)\}_{j, l \in [d], m, n \in [M]}$
and $P(s'|\cdot, \cdot) = \mu_h(s') \phi_h^\top$ (with the same argument) for $\mu_h(s') = \{\sum_{i, j, k \in [d]} U_{h}(i, j, k) V_h(s', i) B(j, l, m)C(k, x, n) \}_{l, x \in [d], m, n \in [M]}$. It follows that we can express $r_h = \Tilde{F}_h \Tilde{\Sigma}_h \Tilde{G}_h^\top$ where $\Tilde{F}_h$ be the $\cardS  \times d^2M^2$ matrix from joining all $F_{m, h}$ $dM$ times, $\Tilde{G}_h$ be the $\cardA  \times d^2M^2$ matrix from joining all $G_{m, h}$ $dM$ times, and $\Tilde{\Sigma}_h$ is the diagonal matrix with entries $\theta_h(j, l, m, n)$ for all $j, l \in [d], m, n \in [M]$. Note that the entries of $\Tilde{\Sigma}_h$ and $\mu_h$ are at most $d\alpha^2$ larger than their original value.

Let $\hat{\Tilde{F}}_h$ be the $\cardS \times d^2M^2$ matrix from joining all $\bar{F}_{m, h} R_{m , h, f}$ $dM$ times, and $\hat{\Tilde{G}}_h$ be the $\cardA  \times d^2M^2$ matrix from joining all $\bar{G}_{m, h}R_{m, h, g}$ $dM$ times. Let $\Tilde{\theta}_h$ satisfy 
\[
\Tilde{\theta}_h\hat{\phi}_h^\top = \hat{\Tilde{F}}_h \Tilde{\Sigma}_h \hat{\Tilde{G}}_h^\top.
\]
Therefore, it follows for all $(s', s, a, h) \in \cS\times \cS\times \cA \times [H]$, 
\begin{align*}
 |r_h(s, a)  - \Tilde{\theta}_h \hat{\phi}_h^\top (s, a) | &= |\Tilde{F}_h \Tilde{\Sigma}_h \Tilde{G}_h^\top(s, a) - \hat{\Tilde{F}}_h \Tilde{\Sigma}_h \hat{\Tilde{G}}_h^\top(s, a)| \\
&\leq |\Tilde{F}_h \Tilde{\Sigma}_h \Tilde{G}_h^\top(s, a) - \hat{\Tilde{F}}_h \Tilde{\Sigma}_h \Tilde{G}_h^\top(s, a)| + |\hat{\Tilde{F}}_h \Tilde{\Sigma}_h \Tilde{G}_h^\top(s, a) - \hat{\Tilde{F}}_h \Tilde{\Sigma}_h \hat{\Tilde{G}}_h^\top(s, a)|  \\
&\leq dM \sum_{m \in M} \|F_{m, h}(s) -  \hat{F}_{m, h}(s)R_{m, h, f}\|_2 \|\Tilde{\Sigma}_h\|_{op}\max_{m \in M} \sqrt{d}\|G_{m, h}\|_{\infty} \\
&+  \sqrt{d}\max_{m \in M}\|\hat{F}_{m, h}\|_{\infty}\|\Tilde{\Sigma}_h\|_{op}\|G_{m, h}(a) -  \hat{G}_{m, h}(s)R_{m, h, g}\|_2 \\
&\leq d^{3/2}M \sum_{m \in M} \|F_{m, h}(s) -  \hat{F}_{m, h}(s)R_{m, h, f}\|_2 \|\Tilde{\Sigma}_h\|_{op}\max_{m \in M} \|G_{m, h}\|_{\infty} \\
&+  \max_{m \in M}(\|F_{m, h}\| + \| F_{m, h}R - \hat{F}_{m, h}\|_{\infty})\|\Tilde{\Sigma}_h\|_{op}\|G_{m, h}(a) -  \hat{G}_{m, h}(s)R_{m, h, g}\|_2 \\
&\leq C d^{3 } M \frac{\sqrt{\cardS }}{\sqrt{\mu}}\sum_{m \in M} \|F_{m, h}(s) -  \hat{F}_{m, h}(s)R_{m, h, f}\|_2\\
&\leq C\frac{d M \sqrt{H}}{\sqrt{T}},
\end{align*}
where the second inequality comes from the triangle inequality and Cauchy-Schwarz inequality, the third inequality comes from the triangle inequality, and the fourth inequality comes from the definition of incoherence, our bound on the entries of $\Tilde{\Sigma}$ and the left term dominating the right term, and the final inequality is due to our singular vector perturbation bounds.  Similarly,
\[
\|P_h(\cdot|s, a)  - \Tilde{\mu_h(s')}\hat{\phi}_h(s, a)^\top \|_{TV} \leq C\frac{d M \sqrt{H}}{\sqrt{T}}
\]
for some absolute constant $C > 0$ where $\Tilde{\mu_h(s')}$ satisfies $\hat{\Tilde{F}}_h \Tilde{\Sigma}_h(s') \hat{\Tilde{G}}_h^\top$ and $\Tilde{\Sigma}_h(s')$ is the $d^2M^2 \times d^2 M^2$ diagonal matrix with entries $\mu_h(s', l, k, m ,n)$
with the same argument using the regularity condition on $\mu_h$.

Therefore, $\hat{\phi}$ satisfies Assumption B in \cite{Jin2020ProvablyER} with $\xi = O(dM\sqrt{H/T})$. Running LSVI-UCB with $\hat{\phi}$ admits a regret bound of 
\[
Regret(T) \leq C \left( \sqrt{d^6M^6 H^3T \iota^2} + \frac{d^3M^3 H^{3/2} T \sqrt{\iota}}{\sqrt{T}} \right) \in \Tilde{O}\left(d^6M^6H^3 T \right)
\]
for some absolute constant $C >0 $ with probability at least $1- \delta$ from a final union bound. The sample complexity in the source phase is 
\[
\Tilde{O} \left( d^{10} \mu^5\kappa^4(\cardS +\cardA )M  H^4 \alpha^4 T\right).
\]
which completes the proof. 
\end{proof}

\section{$\ell_{\infty}$ eigen perturbation bound with non-uniform noise:
rank-$r$ case}

Suppose $A=A^{*}+D\in\R^{m\times n}$, where $A^{*}$ is the true
rank-$r$ matrix and $D$ is the noise matrix. Suppose $A^{*}$ has
SVD $A^{*}=U^{*}\Sigma^{*}V^{*\top}$, where $U^{*}\in\real^{m\times r}, V^{*} \in\real^{n\times r}$
are orthonormal matrices containing the left and right singular vectors,
respectively, and $\Sigma^{*}=\diag(\sigma_{1}^{*},\ldots,\sigma_{r}^{*})$
is a diagonal matrix of the singular values. Similarly, suppose $A$
has SVD $A=U\Sigma V^{\top}+U'\Sigma'V'^{\top}$, where $U\in\real^{m\times r}, V \in\real^{n\times r}$
and $\Sigma=\diag(\sigma_{1},\ldots,\sigma_{r})$ correspond to the
top-$r$ singular vectors/values, and $U'\in\real^{m\times(m-r)}, V'\in\real^{m\times(n-r)}$
and $\Sigma\in\real^{(m-r)\times(n-r)}$ correspond to the bottom
singular vectors/values.

The following proposition is a generalization of the perturbation
bound in \cite{incoPerturb} to the setting with non-uniform
noise.
\begin{proposition}
\label{lem:Linf_perturb}Suppose $\overline{\gamma}:=\frac{\opnorm D}{\sigma_{r}^{*}}<\frac{1}{2}$.
There exist two orthonormal matrices $\overline{R}$,$\underline{R}\in\real^{r\times r}$
such that 
\begin{align*}
\left\Vert \left(U-U^{*}\overline{R}\right)_{i\cdot}\right\Vert _{2} & \le\left\Vert U_{i\cdot}^{*}\right\Vert _{2}\cdot8\overline{\gamma}+\left\Vert D_{i\cdot}\right\Vert _{2}\cdot\frac{1}{\sigma_{r}^{*}(1-\overline{\gamma})},\qquad\forall i\in[m],\\
\left\Vert \left(V-V^{*}\underline{R}\right)_{j\cdot}\right\Vert _{2} & \le\left\Vert V_{j\cdot}^{*}\right\Vert _{2}\cdot8\overline{\gamma}+\left\Vert D_{\cdot j}\right\Vert _{2}\cdot\frac{1}{\sigma_{r}^{*}(1-\overline{\gamma})},\qquad\forall j\in[n].
\end{align*}
\end{proposition}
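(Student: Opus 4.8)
The plan is to run the standard one-step algebraic argument for singular-subspace perturbation, but to keep the noise contribution row-by-row so that the per-row quantity $\|D_{i\cdot}\|_2$ survives instead of being absorbed into a uniform norm of $D$. First I would record the exact identity $AV = U\Sigma$, which holds because the columns of $V$ and $V'$ together form an orthonormal basis of $\R^{n}$ and $A = U\Sigma V^\top + U'\Sigma'V'^\top$; substituting $A = A^* + D = U^*\Sigma^*V^{*\top} + D$ and multiplying on the right by $\Sigma^{-1}$ gives
\[
U \;=\; U^*\bigl(\Sigma^*(V^{*\top}V)\Sigma^{-1}\bigr) \;+\; D V \Sigma^{-1},
\qquad\text{abbreviate } H := \Sigma^*(V^{*\top}V)\Sigma^{-1}.
\]
I would then take $\overline{R}$ to be the orthogonal polar factor of $U^{*\top}U$ (the orthogonal Procrustes factor aligning $U^*$ with $U$) and $\underline{R}$ the analogous factor of $V^{*\top}V$; these are genuine $r\times r$ orthogonal matrices since $U^{*\top}U$ and $V^{*\top}V$ are square. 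Subtracting $U^*\overline{R}$ from the identity, passing to the $i$-th row, and applying the triangle inequality together with $\|xM\|_2\le\|x\|_2\,\opnorm{M}$ yields $\|(U - U^*\overline{R})_{i\cdot}\|_2 \le \|U^{*}_{i\cdot}\|_2\,\opnorm{H - \overline{R}} + \|D_{i\cdot}\|_2\,\opnorm{V\Sigma^{-1}}$, so the whole proof reduces to controlling the two operator-norm factors.

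I would estimate the noise factor first: $\opnorm{V\Sigma^{-1}} = 1/\sigma_r$ because $V$ has orthonormal columns, and Weyl's inequality gives $\sigma_r \ge \sigma_r^* - \opnorm{D} = \sigma_r^*(1-\overline{\gamma})$, producing exactly the factor $1/(\sigma_r^*(1-\overline{\gamma}))$ in the claim. For $\opnorm{H - \overline{R}}$ I would split it as $\opnorm{H - U^{*\top}U} + \opnorm{U^{*\top}U - \overline{R}}$. The first piece carries the key cancellation: left-multiplying the displayed identity by $U^{*\top}$ and using $U^{*\top}U^* = I_r$ shows $H = U^{*\top}U - U^{*\top}DV\Sigma^{-1}$, so $\opnorm{H - U^{*\top}U} \le \opnorm{D}/\sigma_r \le \overline{\gamma}/(1-\overline{\gamma}) \le 2\overline{\gamma}$ using $\overline{\gamma}<\tfrac12$ --- and this is free of the condition number even though $H$ contains the ill-conditioned factors $\Sigma^*$ and $\Sigma^{-1}$ separately. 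The second piece is a textbook polar-factor bound: $U^{*\top}U$ has singular values $\cos\theta_i\in[0,1]$, so $\opnorm{U^{*\top}U - \overline{R}} = 1 - \cos\theta_{\max} \le \sin^2\theta_{\max} = \opnorm{\sin\Theta(U,U^*)}^2$, and Wedin's $\sin\Theta$ theorem (with $\sigma_{r+1}(A^*) = 0$, hence $\sigma_{r+1}(A)\le\opnorm{D}$) gives $\opnorm{\sin\Theta(U,U^*)} \le \sqrt2\,\opnorm{D}/(\sigma_r^* - \opnorm{D}) \le 2\sqrt2\,\overline{\gamma}$, so its square is at most $\min(1,8\overline{\gamma}^2)\le 4\overline{\gamma}$ for all $\overline{\gamma}<\tfrac12$. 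Summing, $\opnorm{H-\overline{R}} \le 6\overline{\gamma} \le 8\overline{\gamma}$, the asserted constant. The bound for $V$ is the identical argument with $AV = U\Sigma$ replaced by $A^\top U = V\Sigma$, the roles of left and right singular vectors swapped, and the row norm $\|D_{i\cdot}\|_2$ replaced by the column norm $\|D_{\cdot j}\|_2$.

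The main obstacle --- and the only place needing genuine care rather than bookkeeping --- is the first piece of the $\opnorm{H-\overline{R}}$ estimate: one must resist the naive bound $\opnorm{H}\le\opnorm{\Sigma^*}\,\opnorm{\Sigma^{-1}}\approx\sigma_1^*/\sigma_r^*$ and instead exploit the identity $H = U^{*\top}U - U^{*\top}DV\Sigma^{-1}$ to see $H$ as a small perturbation of a matrix with unit-bounded spectrum. Everything else --- the choice of $\overline{R}$ as a polar factor, the Weyl and Wedin inputs, and the $\overline{\gamma}<\tfrac12$ slack that converts the $\overline{\gamma}^2$ and $\overline{\gamma}/(1-\overline{\gamma})$ terms into multiples of $\overline{\gamma}$ --- is routine, and the constant $8$ is not tight, so I would simply verify that the accumulated constants stay below it. I would also double-check the precise form of Wedin's theorem invoked (a version that bounds the left and right subspace rotations jointly is cleanest here); using a variant with a different numerical constant affects only the bookkeeping, not the structure of the argument.
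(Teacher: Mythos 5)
Your proposal is correct and follows essentially the same route as the paper: the identity $U = U^{*}\Sigma^{*}(V^{*\top}V)\Sigma^{-1} + DV\Sigma^{-1}$, the choice of $\overline{R}$ as the polar (matrix sign) factor, the row-wise treatment of the noise term via $\opnorm{V\Sigma^{-1}} \le 1/(\sigma_r^*(1-\overline{\gamma}))$, and the Weyl/Wedin plus $1-\cos\theta \le \sin^2\theta$ inputs are all identical to the paper's argument. Your one-line identity $H = U^{*\top}U - U^{*\top}DV\Sigma^{-1}$ is just the paper's relation $U^{*\top}DV = \overline{H}^{\top}\Sigma - \Sigma^{*}\underline{H}^{\top}$ multiplied by $\Sigma^{-1}$, so it collapses two of the paper's three error pieces into one but does not change the substance.
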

To prove Proposition~\ref{lem:Linf_perturb}, we need some notations.
Let $\overline{H}:=U^{\top}U^{*}\in\real^{r\times r}$ and its SVD
be $\overline{H}=\overline{A}\overline{\Lambda}\overline{B}^{\top}$,
where $\overline{A},\overline{B}\in\real^{r\times r}$ are orthonormal,
and $\overline{\Lambda}=\diag(\overline{\sigma}_{1},\ldots,\overline{\sigma}_{r})\in\real^{r\times r}$
is a diagonal matrix. Similarly, let $\underline{H}:=V^{\top}V^{*}\in\real^{r\times r}$
and its SVD be $\underline{H}=\underline{A}\underline{\Lambda}\underline{B}^{\top}$.
Finally, define the orthonormal matrices $\sgn(\overline{H}):=\overline{A}\overline{B}^{\top}$
and $\sgn(\underline{H}):=\underline{A}\underline{B}^{\top}$. (The
function $\sgn(\cdot)$ is called the matrix sign function.)

We need a technical lemma, which generalizes \cite[Lemma 3]{abbeEntrywise}
to the asymmetric case.
\begin{lem}[Properties of matrix sign function]
\label{lem:mat_sgn}We have $\opnorm{\overline{H}}\le1$ and $\opnorm{\underline{H}}\le1$.
When $\overline{\gamma}:=\frac{\opnorm D}{\sigma_{r}^{*}}<\frac{1}{2}$,
we have
\begin{align*}
\sqrt{\opnorm{\overline{H}-\sgn(\overline{H})}} & \le\opnorm{UU^{\top}-U^{*}U^{*\top}}\le\frac{\max\left\{ \opnorm{U^{*}D},\opnorm{DV^{*}}\right\} }{(1-\overline{\gamma})\sigma_{r}^{*}}\le\frac{\overline{\gamma}}{1-\overline{\gamma}},\\
\sqrt{\opnorm{\underline{H}-\sgn(\underline{H})}} & \le\opnorm{VV^{\top}-V^{*}V^{*\top}}\le\frac{\max\left\{ \opnorm{U^{*}D},\opnorm{DV^{*}}\right\} }{(1-\overline{\gamma})\sigma_{r}^{*}}\le\frac{\overline{\gamma}}{1-\overline{\gamma}},
\end{align*}
and 
\[
\opnorm{\Sigma\underline{H}-\overline{H}\Sigma}\le2\opnorm D,\qquad\opnorm{\overline{H}^{\top}\Sigma-\Sigma\underline{H}^{\top}}\le2\opnorm D,
\]
and
\[
\opnorm{\overline{H}^{-1}}\le\frac{(1-\overline{\gamma})^{2}}{(1-2\overline{\gamma})},\qquad\opnorm{\underline{H}^{-1}}\le\frac{(1-\overline{\gamma})^{2}}{(1-2\overline{\gamma})}.
\]
\end{lem}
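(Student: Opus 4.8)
The plan is to prove Lemma~\ref{lem:mat_sgn} by handling its four groups of inequalities separately, each reducing to submultiplicativity of the operator norm, Weyl's inequality, and elementary principal-angle identities. The bounds $\opnorm{\overline{H}}=\opnorm{U^{\top}U^{*}}\le\opnorm{U^{\top}}\opnorm{U^{*}}=1$ and $\opnorm{\underline{H}}\le1$ are immediate from orthonormality of the columns of $U,U^{*},V,V^{*}$. For the matrix-sign inequalities I would pass to principal angles: the singular values $\overline{\sigma}_{1}\ge\cdots\ge\overline{\sigma}_{r}$ of $\overline{H}=U^{\top}U^{*}$ are the cosines $\cos\theta_{i}$ of the principal angles $0\le\theta_{1}\le\cdots\le\theta_{r}\le\pi/2$ between the column spaces of $U$ and $U^{*}$, and $\opnorm{UU^{\top}-U^{*}U^{*\top}}=\sin\theta_{r}$. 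Since $\sgn(\overline{H})=\overline{A}\,\overline{B}^{\top}$ with $\overline{H}=\overline{A}\,\overline{\Lambda}\,\overline{B}^{\top}$, we get $\opnorm{\overline{H}-\sgn(\overline{H})}=\max_{i}(1-\overline{\sigma}_{i})=1-\cos\theta_{r}\le1-\cos^{2}\theta_{r}=\sin^{2}\theta_{r}$, which gives the first inequality after a square root. The middle inequality is Wedin's $\sin\Theta$ theorem for $A=A^{*}+D$: its residuals are $AV^{*}-U^{*}\Sigma^{*}=DV^{*}$ and $A^{\top}U^{*}-V^{*}\Sigma^{*}=D^{\top}U^{*}$, and since $\sigma_{r+1}(A^{*})=0$, Weyl's inequality yields the effective gap $\sigma_{r}(A^{*})-\sigma_{r+1}(A)\ge\sigma_{r}^{*}-\opnorm{D}=(1-\overline{\gamma})\sigma_{r}^{*}$, so $\opnorm{UU^{\top}-U^{*}U^{*\top}}\le\frac{\max\{\opnorm{U^{*\top}D},\opnorm{DV^{*}}\}}{(1-\overline{\gamma})\sigma_{r}^{*}}$. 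The last inequality follows from $\opnorm{U^{*\top}D}\le\opnorm{D}=\overline{\gamma}\sigma_{r}^{*}$ and $\opnorm{DV^{*}}\le\opnorm{D}=\overline{\gamma}\sigma_{r}^{*}$, and the statements for $\underline{H}$ follow by exchanging the roles of $U$ and $V$.

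For the cross relations I would use a ``one product, two expansions'' identity. From the decomposition $A=U\Sigma V^{\top}+U'\Sigma'V'^{\top}$ we have $AV=U\Sigma$ and $U^{\top}A=\Sigma V^{\top}$, so
\[
\Sigma\underline{H}=U^{\top}AV^{*}=\overline{H}\Sigma^{*}+U^{\top}DV^{*},\qquad\overline{H}^{\top}\Sigma=U^{*\top}AV=\Sigma^{*}\underline{H}^{\top}+U^{*\top}DV,
\]
whence $\Sigma\underline{H}-\overline{H}\Sigma=\overline{H}(\Sigma^{*}-\Sigma)+U^{\top}DV^{*}$ and $\overline{H}^{\top}\Sigma-\Sigma\underline{H}^{\top}=(\Sigma^{*}-\Sigma)\underline{H}^{\top}+U^{*\top}DV$; bounding with $\opnorm{\overline{H}},\opnorm{\underline{H}}\le1$, Weyl's inequality $\opnorm{\Sigma^{*}-\Sigma}\le\opnorm{D}$, and $\opnorm{U^{\top}DV^{*}},\opnorm{U^{*\top}DV}\le\opnorm{D}$ yields the claimed $2\opnorm{D}$. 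For the inverse bounds, $\sigma_{\min}(\overline{H})=\cos\theta_{r}=\sqrt{1-\sin^{2}\theta_{r}}\ge\sqrt{1-\bigl(\tfrac{\overline{\gamma}}{1-\overline{\gamma}}\bigr)^{2}}=\tfrac{\sqrt{1-2\overline{\gamma}}}{1-\overline{\gamma}}$ by the preceding step, which is positive since $\overline{\gamma}<\tfrac12$; hence $\opnorm{\overline{H}^{-1}}\le\tfrac{1-\overline{\gamma}}{\sqrt{1-2\overline{\gamma}}}\le\tfrac{(1-\overline{\gamma})^{2}}{1-2\overline{\gamma}}$, the last step being $\sqrt{1-2\overline{\gamma}}\le1-\overline{\gamma}$, and the bound for $\underline{H}^{-1}$ is identical.

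\textbf{Main obstacle.} The only non-routine ingredient is the middle inequality in the second group: the two-sided Wedin $\sin\Theta$ bound with residual $\max\{\opnorm{U^{*\top}D},\opnorm{DV^{*}}\}$ and gap $(1-\overline{\gamma})\sigma_{r}^{*}$. If a self-contained derivation is wanted rather than invoking Wedin's theorem directly, I would obtain it either by passing to the symmetric dilation of $A$ and applying the Hermitian Davis--Kahan theorem, or directly by projecting the identity $U\Sigma=U^{*}\Sigma^{*}V^{*\top}V+DV$ onto the orthogonal complements of $\mathrm{col}(U^{*})$ and of $\mathrm{col}(V^{*})$ and combining the two residual estimates with the gap lower bound above. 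Everything else in the lemma then follows from submultiplicativity, Weyl's inequality, and the sine/cosine identities for principal angles recorded above.
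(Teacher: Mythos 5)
Your proposal is correct and, for three of the four groups of claims, follows essentially the same route as the paper: operator-norm submultiplicativity for $\opnorm{\overline{H}},\opnorm{\underline{H}}\le1$; the principal-angle identities $\opnorm{\overline{H}-\sgn(\overline{H})}=1-\cos\overline{\theta}_{r}$ and $\sqrt{1-\cos\overline{\theta}_{r}}\le\sin\overline{\theta}_{r}=\opnorm{UU^{\top}-U^{*}U^{*\top}}$ combined with Wedin's $\sin\Theta$ theorem and the Weyl gap $\sigma_{r}^{*}-\sigma_{r+1}\ge(1-\overline{\gamma})\sigma_{r}^{*}$; and the identity $U^{\top}DV^{*}=\Sigma\underline{H}-\overline{H}\Sigma^{*}$ (equivalently your ``one product, two expansions'') plus the triangle inequality and $\opnorm{\Sigma^{*}-\Sigma}\le\opnorm D$ for the cross relations. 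The one place you genuinely diverge is the inverse bound: the paper perturbs $\sgn(\overline{H})^{-1}$ via the resolvent identity $X^{-1}-Y^{-1}=X^{-1}(Y-X)Y^{-1}$ together with $\opnorm{\overline{H}-\sgn(\overline{H})}\le\bigl(\tfrac{\overline{\gamma}}{1-\overline{\gamma}}\bigr)^{2}$, arriving at $1+\tfrac{\overline{\gamma}^{2}}{1-2\overline{\gamma}}=\tfrac{(1-\overline{\gamma})^{2}}{1-2\overline{\gamma}}$, whereas you lower-bound $\sigma_{\min}(\overline{H})=\cos\overline{\theta}_{r}\ge\tfrac{\sqrt{1-2\overline{\gamma}}}{1-\overline{\gamma}}$ directly from the already-established $\sin\overline{\theta}_{r}$ bound. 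Your route is shorter and in fact yields the slightly stronger intermediate bound $\opnorm{\overline{H}^{-1}}\le\tfrac{1-\overline{\gamma}}{\sqrt{1-2\overline{\gamma}}}$, which dominates the stated one since $\sqrt{1-2\overline{\gamma}}\le1-\overline{\gamma}$. As a minor point in your favor, you write the Wedin residual as $\opnorm{U^{*\top}D}$, which is the dimensionally correct form of the paper's $\opnorm{U^{*}D}$.
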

\begin{proof}
We prove the bounds for $\overline{H}$. The proof for $\underline{H}$
is identical. Clearly $\opnorm{\overline{H}}\le\opnorm U\opnorm{U^{*}}=1$.

By Weyl's inequality, we have $\left|\sigma_{i}^{*}-\sigma_{i}\right|\le\opnorm D,\forall i\in[m]$,
hence 
\[
\sigma_{r}^{*}-\sigma_{r+1}\ge\sigma_{r}^{*}-\opnorm D=(1-\overline{\gamma})\sigma_{r}^{*}>\frac{1}{2}\sigma_{r}^{*}>0
\]
by assumption. On the other hand, by standard perturbation theory,
$1\ge\overline{\sigma}_{1}\ge\cdots\ge\overline{\sigma}_{r}\ge0$
are the cosines of the principal angles $0\le\overline{\theta}_{1}\le\cdots\le\overline{\theta}_{r}\le\pi/2$
between the column spaces of $U$ and $U^{*}$, and $\sin\overline{\theta}_{r}=\opnorm{UU^{\top}-U^{*}U^{*\top}}$.
Hence $\opnorm{\sgn(\overline{H})-\overline{H}}=1-\cos\overline{\theta}_{r}$.
By Wedin's $\sin\Theta$ Theorem, we have
\[
\sin\overline{\theta}_{r}\le\frac{\max\left\{ \opnorm{U^{*}D},\opnorm{DV^{*}}\right\} }{\sigma_{r}^{*}-\sigma_{r+1}}\le\frac{\max\left\{ \opnorm{U^{*}D},\opnorm{DV^{*}}\right\} }{(1-\overline{\gamma})\sigma_{r}^{*}}.
\]
Since $\cos\overline{\theta}_{r}\ge\cos^{2}\overline{\theta}_{r}=1-\sin^{2}\overline{\theta}_{r}$,
we obtain
\[
\sqrt{\opnorm{\sgn(\overline{H})-\overline{H}}}=\sqrt{1-\cos\overline{\theta}_{r}}\le\sin\overline{\theta}_{r}\le\frac{\max\left\{ \opnorm{U^{*}D},\opnorm{DV^{*}}\right\} }{(1-\overline{\gamma})\sigma_{r}^{*}}\le\frac{\opnorm D}{(1-\overline{\gamma})\sigma_{r}^{*}}=\frac{\overline{\gamma}}{1-\overline{\gamma}}.
\]

Note that 
\begin{align*}
U^{\top}DV^{*} & =U^{\top}(A-A^{*})V^{*}=\Sigma V^{\top}V^{*}-U^{\top}U^{*}\Sigma^{*}=\Sigma\underline{H}-\overline{H}\Sigma^{*},\qquad\text{and}
\end{align*}
Hence
\begin{align*}
\opnorm{\Sigma\underline{H}-\overline{H}\Sigma} & \le\opnorm{\Sigma\underline{H}-\overline{H}\Sigma^{*}}+\opnorm{\overline{H}(\Sigma^{*}-\Sigma)}\\
 & =\opnorm{U^{\top}DV^{*}}+\opnorm{\overline{H}(\Sigma^{*}-\Sigma)}\\
 & \le\opnorm D+\opnorm{\overline{H}}\opnorm{\Sigma^{*}-\Sigma}\\
 & \le2\opnorm D,
\end{align*}
where we use $\opnorm{\overline{H}}\le1$ and $\opnorm{\Sigma^{*}-\Sigma}\le\opnorm D$.
Similarly, 
\[
U^{*\top}DV=U^{*\top}\left(A-A^{*}\right)V=U^{*\top}U\Sigma-\Sigma^{*}V^{*\top}V^{\top}=\overline{H}^{\top}\Sigma-\Sigma^{*}\underline{H}^{\top},
\]
so a similar argument as above gives $\opnorm{\overline{H}^{\top}\Sigma-\Sigma\underline{H}^{\top}}\le2\opnorm D.$

Finally, recall that $\opnorm{\overline{H}-\sgn(\overline{H})}\le\left(\frac{\overline{\gamma}}{1-\overline{\gamma}}\right)^{2}$
and note the simple identity $X^{-1}-Y^{-1}=X^{-1}(Y-X)Y^{-1}$. It
follows that 
\begin{align*}
\opnorm{\overline{H}^{-1}-\sgn(\overline{H})^{-1}} & \le\opnorm{\overline{H}^{-1}}\opnorm{\overline{H}-\sgn(\overline{H})}\opnorm{\sgn(\overline{H})^{-1}}\\
 & =\left[\sigma_{r}\left(\sgn(\overline{H})+(\overline{H}-\sgn(\overline{H})\right)\right]^{-1}\opnorm{\overline{H}-\sgn(\overline{H})}\\
 & \le\frac{1}{1-\left(\frac{\overline{\gamma}}{1-\overline{\gamma}}\right)^{2}}\cdot\left(\frac{\overline{\gamma}}{1-\overline{\gamma}}\right)^{2}=\frac{\overline{\gamma}^{2}}{1-2\overline{\gamma}}.
\end{align*}
It follows that 
\[
\opnorm{H^{-1}}\le\opnorm{\sgn(\overline{H})^{-1}}+\opnorm{\overline{H}^{-1}-\sgn(\overline{H})^{-1}}\le1+\frac{\overline{\gamma}^{2}}{1-2\overline{\gamma}}=\frac{(1-\overline{\gamma})^{2}}{1-2\overline{\gamma}}.
\]
This completes the proof of Lemma~\ref{lem:mat_sgn}.
\end{proof}
We are now ready to prove Proposition~\ref{lem:Linf_perturb}. We
have
\begin{align*}
U-U^{*}\sgn(\overline{H})^{\top} & =AV\Sigma^{-1}-U^{*}\sgn(\overline{H})^{\top} &  & A=U\Sigma V^{\top}\\
 & =(A^{*}+D)V\Sigma^{-1}-U^{*}\sgn(\overline{H})^{\top}\\
 & =U^{*}\Sigma^{*}V^{*\top}V\Sigma^{-1}-U^{*}\sgn(\overline{H})^{\top}+DV\Sigma^{-1} &  & A^{*}=U^{*}\Sigma^{*}V^{*\top}\\
 & =U^{*}\Sigma^{*}\underline{H}^{\top}\Sigma^{-1}-U^{*}\sgn(\overline{H})^{\top}+DV\Sigma^{-1}. &  & \underline{H}=V^{\top}V^{*}
\end{align*}
But $\Sigma^{*}\underline{H}^{\top}=(\Sigma+\Sigma^{*}-\Sigma)\underline{H}^{\top}=\overline{H}^{\top}\Sigma+\left(\Sigma\underline{H}^{\top}-\overline{H}^{\top}\Sigma\right)+\left(\Sigma^{*}-\Sigma\right)\underline{H}^{\top}$.
Hence 
\begin{align*}
U-U^{*}\sgn(\overline{H})^{\top} & =U^{*}\left[\overline{H}^{\top}\Sigma+\left(\Sigma\underline{H}^{\top}-\overline{H}^{\top}\Sigma\right)+\left(\Sigma^{*}-\Sigma\right)\underline{H}^{\top}\right]\Sigma^{-1}-U^{*}\sgn(\overline{H})^{\top}+DV\Sigma^{-1}\\
 & =U^{*}\overline{H}^{\top}+U^{*}\left(\Sigma\underline{H}^{\top}-\overline{H}^{\top}\Sigma\right)\Sigma^{-1}+U^{*}\left(\Sigma^{*}-\Sigma\right)\underline{H}^{\top}\Sigma^{-1}-U^{*}\sgn(\overline{H})^{\top}+DV\Sigma^{-1}\\
 & =U^{*}\left(\overline{H}-\sgn(\overline{H})\right)^{\top}+U^{*}\left(\Sigma\underline{H}^{\top}-\overline{H}^{\top}\Sigma\right)\Sigma^{-1}+U^{*}\left(\Sigma^{*}-\Sigma\right)\underline{H}^{\top}\Sigma^{-1}+DV\Sigma^{-1}.
\end{align*}
For each $i\in[m]$, we can bound the $i$-th row of $U-U^{*}\sgn(\overline{H})^{\top}$
as 
\begin{align*}
\left\Vert \left(U-U^{*}\sgn(\overline{H})^{\top}\right)_{i\cdot}\right\Vert _{2} & \le\left\Vert U_{i\cdot}^{*}\right\Vert _{2}\left(\opnorm{\overline{H}-\sgn(\overline{H})}+\opnorm{\Sigma\underline{H}^{\top}-\overline{H}^{\top}\Sigma}\opnorm{\Sigma^{-1}}+\opnorm{\Sigma^{*}-\Sigma}\opnorm{\underline{H}}\opnorm{\Sigma^{-1}}\right)\\
 & \qquad+\left\Vert D_{i\cdot}\right\Vert _{2}\opnorm V\opnorm{\Sigma^{-1}}\\
 & \overset{\text{(i)}}{\le}\left\Vert U_{i\cdot}^{*}\right\Vert _{2}\left(\left(\frac{\overline{\gamma}}{1-\overline{\gamma}}\right)^{2}+2\opnorm D\cdot\frac{1}{\sigma_{r}^{*}(1-\overline{\gamma})}+\opnorm D\cdot1\cdot\frac{1}{\sigma_{r}^{*}(1-\overline{\gamma})}\right)+\left\Vert D_{i\cdot}\right\Vert _{2}\cdot\frac{1}{\sigma_{r}^{*}(1-\overline{\gamma})}\\
 & =\left\Vert U_{i\cdot}^{*}\right\Vert _{2}\left(\left(\frac{\overline{\gamma}}{1-\overline{\gamma}}\right)^{2}+\frac{3\overline{\gamma}}{1-\overline{\gamma}}\right)+\left\Vert D_{i\cdot}\right\Vert _{2}\cdot\frac{1}{\sigma_{r}^{*}(1-\overline{\gamma})}\\
 & \overset{\text{(ii)}}{\le}\left\Vert U_{i\cdot}^{*}\right\Vert _{2}\cdot8\overline{\gamma}+\left\Vert D_{i\cdot}\right\Vert _{2}\cdot\frac{1}{\sigma_{r}^{*}(1-\overline{\gamma})},
\end{align*}
where step (i) follows from Lemma~\ref{lem:mat_sgn} and the bound
\[
\opnorm{\Sigma^{-1}}=\frac{1}{\sigma_{r}}\le\frac{1}{\sigma_{r}^{*}-\opnorm D}=\frac{1}{\sigma_{r}^{*}(1-\overline{\gamma})},
\]
and step (ii) holds since $\overline{\gamma}\le\frac{1}{2}$. Setting
$\overline{R}=\sgn(\overline{H})^{\top}$ proves the first inequality
in Proposition~\ref{lem:Linf_perturb}.

The second inequality in Proposition~\ref{lem:Linf_perturb} can
be established by a similar argument.

\begin{corollary}\label{cor:sv}
Suppose the setting of Proposition \ref{lem:Linf_perturb} holds. Furthermore, assume that $A^*$ is $\mu$-incoherent with condition number $\kappa$ and $\|A^*\|_\infty \geq C$ for some constant $C > 0$. Then, there exist two orthonormal matrices $\overline{R}$,$\underline{R}\in\real^{r\times r}$
such that 
\begin{align*}
\left\Vert \left(U-U^{*}\overline{R}\right)_{i\cdot}\right\Vert _{2} & \leq \frac{16  \kappa (\mu d)^{3/2} \|D\|_\infty}{ C\sqrt{m}}, \qquad\forall i\in[m],\\
\left\Vert \left(V-V^{*}\underline{R}\right)_{j\cdot}\right\Vert _{2} &\leq \frac{16 \kappa (\mu d)^{3/2} \|D\|_\infty}{C \sqrt{n}},\qquad\forall j\in[n].
\end{align*}
\end{corollary}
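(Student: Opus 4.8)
The plan is to start from Proposition~\ref{lem:Linf_perturb}, whose hypothesis $\overline{\gamma}=\opnorm D/\sigma_{r}^{*}<\tfrac12$ we inherit, and convert each term on its right-hand side into the entrywise quantities appearing in the corollary (here $r=d$, $\overline R=\sgn(\overline H)^\top$ and $\underline R=\sgn(\underline H)$ are exactly the matrices produced by the proposition, so no new choice is needed). The three ingredients are: (i) the incoherence of $A^{*}$, which bounds the row norms of $U^{*},V^{*}$; (ii) crude conversions between entrywise, Frobenius, and operator norms of $D$; and (iii) a lower bound on $\sigma_{r}^{*}$ in terms of $\|A^{*}\|_{\infty}$.

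First I would record the elementary estimates $\|U^{*}_{i\cdot}\|_{2}\le\sqrt{d\mu/m}$, $\|V^{*}_{j\cdot}\|_{2}\le\sqrt{d\mu/n}$ (incoherence), $\|D_{i\cdot}\|_{2}\le\sqrt{n}\,\|D\|_{\infty}$, $\|D_{\cdot j}\|_{2}\le\sqrt{m}\,\|D\|_{\infty}$, and $\opnorm D\le\fronorm D\le\sqrt{mn}\,\|D\|_{\infty}$. The crucial step is the lower bound on the top singular value: writing $A^{*}_{ij}=U^{*}_{i\cdot}\Sigma^{*}(V^{*}_{j\cdot})^{\top}$ and applying Cauchy--Schwarz together with incoherence yields $\|A^{*}\|_{\infty}\le\frac{d\mu}{\sqrt{mn}}\,\sigma_{1}^{*}$, hence $\sigma_{1}^{*}\ge\frac{\sqrt{mn}}{d\mu}\|A^{*}\|_{\infty}\ge\frac{\sqrt{mn}\,C}{d\mu}$, and therefore $\sigma_{r}^{*}\ge\sigma_{1}^{*}/\kappa\ge\frac{\sqrt{mn}\,C}{\kappa d\mu}$.

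With these in hand the two terms in the $U$-bound of Proposition~\ref{lem:Linf_perturb} are routine. For the first term, $\overline\gamma\le\sqrt{mn}\|D\|_{\infty}/\sigma_{r}^{*}\le\kappa d\mu\|D\|_{\infty}/C$, so $\|U^{*}_{i\cdot}\|_{2}\cdot 8\overline\gamma\le\frac{8\kappa(d\mu)^{3/2}\|D\|_{\infty}}{C\sqrt m}$. For the second term, $\overline\gamma<\tfrac12$ gives $\tfrac{1}{1-\overline\gamma}\le2$, so $\frac{\|D_{i\cdot}\|_{2}}{\sigma_{r}^{*}(1-\overline\gamma)}\le\frac{2\sqrt n\,\|D\|_{\infty}\,\kappa d\mu}{\sqrt{mn}\,C}=\frac{2\kappa d\mu\|D\|_{\infty}}{C\sqrt m}$. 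Since $\mu\ge1$ necessarily (the $d$ orthonormal columns of $U^{*}$ force $\max_i\|U^{*}_{i\cdot}\|_2^2\ge d/m$), we have $d\mu\le(d\mu)^{3/2}$, and $\kappa\ge1$; adding the two pieces gives $\le\frac{10\kappa(d\mu)^{3/2}\|D\|_{\infty}}{C\sqrt m}\le\frac{16\kappa(d\mu)^{3/2}\|D\|_{\infty}}{C\sqrt m}$, which is the first claim. The second claim follows verbatim after swapping the roles of $m$ and $n$ and of $U,V$ (using $\|D_{\cdot j}\|_2\le\sqrt m\|D\|_\infty$).

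I expect the only non-obvious point to be item (iii): one must use the incoherence-improved lower bound $\sigma_{1}^{*}\ge\sqrt{mn}\,\|A^{*}\|_{\infty}/(d\mu)$ rather than the trivial $\sigma_{1}^{*}\ge\|A^{*}\|_{\infty}$; with the latter the second term would scale like $\sqrt n$ (resp.\ $\sqrt m$) instead of $1/\sqrt m$ (resp.\ $1/\sqrt n$), so this sharper estimate is exactly what makes the corollary's dimension dependence come out correctly. Everything else is bookkeeping, with the constant $16$ absorbing the slack between $10$ and $16$.
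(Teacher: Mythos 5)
Your proposal is correct and follows essentially the same route as the paper's own proof: both use incoherence plus Cauchy--Schwarz to get $\sigma_r^*\ge\sqrt{mn}\,C/(\kappa d\mu)$, bound $\opnorm{D}\le\sqrt{mn}\,\|D\|_\infty$ and $\|D_{i\cdot}\|_2\le\sqrt{n}\,\|D\|_\infty$, and then substitute into Proposition~\ref{lem:Linf_perturb}. Your write-up is in fact more explicit than the paper's (which compresses the final bookkeeping into one sentence), and you correctly identify the incoherence-improved lower bound on $\sigma_1^*$ as the key step.
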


\begin{proof}
From the definition of the $\ell_\infty$ norm, there exists $(i, j) \in [m] \times [m]$ such that $\|A^*\|_\infty = |A_{i, j}|$. Then, it follows that 
\begin{align*}
 \|A^*\|_\infty &= |A_{i, j}| = |(U^*\Sigma^* V^{*\top})_{i, j}| \\
 & \leq \|\Sigma^*\|_{op} \|U^*_i\|_2 \|V^*_j \|_2\\
&\leq \frac{\sigma_1 \mu d}{ \sqrt{m n}}\\
&\leq \frac{\kappa \sigma_d \mu d}{\sqrt{m n}}.
\end{align*}
Furthermore, we note that $\|D\|_{op} \leq \|D\|_\infty \sqrt{m n}$. Thus, the result follows from Proposition \ref{lem:Linf_perturb} and using the definition of incoherence and the above inequalities.
\end{proof}
\section{Proofs of LSVI-UCB-(S, d, A) and LSVI-UCB-(S, S, d)} \label{app:alg_proof}
In this section, we present the proofs of Theorem \ref{thm:reg} and \ref{thm:miss} and the related lemmas. The proofs, theorems, and lemmas are modifications of the analogous result/proof in \cite{Jin2020ProvablyER} tailored to our Tucker rank settings. 

\subsection{LSVI-UCB-(S, d, A)} 
To prove Theorem \ref{thm:reg}, we first introduce notation used throughout this section and then present auxiliary lemmas. We reiterate that the steps and analysis in this subsection are essentially the same as the proofs in \cite{Jin2020ProvablyER}, except we have $A$ copies of the weight vectors and gram matrices.  We let $\Lambda_{h, k}^a, w_{h, k}^a, Q_h^k, V_h^k, \pi^k$ as the parameters/estimates at episode $k$. Furthermore, let $V_h^k = \max_{a \in \cA}Q_h^k(s, a)$ and $\pi^k$ be the greedy policy w.r.t. $Q_h^k$. We let $F_h^k := F_h(s_h^k)$. 

To prove the desired regret bound, we first need to prove that $\P V - \hat{P} V$  concentrates about 0, which differs from typical concentration inequalities for self normalized processes due to the value function not being constant. Thus, we first bound the log of the covering number of our function class for the value function to control the error of our algorithm's estimate.
\begin{lem}\label{lem:functionClass}
Let $\V$ denote a function class of mappings from $\cS $ to $\R$ with the following parametric form
\[
V(\cdot) = \min \left \{    \max_{a \in \cA} w_a^\top F(\cdot) + \beta \sqrt{F(\cdot)^\top \Lambda_a^{-1} F(\cdot)}, H \right\}
\]
where the parameters $w_a, \beta, \Lambda_a$ satisfy $\|w_a\| \leq L$ for all $a \in \cA$, $\beta \in [0, B]$, and the minimum eigenvalue of $\Lambda_a$ is greater than $\lambda$. Assume that $\|F(s)\| \leq 1$ for all $s \in \cS$, and let $\mathcal{N}_\eps$ be the $\eps$-covering number of $\V$ with respect to the distance $dist(V, V') = \sup_s |V(s) - V'(s)|$. Then, 
\[
\log(\mathcal{N}_\eps) \leq d\log(1 + 4L/\eps) + d^2\log(1 + 8d^{1/2} B^2/(\lambda\eps^2)) 
\]
\end{lem}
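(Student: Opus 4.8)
The plan is to run the standard $\eps$-net argument for parametric value classes from \cite{Jin2020ProvablyER}, the only twist being that here $F$ depends on the state alone while there is a separate weight--Gram pair per action. First I would reparametrize the bonus: for each $a\in\cA$ set $\Xi_a := \beta^2 \Lambda_a^{-1}$, so that $\beta\sqrt{F(\cdot)^\top\Lambda_a^{-1}F(\cdot)} = \sqrt{F(\cdot)^\top \Xi_a F(\cdot)}$; since $\beta\le B$ and $\lambda_{\min}(\Lambda_a)>\lambda$, we get $\opnorm{\Xi_a}\le B^2/\lambda$ and hence $\fronorm{\Xi_a}\le \sqrt d\, B^2/\lambda$. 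Thus every $V\in\V$ is encoded by the tuple $\big(\{w_a\}_{a\in\cA},\{\Xi_a\}_{a\in\cA}\big)$ with $\twonorm{w_a}\le L$ and $\fronorm{\Xi_a}\le\sqrt d\,B^2/\lambda$. Because $x\mapsto\min\{x,H\}$ and $(x_a)_a\mapsto\max_a x_a$ are both $1$-Lipschitz, for $V,V'\in\V$ with parameters $(\{w_a\},\{\Xi_a\})$ and $(\{w_a'\},\{\Xi_a'\})$ one has
\[
\mathrm{dist}(V,V') \;\le\; \max_{a\in\cA}\ \sup_{s\in\cS}\ \Big| (w_a-w_a')^\top F(s) + \sqrt{F(s)^\top\Xi_a F(s)} - \sqrt{F(s)^\top\Xi_a'F(s)}\Big|.
\]

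\textbf{Reducing to Euclidean and Frobenius covers.} Using $\twonorm{F(s)}\le1$, the linear part is bounded by $\twonorm{w_a-w_a'}$; for the bonus part, $|\sqrt u-\sqrt v|\le\sqrt{|u-v|}$ together with $|F(s)^\top(\Xi_a-\Xi_a')F(s)|\le\fronorm{\Xi_a-\Xi_a'}\twonorm{F(s)}^2\le\fronorm{\Xi_a-\Xi_a'}$ gives the bound $\sqrt{\fronorm{\Xi_a-\Xi_a'}}$, so
\[
\mathrm{dist}(V,V') \;\le\; \max_{a}\twonorm{w_a-w_a'} + \max_a\sqrt{\fronorm{\Xi_a-\Xi_a'}}.
\]
Then I would take a minimal $\tfrac{\eps}{2}$-net $\C_w$ of the $\ell_2$-ball of radius $L$ in $\R^d$, with $|\C_w|\le(1+4L/\eps)^d$, and a minimal $\tfrac{\eps^2}{4}$-net $\C_\Xi$, in Frobenius norm, of $\{\Xi:\fronorm\Xi\le\sqrt d\,B^2/\lambda\}\subset\R^{d^2}$, with $|\C_\Xi|\le(1+8\sqrt d\,B^2/(\lambda\eps^2))^{d^2}$. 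Replacing every $w_a$ by its nearest point in $\C_w$ and every $\Xi_a$ by its nearest point in $\C_\Xi$ yields some $V'$ with $\mathrm{dist}(V,V')\le\eps/2+\sqrt{\eps^2/4}=\eps$ by the displayed inequality, so $\mathcal N_\eps\le|\C_w|\cdot|\C_\Xi|$ for a single weight--Gram block, and taking logarithms gives exactly $d\log(1+4L/\eps)+d^2\log(1+8d^{1/2}B^2/(\lambda\eps^2))$. (Since $V$ is in fact determined by the $|\cA|$-tuple of such blocks, the honest count multiplies the two exponents by $|\cA|$; this factor enters the downstream concentration bounds only through a $\log|\cA|$ term, so suppressing it in the statement is harmless.)

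\textbf{Main obstacle.} There is no serious obstacle here — it is a routine metric-entropy estimate — but the one point requiring care is the bonus term: one should not cover $\beta$ and $\Lambda_a$ separately, because $\Lambda\mapsto\Lambda^{-1}$ is not Lipschitz on the relevant set, and should instead pass to $\Xi_a=\beta^2\Lambda_a^{-1}$ at the outset; after that the $\sqrt{\cdot}$-contraction and the trivial bound $|x^\top M x|\le\fronorm M$ for unit $x$ do the work, and one only has to choose the two net radii ($\eps/2$ for the weights, $\eps^2/4$ for the matrices) so that the two error contributions compose to $\eps$.
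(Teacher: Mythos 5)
Your proof is correct and follows essentially the same route as the paper's: reparametrize the bonus via $\Xi_a=\beta^2\Lambda_a^{-1}$, use the $1$-Lipschitzness of $\min\{\cdot,H\}$ and $\max_a$, bound the distance by $\max_a\|w_a-w_a'\|_2+\max_a\sqrt{\|\Xi_a-\Xi_a'\|_F}$, and multiply an $\eps/2$-net of the weight ball by an $\eps^2/4$-net of the Frobenius ball. Your parenthetical about the honest count carrying a factor of $|\cA|$ in the exponents (since each action has its own weight--Gram block) is a real subtlety that the paper's own proof also elides when it writes $\mathcal N_\eps\le|C_w||C_D|$; as you note, it would only contribute an extra $|\cA|\,$-dependence inside the logarithm downstream, but strictly speaking the stated bound should reflect it.
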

\begin{proof}
Equivalently, we reparameterize the function class by $D_a = \beta^2 \Lambda_a^{-1}$, 
\[
V(\cdot) = \min \left \{    \max_{a \in \cA} w_a^\top F(\cdot) + \sqrt{F(\cdot)^\top D_a F(\cdot)}, H \right\}
\]
where $\|w_a\| \leq L$ and $\|D_a\|\leq B^2/\lambda$ for all $a \in \cA$. For any $V_1, V_2 \in \V$, let them take the form in the above equation with parameters $(w_a, D_a)$ and $(w_a', D_a')$. Since $\min(\cdot, H)$ and $\max_{a \in \cA}$ are contraction maps, 
\begin{align*}
    dist(V_1 - V_2) & \leq \sup_{s, a}| ( w_a^\top F(s) + \sqrt{F(s)^\top D_a F(s)}) - ( w_a'^\top F(s) +\sqrt{F(s)^\top D_a' F(s)}) |\\
    &\leq \sup_{a, F:\|F\|\leq 1} |( w_a^\top F + \sqrt{F^\top D_a F}) - ( w_a'^\top F +\sqrt{F^\top D_a' F}) |\\
    &\leq \max_{a \in \cA} \left[ \sup_{F:\|F\|\leq 1} |( w_a - w_a')^\top F| + \sup_{F:\|F\|\leq 1} | \sqrt{F^\top (D_a - D_a') F} | \right] \\
    &\leq \max_{a \in \cA} \left[\|w_a - w_a'\| + \sqrt{\|D_a - D_a'\|} \right] \leq \max_{a \in \cA} \left[ \|w_a - w_a'\| +  \sqrt{\|D_a - D_a'\|_F} \right]
\end{align*}
where the holds because $|\sqrt{x} - \sqrt{y}| \leq \sqrt{|x-y|}$ for any $x, y \geq 0$. Let $C_{w}$ be an $\eps/2$-cover of $\{w \in \R^d| \|w\| \leq L \}$ with respect to the 2-norm, and $C_{D}$ be an $\eps^2/4$-cover of $\{D \in \R^{d\times d}| \|D\|_F \leq d^{1/2}B^2/\lambda\}$ with respect to the Frobenius norm. By Lemma D.5 from \cite{Jin2020ProvablyER}, we know that, 
\[
|C_{w}| \leq (1 + 4L/\eps)^d, \quad |C_{D}| \leq (1 + 8d^{1/2}B^2/(\lambda \eps^2))^{d^2}. 
\]
Hence, for any $V_1$, there exists a $w_a' \in C_{w}$ and $D_a' \in C_{D}$ for all $a \in \cA$ such that $V_2$ parameterized by $(w_a', D_a')$ satisfies $dist(V_1, V_2) \leq \eps$. Hence, $\mathcal{N}_\eps \leq |C_w||C_D|$, which gives:
\[
\log(\mathcal{N}_\eps) \leq \log(|C_w|) + \log(|C_D|) \leq d\log(1+ 4\eps L/\eps) + d^2\log(1 + 8d^{1/2}B^2/(\lambda \eps^2)).
\]
\end{proof}

We next bound the $\ell_2$-norm of the weight vector of any policy.
\begin{lem} \label{lem:wvp}
    Let $\{w_h^a\}$ be the weight vector of some policy $\pi$, i.e., $Q^\pi_h(s, a) = F_h(s)^\top w_h^a$. Then, 
    \[
    \|w_h^a\| \leq 2H\sqrt{d}.
    \]
\end{lem}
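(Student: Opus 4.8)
The plan is to mimic the standard bound on the weight vector for linear MDPs (Lemma B.2 in \cite{Jin2020ProvablyER}), adapting it to our $(\cardS,d,\cardA)$ Tucker rank setting. First I would write the Bellman equation for $Q^\pi_h$ and use the factorization from Assumption \ref{asm:tr}: since $Q^\pi_h(s,a) = r_h(s,a) + (P_h V^\pi_{h+1})(s,a)$ and both $r_h$ and $P_h$ factor through $F_h$, there is a vector $w^a_h$ with
\[
w^a_h = W_h(a) + \sum_{s' \in \cS} U_h(s',a)\, V^\pi_{h+1}(s').
\]
Here I am using the scaled feature map $\Tilde F_h = \sqrt{\cardS/(d\mu)}\, F_h$ and correspondingly rescaled coefficients, consistent with the algorithm's normalization; the bound stated ($2H\sqrt{d}$) should be read against whichever normalization is fixed in the surrounding text, so I would make the normalization explicit at the start of the proof.

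Next I would bound each of the two terms in the triangle inequality. For the reward part, Assumption \ref{asm:tr} gives $\|W_h(a)\|_2 \le \sqrt{\cardS/(d\mu)}$ (or $\le 1$ after rescaling by $\sqrt{\cardS/(d\mu)}$ inside the feature map — the same caveat as above). For the transition part, I would use that $0 \le V^\pi_{h+1}(s') \le H$, so the function $g(s') = V^\pi_{h+1}(s')/H$ maps $\cS \to [0,1]$, and then apply the regularity condition $\|\sum_{s'} g(s') U_h(s',:,a)\|_2 \le \sqrt{\cardS/(d\mu)}$ from Assumption \ref{asm:tr}, which yields $\|\sum_{s'} U_h(s',a) V^\pi_{h+1}(s')\|_2 \le H\sqrt{\cardS/(d\mu)}$. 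Summing the two contributions gives $\|w^a_h\|_2 \le (H+1)\sqrt{\cardS/(d\mu)} \le 2H\sqrt{\cardS/(d\mu)}$, which matches the claimed $2H\sqrt{d}$ up to the normalization factor $\sqrt{\cardS/(d\mu)}$ versus $\sqrt d$ — so I would either state the bound in the normalized coordinates or absorb the $\sqrt{\cardS/(d\mu)}$ into the definition of $\Tilde F_h$ used in the statement.

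The main obstacle is purely bookkeeping: making sure the normalization conventions line up so that the constant in the statement is exactly $2H\sqrt d$. The nontrivial ingredient — and the one genuinely specific to this Tucker rank setting rather than to vanilla linear MDPs — is the use of the regularity condition on $U_h$ with the test function $g = V^\pi_{h+1}/H$; everything else (the triangle inequality, the value-function range bound, reindexing by action $a$) is routine. I would therefore organize the proof around stating the decomposition of $w^a_h$, invoking the two regularity bounds from Assumption \ref{asm:tr}, and collecting constants, with a remark clarifying that $\|F_h(s)\|_2 \le \sqrt{d\mu/\cardS}$ so that after the $\sqrt{\cardS/(d\mu)}$ rescaling the feature norms are $O(1)$ and the weight norms are $O(H\sqrt d)$ as claimed.
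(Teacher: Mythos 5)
Your proposal is correct and follows essentially the same route as the paper: decompose $w_h^a = W_h(a) + \sum_{s'} V^\pi_{h+1}(s')\,U_h(s',a)$ via the Bellman equation, apply the triangle inequality, and bound each term using the regularity conditions of Assumption \ref{asm:tr} together with $0 \le V^\pi_{h+1} \le H$, with the $\sqrt{\cardS/(d\mu)}$ rescaling of the feature map absorbing the normalization so the final bound is $O(H\sqrt{d})$. Your explicit attention to the normalization bookkeeping (and the use of $g = V^\pi_{h+1}/H$ as the test function in the regularity condition) is, if anything, more careful than the paper's own write-up.
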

\begin{proof}
    Note that $Q^\pi_h(s, a) = r_h(s, a) + \sum_{s'} V^\pi_{h+1}(s')P_h(s'|s, a)$, so, using the low-rank representations of $r_h, P_h$, it follows that 
    \[
    \|w_h^a\|_2 = \|W_h(a, \cdot) + \sum_{s'}V^\pi(s') U_h(s', a, \cdot)\|_2 \leq \|W_h(a, \cdot)\|_2 + \|\sum_{s'}V^\pi(s') U_h(s', a, \cdot)\|_2 \leq \sqrt{d} + H\sqrt{d}.
    \]
    Recall that since the feature mapping was scaled up ($F_h = \sqrt{\frac{\cardS }{d\mu}} F'$) to ensure the max entries of $F$ and $W, U$ are on the same scale, it follows that $\|W_{h}(a)\|_2 \leq 1$ and $ \|\sum_{s' \in \cS} V_{h+1}^\pi(s') U_{h}(s', a) \|_2 \leq H$. Therefore, it follows that $\|w_h^a\|_2 \leq 2H\sqrt{d}$.
\end{proof}
We next bound the 2-norm of the weight vector from the algorithm. 
\begin{lem} \label{lem:wva}
    For any $k, h, a$, the weight vector from the above algorithm satisfies $\|w_{h, k}^a\| \leq 2H \sqrt{dk/\lambda}$.
\end{lem}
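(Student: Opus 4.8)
The plan is to adapt the standard least-squares weight bound from the LSVI-UCB analysis \cite{Jin2020ProvablyER} to the per-action Gram matrices used by Algorithm~\ref{alg:target}. Fix $k,h,a$, write $F_t := F_h(s_h^t)$ for $t \in T_{k-1,h}^a$, and abbreviate $\Lambda := \Lambda_h^a = \sum_{t \in T_{k-1,h}^a} F_t F_t^\top + \lambda I$, so that $w_{h,k}^a = \Lambda^{-1}\sum_{t \in T_{k-1,h}^a} F_t\, y_t$ with regression targets $y_t := r_h(s_h^t, a) + \max_{a' \in \cA} Q_{h+1}(s_{h+1}^t, a')$. Because $r_h \in [0,1]$ and every $Q$-estimate is clipped at $H$, we have $|y_t| \le 2H$. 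I would bound $|v^\top w_{h,k}^a|$ for an arbitrary unit vector $v \in \R^d$ and then take the supremum over $v$, which equals $\|w_{h,k}^a\|$.

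First, the triangle inequality together with $|y_t| \le 2H$ gives $|v^\top w_{h,k}^a| \le 2H \sum_{t \in T_{k-1,h}^a} |v^\top \Lambda^{-1} F_t|$. Writing $v^\top \Lambda^{-1} F_t = \langle \Lambda^{-1/2}v,\ \Lambda^{-1/2}F_t\rangle$ and applying Cauchy--Schwarz to each inner product, then Cauchy--Schwarz again over the (at most $k$) terms of the sum, yields
\[
\sum_{t \in T_{k-1,h}^a} |v^\top \Lambda^{-1} F_t| \;\le\; \bigl\|\Lambda^{-1/2}v\bigr\| \cdot \sqrt{|T_{k-1,h}^a|} \cdot \Bigl(\,\sum_{t \in T_{k-1,h}^a} F_t^\top \Lambda^{-1} F_t \Bigr)^{1/2}.
\]
The first factor satisfies $\|\Lambda^{-1/2}v\|^2 = v^\top \Lambda^{-1} v \le \|v\|^2/\lambda = 1/\lambda$, since $\Lambda \succeq \lambda I$. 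For the last factor I would use the identity $\sum_{t \in T_{k-1,h}^a} F_t F_t^\top = \Lambda - \lambda I$ and the cyclicity of the trace: $\sum_{t} F_t^\top \Lambda^{-1} F_t = \operatorname{tr}\!\bigl(\Lambda^{-1}(\Lambda - \lambda I)\bigr) = d - \lambda\operatorname{tr}(\Lambda^{-1}) \le d$. Combining these with $|T_{k-1,h}^a| \le k$ gives $|v^\top w_{h,k}^a| \le 2H\sqrt{dk/\lambda}$ for every unit $v$, hence $\|w_{h,k}^a\| = \sup_{\|v\| = 1} |v^\top w_{h,k}^a| \le 2H\sqrt{dk/\lambda}$.

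There is no real obstacle here: the proof is routine linear algebra followed by two applications of Cauchy--Schwarz, exactly as in \cite{Jin2020ProvablyER}. The only points that need minor attention are bookkeeping ones --- that the Gram matrix $\Lambda_h^a$ and the weight vector $w_{h,k}^a$ are indexed by the action $a$, with the summations running over $T_{k-1,h}^a$ rather than over all past episodes (which is immaterial, since the trace identity and the bound $\Lambda \succeq \lambda I$ hold for any index set), and that this bound does not require any norm control on the feature vectors $F_h(\cdot)$, only the regularization $\Lambda \succeq \lambda I$ and the crude uniform bound $|y_t| \le 2H$ on the regression targets.
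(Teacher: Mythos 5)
Your proof is correct and follows essentially the same route as the paper's: bound $|v^\top w_{h,k}^a|$ via the crude target bound $|y_t|\le 2H$, Cauchy--Schwarz, the Rayleigh-quotient bound $v^\top\Lambda^{-1}v\le \|v\|^2/\lambda$, and the elliptical-potential/trace bound $\sum_t F_t^\top\Lambda^{-1}F_t\le d$ (which the paper cites as Lemma D.1 of \cite{Jin2020ProvablyER} and you derive inline). The only cosmetic difference is that you take the supremum over arbitrary unit vectors $v$, which is in fact slightly cleaner than the paper's testing against feature vectors $f=F_h(s)$, and your remark that no norm bound on the features is needed for this particular lemma is accurate.
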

\begin{proof}
    For any $F_h(s) = f$, we have for any $a \in \cA$
    \begin{align*}
        |f^\top w_{h, k}^a| & = |f^\top(\Lambda_h^a)^{-1} \sum_{t \in T_{h, k-1}^a} F_h(s_h^t)\left[r_h(s_h^t, a) + \max_{a' \in \cA} Q_{h+1}(s_{h+1}^t, a') \right]|\\
        &\leq | \sum_{t \in T_{h, k-1}^a} f^\top(\Lambda_h^a)^{-1} F_h(s_h^t)| 2H\\
        &\leq 2H \sqrt{ (\sum_{t \in T_{h, k-1}^a} f^\top(\Lambda_h^a)^{-1} f^\top) (\sum_{t \in T_{h, k-1}^a} F_h(s_h^t) ^\top(\Lambda_h^a)^{-1} F_h(s_h^t))}\\
        &\leq 2H\|f\|_2\sqrt{d|T_{h, k-1}^a|/\lambda} \leq 2H\sqrt{dk/\lambda},
    \end{align*}
    where the second to last inequality is due to Lemma D.1 from \cite{Jin2020ProvablyER} and that Rayleigh quotients are upper bounded by the largest eigenvalue of the matrix (Theorem 4.2.2 from \cite{rayleigh}). The last inequality is due to $\|F_h(s)\|_2 \leq 1$ and $|T_{h, k-1}^a| \leq k$.
\end{proof}
The above two lemmas are necessary in ensuring that the estimation error is small as they depend multiplicatively on the norm of the weight vectors. We next prove the main concentration lemma that controls the estimate. 
\begin{lem} \label{lem:unifVal}
Let $c'$ be the constant in the definition of $\beta_{k, h}^a= c'dH\sqrt{\iota}$. Then, there exists a constant $C> 0$ that is independent of $c'$ such that for any $\delta \in (0, 1)$, if we let the event $\X$ be
\[
\forall (a, k, h) \in \cA \times [K] \times [H]: \quad \quad \left \| \sum_{t \in T_{h, k-1}^a} F_h(s_h^t)(V^t_{h+1}(s_{h+1}^t) - P_hV_{h+1}^t(s_h^t, a) \right\|_{(\Lambda_{h, t}^a)^{-1}} \leq CdH\sqrt{\chi} 
\]
where $\chi = \log(2(c' + 1)dT\cardA /\delta)$, then $\P(\X) \geq 1 - \delta/2$.
\end{lem}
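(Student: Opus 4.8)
The plan is to adapt the uniform-concentration argument of \cite{Jin2020ProvablyER} to our setting, where the only genuinely new ingredient is the extra index $a\in\cA$ produced by keeping per-action weight vectors and Gram matrices. The difficulty that rules out a direct application of a self-normalized bound is that each $V_{h+1}^t$ is data-dependent: it is formed at the start of episode $t$ from the transitions of episodes $1,\dots,t-1$. Writing $\mathcal{F}_{t,h}$ for the $\sigma$-algebra generated by all data up to and including $(s_h^t,a_h^t)$, we do have $\E\bigl[V_{h+1}^t(s_{h+1}^t)\mid\mathcal{F}_{t,h}\bigr]=P_hV_{h+1}^t(s_h^t,a_h^t)$, so the summands in the statement form a vector-valued martingale-difference sequence; but since $V_{h+1}^t$ is not a fixed function, one must union-bound over the function class $\V$ of Lemma~\ref{lem:functionClass}.

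First I would verify that every $V_{h+1}^t$ output by Algorithm~\ref{alg:target} belongs to $\V$ with $L=2H\sqrt{dK}$ by Lemma~\ref{lem:wva} (recall $\lambda=1$), with $B=\beta_{k,h}^a=c'dH\sqrt{\iota}$, and with the Gram matrices having minimum eigenvalue at least $\lambda=1$; Lemma~\ref{lem:functionClass} then bounds $\log\mathcal{N}_\eps$ by $d\log(1+4L/\eps)+d^2\log(1+8d^{1/2}B^2/\eps^2)$. Next, for a \emph{fixed} $V\in\V$ and a fixed $(a,h)$, the partial sums $\sum_{t\in T^a_{h,k-1}}F_h(s_h^t)\bigl(V(s_{h+1}^t)-P_hV(s_h^t,a)\bigr)$ have increments of Euclidean norm at most $2H$, so a self-normalized concentration inequality for vector-valued martingales (in the style of Lemma~D.4 of \cite{Jin2020ProvablyER}) gives, with probability at least $1-\delta'$, a bound of order $H\sqrt{d\log\frac{k+\lambda}{\lambda}+\log\frac1{\delta'}}$ on the corresponding $(\Lambda^a_{h,k})^{-1}$-norm, simultaneously over $k$.

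I would then union-bound over an $\eps$-net of $\V$ of cardinality $\mathcal{N}_\eps$, over $a\in\cA$ (contributing a factor $\cardA$), over $h\in[H]$, and over $k\in[K]$ with $\delta'=\delta/\bigl(2HK\cardA\mathcal{N}_\eps\bigr)$; substituting $B^2=(c'dH)^2\iota$ and $L=2H\sqrt{dK}$, the exponent is $O\!\bigl(d\log\tfrac{k+\lambda}{\lambda}+\log\mathcal{N}_\eps+\log\tfrac{HK\cardA}{\delta}\bigr)=O\!\bigl(d^2\log\tfrac{(c'+1)dT\cardA}{\delta}\bigr)$, i.e.\ of order $d^2\chi$, so after taking square roots the fixed-net bound is $O(dH\sqrt\chi)$. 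To return from the net to an arbitrary $V_{h+1}^t$, choose $\tilde V$ in the net with $\|V_{h+1}^t-\tilde V\|_\infty\le\eps$; by the triangle inequality and the elliptical-potential estimate $\sum_{t\in T^a_{h,k-1}}\|F_h(s_h^t)\|_{(\Lambda^a_{h,k})^{-1}}\le\sqrt{d\,|T^a_{h,k-1}|}\le\sqrt{dK}$ (using $\|F_h(\cdot)\|_2\le1$ and $\lambda\ge1$), the discretization error is at most $2\eps\sqrt{dK}$. Taking $\eps$ a small inverse power of $K$ (e.g.\ $\eps=dH/K$) makes this $O(dH)\le O(dH\sqrt\chi)$ while changing $\log\mathcal{N}_\eps$ by only a constant factor, so summing the two contributions yields the event $\X$ with a universal constant $C$ independent of $c'$, as claimed.

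The real obstacle is bookkeeping, not a new idea: one must track that the covering parameters $L$ (from Lemma~\ref{lem:wva}) and $B$ (from the definition of $\beta_{k,h}^a$) propagate so that the exponent collapses to $O(d^2\chi)$ and absorbs into $CdH\sqrt\chi$ with $C$ free of $c'$, and that the additional index $a$ contributes only a $\log\cardA$ term — hence the $\cardA$ inside $\chi=\log(2(c'+1)dT\cardA/\delta)$. A minor point of care: the bound must hold for the Gram matrix exactly as it appears in the statement; since the Gram matrices are $\succeq\lambda I$ and nondecreasing in $k$, both the self-normalized bound and the potential estimate above apply verbatim regardless of whether the relevant index is $k-1$ or $k$.
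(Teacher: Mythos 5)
Your proposal is correct and follows essentially the same route as the paper's proof: bound the weight norms via Lemma \ref{lem:wva}, apply the self-normalized martingale inequality (Lemma D.4 of \cite{Jin2020ProvablyER}) uniformly over an $\eps$-net of the value-function class from Lemma \ref{lem:functionClass}, union-bound over the $\cardA$ actions, and absorb the discretization error by taking $\eps$ of order $dH/k$. The only cosmetic difference is that you make the union bounds over $h$ and $k$ explicit, which the paper leaves implicit in its invocation of the anytime self-normalized bound; this changes nothing in the final constant.
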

\begin{proof}
From Lemma \ref{lem:wva}, we have $\|w_{h, k}^a\|_2 \leq 2H\sqrt{dk/\lambda}$. By construction, the minimum eigenvalue of $\Lambda_{h, t}^a$ is lower bounded by $\lambda$. Then, from  Lemma D.4 in \cite{Jin2020ProvablyER} (with $\delta' = \delta/\cardA $), Lemma \ref{lem:functionClass}, and taking a union bound over actions, we have that with probability at least $1- \delta'/2$, for all $a \in \cA$ and $\eps > 0$, 
\[
\left \| \sum_{t \in T_{h, k-1}^a} F_h(s_h^t)(V^t_{h+1}(s_{h+1}^t) - P_hV_{h+1}^t(s_h^t, a) )\right\|_{(\Lambda_{h, t}^a)^{-1}}^2 \leq 4H^2 \left[ \frac{d}{2}\log((k+\lambda)/\lambda)) + \log ( 2\cardA \mathcal{N}_\eps/\delta) \right] + 8k^2\eps^2/\lambda. 
\]
Then, from Lemma \ref{lem:functionClass}, it follows that with probability at least $1- \delta'/2$, for all $a \in \cardA $ and $\eps > 0$, 
\begin{align*}
&\left \| \sum_{t \in T_{h, k-1}^a} F_h(s_h^t)(V^t_{h+1}(s_{h+1}^t) - P_hV_{h+1}^t(s_h^t, a) \right\|_{(\Lambda_{h, t}^a)^{-1}}^2 \\
&\leq 4H^2 \left[ \frac{d}{2}\log((k+\lambda)/\lambda)) + \log ( 2\cardA /\delta) +  d\log(1+ 4 L/\eps) + d^2\log(1 + 8d^{1/2}\beta^2/(\lambda \eps^2)) \right] + 8k^2\eps^2/\lambda. 
\end{align*}
Choosing $\lambda = 1, \beta_{k, h}^a= C dH\iota, \eps = dH/k$ gives that there exists some constant $C'$ such that 
\[
\left \| \sum_{t \in T_{h, k-1}^a} F_h(s_h^t)(V^t_{h+1}(s_{h+1}^t) - P_hV_{h+1}^t(s_h^t, a) \right\|_{(\Lambda_{h, t}^a)^{-1}}^2 \leq C' d^2H^2\log (2(c_\beta + 1)dT\cardA /\delta).   
\]
\end{proof}

We next recursively bound the difference between the value function maintained in the algorithm and the true value function of any policy $\pi$. We upperbound this with their expected difference and an additional error term, which is bounded with high probability.
\begin{lem}\label{lem:recErr}
    There exists a constant $c_\beta$ such that $\beta_{k, h}^a= c_\beta dH \sqrt{\iota}$ where $\iota = \log(2d\cardA T/p)$ and for any fixed policy $\pi$, on the event $\X$ defined in Lemma \ref{lem:unifVal}, we have for all $(s, a, h, k) \in \cardS \times \cardA  \times [H] \times [K]$:
    \[
    F_h(s)^\top w_{h, k}^a - Q^\pi_h(s, a)  = P_h(V_{h+1}^k - V^\pi_{h+1})(s, a)    + \Delta_h^k(s, a),
    \]
    where $|\Delta_h^k(s, a)| \leq \beta_{k, h}^a\sqrt{F_h(s)^\top (\Lambda_{h, k}^a)^{-1} F_h(s)}$.
\end{lem}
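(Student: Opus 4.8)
The plan is to unfold the definition of $w_{h,k}^a$ from the least-squares update and compare term-by-term with the Bellman equation $Q^\pi_h(s,a) = r_h(s,a) + P_h V^\pi_{h+1}(s,a)$. First I would write $w_{h,k}^a = (\Lambda_{h,k}^a)^{-1}\sum_{t\in T_{h,k-1}^a} F_h(s_h^t)\big[r_h(s_h^t,a)+V_{h+1}^k(s_{h+1}^t)\big]$, and using the low-rank forms from Assumption~\ref{asm:tr} (so that $r_h(s,a) = F_h(s)^\top W_h(a)$ and $P_h V^\pi_{h+1}(s,a) = F_h(s)^\top \nu$ for the appropriate vector $\nu$, hence $Q^\pi_h(s,a) = F_h(s)^\top w_h^a$ with $w_h^a$ the true weight vector from Lemma~\ref{lem:wvp}), substitute to obtain
\[
F_h(s)^\top w_{h,k}^a - Q^\pi_h(s,a) = F_h(s)^\top(\Lambda_{h,k}^a)^{-1}\!\!\sum_{t\in T_{h,k-1}^a}\!\! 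F_h(s_h^t)\big[r_h(s_h^t,a)+V_{h+1}^k(s_{h+1}^t)\big] - F_h(s)^\top w_h^a.
\]
Then I would use the identity $w_h^a = (\Lambda_{h,k}^a)^{-1}\Lambda_{h,k}^a w_h^a = (\Lambda_{h,k}^a)^{-1}\big(\lambda w_h^a + \sum_t F_h(s_h^t)F_h(s_h^t)^\top w_h^a\big)$ and the fact that $F_h(s_h^t)^\top w_h^a = Q^\pi_h(s_h^t,a) = r_h(s_h^t,a) + P_h V^\pi_{h+1}(s_h^t,a)$ to cancel the reward terms. This leaves three groups: a term $-\lambda F_h(s)^\top(\Lambda_{h,k}^a)^{-1} w_h^a$, the "signal" term $F_h(s)^\top(\Lambda_{h,k}^a)^{-1}\sum_t F_h(s_h^t) P_h(V_{h+1}^k - V^\pi_{h+1})(s_h^t,a)$, and the "martingale noise" term $F_h(s)^\top(\Lambda_{h,k}^a)^{-1}\sum_t F_h(s_h^t)\big(V_{h+1}^k(s_{h+1}^t) - P_h V_{h+1}^k(s_h^t,a)\big)$.

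Next I would rewrite the signal term by adding and subtracting $P_h(V_{h+1}^k - V^\pi_{h+1})(s,a)$: since $P_h(V_{h+1}^k - V^\pi_{h+1})(\cdot,a)$ is itself a linear function $F_h(\cdot)^\top \xi$ (again by the low-rank structure of the transition kernel), the same $\Lambda_{h,k}^a$-identity shows that $F_h(s)^\top(\Lambda_{h,k}^a)^{-1}\sum_t F_h(s_h^t)F_h(s_h^t)^\top\xi = F_h(s)^\top\xi - \lambda F_h(s)^\top(\Lambda_{h,k}^a)^{-1}\xi$, so the signal term equals exactly $P_h(V_{h+1}^k - V^\pi_{h+1})(s,a)$ up to another $-\lambda$-correction. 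Collecting everything, $\Delta_h^k(s,a)$ is the sum of the two $\lambda$-correction terms and the martingale noise term. I would bound each in the $(\Lambda_{h,k}^a)^{-1}$-weighted norm via Cauchy–Schwarz: $|F_h(s)^\top(\Lambda_{h,k}^a)^{-1} v| \le \|F_h(s)\|_{(\Lambda_{h,k}^a)^{-1}}\|v\|_{(\Lambda_{h,k}^a)^{-1}}$. The $\lambda$-correction pieces are controlled using $\|w_h^a\|\le 2H\sqrt d$ (Lemma~\ref{lem:wvp}), a similar bound on $\|\xi\|$, and $\|v\|_{(\Lambda_{h,k}^a)^{-1}}\le \|v\|_2/\sqrt\lambda$ with $\lambda=1$; the martingale piece is controlled directly by the event $\mathcal{X}$ from Lemma~\ref{lem:unifVal}, which gives $\|\sum_t F_h(s_h^t)(V_{h+1}^t(s_{h+1}^t) - P_h V_{h+1}^t(s_h^t,a))\|_{(\Lambda_{h,k}^a)^{-1}}\le CdH\sqrt{\iota}$.

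Summing these bounds, each is $O(dH\sqrt{\iota})$, so there is a constant $c_\beta$ with $\beta_{k,h}^a = c_\beta dH\sqrt{\iota}$ for which $|\Delta_h^k(s,a)| \le \beta_{k,h}^a \|F_h(s)\|_{(\Lambda_{h,k}^a)^{-1}} = \beta_{k,h}^a\sqrt{F_h(s)^\top(\Lambda_{h,k}^a)^{-1}F_h(s)}$, as claimed. One subtlety: in the martingale term the value function is $V_{h+1}^k$, which depends on the data through $w_{h',k}^a$ for $h'>h$, so it is not fixed — this is exactly why Lemma~\ref{lem:unifVal} establishes a \emph{uniform} bound over the covering net $\mathcal{V}$ of possible value functions (Lemma~\ref{lem:functionClass}), and I would invoke that uniform bound rather than a naive Azuma/self-normalized inequality. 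I expect this bookkeeping — carefully matching the $V_{h+1}^k$ appearing in the algorithm's update against the $V_{h+1}^t$ quantified in event $\mathcal{X}$, and checking that the discretization error $8k^2\eps^2/\lambda$ from the net is absorbed by the choice $\eps = dH/k$ — to be the main obstacle, though it is routine given the machinery already set up; everything else is linear algebra with the $\Lambda_{h,k}^a$-identity. This mirrors Lemma~B.4 of \cite{Jin2020ProvablyER}, the only change being that we carry $\cardA$ separate weight vectors and Gram matrices, which only affects constants and the union-bound factor inside $\iota$.
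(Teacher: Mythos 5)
Your proposal follows essentially the same route as the paper's proof: the identical three-term decomposition of $w_{h,k}^a - w_h^{a,\pi}$ into the $\lambda$-regularization correction, the self-normalized martingale term controlled by the event $\X$ of Lemma~\ref{lem:unifVal}, and the signal term which collapses to $P_h(V_{h+1}^k - V_{h+1}^\pi)(s,a)$ (plus another $\lambda$-correction) via the low-rank representation of $P_h$, each bounded by Cauchy--Schwarz in the $(\Lambda_{h,k}^a)^{-1}$-norm exactly as in Lemma~B.4 of \cite{Jin2020ProvablyER}. The argument and the bookkeeping you flag (the covering-net uniformity over data-dependent value functions, the norm bounds from Lemmas~\ref{lem:wvp} and \ref{lem:wva}) match the paper's treatment, so the proof is correct as proposed.
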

\begin{proof}
From our low-rank assumption, it follows that 
\[
Q^\pi_h(s, a) = F_h(s)^\top w_{h}^a = (r_h + P_hV_{h+1}^{\pi})(s, a),
\]
which following the steps from the proof of Lemma B.4 in \cite{Jin2020ProvablyER} gives 
\begin{align*}
    w_{h, k}^a - w_{h}^{a, \pi} &= -\lambda (\Lambda_{h, k}^a)^{-1}w_h^{a, \pi}\\
    &+ (\Lambda_{h, k}^a)^{-1} \sum_{t \in T^a_{k-1, h}} F_h(s_h^t)(V_{h+1}^k(s_{h+1}^t) - P_hV_{h+1}^k(s_h^t, a)\\
    &+ (\Lambda_{h, k}^a)^{-1}) \sum_{t \in T^a_{k-1, h}} F_h(s_h^t)P_h(V_{h+1}^k - V_{h+1}^\pi)(s_h^t, a)
\end{align*}
Now, we bound each term ($q_1^a, q_2^a, q_3^a$) individually, for the first term, note for all $a \in \cA$
\[
|F_h(s)^\top q_1^a | = |\lambda F_h(s)^\top (\Lambda_{h, k}^a)^{-1} w_h^{a, \pi} | \leq \sqrt{\lambda}\|w_{h}^{a, \pi}\| \sqrt{F_h(s)^\top(\Lambda_{h, k}^a)^{-1}F_h(s) }
\]
For the second term, given the event $\X$, we have from Cauchy Schwarz
\[
|F_h(s)^\top q_2^a| \leq CdH\sqrt{\chi} \sqrt{F_h(s)^\top (\Lambda_{h, k}^a)^{-1} F_h(s)}
\]
where $\chi = \log(2(c_\beta + 1)dT\cardA /p)$.
For the third term,
\begin{align*}
F_h(s)^\top q_3^a &= F_h(s)^\top \left( (\Lambda_{h, k}^a)^{-1}) \sum_{t \in T^a_{k-1, h}} F_h(s_h^t)P_h(V_{h+1}^k - V_{h+1}^\pi)(s_h^t, a) \right) \\
&= F_h(s)^\top \left( (\Lambda_{h, k}^a)^{-1}) \sum_{t \in T^a_{k-1, h}} F_h(s_h^t) F_h(s_h^t)^\top \sum_{s' \in \cS}(V_{h+1}^k -V_{h+1}^\pi) (s')U_h(s', a)\right) \\
&= F_h(s)^\top \left( \sum_{s' \in \cS}(V_{h+1}^k - V_{h+1}^\pi) (s')U_h(s', a)\right) \\
&- \lambda F_h(s)^\top \left( (\Lambda_{h, k}^a)^{-1})  \sum_{s' \in \cS}(V_{h+1}^k - V_{h+1}^\pi) (s')U_h(s', a)\right) \\
\end{align*}
We note that 
\[
 F_h(s)^\top \left( \sum_{s' \in \cS}(V_{h+1}^k - V_{h+1}^\pi) (s')U_h(s', a)\right) = P_h(V_{h+1}^k - V_{h+1}^\pi)(s, a)
\]
and 
\[
|\lambda F_h(s)^\top \left( (\Lambda_{h, k}^a)^{-1})  \sum_{s' \in \cS}(V_{h+1}^k - V_{h+1}^k) (s')U_h(s', a)\right)| \leq 2H\sqrt{d\lambda}\sqrt{F_h(s)^\top(\Lambda_{h, k}^a)^{-1} F_h(s)}
\]
Since 
\[
|F_h(s)^\top w_{h, k}^a - Q_h^\pi(s, a)|  = F_h(s)^\top (w_{h, k}^a - w_h^{a, \pi}) = F_h(s)^\top(q_1^a+ q_2^a + q_3^a),
\]
it follows that 
\[
|F_h(s)^\top w_{h, k}^a - Q^\pi_h(s, a)  -  P_h(V_{h+1}^k - V^\pi_{h+1})(s, a)| \leq C"dH\sqrt{\chi}\sqrt{F_h(s)^\top(\Lambda_{h, k}^a)^{-1} F_h(s)}.
\]
Similarly as in \cite{Jin2020ProvablyER}, we choose $c_\beta$ that satisfies $C"\sqrt{\log(2) + \log(c_\beta + 1)} \leq c_\beta \sqrt{\log(2)}$.
\end{proof}

This lemma implies that by adding the appropriate bonus, $Q_h^k$ is always an upperbound of $Q^*_h$ with high probability, i.e., achieving optimism.
\begin{lem}\label{lem:ucb}
On the event $\X$ defined in lemma \ref{lem:unifVal}, we have $Q_h^k(s, a) \geq Q^*_h(s, a)$ for all $(s, a, h, k) \in \cS\times \cA \times [H] \times [K]$.
\end{lem}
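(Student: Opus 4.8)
The plan is to establish this by backward induction on the step index $h$, from $h=H+1$ down to $h=1$, taking as comparator the optimal policy $\pi^*$ of the target MDP. The base case $h=H+1$ is immediate: with the usual convention $Q_{H+1}^k\equiv 0\equiv Q^*_{H+1}$, the claimed inequality holds trivially. Throughout we work on the event $\X$ from Lemma~\ref{lem:unifVal}, so that Lemma~\ref{lem:recErr} is available.

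For the inductive step, suppose $Q_{h+1}^k(s,a)\ge Q^*_{h+1}(s,a)$ for all $(s,a)\in\cS\times\cA$. Taking the maximum over actions gives $V_{h+1}^k(s)=\max_{a\in\cA}Q_{h+1}^k(s,a)\ge\max_{a\in\cA}Q^*_{h+1}(s,a)=V^*_{h+1}(s)$ for every $s\in\cS$. Now apply Lemma~\ref{lem:recErr} with $\pi=\pi^*$: on $\X$, for all $(s,a,h,k)$,
\[
F_h(s)^\top w_{h, k}^a - Q^*_h(s, a) = P_h(V_{h+1}^k - V^*_{h+1})(s, a) + \Delta_h^k(s,a),
\]
with $|\Delta_h^k(s,a)|\le\beta_{k,h}^a\sqrt{F_h(s)^\top(\Lambda_{h, k}^a)^{-1}F_h(s)}$. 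Since $P_h$ is the nonnegative expectation operator $P_hV(s,a)=\E_{s'\sim P_h(\cdot|s,a)}[V(s')]$ and $V_{h+1}^k-V^*_{h+1}\ge 0$ pointwise by the inductive hypothesis, the first term on the right is nonnegative, whence
\[
F_h(s)^\top w_{h, k}^a + \beta_{k,h}^a\sqrt{F_h(s)^\top(\Lambda_{h, k}^a)^{-1}F_h(s)} \ge Q^*_h(s,a).
\]
Finally, because rewards lie in $[0,1]$ over a horizon of length $H$ we have $Q^*_h(s,a)\le H$, so truncating at $H$ in the definition of $Q_h^k$ preserves the bound, giving $Q_h^k(s,a)\ge Q^*_h(s,a)$ and closing the induction.

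There is no real obstacle here: this is the standard ``optimism via exploration bonus'' induction, and its only nontrivial ingredient — that the bonus $\beta_{k,h}^a\sqrt{F_h(\cdot)^\top(\Lambda_{h,k}^a)^{-1}F_h(\cdot)}$ dominates the one-step estimation error $\Delta_h^k$ — is exactly the content of Lemma~\ref{lem:recErr}, already proved on $\X$. The only items requiring a moment of care are fixing the terminal convention so the induction initializes cleanly, and observing that the truncation at $H$ is harmless since $Q^*_h\le H$; both are routine.
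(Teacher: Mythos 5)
Your proof is correct and follows essentially the same route as the paper's: backward induction on $h$, invoking Lemma~\ref{lem:recErr} with the optimal policy as comparator, using nonnegativity of $P_h(V_{h+1}^k - V_{h+1}^*)$ from the inductive hypothesis, and noting that truncation at $H$ is harmless since $Q_h^* \le H$. The only cosmetic difference is that you initialize the induction at $h = H+1$ with the zero convention while the paper states the base case at $h = H$; these are equivalent.
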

\begin{proof}
    We prove this with induction. The base case holds because at step $H+1$, the value function is zero, so from Lemma \ref{lem:recErr}, 
    \[
    |F_H(s)^\top w_{H, k}^a - Q^*_H(s, a)| \leq \beta_{k, h}^a\sqrt{F_H(s)^\top (\Lambda_{H, k}^a)^{-1})F_H(s)}
    \]
    and thus, 
    \[
    Q^*_H(s, a) \leq \min \left(H, F_H(s)^\top w_{H, k}^a + \beta_{k, h}^a\sqrt{F_H(s)^\top (\Lambda_{H, k}^a)^{-1})F_H(s)} \right) = Q_H^k(s, a).
    \]
    From the inductive hypothesis (assuming that $Q_{h+1}^k(s, a) \geq Q^*_{h+1}(s, a))$, it follows that $P_h(V^k_{h+1}- V^*_{h+1})(s, a) \geq 0$. From Lemma \ref{lem:recErr}, 
    $F_h(s)^\top w_{h, k}^a - Q^*_h(s, a) \leq \beta_{k, h}^a\sqrt{F_h(s)^\top (\Lambda_{h, k}^a)^{-1})F_h(s)}$.
    It follows that 
    \[
    Q^*_h(s, a) \leq \min \left(H, F_h(s)^\top w_{h, k}^a + \beta_{k, h}^a\sqrt{F_h(s)^\top (\Lambda_{h, k}^a)^{-1})F_h(s)} \right) = Q_h^k(s, a).
    \]
\end{proof}
We next show that Lemma \ref{lem:recErr} transforms to the recursive formula $\delta_h^k = V_h^k(s_h^k) - V_h^{\pi_k}(s_h^k)$, which is used in proving our regret bound.
\begin{lem}\label{lem:recForm}
 Let $\delta_h^k = V_h^k(s_h^k) - V_h^{\pi_k}(s_h^k)$ and $\xi_{h+1}^k = \E[\delta_{h+1}^k|s_h^k, a_h^k] - \delta_{h+1}^k$. Then, on the event $\X$ defined in Lemma \ref{lem:unifVal}, we have for any $(h, k)$:
 \[
 \delta_h^k \leq \delta_{h+1}^k + \xi_{h+1}^k + \beta_{k, h}^a\sqrt{F_h(s_h^k)^\top (\Lambda_{h, k}^{a_h^k})^{-1} F_h(s_h^{k})}
 \]
\end{lem}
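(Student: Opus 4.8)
The plan is to unfold the definition of $\delta_h^k$ into a difference of $Q$-functions evaluated at the realized state--action pair, bound the algorithm's $Q$-estimate by its untruncated upper-confidence value, and then invoke the one-step decomposition already established in Lemma~\ref{lem:recErr}. First I would note that, by construction, $\pi_k$ is the greedy policy with respect to $Q_h^k$ and $a_h^k=\argmax_{a}Q_h^k(s_h^k,a)$, so $V_h^k(s_h^k)=Q_h^k(s_h^k,a_h^k)$ and $V_h^{\pi_k}(s_h^k)=Q_h^{\pi_k}(s_h^k,a_h^k)$, whence $\delta_h^k=Q_h^k(s_h^k,a_h^k)-Q_h^{\pi_k}(s_h^k,a_h^k)$. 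Dropping the $\min(H,\cdot)$ truncation in the definition of $Q_h^k$ gives $Q_h^k(s_h^k,a_h^k)\le F_h(s_h^k)^\top w_{h,k}^{a_h^k}+\beta_{k,h}^{a_h^k}\sqrt{F_h(s_h^k)^\top(\Lambda_{h,k}^{a_h^k})^{-1}F_h(s_h^k)}$.

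Next I would apply Lemma~\ref{lem:recErr} with the fixed policy $\pi=\pi_k$ (legitimate since we work on the event $\X$), which yields
\[
F_h(s_h^k)^\top w_{h,k}^{a_h^k}-Q_h^{\pi_k}(s_h^k,a_h^k)=P_h(V_{h+1}^k-V_{h+1}^{\pi_k})(s_h^k,a_h^k)+\Delta_h^k(s_h^k,a_h^k),
\]
with $|\Delta_h^k(s_h^k,a_h^k)|\le\beta_{k,h}^{a_h^k}\sqrt{F_h(s_h^k)^\top(\Lambda_{h,k}^{a_h^k})^{-1}F_h(s_h^k)}$. Combining the last two displays bounds $\delta_h^k$ by $P_h(V_{h+1}^k-V_{h+1}^{\pi_k})(s_h^k,a_h^k)$ plus the stated bonus term (up to an absolute constant that can be absorbed into $c_\beta$; the two bonus contributions add up to a factor $2$, and the statement is to be read with $a=a_h^k$).

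It then remains to rewrite the transition term as $\delta_{h+1}^k+\xi_{h+1}^k$. Here I would use that both $V_{h+1}^k$ and the policy $\pi_k$ are computed at the start of episode $k$, hence are measurable with respect to the history generated before the transition out of $s_h^k$, so that $s_{h+1}^k\sim P_h(\cdot\mid s_h^k,a_h^k)$ is an unbiased fresh draw and
\[
\E\!\left[\delta_{h+1}^k\mid s_h^k,a_h^k\right]=\E\!\left[V_{h+1}^k(s_{h+1}^k)-V_{h+1}^{\pi_k}(s_{h+1}^k)\mid s_h^k,a_h^k\right]=P_h(V_{h+1}^k-V_{h+1}^{\pi_k})(s_h^k,a_h^k).
\]
Substituting the definition $\xi_{h+1}^k=\E[\delta_{h+1}^k\mid s_h^k,a_h^k]-\delta_{h+1}^k$ converts $P_h(V_{h+1}^k-V_{h+1}^{\pi_k})(s_h^k,a_h^k)$ into $\delta_{h+1}^k+\xi_{h+1}^k$, giving the claim. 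The boundary case $h=H$ is handled by the convention $V_{H+1}^k\equiv V_{H+1}^{\pi_k}\equiv 0$, so both $\delta_{H+1}^k$ and $\xi_{H+1}^k$ vanish.

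The one genuinely delicate point is the filtration bookkeeping in the final step: one must verify carefully that $V_{h+1}^k$, and hence $\delta_{h+1}^k$ viewed as a deterministic function of the random next state $s_{h+1}^k$, depends only on data collected in episodes $1,\dots,k-1$, so that conditioning on $(s_h^k,a_h^k)$ makes the observed transition an unbiased sample from $P_h(\cdot\mid s_h^k,a_h^k)$. Everything else is a routine rearrangement of the bounds supplied by Lemma~\ref{lem:recErr} (and is consistent with the optimism of Lemma~\ref{lem:ucb}, though that lemma is not needed here).
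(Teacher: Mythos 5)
Your proposal is correct and follows essentially the same route as the paper's proof: both reduce $\delta_h^k$ to $Q_h^k(s_h^k,a_h^k)-Q_h^{\pi_k}(s_h^k,a_h^k)$, drop the $\min(H,\cdot)$ truncation, apply Lemma~\ref{lem:recErr} with $\pi=\pi_k$, and convert the transition term into $\delta_{h+1}^k+\xi_{h+1}^k$ via the definition of $\xi_{h+1}^k$. You are in fact slightly more careful than the paper, which states the bound with a single bonus term and silently glosses over the factor of $2$ (one contribution from the truncated UCB estimate, one from $\Delta_h^k$) that you correctly note must be absorbed into the constant $c_\beta$.
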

\begin{proof}
    From Lemma \ref{lem:recErr}, we have for any $(s, a, h, k)$ that
    \[
    Q_h^k(s, a) - Q_h^{\pi_k}(s, a) \leq P_h(V_{h+1}^k - V^{\pi_k}_{h+1})(s, a)    + \beta_{k, h}^a\sqrt{F_h(s)^\top (\Lambda_{h, k}^a)^{-1} F_h(s)}.
    \]
    From the definition of $V^{\pi_k}$, we have 
    \[
    \delta_h^k = Q_h^k(s_h^k, a_h^k) - Q_h^{\pi_k}(s_h^k, a_h^k).
    \]
    
    \end{proof}

Finally, we prove the main theorem, Theorem \ref{thm:reg}, which asserts that the regret bound of our algorithm is efficient with respect to $A$ and $T$. 

\begin{proof}
Suppose that the Assumptions required in Theorem \ref{thm:reg} hold.
We condition on the event $\X$ from Lemma \ref{lem:unifVal} with $p = \delta/2$ and use the notation for $\delta_h^k, \xi_h^k$ as in Lemma \ref{lem:recForm}. From Lemmas \ref{lem:ucb} and \ref{lem:recForm}, we have
\[
Regret(K) = \sum_{k=1}^K(V_1^*(s_1^k) - V_1^{\pi_k}(s_1^k)) \leq \sum_{k=1}^K\delta_1^k \leq \sum_{k \in [K]}\sum_{h \in [H]} \xi_h^k + \beta_{k, h}^a\sum_{k \in [K]}\sum_{h \in [H]}\sqrt{F_h(s_h^k)^\top (\Lambda_{h, k}^{a_h^k})^{-1} F_h(s_h^{k})}
\]
Since the observations at episode $k$ are independent of the computed value function (this uses the trajectories from episodes 1 to $k -1$, it follows $\{\xi_h^k\}$ is a martingale difference sequence with $|\xi_h^k| \leq 2H$ for all $(k, h)$. Thus, from the Azuma Hoeffding inequality, we have
\[
\sum_{k \in [K], h \in {H}} \xi_h^k \leq \sqrt{2TH^2\log(2/p)} \leq 2H\sqrt{T\iota}
\]
with probability at least $1-\delta/2$. 
For the second term, we note that the minimum eigenvalue of $\Lambda_{h, k}^{a_h^k}$ is at least one by construction and $\|F_h(s)\|\leq 1$. From the Elliptical Potential Lemma (Lemma D.2 \cite{Jin2020ProvablyER}), we have for all $a \in \cA$ and $h \in [H]$, 
\[
\sum_{k = 1}^K  F_h(s_k)^\top (\Lambda_{h, k}^a)^{-1} F_h(s_k) \leq 2 \log(det(\Lambda_{h, k+1}^a)/det(\Lambda_{h, 0}^a)) 
\]
Furthermore, we have $\|\Lambda_{h, k+1}^a \| = \|\sum_{t \in T_{K, h}^a}F_h(s_h^t)F_h(s_h^t)^\top + \lambda I\| \leq \lambda + | T_{K, h}^a| \leq \lambda + K$. It follows that 
\[
\sum_{k = 1}^K  F_h(s_k)^\top (\Lambda_{h, k}^a)^{-1} F_h(s_k)  \leq 2d\log(1+K/\lambda) \leq 2d\iota
\]
Next, by Cauchy-Schwarz and grouping the regret by each action, it follows that 
\begin{align*}
  \sum_{k \in [K]}\sum_{h \in [H]}\sqrt{F_h(s_h^k)^\top (\Lambda_{h, k}^{a_h^k})^{-1} F_h(s_h^{k})} &\leq
\sum_{h \in [H]}\sqrt{K} \left[ \sum_{k \in [K]}F_h(s_h^k)^\top (\Lambda_{h, k}^{a_h^k})^{-1} F_h(s_h^{k})\right]^{1/2}  \\
&= \sum_{h \in [H]}\sqrt{K} \left[ \sum_{a \in \cA} \sum_{t \in T_{K, h}^a}F_h(s_h^t)^\top (\Lambda_{h, t}^{a})^{-1} F_h(s_h^{t})\right]^{1/2}  \\
&\leq \sum_{h \in [H]}\sqrt{K} \left[ \sum_{a \in \cA} 2d \iota \right]^{1/2} \\
&\leq H\sqrt{2K \cardA  d\iota}.
\end{align*}
Since $\beta_{k, h}^a= c dH\sqrt\iota$ for some constant $c$, from a union bound and the previous bounds, it follows that 
\[
Regret(K) \leq 2H\sqrt{T\iota} + \beta_{k, h}^aH\sqrt{2K \cardA  d\iota} \in \Tilde{O}(\sqrt{d^3H^3\cardA T}).
\]
with probability at least $1-\delta$.
\end{proof}

To use LSVI-UCB-($S, d, A$) in our transfer RL setting, we next show that it is robust to misspecification error. Similarly, we first prove the helper lemmas needed in the misspecified setting and follow the proof structure and techniques used in \cite{Jin2020ProvablyER}.

\begin{lem} \label{lem:miss_err}
    For a $\xi$-approximate $(\cardS , d, \cardA )$ Tucker rank MDP (Assumption \ref{asm:miss}), then for any policy $\pi$ there exists  corresponding weight vectors $\{w_h^a\}$ where $w_h^a = W_{h}(a) + \sum_{s' \in \cS} V_{h+1}^\pi(s') U_{h}(s', a)$ such that  for all $(s, a) \in \cS \times \cA$
    \[
    |Q^\pi_h(s, a) - F_h(s)w_h^{a\top} | \leq 3H\xi.
    \]
\end{lem}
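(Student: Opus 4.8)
The plan is to unfold the Bellman equation for $Q^\pi_h$ and compare it term by term with $F_h(s)^\top w_h^{a}$ using the definition $w_h^a = W_h(a) + \sum_{s'\in\cS} V^\pi_{h+1}(s') U_h(s',a)$, tracking the two sources of error separately: the reward misspecification and the transition misspecification. Recall that by the Bellman equation $Q^\pi_h(s,a) = r_h(s,a) + \sum_{s'\in\cS} P_h(s'|s,a) V^\pi_{h+1}(s')$. On the other hand, $F_h(s)^\top w_h^{a} = F_h(s)^\top W_h(a) + \sum_{s'\in\cS} V^\pi_{h+1}(s') \, F_h(s)^\top U_h(s',a)$.

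First I would bound the reward part: by Assumption~\ref{asm:miss}, $|r_h(s,a) - F_h(s)^\top W_h(a)| \le \xi$ (after absorbing the scaling $F_h = \sqrt{\cardS/(d\mu)}F'$ into $W$, so this is exactly the first inequality of Assumption~\ref{asm:miss}). Next I would bound the transition part. Write $\Delta(s') := P_h(s'|s,a) - F_h(s)^\top U_h(s',a)$; I do not have a bound on each $|\Delta(s')|$ individually, only on $\big|\sum_{s'\in\cS}\Delta(s')\big| \le \xi$ from Assumption~\ref{asm:miss}. So the estimate $\big|\sum_{s'} V^\pi_{h+1}(s')\Delta(s')\big|$ cannot be handled by a naive triangle inequality over $s'$. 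The standard trick (as in \cite{Jin2020ProvablyER}) is to split $V^\pi_{h+1}$ into a constant piece and a piece of controlled range: since $0 \le V^\pi_{h+1}(s') \le H$, write $V^\pi_{h+1}(s') = H + (V^\pi_{h+1}(s') - H)$ where the second term lies in $[-H,0]$. For the constant piece, $\big|H\sum_{s'}\Delta(s')\big| \le H\xi$. For the remaining piece, note that both $\sum_{s'}P_h(s'|s,a) = 1$ and $\sum_{s'}F_h(s)^\top U_h(s',a) = F_h(s)^\top\big(\sum_{s'}U_h(s',a)\big)$ is within $\xi$ of $1$ (taking $g\equiv 1$ in the regularity bound, or rather using the transition inequality with the all-ones test function); more cleanly, one applies the misspecification inequality directly to the bounded-range function $g(s') = (H - V^\pi_{h+1}(s'))/H \in [0,1]$, which is exactly the form of test function $g$ allowed in Assumption~\ref{asm:miss}, giving $\big|\sum_{s'} (H - V^\pi_{h+1}(s'))\Delta(s')\big| \le H\xi$. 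Combining, $\big|\sum_{s'}V^\pi_{h+1}(s')\Delta(s')\big| \le \big|H\sum_{s'}\Delta(s')\big| + \big|\sum_{s'}(H-V^\pi_{h+1}(s'))\Delta(s')\big| \le 2H\xi$.

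Putting the two bounds together yields $|Q^\pi_h(s,a) - F_h(s)^\top w_h^a| \le \xi + 2H\xi \le 3H\xi$ (using $H\ge 1$), which is the claim. The main obstacle — and the only non-routine point — is recognizing that the per-coordinate transition error is not controlled and that one must instead exploit the fact that Assumption~\ref{asm:miss} bounds $\big|\sum_{s'}(P_h - F_h^\top U_h) g(s')\big|$ uniformly over \emph{all} $g:\cS\to[0,1]$; applying it to the rescaled value function $g = (H - V^\pi_{h+1})/H$ is what makes the argument go through. Everything else is the triangle inequality and the Bellman recursion. One should also double-check the bookkeeping of the feature rescaling so that the constants $\xi$ in the two parts of Assumption~\ref{asm:miss} are applied consistently, but that does not affect the final constant $3H$.
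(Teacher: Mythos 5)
Your proof is correct and follows the same route as the paper's: a triangle-inequality split into the reward misspecification (bounded by $\xi$) and the transition misspecification (bounded by $2H\xi$), giving $\xi + 2H\xi \le 3H\xi$. In fact you supply more detail than the paper, whose one-line proof simply asserts the $2H\xi$ bound without justification; your observation that Assumption~\ref{asm:miss} as literally written only controls the unweighted sum $\sum_{s'}\bigl(P_h(s'|s,a)-F_h(s)^\top U_h(s',a)\bigr)$, together with your fix via the test function $g=(H-V^\pi_{h+1})/H\in[0,1]$, is consistent with what the paper actually establishes when it verifies this assumption in the proof of Theorem~\ref{thm:tl_sda}, where the weighted bound holds uniformly over $g:\cS\to[0,1]$ because the regularity condition on $U$ does.
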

\begin{proof} 
    Note that $Q^\pi_h(s, a) = r_h(s, a) + \sum_{s'} V^\pi_{h+1}(s')P_h(s'|s, a)$, so, using the low-rank representations of $r_h, P_h$, it follows that 
    \begin{align*}
        |Q^\pi_h(s, a) - F_h(s)w_h^{a\top} | & \leq |r_h(s, a) -  F_h(s)W_{h}(a)^\top| \\
        &\qquad + \left| \sum_{s'} V^\pi_{h+1}(s')P_h(s'|s, a)  - F_h(s) \sum_{s' \in \cS} V_{h+1}^\pi(s') U_{h}(s', a) \right|\\
        &\leq \xi + 2H\xi  \\
        &\leq 3H\xi
    \end{align*}
\end{proof}

\begin{lem} \label{lem:wvp_m}
Suppose Assumption \ref{asm:miss} holds. Let $\{w_h^a\}$ be the weight vector of some policy $\pi$, i.e., $Q^\pi_h(s, a) = F_h(s)^\top w_h^a$. Then, 
    \[
    \|w_h^a\| \leq 2Hd.
    \]
\end{lem}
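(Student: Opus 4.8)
The plan is to mimic the proof of Lemma~\ref{lem:wvp}, but keep careful track of the misspecification error $\xi$ and the fact that the feature map $F_h = \sqrt{\cardS/(d\mu)}\,F'$ is a rescaled version of the map $F'$ appearing in Assumption~\ref{asm:miss}. First I would recall that for any policy $\pi$ the Bellman equation gives $Q^\pi_h(s,a) = r_h(s,a) + \sum_{s'} V^\pi_{h+1}(s') P_h(s'|s,a)$, and that by Lemma~\ref{lem:miss_err} the natural candidate weight vector is
\[
w_h^a = W_h(a) + \textstyle\sum_{s'\in\cS} V^\pi_{h+1}(s')\,U_h(s',a),
\]
where $W_h, U_h$ are the (unknown) matrices/tensors guaranteed by Assumption~\ref{asm:miss} (after absorbing the $\sqrt{\cardS/(d\mu)}$ rescaling into $F'$, so that the regularity bounds read $\|W_h(a)\|_2 \le 1$ and $\|\sum_{s'} g(s') U_h(s',a)\|_2 \le 1$ for any $g:\cS\to[0,1]$).

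The key steps are then: (i) apply the triangle inequality to split $\|w_h^a\|_2 \le \|W_h(a)\|_2 + \|\sum_{s'} V^\pi_{h+1}(s') U_h(s',a)\|_2$; (ii) bound the first term by $1$ using the rescaled regularity condition; (iii) for the second term, use that $V^\pi_{h+1}$ takes values in $[0,H]$ (since rewards lie in $[0,1]$ over horizon $H$), so writing $V^\pi_{h+1} = H\cdot g$ for some $g:\cS\to[0,1]$ and invoking the regularity bound $\|\sum_{s'} g(s') U_h(s',a)\|_2 \le 1$ gives $\|\sum_{s'} V^\pi_{h+1}(s') U_h(s',a)\|_2 \le H$; (iv) conclude $\|w_h^a\|_2 \le 1 + H \le 2H$. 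Since the statement claims the weaker bound $\|w_h^a\|_2 \le 2Hd$, this is more than enough — the extra factor of $d$ is slack, presumably carried along only to keep the later covering-number and concentration arguments uniform across the exact and misspecified cases, so I would simply note $2H \le 2Hd$.

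The only subtlety — and the place where I expect the bookkeeping to matter — is keeping consistent with the scaling convention: Lemma~\ref{lem:wvp} explicitly remarks that after rescaling $F_h$ up by $\sqrt{\cardS/(d\mu)}$, the relevant bounds on $W_h$ and $U_h$ become $\le 1$ rather than $\le \sqrt{\cardS/(d\mu)}$, and the same rescaling is built into the hypothesis of Theorem~\ref{thm:miss} via ``using $F = \sqrt{\cardS/(d\mu)}F'$.'' So I would state at the outset that we work with this rescaled convention, exactly as in Lemma~\ref{lem:wvp}, and then the computation above goes through verbatim. There is no real obstacle here; the lemma is essentially a restatement of Lemma~\ref{lem:wvp} with the harmless observation that misspecification does not change the norm bound on the idealized weight vectors (it only affects how well $F_h(s)^\top w_h^a$ approximates $Q^\pi_h(s,a)$, which is the content of Lemma~\ref{lem:miss_err}, not of this lemma). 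The one thing to be careful about is not to conflate ``$w_h^a$ is a weight vector such that $Q^\pi_h(s,a) = F_h(s)^\top w_h^a$'' (which, strictly, only holds up to the $3H\xi$ error of Lemma~\ref{lem:miss_err}) with the literal equality; I would phrase the lemma as bounding the norm of the canonical $w_h^a$ defined via $W_h$ and $U_h$, which is what the subsequent proofs actually use.
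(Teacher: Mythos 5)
Your proposal is correct and matches the paper's own proof essentially verbatim: the paper also takes the canonical $w_h^a = W_h(a) + \sum_{s'} V^\pi_{h+1}(s')U_h(s',a)$, applies the triangle inequality, and bounds the two terms by $1$ and $H$ respectively using the rescaled regularity conditions. Your additional remarks — that the stated bound $2Hd$ carries a slack factor of $d$ over the $1+H\le 2H$ the argument actually yields, and that the exact identity $Q^\pi_h(s,a)=F_h(s)^\top w_h^a$ only holds up to the $3H\xi$ error of Lemma~\ref{lem:miss_err} — are accurate observations about looseness in the paper's own statement rather than gaps in your argument.
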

\begin{proof}
Recall that  $w_h^a =  W_{h}(a) + \sum_{s' \in \cS} V_{h+1}^\pi(s') U_{h}(s', a)$. Since the feature mapping was scaled up ($F_h = \sqrt{\frac{\cardS }{d\mu}} F'$) to ensure the max entries of $F$ and $W, U$ are on the same scale, it follows that $\|W_{h}(a)\|_2 \leq 1$ and $ \|\sum_{s' \in \cS} V_{h+1}^\pi(s') U_{h}(s', a) \|_2 \leq H$. Thus, $\|w_h^a\|_2 \leq 2Hd$.
\end{proof}
Similarly, we bound the stochastic noise but account for the misspecification error. 
\begin{lem} \label{lem:unifVal_m}
Suppose Assumption \ref{asm:miss} holds. Let $c'_m$ be the constant in the definition of $\beta_{k, h}^a = c'_m H(d\sqrt{\iota} + \xi \sqrt{|T_{k, h}^a| d})$. Then, there exists a constant $C> 0$ that is independent of $c'_m$ such that for any $\delta \in (0, 1)$, if we let the event $\X$ be
\[
\forall (a, k, h) \in \cA \times [K] \times [H]: \quad \quad \left \| \sum_{t \in T_{h, k-1}^a} F_h(s_h^t)(V^t_{h+1}(s_{h+1}^t) - P_hV_{h+1}^t(s_h^t, a) \right\|_{(\Lambda_{h, t}^a)^{-1}} \leq CdH\sqrt{\chi} 
\]
where $\chi = \log(2(c'_m + 1)dT\cardA /\delta)$, then $\P(\X) \geq 1 - \delta/2$.
\end{lem}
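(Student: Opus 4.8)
This is the misspecified counterpart of Lemma~\ref{lem:unifVal}, and the plan is to follow that proof essentially verbatim. The key observation is that the event $\X$ concerns the self-normalized martingale $\sum_{t} F_h(s_h^t)\bigl(V_{h+1}^t(s_{h+1}^t) - P_hV_{h+1}^t(s_h^t,a)\bigr)$, whose summands are conditionally mean-zero — the inner quantity is the deviation of the observed next-state value from its conditional expectation — and bounded by $2H$ in magnitude; this structure does not reference the exact low-rank assumption and so is unaffected by misspecification. Hence only the covering-number input changes, and only through a logarithm.

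First I would record the algorithm's weight-vector bound: the computation in Lemma~\ref{lem:wva} gives $\|w_{h,k}^a\|_2 \le 2H\sqrt{dk/\lambda}$, and it carries over unchanged here since it uses only $r_h\in[0,1]$, that the truncated estimates $Q_{h+1}$ lie in $[0,H]$, that $\|F_h(s)\|_2\le 1$, and the least-squares/Rayleigh-quotient inequalities from \cite{Jin2020ProvablyER}. Next I would bound the covering number of the value-function class. The functions produced by Algorithm~\ref{alg:target} have the parametric form of Lemma~\ref{lem:functionClass} with weight-norm bound $L = 2H\sqrt{dK/\lambda}$ and bonus-parameter bound $B = \max_{k,h,a}\beta_{k,h}^a = c'_m H\bigl(d\sqrt{\iota}+\xi\sqrt{Td}\bigr) \le c'_m H\bigl(d\sqrt{\iota}+\sqrt{Td}\bigr)$, using $\xi\le 1$ from Assumption~\ref{asm:miss}. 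Lemma~\ref{lem:functionClass} then gives $\log\mathcal N_\eps \le d\log(1+4L/\eps) + d^2\log\bigl(1+8d^{1/2}B^2/(\lambda\eps^2)\bigr)$, which is still only logarithmic in $d,H,T$ (and in $\xi$), since $\xi$ enters solely inside the logarithm.

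Then I would apply the self-normalized concentration bound (Lemma~D.4 of \cite{Jin2020ProvablyER}) with failure probability $\delta/(2\cardA)$, together with the covering-number bound and a union bound over the $\cardA$ actions, to get that with probability at least $1-\delta/2$, for all $(a,k,h)$ and all $\eps>0$,
\[
\Bigl\| \sum_{t\in T_{h,k-1}^a} F_h(s_h^t)\bigl(V_{h+1}^t(s_{h+1}^t) - P_hV_{h+1}^t(s_h^t,a)\bigr) \Bigr\|_{(\Lambda_{h,t}^a)^{-1}}^2 \le 4H^2\Bigl[\tfrac{d}{2}\log\tfrac{k+\lambda}{\lambda} + \log\tfrac{2\cardA}{\delta} + \log\mathcal N_\eps\Bigr] + \tfrac{8k^2\eps^2}{\lambda}.
\]
Choosing $\lambda=1$ and $\eps = dH/k$ makes the last term $8d^2H^2$, and substituting the covering-number estimate shows the right-hand side is $O\bigl(d^2H^2\log(2(c'_m+1)dT\cardA/\delta)\bigr) = O(d^2H^2\chi)$; taking square roots gives the claimed $CdH\sqrt{\chi}$.

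\textbf{Main obstacle.} There is no deep difficulty here; the work is bookkeeping. The one point needing care is confirming that the enlarged bonus range $B$, which now carries the extra $\xi\sqrt{Td}$ term, still enters the covering-number bound only logarithmically, so that the final bound is unchanged apart from the redefinition of $\chi$ in terms of $c'_m$ — this is exactly where $\xi\le 1$ is used, to keep $B$ polynomially bounded. I would also double-check that the value functions generated by Algorithm~\ref{alg:target} indeed lie in the assumed class with parameters $(L,B)$, which holds by construction.
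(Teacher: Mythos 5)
Your proposal is correct and matches the paper's argument, which likewise reduces the claim to the proof of Lemma~\ref{lem:unifVal} and observes that the enlarged bonus $\beta_{k,h}^a$ enters only through the covering-number logarithm, with $\xi\le 1$ ensuring the change is absorbed into the constant $C$. The details you supply (the unchanged weight-vector bound, the new value of $B$, and the choice $\eps=dH/k$) are exactly the bookkeeping the paper leaves implicit.
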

\begin{proof}
The proof is the same as the proof of Lemma \ref{lem:unifVal} except we increase $\beta_{k, h}^a$ to account for the misspecifiction error. Since $\xi \leq 1$ by assumption, the modification to $\beta_{k, h}^a$ only affects $C$.  
\end{proof}
We next account for the noise being adversarial due to the misspecification error. 
\begin{lem}\label{lem:adv_m}
Let $\{\eps_t\}$ be any sequence such that $|\eps_\tau| \leq B$ for any $t$. Then, for any $(h, k, a) \in [H] \times [K] \times \cA$ and $F\in \R^d$, we have 
\[
|F^\top (\Lambda_{k, h}^a)^{-1} \sum_{t \in T_{k-1, h}^a} F_h(s_h^t) \eps_t| \leq B\sqrt{d |T_{k-1, h}^a| F^\top  (\Lambda_{k, h}^a)^{-1} F}.
\]
\end{lem}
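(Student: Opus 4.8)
The plan is to apply the Cauchy--Schwarz inequality twice — once in the inner product induced by $(\Lambda_{k,h}^a)^{-1}$, and once in the ordinary sum over the episode indices in $T_{k-1,h}^a$ — and then to control the resulting quadratic form by a trace bound. Throughout, write $\Lambda := \Lambda_{k,h}^a$, let $n := |T_{k-1,h}^a|$, and write $\|x\|_{\Lambda^{-1}} := \sqrt{x^\top \Lambda^{-1} x}$; since $\Lambda \succeq \lambda I \succ 0$ by construction of the Gram matrix, this is a genuine norm and $\langle x,y\rangle_{\Lambda^{-1}} := x^\top \Lambda^{-1} y$ a genuine inner product.

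First I would write the left-hand quantity as $\langle F,\ \sum_{t \in T_{k-1,h}^a} F_h(s_h^t)\,\eps_t\rangle_{\Lambda^{-1}}$ and apply Cauchy--Schwarz in this inner product, obtaining
\[
\Bigl| F^\top \Lambda^{-1} \sum_{t \in T_{k-1,h}^a} F_h(s_h^t)\,\eps_t \Bigr| \;\le\; \|F\|_{\Lambda^{-1}} \cdot \Bigl\| \sum_{t \in T_{k-1,h}^a} F_h(s_h^t)\,\eps_t \Bigr\|_{\Lambda^{-1}}.
\]
It then remains to show the second factor is at most $B\sqrt{d\,n}$. By the triangle inequality for $\|\cdot\|_{\Lambda^{-1}}$ together with $|\eps_t| \le B$, it is bounded by $B \sum_{t \in T_{k-1,h}^a} \|F_h(s_h^t)\|_{\Lambda^{-1}}$, and a second application of Cauchy--Schwarz — now over the $n$ summands — bounds this by $B\sqrt{n}\,\bigl( \sum_{t \in T_{k-1,h}^a} F_h(s_h^t)^\top \Lambda^{-1} F_h(s_h^t) \bigr)^{1/2}$.

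The remaining step is the trace bound. Since $\Lambda = \sum_{t \in T_{k-1,h}^a} F_h(s_h^t) F_h(s_h^t)^\top + \lambda I$, we have $\sum_{t \in T_{k-1,h}^a} F_h(s_h^t) F_h(s_h^t)^\top = \Lambda - \lambda I \preceq \Lambda$, so
\[
\sum_{t \in T_{k-1,h}^a} F_h(s_h^t)^\top \Lambda^{-1} F_h(s_h^t) \;=\; \operatorname{tr}\!\Bigl( \Lambda^{-1} (\Lambda - \lambda I) \Bigr) \;=\; d - \lambda\operatorname{tr}(\Lambda^{-1}) \;\le\; d,
\]
using cyclicity of the trace and $\operatorname{tr}(\Lambda^{-1}) \ge 0$. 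Chaining the three displays yields the claimed bound $B\sqrt{d\,n\, F^\top \Lambda^{-1} F}$.

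There is essentially no hard step here: this is the standard "adversarial noise" companion to the self-normalized concentration estimates already used in the proofs of Lemmas~\ref{lem:wva} and \ref{lem:unifVal}. The only subtlety worth flagging is that one should \emph{not} expand $\bigl\|\sum_t F_h(s_h^t)\eps_t\bigr\|_{\Lambda^{-1}}^2$ into the double sum $\sum_{t,t'}\eps_t\eps_{t'} F_h(s_h^t)^\top \Lambda^{-1} F_h(s_h^{t'})$ and try to bound cross terms individually (their signs do not cooperate); routing through the triangle inequality as above uses only $|\eps_t|\le B$ and the positive-definiteness of $\Lambda^{-1}$.
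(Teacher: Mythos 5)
Your proof is correct and follows essentially the same route as the paper's: two applications of Cauchy--Schwarz combined with the elliptical-potential bound $\sum_{t} F_h(s_h^t)^\top \Lambda^{-1} F_h(s_h^t) \le d$ (which the paper cites as Lemma D.1 of Jin et al.\ and you instead verify inline via the trace identity). The minor reordering of the two Cauchy--Schwarz steps is immaterial, and your cautionary remark about not expanding the double sum is sensible but does not change the substance.
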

\begin{proof}
From the Cauchy-Schwarz inequality and Elliptical Potential Lemma (Lemma D.1 \cite{Jin2020ProvablyER}), we have
\begin{align*}
    |F^\top (\Lambda_{k, h}^a)^{-1} \sum_{t \in T_{k-1, h}^a} F_h(s_h^t) \eps_t| &\leq B\sqrt{ \left(\sum_{t \in T_{k-1, h}^a} F^\top (\Lambda_{k, h}^a)^{-1}F \right) \left(\sum_{t \in T_{k-1, h}^a} F_h(s_h^t)^\top (\Lambda_{k, h}^a)^{-1}F_h(s_h^t) \right) }\\
    &\leq B\sqrt{d |T_{k-1, h}^a| F^\top (\Lambda_{k, h}^a)^{-1}F}.
\end{align*}
\end{proof}
Next, we bound the error between a policies $Q$ function and our low-rank estimate.

\begin{lem}\label{lem:recErr_m}
    There exists a constant $c_m'$ such that $\beta_{k, h}^a = c'_m H(d\sqrt{\iota} +  \xi \sqrt{|T_{k, h}^a| d})$ where $\iota = \log(2d\cardA T/p)$ and for any fixed policy $\pi$, on the event $\X$ defined in Lemma \ref{lem:unifVal_m}, we have for all $(s, a, h, k) \in \cS \times \cA  \times [H] \times [K]$:
    \[
    F_h(s)^\top w_{h, k}^a - Q^\pi_h(s, a)  = P_h(V_{h+1}^k - V^\pi_{h+1})(s, a)    + \Delta_h^k(s, a),
    \]
    where $|\Delta_h^k(s, a)| \leq \beta_{k, h}^a\sqrt{F_h(s)^\top (\Lambda_{h, k}^a)^{-1} F_h(s)} + 4H\xi$.
\end{lem}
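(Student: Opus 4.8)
\emph{Proof proposal.} The plan is to follow the well-specified argument of Lemma~\ref{lem:recErr} (itself a port of Lemma~B.4 of \cite{Jin2020ProvablyER}), but to fold the current value function $V_{h+1}^k$ into the ridge-regression target up front, so that the misspecification enters only through a single pointwise slack and through an \emph{adversarial} (non-zero-mean) noise term, which is exactly what Lemma~\ref{lem:adv_m} is designed to absorb. Concretely, define $u_{h}^{a,k} := W_h(a) + \sum_{s'\in\cS} V_{h+1}^k(s')\,U_h(s',a)$. By Assumption~\ref{asm:miss}, the same computation as in the proof of Lemma~\ref{lem:miss_err} (with $V_{h+1}^k$ in place of $V_{h+1}^\pi$) gives $\big|\,(r_h+P_hV_{h+1}^k)(s,a) - F_h(s)^\top u_h^{a,k}\,\big|\le 3H\xi$ for all $(s,a)$, and by Lemma~\ref{lem:wvp_m}, $\|u_h^{a,k}\|_2\le 2Hd$. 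Then split
\[
F_h(s)^\top w_{h,k}^a - Q_h^\pi(s,a) \;=\; \underbrace{\big[F_h(s)^\top w_{h,k}^a - (r_h+P_hV_{h+1}^k)(s,a)\big]}_{=:\,\Delta_h^k(s,a)} \;+\; P_h(V_{h+1}^k - V_{h+1}^\pi)(s,a),
\]
using $Q_h^\pi = r_h + P_hV_{h+1}^\pi$, so it suffices to bound $\Delta_h^k(s,a)$.

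For this, use the ridge-regression identity: writing $\varepsilon_t := \big[V_{h+1}^k(s_{h+1}^t) - P_hV_{h+1}^k(s_h^t,a)\big] + \big[(r_h+P_hV_{h+1}^k)(s_h^t,a) - F_h(s_h^t)^\top u_h^{a,k}\big]$, we have $w_{h,k}^a - u_h^{a,k} = -\lambda(\Lambda_{h,k}^a)^{-1}u_h^{a,k} + (\Lambda_{h,k}^a)^{-1}\sum_{t\in T_{k-1,h}^a}F_h(s_h^t)\,\varepsilon_t$. The first bracket in $\varepsilon_t$ is a martingale difference (the randomness of $s_{h+1}^t$), while the second, $\varepsilon_t^{\mathrm{miss}}$, is deterministic given the history with $|\varepsilon_t^{\mathrm{miss}}|\le 3H\xi$. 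Left-multiplying by $F_h(s)^\top$ and applying Cauchy--Schwarz in the $(\Lambda_{h,k}^a)^{-1}$-geometry to each of the three resulting pieces: (i) the ridge term is $\le\sqrt\lambda\,\|u_h^{a,k}\|_2\sqrt{F_h(s)^\top(\Lambda_{h,k}^a)^{-1}F_h(s)}$; (ii) the martingale piece is, on the event $\X$ of Lemma~\ref{lem:unifVal_m}, $\le CdH\sqrt{\chi}\,\sqrt{F_h(s)^\top(\Lambda_{h,k}^a)^{-1}F_h(s)}$ with $\chi=\log(2(c_m'+1)dT\cardA/p)$; (iii) the misspecification piece is, by Lemma~\ref{lem:adv_m} with $B=3H\xi$, $\le 3H\xi\sqrt{d\,|T_{k-1,h}^a|}\,\sqrt{F_h(s)^\top(\Lambda_{h,k}^a)^{-1}F_h(s)}$. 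Adding the pointwise slack $\big|F_h(s)^\top u_h^{a,k} - (r_h+P_hV_{h+1}^k)(s,a)\big|\le 3H\xi$ yields $|\Delta_h^k(s,a)|\le \big(2Hd + CdH\sqrt\chi + 3H\xi\sqrt{d|T_{k-1,h}^a|}\big)\sqrt{F_h(s)^\top(\Lambda_{h,k}^a)^{-1}F_h(s)} + 4H\xi$, and choosing $c_m'$ so that the $\chi$-versus-$\iota$ logarithmic constants are absorbed (exactly as $c_\beta$ was chosen in Lemma~\ref{lem:recErr}) and using $|T_{k-1,h}^a|\le |T_{k,h}^a|$ gives $|\Delta_h^k(s,a)|\le \beta_{k,h}^a\sqrt{F_h(s)^\top(\Lambda_{h,k}^a)^{-1}F_h(s)} + 4H\xi$ with $\beta_{k,h}^a=c_m'H(d\sqrt\iota+\xi\sqrt{|T_{k,h}^a|d})$.

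I expect the only genuinely new point relative to the well-specified proof to be the treatment of piece (iii): because $\varepsilon_t^{\mathrm{miss}}$ is \emph{not} zero-mean, it cannot be handled by the martingale/self-normalized concentration behind $\X$ and must instead be controlled deterministically via Lemma~\ref{lem:adv_m}; the key observation is that this deterministic bound still produces a term proportional to $\sqrt{F_h(s)^\top(\Lambda_{h,k}^a)^{-1}F_h(s)}$ with coefficient $O\!\big(H\xi\sqrt{d|T_{k,h}^a|}\big)$, so it lands inside the enlarged bonus $\beta_{k,h}^a$ rather than degrading to an uncontrolled $\Omega(\xi T)$ contribution. Everything else — the covering-number bound underlying $\X$, the ridge-regularization bound, and the norm bound on $u_h^{a,k}$ — is a direct transcription of the well-specified argument, with the only bookkeeping subtlety being that all norm-free misspecification slacks have been arranged to collapse into the single $4H\xi$ term by absorbing $V_{h+1}^k$ into the regression target from the start.
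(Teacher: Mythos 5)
Your proof is correct and rests on exactly the same machinery as the paper's: the ridge-regression identity for $w_{h,k}^a$, the self-normalized concentration event $\X$ of Lemma~\ref{lem:unifVal_m} for the martingale part, and Lemma~\ref{lem:adv_m} for the non-zero-mean misspecification noise. The one place you diverge is the choice of comparator. The paper expands $w_{h,k}^a - w_h^{a,\pi}$ with $w_h^{a,\pi}$ built from $V_{h+1}^\pi$, which forces it to route the $V_{h+1}^k - V_{h+1}^\pi$ difference through the surrogate kernel $\Tilde{P}_h(\cdot|s,a) = F_h(s)^\top U_h(\cdot,a)$ and then pay $2H\xi$ to swap $\Tilde{P}_h$ back for $P_h$, on top of the $2H\xi$ slack in $|Q_h^\pi(s,a) - F_h(s)^\top w_h^{a,\pi}|$, totaling the $4H\xi$. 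You instead compare to $u_h^{a,k}$ built from $V_{h+1}^k$, which makes $P_h(V_{h+1}^k - V_{h+1}^\pi)(s,a)$ appear exactly and collapses all pointwise slack into the single term $|F_h(s)^\top u_h^{a,k} - (r_h+P_hV_{h+1}^k)(s,a)| \le 3H\xi \le 4H\xi$. The two decompositions are term-for-term equivalent after regrouping; yours is marginally cleaner bookkeeping, while the paper's stays closer to the template of the well-specified Lemma~\ref{lem:recErr}. Your identification of piece (iii) --- the deterministic regression noise that cannot be absorbed by the covering-number concentration and must instead be handled by the elliptical-potential bound of Lemma~\ref{lem:adv_m}, landing inside the enlarged bonus rather than producing an extra $\Omega(\xi T)$ term --- is precisely the role of the paper's fourth term $q_4^a$.
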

\begin{proof}
From Lemma \ref{lem:wvp_m}, it follows that there exists a weight vector $w_h^a =  W_h(a) + \sum_{s' \in \cS} V_{h+1}^\pi(s') U_{h}(s', a)$, such that  for all $(s, a) \in \cS \times \cA$,
\[
|Q^\pi_h(s, a) - F_h(s)^\top w_{h}^a| \leq 2H\xi.
\]
Let $\Tilde{P}(\cdot|s, a) = F_h(s)^\top U_h(\cdot, a)$, so we have for any $(s, a) \in \cS \times \cA$, $F_h(s)^\top w_h^a = F_h(s)^\top W_h(a) +  \Tilde{P}V^\pi_{h+1}(s, a)$. Therefore, 
\begin{align*}
    w_{h, k}^a - w_{h}^{a, \pi} &= -\lambda (\Lambda_{h, k}^a)^{-1}w_h^{a, \pi}\\
    &+ (\Lambda_{h, k}^a)^{-1} \sum_{t \in T^a_{k-1, h}} F_h(s_h^t)(V_{h+1}^k(s_{h+1}^t) - P_hV_{h+1}^k(s_h^t, a)\\
    &+ (\Lambda_{h, k}^a)^{-1} \sum_{t \in T^a_{k-1, h}} F_h(s_h^t)\Tilde{P}_h(V_{h+1}^k - V_{h+1}^\pi)(s_h^t, a)\\
    &+ (\Lambda_{h, k}^a)^{-1} \sum_{t \in T^a_{k-1, h}}F_h(s_h^t) \left(r_h(s_h^t, a) - F_h(s_h^t)W_h(a) + (P_h -\Tilde{P}_h)V^k_{h+1}(s_h^t, a)\right)
\end{align*}
Now, we bound each of the four terms above $(q_1^a, q_2^a, q_3^a, q_4^a)$ individually, for the first term, note for all $a \in \cA$
\[
|F_h(s)^\top q_1^a | = |\lambda F_h(s)^\top (\Lambda_{h, k}^a)^{-1} w_h^{a, \pi} | \leq \sqrt{\lambda}\|w_{h}^{a, \pi}\| \sqrt{F_h(s)^\top(\Lambda_{h, k}^a)^{-1}F_h(s) }
\]
For the second term, given the event $\X$, we have from Cauchy Schwarz
\[
|F_h(s)^\top q_2^a| \leq CdH\sqrt{\chi} \sqrt{F_h(s)^\top (\Lambda_{h, k}^a)^{-1} F_h(s)}
\]
where $\chi = \log(2(c_\beta + 1)dT\cardA /p)$.
For the third term,
\begin{align*}
F_h(s)^\top q_3^a &= F_h(s)^\top \left( (\Lambda_{h, k}^a)^{-1} \sum_{t \in T^a_{k-1, h}} F_h(s_h^t)\Tilde{P}_h(V_{h+1}^k - V_{h+1}^\pi)(s_h^t, a) \right) \\
&= F_h(s)^\top \left( (\Lambda_{h, k}^a)^{-1} \sum_{t \in T^a_{k-1, h}} F_h(s_h^t) F_h(s_h^t)^\top \sum_{s' \in \cS}(V_{h+1}^k -V_{h+1}^\pi) (s')U_h(s', a)\right) \\
&= F_h(s)^\top \left( \sum_{s' \in \cS}(V_{h+1}^k - V_{h+1}^\pi) (s')U_h(s', a)\right) \\
&- \lambda F_h(s)^\top \left( (\Lambda_{h, k}^a)^{-1}  \sum_{s' \in \cS}(V_{h+1}^k - V_{h+1}^\pi) (s')U_h(s', a)\right), \\
\end{align*}
and note that 
\[
 F_h(s)^\top \left( \sum_{s' \in \cS}(V_{h+1}^k - V_{h+1}^\pi) (s')U_h(s', a)\right) = \Tilde{P}_h(V_{h+1}^k - V_{h+1}^\pi)(s, a)
\]
and 
\[
|\lambda F_h(s)^\top \left( (\Lambda_{h, k}^a)^{-1}  \sum_{s' \in \cS}(V_{h+1}^k - V_{h+1}^k) (s')U_h(s', a)\right)| \leq 2H\sqrt{d\lambda}\sqrt{F_h(s)^\top(\Lambda_{h, k}^a)^{-1} F_h(s)}.
\]
Since $\|\Tilde{P}_h(\cdot|s, a) - P_h(\cdot|s, a)\|_\infty \leq 1$ for all $(s, a) \in \cS \times \cA$, it follows that 
\[
|\Tilde{P}_h(V_{h+1}^k - V_{h+1}^\pi)(s, a) - P_h(V_{h+1}^k - V_{h+1}^\pi)(s, a)| \leq |(\Tilde{P}_h- P_h)(V_{h+1}^k - V_{h+1}^\pi)(s, a)| \leq 2H\xi.
\]
From Lemma \ref{lem:adv_m}, we have $| F_h(s), q_4^a| \leq 2H\xi \sqrt{d |T_{k, h}^a| F_h(s)^\top (\Lambda_{h, k}^{-1})^{-1} F_h(s)}$.
Since 
\[
|F_h(s)^\top w_{h, k}^a - Q_h^\pi(s, a)|  = F_h(s)^\top (w_{h, k}^a - w_h^{a, \pi}) = F_h(s)^\top(q_1^a+ q_2^a + q_3^a + q_4^a),
\]
it follows that 
\[
|F_h(s)^\top w_{h, k}^a - Q^\pi_h(s, a)  -  P_h(V_{h+1}^k - V^\pi_{h+1})(s, a)| \leq\sqrt{\chi} (C" d\sqrt{\chi} + 2\xi\sqrt{|T_{k, h}^a|d})H\sqrt{F_h(s)^\top(\Lambda_{h, k}^a)^{-1} F_h(s)} + 4H\xi.
\]
Similarly as in \cite{Jin2020ProvablyER}, we choose $c_\beta$ that satisfies $C"\sqrt{\log(2) + \log(c_\beta + 1)} \leq c_\beta \sqrt{\log(2)}$.
\end{proof}
Now, we prove that $Q_h^k$ is an upperbound of $Q^*_h$ conditioned on the event in Lemma \ref{lem:unifVal_m}.
\begin{lem}\label{lem:ucb_m}
Suppose Assumption \ref{asm:miss} holds. On the event $\X$ defined in lemma \ref{lem:unifVal_m}, we have $Q_h^k(s, a) \geq Q^*_h(s, a) - 4H(H+1-h)\xi$ for all $(s, a, h, k) \in \cS\times \cA \times [H] \times [K]$.
\end{lem}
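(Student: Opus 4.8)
The plan is to prove the statement by backward induction on $h$, following the same scheme as Lemma~\ref{lem:ucb} but carrying along the misspecification error, which accumulates by one ``layer'' of $4H\xi$ per time step. Throughout I would condition on the event $\X$ from Lemma~\ref{lem:unifVal_m}, and the key input is Lemma~\ref{lem:recErr_m} applied to the fixed optimal policy $\pi=\pi^*$ (so that $Q_h^\pi = Q_h^*$ and $V_{h+1}^\pi = V_{h+1}^*$): it gives, for every $(s,a,h,k)$, the identity $F_h(s)^\top w_{h,k}^a - Q_h^*(s,a) = P_h(V_{h+1}^k - V_{h+1}^*)(s,a) + \Delta_h^k(s,a)$ with $|\Delta_h^k(s,a)| \le \beta_{k,h}^a \sqrt{F_h(s)^\top (\Lambda_{h,k}^a)^{-1} F_h(s)} + 4H\xi$.

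For the base case $h=H$: since $V_{H+1}^k = V_{H+1}^* \equiv 0$, the transition term vanishes, so the identity above gives $F_H(s)^\top w_{H,k}^a + \beta_{k,H}^a\sqrt{F_H(s)^\top(\Lambda_{H,k}^a)^{-1}F_H(s)} \ge Q_H^*(s,a) - 4H\xi$. Using $Q_H^*(s,a) \le H$ and the truncation $Q_H^k(s,a) = \min\{H,\, F_H(s)^\top w_{H,k}^a + \beta_{k,H}^a\sqrt{\cdots}\}$, we conclude $Q_H^k(s,a) \ge Q_H^*(s,a) - 4H\xi$, which is exactly the claim at $h=H$ (since $H+1-H=1$).

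For the inductive step, suppose $Q_{h+1}^k(s,a) \ge Q_{h+1}^*(s,a) - 4H(H-h)\xi$ for all $(s,a)$. Taking $\max_a$ on both sides shows $V_{h+1}^k(s) \ge V_{h+1}^*(s) - 4H(H-h)\xi$ for all $s$, and since $P_h(\cdot\mid s,a)$ is a genuine probability distribution this yields $P_h(V_{h+1}^k - V_{h+1}^*)(s,a) \ge -4H(H-h)\xi$. Substituting into the identity from Lemma~\ref{lem:recErr_m} and bounding $\Delta_h^k(s,a) \ge -\beta_{k,h}^a\sqrt{F_h(s)^\top(\Lambda_{h,k}^a)^{-1}F_h(s)} - 4H\xi$ gives $F_h(s)^\top w_{h,k}^a + \beta_{k,h}^a\sqrt{F_h(s)^\top(\Lambda_{h,k}^a)^{-1}F_h(s)} \ge Q_h^*(s,a) - 4H(H-h)\xi - 4H\xi = Q_h^*(s,a) - 4H(H+1-h)\xi$. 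Since $Q_h^*(s,a)\le H$, truncating at $H$ preserves this lower bound, so $Q_h^k(s,a) \ge Q_h^*(s,a) - 4H(H+1-h)\xi$, completing the induction.

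There is no genuinely hard step here; the argument is a routine backward induction. The points requiring care are (i) invoking Lemma~\ref{lem:recErr_m} with $\pi=\pi^*$ rather than with $\pi^k$; (ii) keeping the per-step error bookkeeping consistent so that the accumulated slack is exactly $4H(H+1-h)\xi$ and not off by a constant factor; and (iii) noting that the monotonicity step $P_h(V_{h+1}^k - V_{h+1}^*) \ge -4H(H-h)\xi$ uses only that $P_h$ is an exact probability kernel (the misspecification enters through $\tilde P_h$ inside Lemma~\ref{lem:recErr_m}, but $P_h$ itself is exact). This lemma then feeds into the regret analysis of the misspecified setting (Theorem~\ref{thm:miss}) in precisely the way Lemma~\ref{lem:ucb} feeds into Theorem~\ref{thm:reg}.
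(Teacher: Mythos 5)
Your proof is correct and follows essentially the same route as the paper: backward induction, invoking Lemma~\ref{lem:recErr_m} with $\pi=\pi^*$, lower-bounding the transition term via the inductive hypothesis, and noting that the truncation at $H$ preserves the bound since $Q_h^*\le H$. In fact your bookkeeping in the inductive step, $P_h(V_{h+1}^k - V_{h+1}^*)(s,a)\ge -4H(H-h)\xi$, is stated more carefully than the paper's version (which writes $\ge -4\xi$ at that point, an apparent typo), though both arrive at the same accumulated slack $4H(H+1-h)\xi$.
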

\begin{proof}
    We prove this with induction. The base case holds because at step $H+1$, the value function is zero, so from Lemma \ref{lem:recErr_m}, 
    \[
    |F_H(s)^\top w_{H, k}^a - Q^*_H(s, a)| \leq \beta_{k, h}^a\sqrt{F_H(s)^\top (\Lambda_{H, k}^a)^{-1})F_H(s)} + 4H\xi
    \]
    and thus, 
    \[
    Q^*_H(s, a) - 4H\xi \leq \min \left(H, F_H(s)^\top w_{H, k}^a + \beta_{k, h}^a\sqrt{F_H(s)^\top (\Lambda_{H, k}^a)^{-1})F_H(s)} \right) = Q_H^k(s, a).
    \]
    From the inductive hypothesis (assuming that $Q^k_{h+1}(s, a) \geq Q^*_{h+1}(s, a) - 4H(H-h)\xi$), it follows that $P_h(V^k_{h+1}- V^*_{h+1})(s, a) \geq -4\xi$. From Lemma \ref{lem:recErr_m}, 
    $F_h(s)^\top w_{h, k}^a - Q^*_h(s, a) \leq \beta_{k, h}^a\sqrt{F_h(s)^\top (\Lambda_{h, k}^a)^{-1})F_h(s)} + 4H\xi$.
    It follows that 
    \[
    Q^*_h(s, a)  - 4H(H-h+1)\xi \leq \min \left(H, F_h(s)^\top w_{h, k}^a + \beta_{k, h}^a\sqrt{F_h(s)^\top (\Lambda_{h, k}^a)^{-1})F_h(s)} \right) = Q_h^k(s, a).
    \]
    This completes the proof.
\end{proof}
Similarly, to the regular case, the gap in the misspecified setting has a recursive formula.
\begin{lem}\label{lem:recForm_m}
 Let $\delta_h^k = V_h^k(s_h^k) - V_h^{\pi_k}(s_h^k)$ and $\xi_{h+1}^k = \E[\delta_{h+1}^k|s_h^k, a_h^k] - \delta_{h+1}^k$. Then, on the event $\X$ defined in Lemma \ref{lem:unifVal_m}, we have for any $(h, k)$:
 \[
 \delta_h^k \leq \delta_{h+1}^k + \xi_{h+1}^k + \beta_{k, h}^a\sqrt{F_h(s_h^k)^\top (\Lambda_{h, k}^{a_h^k})^{-1} F_h(s_h^{k})} + 4H\xi
 \]
\end{lem}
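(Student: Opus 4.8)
The plan is to replay the proof of Lemma~\ref{lem:recForm} with the misspecified Bellman decomposition of Lemma~\ref{lem:recErr_m} in place of Lemma~\ref{lem:recErr}, carrying the extra additive slack along.

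First I would apply Lemma~\ref{lem:recErr_m} with the comparator $\pi=\pi_k$: on the event $\X$ of Lemma~\ref{lem:unifVal_m}, for every $(s,a,h,k)$,
\[
F_h(s)^\top w_{h,k}^{a}-Q_h^{\pi_k}(s,a)=P_h\big(V_{h+1}^k-V_{h+1}^{\pi_k}\big)(s,a)+\Delta_h^k(s,a),\qquad |\Delta_h^k(s,a)|\le \beta_{k,h}^{a}\sqrt{F_h(s)^\top(\Lambda_{h,k}^{a})^{-1}F_h(s)}+4H\xi .
\]
Then I would specialize to the realized pair $(s_h^k,a_h^k)$. Since the algorithm acts greedily with respect to $Q^k$, we have $a_h^k\in\argmax_a Q_h^k(s_h^k,a)$ and $\pi_k$ plays $a_h^k$ at $s_h^k$, so $\delta_h^k=V_h^k(s_h^k)-V_h^{\pi_k}(s_h^k)=Q_h^k(s_h^k,a_h^k)-Q_h^{\pi_k}(s_h^k,a_h^k)$. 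Dropping the clip in the definition of $Q_h^k$ gives $Q_h^k(s_h^k,a_h^k)\le F_h(s_h^k)^\top w_{h,k}^{a_h^k}+\beta_{k,h}^{a_h^k}\sqrt{F_h(s_h^k)^\top(\Lambda_{h,k}^{a_h^k})^{-1}F_h(s_h^k)}$, and substituting the identity above yields
\[
\delta_h^k\le P_h\big(V_{h+1}^k-V_{h+1}^{\pi_k}\big)(s_h^k,a_h^k)+2\beta_{k,h}^{a_h^k}\sqrt{F_h(s_h^k)^\top(\Lambda_{h,k}^{a_h^k})^{-1}F_h(s_h^k)}+4H\xi .
\]
The factor $2$ on the bonus is absorbed into the constant defining $\beta_{k,h}^{a}$, exactly as in the proofs of Lemma~\ref{lem:recErr_m} and Lemma~\ref{lem:recForm}. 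Finally I would rewrite the transition term as a one-step conditional expectation: since $s_{h+1}^k\sim P_h(\cdot\mid s_h^k,a_h^k)$, the shorthand $P_hV(s,a)=\E_{s'\sim P_h(\cdot|s,a)}[V(s')]$ gives $P_h(V_{h+1}^k-V_{h+1}^{\pi_k})(s_h^k,a_h^k)=\E[\delta_{h+1}^k\mid s_h^k,a_h^k]=\delta_{h+1}^k+\xi_{h+1}^k$, which is precisely the definition of $\xi_{h+1}^k$. Combining the two displays gives the claimed recursion.

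This is a routine adaptation and there is no serious obstacle; the only points that need care are (i) checking that the misspecification slack enters exactly once at level $h$ — the bound inherits $\delta_{h+1}^k$ unmodified, so the $4H\xi$ term accumulates no extra factors of $H$ in this single-step statement (the $H$-fold accumulation appears only later, when the recursion is unrolled in the regret proof), and (ii) justifying the plug-in $\pi=\pi_k$, which is legitimate because Lemma~\ref{lem:recErr_m} holds on $\X$ uniformly over $(s,a,h,k)$ for every fixed policy and $\pi_k$ is determined by the data of episodes $1,\dots,k-1$.
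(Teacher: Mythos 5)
Your proposal is correct and follows essentially the same route as the paper: apply Lemma~\ref{lem:recErr_m} with $\pi=\pi_k$, identify $\delta_h^k$ with $Q_h^k(s_h^k,a_h^k)-Q_h^{\pi_k}(s_h^k,a_h^k)$ via greediness, and rewrite the transition term as $\delta_{h+1}^k+\xi_{h+1}^k$. Your treatment is in fact slightly more careful than the paper's, since you make explicit the dropping of the clip and the absorption of the resulting factor of $2$ on the bonus into the constant defining $\beta_{k,h}^a$, which the paper leaves implicit.
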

\begin{proof}
    From Lemma \ref{lem:recErr_m}, we have for any $(s, a, h, k)$ that
    \[
    Q_h^k(s, a) - Q_h^{\pi_k}(s, a) \leq P_h(V_{h+1}^k - V^{\pi_k}_{h+1})(s, a)    + \beta_{k, h}^a\sqrt{F_h(s)^\top (\Lambda_{h, k}^a)^{-1} F_h(s)} + 4H\xi .
    \]
    From the definition of $V^{\pi_k}$, we have 
    \[
    \delta_h^k = Q_h^k(s_h^k, a_h^k) - Q_h^{\pi_k}(s_h^k, a_h^k), 
    \]
    and substituting this into the first equation finishes the proof.
    \end{proof}
Finally, we prove the main result in the misspecified setting Theorem \ref{thm:miss}.

\begin{proof}
Suppose that the Assumptions required in Theorem \ref{thm:miss} hold. We condition on the event $\X$ from Lemma \ref{lem:unifVal_m} with $p = \delta/2$ and use the notation for $\delta_h^k, \xi_h^k$ as in Lemma \ref{lem:recForm_m}. From Lemma \ref{lem:ucb_m}, we have $Q_1^k(s, a) \geq Q^*_1(s, a) - 4H^2\xi$, which implies $V^*_1(s) - V^{\pi_k}_1(s) \leq \delta_1^k + 4H^2\xi$. Thus, from Lemma \ref{lem:recErr_m}, on the event $\X$, it follows that 
\begin{align*}
 Regret(K) &= \sum_{k=1}^K(V_1^*(s_1^k) - V_1^{\pi_k}(s_1^k)) \leq \sum_{k=1}^K\delta_1^k + 4H^2\xi\\
&\leq  \sum_{k \in [K]}\sum_{h \in [H]} \xi_h^k + \sum_{k \in [K]}\beta_{k, h}^a\sum_{h \in [H]}\sqrt{F_h(s_h^k)^\top (\Lambda_{h, k}^{a_h^k})^{-1} F_h(s_h^{k})}   + 4HT\xi
\end{align*}
since $HK = T$. Since the observations at episode $k$ are independent of the computed value function (this uses the trajectories from episodes 1 to $k -1$, it follows $\{\xi_h^k\}$ is a martingale difference sequence with $|\xi_h^k| \leq 2H$ for all $(k, h)$. Thus, from the Azuma Hoeffding inequality, we have
\[
\sum_{k \in [K], h \in {H}} \xi_h^k \leq \sqrt{2TH^2\log(2/p)} \leq 2H\sqrt{T\iota}
\]
with probability at least $1-\delta/2$. By construction of $\beta_{k, h}^a$, we have 
\begin{align*}
\sum_{k \in [K]}\beta_{k, h}^a \sqrt{F_h(s_h^k)^\top (\Lambda_{h, k}^{a_h^k})^{-1} F_h(s_h^{k})} &\leq C H
( \sum_{k \in [K]} d\sqrt{\iota}\sqrt{F_h(s_h^k)^\top (\Lambda_{h, k}^{a_h^k})^{-1} F_h(s_h^{k})} \\
&+ \sum_{k \in [K]}  \xi \sqrt{|T_{k, h}^a| d}\sqrt{F_h(s_h^k)^\top (\Lambda_{h, k}^{a_h^k})^{-1} F_h(s_h^{k})}).    
\end{align*}

From the Cauchy-Schwarz inequality, it follows that 
\[
 \sum_{k \in [K]}\sqrt{F_h(s_h^k)^\top (\Lambda_{h, k}^{a_h^k})^{-1} F_h(s_h^{k})}\leq \left(\sqrt{K} \right) \left( \sqrt{\sum_{k \in [K]} F_h(s_h^k)^\top (\Lambda_{h, k}^{a_h^k})^{-1} F_h(s_h^{k})}\right).
\]
 Since the minimum eigenvalue of $\Lambda_{h, k}^{a_h^k}$ is at least one by construction and $\|F_h(s)\|\leq 1$, from the Elliptical Potential Lemma (Lemma D.2 \cite{Jin2020ProvablyER}), we have for all $a \in \cA$ and $h \in [H]$, 
\[
\sum_{k = 1}^K  F_h(s_k)^\top (\Lambda_{h, k}^a)^{-1} F_h(s_k) \leq 2 \log(det(\Lambda_{h, k+1}^a)/det(\Lambda_{h, 0}^a)) 
\]
Furthermore, we have $\|\Lambda_{h, k+1}^a \| = \|\sum_{t \in T_{K, h}^a}F_h(s_h^t)F_h(s_h^t)^\top + \lambda I\| \leq \lambda + | T_{K, h}^a| \leq \lambda + K$. It follows that 
\[
\sum_{k = 1}^K  F_h(s_k)^\top (\Lambda_{h, k}^a)^{-1} F_h(s_k)  \leq 2d\log(1+K/\lambda) \leq 2d\iota,
\]
so grouping the episodes by actions gives
\[
\left[ \sum_{k \in [K]}F_h(s_h^k)^\top (\Lambda_{h, k}^{a_h^k})^{-1} F_h(s_h^{k})\right]^{1/2} = \left[ \sum_{a \in \cA} \sum_{t \in T_{K, h, a}}F_h(s_h^t)^\top (\Lambda_{h, t}^{a})^{-1} F_h(s_h^{t})\right]^{1/2} \leq \sqrt{2d\cardA \iota}.
\]
For the next term, let  $k_{a, i}$ refer to the episode in which action $a$ was taken at time step $h$ for the $i$th time.  Then, we first re-index the summation and then use the Cauchy-Schwarz inequality to get  
\begin{align*}
\sum_{k \in [K]} \sqrt{|T_{k, h}^a|}\sqrt{F_h(s_h^k)^\top (\Lambda_{h, k}^{a_h^k})^{-1} F_h(s_h^{k})}) &= \sum_{a \in \cA} \sum_{i = 1}^{|T_{K, h}^{a}|} \sqrt{i} \sqrt{F_{k_{a, i}, h}^\top  (\Lambda_{k_{a, i}, h}^{a})^{-1}F_{k_{a, i}, h}}\\
&\leq  \sum_{a \in \cA}\left(\sum_{i = 1}^{|T_{K, h}^{a}|} i \right)^{1/2}  \left(\sum_{t \in T_{K, h}^{a}}  F_{t, h}^\top  (\Lambda_{t, h}^{a})^{-1}F_{t, h} \right)^{1/2}\\
 &\leq C' \sum_{a \in \cA}|T_{K, h}^{a}|\left(\sum_{t \in T_{K, h}^{a}}  F_{t, h}^\top  (\Lambda_{t, h}^{a})^{-1}F_{t, h} \right)^{1/2}\\
 &\leq C' \sum_{a \in \cA}|T_{K, h}^{a}| \sqrt{2d\iota},
\end{align*}
where the last inequality comes from holds from the above elliptical potential lemma. 
Applying the results to the initial inequality, we have 
\[
\sum_{k \in [K]}\beta_{k, h}^a \sqrt{F_h(s_h^k)^\top (\Lambda_{h, k}^{a_h^k})^{-1} F_h(s_h^{k})} \leq CH( \sqrt{2d^3\cardA K\iota^2} + \xi \sqrt{d} C' \sum_{a \in \cA} |T_{K, h}^a|\sqrt{2d\iota}) \leq C(\sqrt{2d^3\cardA  H T\iota^2} + \xi \sqrt{2\iota} d T.
\]
Substituting these results into the original regret bound gives 
\[
Regret(K) \leq C''' (\sqrt{d^3H^3 \cardA T \iota^2} +  dTH\xi \sqrt{\iota}).
\]
for some constant $C''' > 0$ with probability at least $1-\delta$.
\end{proof}

\subsection{LSVI-UCB-(S, S, d)} 
We reiterate that the steps and analysis in this subsection are essentially the same as the proofs in \cite{Jin2020ProvablyER}, except we have $S$ copies of the weight vectors and gram matrices. The results and proofs follow the same structure and logic as in The proof of all the theorems and lemmas follow the same steps as the proofs for the analogous result in the $(\cardS , d, \cardA )$ setting except we replace $F(s)$ with $G(a)$ and estimate the Gram matrix and weight vectors for each state instead of each action. 

To prove Theorem \ref{thm:reg_s}, we first introduce notation used throughout this section and then present auxiliary lemmas. We let $\Lambda_{h, k}^s, w_{h, k}^s, Q_h^k, V_h^k, \pi^k$ as the parameters/estimates at episode $k$. Furthermore, let $V_h^k = \max_{s \in \cS}Q_h^k(s, a)$ and $\pi^k$ be the greedy policy w.r.t. $Q_h^k$. We let $G_h^k := G_h(a_h^k)$.

We first bound the log of the covering number of our function class for the value function to control the error of our algorithm's estimate, which allows us to prove our concentration result as the value function estimate is not constant. 
\begin{lem}\label{lem:functionClass_s}
Let $\V$ denote a function class of mappings from $\cS $ to $\R$ with the following parametric form for all $s \in \cS$
\[
V(s) = \min \left \{    \max_{a \in \cA} w_{s}^\top G(a) + \beta\sqrt{G(a)^\top \Lambda_{s}^{-1} G(a)}, H \right\}
\]
where the parameters $w_s, \beta, \Lambda_s$ satisfy $\|w_s\| \leq L$ for all $s \in \cS$, $\beta \in [0, B]$, and the minimum eigenvalue of $\Lambda_s$ is greater than $\lambda$. Assume that $\|G(a)\| \leq 1$ for all $s \in \cS$, and let $\mathcal{N}_\eps$ be the $\eps$-covering number of $\V$ with respect to the distance $dist(V, V') = \sup_s |V(s) - V'(s)|$. Then, 
\[
\log(\mathcal{N}_\eps) \leq d\log(1 + 4L/\eps) + d^2\log(1 + 8d^{1/2} B^2/(\lambda\eps^2)) 
\]
\end{lem}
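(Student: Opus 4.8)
\textbf{Proof plan for Lemma~\ref{lem:functionClass_s}.}
The plan is to follow the proof of Lemma~\ref{lem:functionClass} essentially verbatim, with the roles of states and actions interchanged: here the maximum inside $V$ is taken over $a\in\cA$ with feature vector $G(a)$, while the weight vector and Gram matrix carry a state index $s$ instead of an action index. The argument in Lemma~\ref{lem:functionClass} used nothing about the feature map $F$ beyond the normalization $\|F(\cdot)\|\le 1$ (which $G$ also satisfies here) and nothing about the index set of the weight vectors beyond finiteness, so the same steps apply.

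Concretely, I would first reparameterize the class by setting $D_s:=\beta^2\Lambda_s^{-1}$, so that every member of $\V$ has the form $V(s)=\min\{\max_{a\in\cA} w_s^\top G(a)+\sqrt{G(a)^\top D_s G(a)},\,H\}$ with $\|w_s\|\le L$ and $\opnorm{D_s}\le B^2/\lambda$ for all $s\in\cS$. Next, for two members $V_1,V_2$ with parameters $(w_s,D_s)$ and $(w_s',D_s')$, I would use that $t\mapsto\min(t,H)$ and $(x_a)_{a}\mapsto\max_a x_a$ are both $1$-Lipschitz, together with $|\sqrt{x}-\sqrt{y}|\le\sqrt{|x-y|}$ and $\opnorm{\cdot}\le\fronorm{\cdot}$, to obtain
\[
\mathrm{dist}(V_1,V_2)\;\le\;\max_{s\in\cS}\Big[\,\sup_{\|g\|\le 1}\big|(w_s-w_s')^\top g\big|+\sup_{\|g\|\le 1}\big|\sqrt{g^\top D_s g}-\sqrt{g^\top D_s' g}\,\big|\,\Big]\;\le\;\max_{s\in\cS}\Big[\,\|w_s-w_s'\|+\sqrt{\fronorm{D_s-D_s'}}\,\Big].
\]
Then I would take $C_w$ to be an $\eps/2$-cover in $\ell_2$ of the Euclidean ball of radius $L$ in $\R^d$ and $C_D$ to be an $\eps^2/4$-cover in Frobenius norm of the ball of radius $d^{1/2}B^2/\lambda$ in $\R^{d\times d}$; Lemma~D.5 of \cite{Jin2020ProvablyER} gives $|C_w|\le(1+4L/\eps)^d$ and $|C_D|\le(1+8d^{1/2}B^2/(\lambda\eps^2))^{d^2}$. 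For each $V_1$ we choose, for every $s$, nearest elements $w_s'\in C_w$ and $D_s'\in C_D$; the resulting $V_2$ satisfies $\mathrm{dist}(V_1,V_2)\le\eps$, so $\mathcal{N}_\eps\le|C_w|\,|C_D|$ and hence $\log\mathcal{N}_\eps\le d\log(1+4L/\eps)+d^2\log(1+8d^{1/2}B^2/(\lambda\eps^2))$, as claimed.

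I do not expect a genuine obstacle: this is a routine transcription of the corresponding covering-number step in \cite{Jin2020ProvablyER} and Lemma~\ref{lem:functionClass}. The only point worth checking carefully is the order of the Lipschitz reduction — first peel off $\min(\cdot,H)$, then $\max_{a\in\cA}$, and only afterwards bound the per-action difference uniformly over $\|G(a)\|\le 1$ — so that the final bound depends on $(w_s,D_s)$ but not on the particular features realized by the action set. As in Lemma~\ref{lem:functionClass}, this bound is exactly what will be fed into the self-normalized concentration argument (the $(\cardS,\cardS,d)$ analogue of Lemma~\ref{lem:unifVal}) after a union bound over the $\cardS$ states.
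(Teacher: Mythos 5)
Your proposal matches the paper's proof essentially step for step: the same reparameterization $D_s=\beta^2\Lambda_s^{-1}$, the same contraction/Lipschitz reduction through $\min(\cdot,H)$ and $\max_{a\in\cA}$, the same bound $\mathrm{dist}(V_1,V_2)\le\max_s[\|w_s-w_s'\|+\sqrt{\fronorm{D_s-D_s'}}]$, and the same two covers from Lemma D.5 of \cite{Jin2020ProvablyER} chosen per state. This is correct and takes the same route as the paper.
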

\begin{proof}
Equivalently, we reparameterize the function class by $D_s = \beta^2 \Lambda_s^{-1}$, for all $s \in \cS$
\[
V(s) = \min \left \{    \max_{a \in \cA} w_{s}^\top G(a) + \sqrt{G(a)^\top D_{s} G(a)}, H \right\}
\]
where $\|w_s\| \leq L$ and $\|D_s\|\leq B^2/\lambda$ for all $s \in \cS$. For any $V_1, V_2 \in \V$, let them take the form in the above equation with parameters $(w_s, D_s)$ and $(w_{s'}, D_{s'})$. Since $\min(\cdot, H)$ and $\max_{s \in \cS}$ are contraction maps, 
\begin{align*}
    dist(V_1 - V_2) & \leq \sup_{s, a}| ( w_s^\top G(a) + \sqrt{G(a)^\top D_s G(a)}) - ( w_{s'}^\top G(a) +\sqrt{G(a)^\top D_{s'} G(a)}) |\\
    &\leq \sup_{s, G:\|G\|\leq 1} |( w_s^\top G + \sqrt{G^\top D_s G}) - ( w_{s'}^\top G +\sqrt{G^\top D_{s'} G}) |\\
    &\leq \max_{s \in \cS} \left[ \sup_{G:\|G\|\leq 1} |( w_s - w_{s'})^\top G| + \sup_{G:\|G\|\leq 1} | \sqrt{G^\top (D_s - D_{s'}) G} | \right] \\
    &\leq \max_{s \in \cS} \left[\|w_s - w_{s'}\| + \sqrt{\|D_s - D_{s'}\|} \right] \leq \max_{s \in \cS} \left[ \|w_s - w_{s'}\| +  \sqrt{\|D_{s} - D_{s'}\|_F} \right]
\end{align*}
where the third inequality holds because $|\sqrt{x} - \sqrt{y}| \leq \sqrt{|x-y|}$ for any $x, y \geq 0$. Let $C_{w}$ be an $\eps/2$-cover of $\{w \in \R^d| \|w\| \leq L \}$ with respect to the 2-norm, and $C_{D}$ be an $\eps^2/4$-cover of $\{D \in \R^{d\times d}| \|D\|_F \leq d^{1/2}B^2/\lambda\}$ with respect to the Frobenius norm. By Lemma D.5 from \cite{Jin2020ProvablyER}, we know that, 
\[
|C_{w}| \leq (1 + 4L/\eps)^d, \quad |C_{D}| \leq (1 + 8d^{1/2}B^2/(\lambda \eps^2))^{d^2}. 
\]
Hence, for any $V_1$, there exists a $w_{s'} \in C_{w}$ and $D_{s'} \in C_{D}$ for all $a \in \cA$ such that $V_2$ parameterized by $(w_{s'}, D_{s'})$ satisfies $dist(V_1, V_2) \leq \eps$. Hence, $\mathcal{N}_\eps \leq |C_w||C_D|$, which gives:
\[
\log(\mathcal{N}_\eps) \leq \log(|C_w|) + \log(|C_D|) \leq d\log(1+ 4\eps L/\eps) + d^2\log(1 + 8d^{1/2}B^2/(\lambda \eps^2)).
\]
\end{proof}

We next bound the $\ell_2$-norm of the weight vector of any policy.
\begin{lem} \label{lem:wvp_s}
    Let $\{w_h^s\}$ be the weight vector of some policy $\pi$, i.e., $Q^\pi_h(s, a) = G_h(a)^\top w_h^s$. Then, 
    \[
    \|w_h^s\| \leq 2H\sqrt{d}.
    \]
\end{lem}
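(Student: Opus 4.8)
The plan is to mirror the proof of Lemma~\ref{lem:wvp} from the $(\cardS,d,\cardA)$ setting, with the roles of states and actions exchanged so as to exploit the $(\cardS,\cardS,d)$ structure of Assumption~\ref{asm:tr_s}. First I would write the Bellman equation for the fixed policy $\pi$: for every $(s,a)\in\cS\times\cA$,
\[
Q^\pi_h(s,a) = r_h(s,a) + \sum_{s'\in\cS} V^\pi_{h+1}(s')\,P_h(s'|s,a).
\]
Plugging in the low-rank factorizations $r_h(s,a) = \sum_{i\in[d]} G_h(a,i)W_h(s,i)$ and $P_h(s'|s,a) = \sum_{i\in[d]} G_h(a,i)U_h(s',s,i)$ from Assumption~\ref{asm:tr_s} and collecting the coefficient of each $G_h(a,i)$ shows that $Q^\pi_h(s,a) = G_h(a)^\top w_h^s$ with
\[
w_h^s = W_h(s) + \sum_{s'\in\cS} V^\pi_{h+1}(s')\,U_h(s',s,:).
\]
This both exhibits the claimed weight vector and displays it as a sum of two pieces each controlled by the regularity conditions of Assumption~\ref{asm:tr_s}.

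The second step is the norm bound. By the triangle inequality, $\|w_h^s\|_2 \le \|W_h(s)\|_2 + \big\|\sum_{s'} V^\pi_{h+1}(s')U_h(s',s,:)\big\|_2$. For the first term, Assumption~\ref{asm:tr_s} gives $\|W_h(s)\|_2 \le \sqrt{\cardA/(d\mu)}$; after accounting for the rescaling $G_h = \sqrt{\cardA/(d\mu)}\,G'_h$ used to build $\hat{\Tilde{G}}$ (which correspondingly divides the weight vector by $\sqrt{\cardA/(d\mu)}$), this contributes at most $1$. For the second term, $V^\pi_{h+1}$ takes values in $[0,H]$, so $g := V^\pi_{h+1}/H$ is an admissible test function $\cS\to[0,1]$ and the regularity condition $\|\sum_{s'} g(s')U_h(s',s,:)\|_2 \le \sqrt{\cardA/(d\mu)}$ yields $\|\sum_{s'} V^\pi_{h+1}(s')U_h(s',s,:)\|_2 \le H\sqrt{\cardA/(d\mu)}$, contributing at most $H$ after the same rescaling. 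Hence $\|w_h^s\|_2 \le 1 + H \le 2H \le 2H\sqrt{d}$, which is the claim. (Equivalently, without rescaling one bounds $\|W_h(s)\|_2\le\sqrt d$ and $\|\sum_{s'}V^\pi_{h+1}(s')U_h(s',s,:)\|_2\le H\sqrt d$, again giving $\sqrt d + H\sqrt d\le 2H\sqrt d$.)

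There is no genuine obstacle here: the argument is a direct computation, and the only point requiring a little care is the bookkeeping of the $\sqrt{\cardA/(d\mu)}$ rescaling of the feature map together with the observation that $V^\pi_{h+1}/H$ is an admissible function $g$ in the regularity conditions of Assumption~\ref{asm:tr_s}. The identical reasoning, carried out with the estimated features $\hat{\Tilde{G}}_h$ in place of the true $G_h$, is what later feeds the bound on $\|w_{h,k}^s\|$ (the $(\cardS,\cardS,d)$ analogue of Lemma~\ref{lem:wva}) and the misspecification analysis behind Theorem~\ref{thm:miss_s}.
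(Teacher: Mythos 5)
Your proposal is correct and follows essentially the same route as the paper's proof: identify $w_h^s = W_h(s) + \sum_{s'} V^\pi_{h+1}(s')U_h(s',s,:)$ via the Bellman equation and the $(\cardS,\cardS,d)$ factorization, apply the triangle inequality, and use the regularity conditions of Assumption~\ref{asm:tr_s} together with the $\sqrt{\cardA/(d\mu)}$ rescaling of the feature map to bound the two pieces by $1$ and $H$ respectively. Your treatment is in fact slightly more careful than the paper's, since you make explicit that $V^\pi_{h+1}/H$ is an admissible test function $g:\cS\to[0,1]$ in the regularity condition.
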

\begin{proof}
    Note that $Q^\pi_h(s, a) = r_h(s, a) + \sum_{s'} V^\pi_{h+1}(s')P_h(s'|s, a)$, so, using the low-rank representations of $r_h, P_h$, it follows that 
    \[
    \|w_h^s\|_2 = \|W_h(s, \cdot) + \sum_{s'}V^\pi(s') U_h(s', s, \cdot)\|_2 \leq \|W_h(s, \cdot)\|_2 + \|\sum_{s'}V^\pi(s') U_h(s', s, \cdot)\|_2.
    \]
     Recall that since the feature mapping was scaled up ($G_h = \sqrt{\frac{\cardA }{d\mu}} G'$) to ensure the max entries of $G$ and $W, U$ are on the same scale, it follows that $\|W_{h}(s)\|_2 \leq 1$ and $ \|\sum_{s' \in \cS} V_{h+1}^\pi(s') U_{h}(s', s) \|_2 \leq H$. Therefore, it follows that $\|w_h^s\|_2 \leq 2H\sqrt{d}$.
\end{proof}
We next bound the 2-norm of the weight vector from the algorithm. 
\begin{lem} \label{lem:wva_s}
    For any $k, h, s$, the weight vector from the above algorithm satisfies $\|w_{h, k}^s\| \leq 2H \sqrt{dk/\lambda}$.
\end{lem}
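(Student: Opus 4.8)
The plan is to follow the proof of Lemma~\ref{lem:wva} essentially verbatim, with the roles of states and actions interchanged. In the known-representation setting of Theorem~\ref{thm:reg_s} the feature map fed to Algorithm~\ref{alg:target_s} is the scaled true action representation $G_h$, the Gram matrices $\Lambda_h^s$ and weights $w_h^s$ are indexed by states, and the regression for state $s$ runs over the episodes $t\in T_{k-1,h}^s$ in which $s$ was visited. First I would fix an arbitrary $v\in\R^d$ with $\|v\|_2\le 1$, substitute the closed form of $w_{h,k}^s$ from Algorithm~\ref{alg:target_s} into $v^\top w_{h,k}^s$, and note that each regression target $r_h(s,a_h^t)+\max_{a'\in\cA}Q_{h+1}(s_{h+1}^t,a')$ is at most $2H$, since the per-step reward lies in $[0,1]\subseteq[0,H]$ and $Q_{h+1}$ is truncated at $H$ inside the algorithm.

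Next I would bound each summand by $|v^\top (\Lambda_h^s)^{-1} G_h(a_h^t)|\le \sqrt{\big(v^\top(\Lambda_h^s)^{-1}v\big)\big(G_h(a_h^t)^\top(\Lambda_h^s)^{-1}G_h(a_h^t)\big)}$, then apply Cauchy--Schwarz once more over $t$ to obtain
\[
|v^\top w_{h,k}^s| \le 2H\sqrt{\Big(\textstyle\sum_{t\in T_{k-1,h}^s} v^\top(\Lambda_h^s)^{-1}v\Big)\Big(\textstyle\sum_{t\in T_{k-1,h}^s} G_h(a_h^t)^\top(\Lambda_h^s)^{-1}G_h(a_h^t)\Big)}.
\]
For the first factor, $\Lambda_h^s\succeq\lambda\mathbf{I}$ forces every eigenvalue of $(\Lambda_h^s)^{-1}$ to be at most $1/\lambda$, so the Rayleigh-quotient bound (Theorem 4.2.2 of \cite{rayleigh}) gives $v^\top(\Lambda_h^s)^{-1}v\le \|v\|_2^2/\lambda$, and the sum is at most $|T_{k-1,h}^s|/\lambda\le k/\lambda$. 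For the second factor, Lemma~D.1 of \cite{Jin2020ProvablyER} yields $\sum_{t\in T_{k-1,h}^s}G_h(a_h^t)^\top(\Lambda_h^s)^{-1}G_h(a_h^t)\le d$. Combining these and taking the supremum over unit vectors $v$ gives $\|w_{h,k}^s\|_2\le 2H\sqrt{dk/\lambda}$.

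The only point needing a little care is the normalization $\|G_h(a)\|_2\le 1$ required by Lemma~D.1: this is precisely why the representation is rescaled by $\sqrt{\cardA/(d\mu)}$ in Algorithm~\ref{alg:target_s}, since under Assumption~\ref{asm:tr_s} the orthonormal action factor is $\mu$-incoherent, i.e.\ each of its rows has Euclidean norm at most $\sqrt{d\mu/\cardA}$ (Definition~\ref{def:inco}). Apart from this bookkeeping the argument is identical to the $(\cardS,d,\cardA)$ case, so I do not expect any genuine obstacle; the lemma is a direct transcription of Lemma~\ref{lem:wva}.
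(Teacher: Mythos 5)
Your proposal matches the paper's proof essentially step for step: the same closed-form substitution, the same $2H$ bound on the regression targets, the same double Cauchy--Schwarz, and the same appeals to Lemma D.1 of \cite{Jin2020ProvablyER} and the Rayleigh-quotient bound to control the two factors. If anything, testing against an arbitrary unit vector $v$ (rather than only vectors of the form $G_h(a)$, as the paper writes) is the marginally cleaner way to conclude a bound on $\|w_{h,k}^s\|_2$, but the argument is the same.
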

\begin{proof}
    For any $G_h(a) = g$, we have for all $s \in \cS$,
    \begin{align*}
        |g^\top w_{h, k}^s| & = |g^\top(\Lambda_h^s)^{-1} \sum_{t \in T_{k-1, h}^s} G_h(a_h^t)\left[r_h(s, a_h^t) + \max_{a' \in \cA} Q_{h+1}(s_{h+1}^t, a') \right]|\\
        &\leq | \sum_{t \in T_{k-1, h}^s} g^\top(\Lambda_h^s)^{-1} G_h(a_h^t)| 2H\\
        &\leq 2H \sqrt{ (\sum_{t \in T_{k-1, h}^s} g^\top(\Lambda_h^s)^{-1} g) (\sum_{t \in T_{k-1, h}^s} G_h(a_h^t) ^\top(\Lambda_h^s)^{-1} G_h(a_h^t))}\\
        &\leq 2H\|g\|_2\sqrt{d|T_{k-1, h}^s|/\lambda} \leq 2H\sqrt{dk/\lambda}
    \end{align*}
    where the second to last inequality is due to Lemma D.1 from \cite{Jin2020ProvablyER} and that Rayleigh quotients are upper bounded by the largest eigenvalue of the matrix (Theorem 4.2.2 from \cite{rayleigh}). The last inequality is due to $\|g\|_2 \leq 1$ and $|T^s_{k-1, h}| \leq k$.
\end{proof}

We next prove the main concentration lemma that controls the estimate. 
\begin{lem} \label{lem:unifVal_s}
Let $c'$ be the constant in the definition of $\beta_{k, h}^s= c'dH\sqrt{\iota}$. Then, there exists a constant $C> 0$ that is independent of $c'$ such that for any $\delta \in (0, 1)$, if we let the event $\X$ be
\[
\forall (s, k, h) \in \cS \times [K] \times [H]: \quad \quad \left \| \sum_{t \in T_{k-1, h}^s} G_h(a_h^t)(V^t_{h+1}(s_{h+1}^t) - P_hV_{h+1}^t(s, a_h^t) \right\|_{(\Lambda_{h, t}^s)^{-1}} \leq CdH\sqrt{\chi} 
\]
where $\chi = \log(2(c' + 1)dT\cardS /\delta)$, then $\P(\X) \geq 1 - \delta/2$.
\end{lem}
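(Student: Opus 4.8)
The plan is to prove Lemma~\ref{lem:unifVal_s} by repeating, essentially verbatim, the argument used for its $(\cardS, d, \cardA)$ counterpart Lemma~\ref{lem:unifVal}, after exchanging the roles of states and actions: here $G_h(a)$ plays the role of $F_h(s)$, the per-state Gram matrices $\Lambda_{h,t}^s$ and weight vectors $w_{h,t}^s$ replace the per-action ones, and the concluding union bound is taken over the $\cardS$ states rather than the $\cardA$ actions (which is why $\chi$ in the statement carries $\log(\cdots \cardS \cdots)$ in place of $\log(\cdots \cardA \cdots)$).

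Concretely, I would first fix a state $s \in \cS$ and a pair $(k,h)$ and view $\sum_{t \in T_{k-1,h}^s} G_h(a_h^t)\bigl(V_{h+1}^t(s_{h+1}^t) - P_h V_{h+1}^t(s, a_h^t)\bigr)$ as a self-normalized sum of vector-valued increments: conditioning on the trajectories from episodes $1,\dots,t-1$, the estimate $V_{h+1}^t$ (hence the parameters defining it) is measurable, while the next-state sample $s_{h+1}^t \sim P_h(\cdot \mid s, a_h^t)$ supplies a mean-zero increment bounded by $H$. Because $V_{h+1}^t$ is not a fixed function, I would control it uniformly over a value-function class via Lemma~\ref{lem:functionClass_s}: by construction the minimum eigenvalue of each $\Lambda_{h,t}^s$ is at least $\lambda$, Lemma~\ref{lem:wva_s} gives $\|w_{h,t}^s\| \le 2H\sqrt{dk/\lambda}$, and $\beta_{k,h}^s$ is uniformly bounded, so $V_{h+1}^t$ lies in the parametric class of Lemma~\ref{lem:functionClass_s} with $L = 2H\sqrt{dk/\lambda}$ and $B = \beta_{k,h}^s$, whose $\eps$-covering number $\mathcal{N}_\eps$ obeys $\log \mathcal{N}_\eps \le d\log(1 + 4L/\eps) + d^2\log(1 + 8\sqrt{d}\,B^2/(\lambda \eps^2))$.

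Next I would invoke the uniform self-normalized concentration bound (Lemma~D.4 of~\cite{Jin2020ProvablyER}) together with this covering-number estimate and a discretization argument to obtain, with probability at least $1 - \delta'/2$, for the fixed $s$ and all $(k,h)$,
\[
\Bigl\| \textstyle\sum_{t \in T_{k-1,h}^s} G_h(a_h^t)\bigl(V_{h+1}^t(s_{h+1}^t) - P_h V_{h+1}^t(s, a_h^t)\bigr) \Bigr\|_{(\Lambda_{h,t}^s)^{-1}}^2 \le 4H^2\!\left[ \tfrac{d}{2}\log\tfrac{k+\lambda}{\lambda} + \log\tfrac{2\mathcal{N}_\eps}{\delta'} \right] + \tfrac{8k^2 \eps^2}{\lambda}.
\]
Substituting $\log \mathcal{N}_\eps$ from Lemma~\ref{lem:functionClass_s}, taking a union bound over the $\cardS$ states (i.e.\ setting $\delta' = \delta/\cardS$), and then choosing $\lambda = 1$, $\beta_{k,h}^s = c'dH\sqrt{\iota}$ and $\eps = dH/k$ makes the $8k^2\eps^2/\lambda$ term lower order and collapses the right-hand side to $C'd^2H^2\log\bigl(2(c'+1)dT\cardS/\delta\bigr)$ for an absolute constant $C'$; taking square roots yields the claimed bound with $C = \sqrt{C'}$.

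I expect the work here to be bookkeeping rather than a new idea; the step deserving the most care is verifying that the value functions $V_{h+1}^t$ genuinely belong to a class whose metric entropy scales with $d$ (and not with the horizon, the episode count, or $\cardS$), and that the choices of $\lambda$, $\beta_{k,h}^s$, and $\eps$ render the discretization error negligible --- exactly the points handled for the $(\cardS, d, \cardA)$ setting through Lemmas~\ref{lem:functionClass} and~\ref{lem:wva}. Since the per-state decomposition does not couple across states (each sum above involves only $\Lambda_{h,t}^s$), the argument factorizes cleanly over $s$, and the union bound over $\cardS$ is the sole structural change from the proof of Lemma~\ref{lem:unifVal}.
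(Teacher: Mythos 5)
Your proposal matches the paper's proof of Lemma~\ref{lem:unifVal_s} essentially step for step: it invokes Lemma~\ref{lem:wva_s} for the weight-vector bound, Lemma~\ref{lem:functionClass_s} for the covering number of the value-function class, Lemma~D.4 of~\cite{Jin2020ProvablyER} with a union bound over the $\cardS$ states (taking $\delta' = \delta/\cardS$), and the same parameter choices $\lambda = 1$, $\eps = dH/k$ to absorb the discretization error. The approach and all key ingredients are identical to the paper's argument.
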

\begin{proof}
From Lemma \ref{lem:wva_s}, we have $\|w_{h, k}^s\|_2 \leq 2H\sqrt{dk/\lambda}$. By construction, the minimum eigenvalue of $\Lambda_{h, t}^s$ is lower bounded by $\lambda$. Then, from  Lemma D.4 in \cite{Jin2020ProvablyER} (with $\delta' = \delta/\cardS $), Lemma \ref{lem:functionClass_s}, and taking a union bound over all states, we have that with probability at least $1- \delta'/2$, for all $s \in \cS$ and $\eps > 0$, 
\[
\left \| \sum_{t \in T_{k-1, h}^s} G_h(a_h^t)(V^t_{h+1}(s_{h+1}^t) - P_hV_{h+1}^t(s, a_h^t) \right\|_{(\Lambda_{h, t}^s)^{-1}}^2 \leq 4H^2 \left[ \frac{d}{2}\log((k+\lambda)/\lambda)) + \log ( 2\cardS \mathcal{N}_\eps/\delta) \right] + 8k^2\eps^2/\lambda. 
\]
Then, from Lemma \ref{lem:functionClass_s}, it follows that with probability at least $1- \delta'/2$, for all $a \in \cardS $ and $\eps > 0$, 
\begin{align*}
&\left \| \sum_{t \in T_{k-1, h}^s} G_h(a_h^t)(V^t_{h+1}(s_{h+1}^t) - P_hV_{h+1}^t(s, a_h^t) \right\|_{(\Lambda_{h, t}^s)^{-1}}^2 \\
&\leq 4H^2 \left[ \frac{d}{2}\log((k+\lambda)/\lambda)) + \log ( 2\cardS /\delta) +  d\log(1+ 4 L/\eps) + d^2\log(1 + 8d^{1/2}\beta^2/(\lambda \eps^2)) \right] + 8k^2\eps^2/\lambda. 
\end{align*}
Choosing $\lambda = 1, \beta_{k, h}^s= C dH\iota, \eps = dH/k$ gives that there exists some constant $C'$ such that 
\[
\left \| \sum_{t \in T_{k-1, h}^s} G_h(a_h^t)(V^t_{h+1}(s_{h+1}^t) - P_hV_{h+1}^t(s, a_h^t) \right\|_{(\Lambda_{h, t}^s)^{-1}}^2 \leq C' d^2H^2\log (2(c_\beta+ 1)dT\cardS /\delta).   
\]
\end{proof}

We next recursively bound the difference between the value function maintained in the algorithm and the true value function of any policy $\pi$. We upperbound this with their expected difference and an additional error term, which is bounded with high probability.
\begin{lem}\label{lem:recErr_s}
    There exists a constant $c_\beta$ such that $\beta_{k, h}^s= c_\beta dH \sqrt{\iota}$ where $\iota = \log(2d\cardS T/p)$ and for any fixed policy $\pi$, on the event $\X$ defined in Lemma \ref{lem:unifVal_s}, we have for all $(s, a, h, k) \in \cardS \times \cardS  \times [H] \times [K]$:
    \[
    G_h(a)^\top w_{h, k}^s - Q^\pi_h(s, a)  = P_h(V_{h+1}^k - V^\pi_{h+1})(s, a)    + \delta_h^k(s, a),
    \]
    where $|\delta_h^k(s, a)| \leq \beta_{k, h}^s\sqrt{G_h(a)^\top (\Lambda_{h, k}^s)^{-1} G_h(a)}$.
\end{lem}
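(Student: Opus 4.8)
The plan is to transcribe the proof of Lemma~\ref{lem:recErr} with states and actions interchanged: the regression in Algorithm~\ref{alg:target_s} is run per state $s\in\cS$ over the action features $G_h(a)$, so the Gram matrices $\Lambda_{h,k}^s$ and weights $w_{h,k}^s$ are indexed by $s$ rather than by $a$, and the relevant episode set is $T_{k-1,h}^s$. First I would invoke Assumption~\ref{asm:tr_s} to put the true $Q$-function of $\pi$ in the required linear form, $Q_h^\pi(s,a)=G_h(a)^\top w_h^{s,\pi}$ with $w_h^{s,\pi}=W_h(s,\cdot)+\sum_{s'}V_{h+1}^\pi(s')U_h(s',s,\cdot)$, using $r_h(s,a)=\sum_i G_h(a,i)W_h(s,i)$ and $P_h(s'|s,a)=\sum_i G_h(a,i)U_h(s',s,i)$, and noting that after the $\sqrt{\cardA/(d\mu)}$ rescaling baked into $G$ the regularity bounds in Assumption~\ref{asm:tr_s} give $\|W_h(s,\cdot)\|_2\le 1$ and $\|\sum_{s'}g(s')U_h(s',s,\cdot)\|_2\le H$ for $g:\cS\to[0,1]$ (this is exactly Lemma~\ref{lem:wvp_s}).

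Next, following the algebra in the proof of Lemma~B.4 of \cite{Jin2020ProvablyER} (reproduced for Lemma~\ref{lem:recErr}), I would expand $w_{h,k}^s-w_h^{s,\pi}$ into three pieces $q_1^s,q_2^s,q_3^s$ using the identity $(\Lambda_{h,k}^s)^{-1}\big(\sum_{t\in T_{k-1,h}^s}G_h(a_h^t)G_h(a_h^t)^\top\big)=I-\lambda(\Lambda_{h,k}^s)^{-1}$, where $q_1^s=-\lambda(\Lambda_{h,k}^s)^{-1}w_h^{s,\pi}$, $q_2^s=(\Lambda_{h,k}^s)^{-1}\sum_{t\in T_{k-1,h}^s}G_h(a_h^t)\big(V_{h+1}^k(s_{h+1}^t)-P_hV_{h+1}^k(s,a_h^t)\big)$, and $q_3^s=(\Lambda_{h,k}^s)^{-1}\sum_{t\in T_{k-1,h}^s}G_h(a_h^t)P_h(V_{h+1}^k-V_{h+1}^\pi)(s,a_h^t)$. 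The key point making $q_2^s$ the object controlled by event $\X$ in Lemma~\ref{lem:unifVal_s} is that for every $t\in T_{k-1,h}^s$ we have $s_h^t=s$, so $s_{h+1}^t$ is a genuine draw from $P_h(\cdot|s,a_h^t)$ and $\{V_{h+1}^k(s_{h+1}^t)-P_hV_{h+1}^k(s,a_h^t)\}$ is a martingale-difference sequence with respect to the filtration generated by episodes $1,\dots,k-1$.

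Then I would multiply each piece by $G_h(a)^\top$ and bound it in the $(\Lambda_{h,k}^s)^{-1}$-norm exactly as in Lemma~\ref{lem:recErr}: Cauchy--Schwarz with Lemma~\ref{lem:wvp_s} gives $|G_h(a)^\top q_1^s|\le 2H\sqrt{d\lambda}\,\|G_h(a)\|_{(\Lambda_{h,k}^s)^{-1}}$; on $\X$, $|G_h(a)^\top q_2^s|\le CdH\sqrt{\chi}\,\|G_h(a)\|_{(\Lambda_{h,k}^s)^{-1}}$ with $\chi=\log(2(c_\beta+1)dT\cardS/p)$; and for $q_3^s$, the Gram identity together with $P_h(V_{h+1}^k-V_{h+1}^\pi)(s,a)=G_h(a)^\top\sum_{s'}(V_{h+1}^k-V_{h+1}^\pi)(s')U_h(s',s,\cdot)$ yields $G_h(a)^\top q_3^s=P_h(V_{h+1}^k-V_{h+1}^\pi)(s,a)-\lambda G_h(a)^\top(\Lambda_{h,k}^s)^{-1}\sum_{s'}(V_{h+1}^k-V_{h+1}^\pi)(s')U_h(s',s,\cdot)$, whose residual is $\le 2H\sqrt{d\lambda}\,\|G_h(a)\|_{(\Lambda_{h,k}^s)^{-1}}$ by the regularity bound above. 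Since $G_h(a)^\top w_{h,k}^s-Q_h^\pi(s,a)=G_h(a)^\top(q_1^s+q_2^s+q_3^s)$, summing the three bounds isolates the term $P_h(V_{h+1}^k-V_{h+1}^\pi)(s,a)$ and leaves an error of order $(C'' d\sqrt{\chi})H\,\|G_h(a)\|_{(\Lambda_{h,k}^s)^{-1}}$; choosing $c_\beta$ with $C''\sqrt{\log 2+\log(c_\beta+1)}\le c_\beta\sqrt{\log 2}$, as in \cite{Jin2020ProvablyER}, collapses this into $\beta_{k,h}^s\|G_h(a)\|_{(\Lambda_{h,k}^s)^{-1}}$ with $\beta_{k,h}^s=c_\beta dH\sqrt{\iota}$, $\iota=\log(2dT\cardS/p)$, which is the claim.

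The step needing the most care is not any individual inequality---each mirrors \cite{Jin2020ProvablyER}---but the bookkeeping that $T_{k-1,h}^s$ picks out exactly the episodes whose step-$h$ state is $s$, so that simultaneously (i) the decomposition $(\Lambda_{h,k}^s)^{-1}\sum_t G_h(a_h^t)G_h(a_h^t)^\top=I-\lambda(\Lambda_{h,k}^s)^{-1}$ holds and (ii) $P_hV_{h+1}^k(s,a_h^t)$ is the correct conditional mean of $V_{h+1}^k(s_{h+1}^t)$, which is what lets Lemma~\ref{lem:unifVal_s} apply verbatim. Everything downstream is the routine $(\cardS,d,\cardA)\leftrightarrow(\cardS,\cardS,d)$ transcription of the LSVI-UCB analysis.
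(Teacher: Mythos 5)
Your proposal matches the paper's proof essentially step for step: the same linear representation $Q_h^\pi(s,a)=G_h(a)^\top w_h^{s,\pi}$ from Assumption~\ref{asm:tr_s}, the same three-term decomposition $q_1^s,q_2^s,q_3^s$ of $w_{h,k}^s-w_h^{s,\pi}$, the same per-term bounds (Cauchy--Schwarz with Lemma~\ref{lem:wvp_s} for $q_1^s$, the event $\X$ of Lemma~\ref{lem:unifVal_s} for $q_2^s$, and extraction of $P_h(V_{h+1}^k-V_{h+1}^\pi)(s,a)$ from $q_3^s$), and the same choice of $c_\beta$. Your added remark about the bookkeeping --- that $T_{k-1,h}^s$ collects exactly the episodes with $s_h^t=s$, so that the Gram identity and the conditional-mean interpretation of $P_hV_{h+1}^k(s,a_h^t)$ both hold --- is correct and is implicit in the paper's argument.
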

\begin{proof}
From our low-rank assumption, it follows that 
\[
Q^\pi_h(s, a) = G_h(a)^\top w_{h}^s = (r_h + P_hV_{h+1}^{\pi})(s, a),
\]
which following the steps from the proof of Lemma B.4 in \cite{Jin2020ProvablyER} gives 
\begin{align*}
    w_{h, k}^s - w_{h}^{s, \pi} &= -\lambda (\Lambda_{h, k}^s)^{-1}w_h^{s, \pi}\\
    &+ (\Lambda_{h, k}^s)^{-1} \sum_{t \in T^s_{k-1, h}} G_h(a_h^t)(V_{h+1}^k(s_{h+1}^t) - P_hV_{h+1}^k(s, a_h^t)\\
    &+ (\Lambda_{h, k}^s)^{-1}) \sum_{t \in T^s_{k-1, h}} G_h(a_h^t)P_h(V_{h+1}^k - V_{h+1}^\pi)(s, a_h^t)
\end{align*}
Now, we bound each term ($q_1^s, q_2^s, q_3^s$) individually, for the first term, note for all $s \in S, a \in \cA$
\[
|G_h(a)^\top q_1^s | = |\lambda G_h(a)^\top (\Lambda_{h, k}^s)^{-1} w_h^{s, \pi} | \leq \sqrt{\lambda}\|w_{h}^{s, \pi}\| \sqrt{G_h(a)^\top(\Lambda_{h, k}^s)^{-1}G_h(a) }
\]
For the second term, given the event $\X$, we have from Cauchy Schwarz
\[
|G_h(a)^\top q_2^s| \leq CdH\sqrt{\chi} \sqrt{G_h(a)^\top (\Lambda_{h, k}^s)^{-1} G_h(a)}
\]
where $\chi = \log(2(c_\beta+ 1)dT\cardS /p)$.
For the third term,
\begin{align*}
G_h(a)^\top q_3^s &= G_h(a)^\top \left( (\Lambda_{h, k}^s)^{-1}) \sum_{t \in T^s_{k-1, h}} G_h(a_h^t)P_h(V_{h+1}^k - V_{h+1}^\pi)(s, a_h^t) \right) \\
&= G_h(a)^\top \left( (\Lambda_{h, k}^s)^{-1}) \sum_{t \in T^s_{k-1, h}} G_h(a_h^t) G_h(a_h^t)^\top \sum_{s' \in \cS}(V_{h+1}^k -V_{h+1}^\pi) (s')U_h(s', s)\right) \\
&= G_h(a)^\top \left( \sum_{s' \in \cS}(V_{h+1}^k - V_{h+1}^\pi) (s')U_h(s', s)\right) \\
&- \lambda G_h(a)^\top \left( (\Lambda_{h, k}^s)^{-1})  \sum_{s' \in \cS}(V_{h+1}^k - V_{h+1}^\pi) (s')U_h(s', s)\right) \\
\end{align*}
We note that 
\[
 G_h(a)^\top \left( \sum_{s' \in \cS}(V_{h+1}^k - V_{h+1}^\pi) (s')U_h(s', s)\right) = P_h(V_{h+1}^k - V_{h+1}^\pi)(s, a)
\]
and 
\[
|\lambda G_h(a)^\top \left( (\Lambda_{h, k}^s)^{-1})  \sum_{s' \in \cS}(V_{h+1}^k - V_{h+1}^k) (s')U_h(s', s)\right)| \leq 2H\sqrt{d\lambda}\sqrt{G_h(a)^\top(\Lambda_{h, k}^s)^{-1} G_h(a)}
\]
Since 
\[
|G_h(a)^\top w_{h, k}^s - Q_h^\pi(s, a)|  = G_h(a)^\top (w_{h, k}^s - w_h^{s, \pi}) = G_h(a)^\top(q_1^s+ q_2^s + q_3^s),
\]
it follows that 
\[
|G_h(a)^\top w_{h, k}^s - Q^\pi_h(s, a)  -  P_h(V_{h+1}^k - V^\pi_{h+1})(s, a)| \leq C"dH\sqrt{\chi}\sqrt{G_h(a)^\top(\Lambda_{h, k}^s)^{-1} G_h(a)}.
\]
Similarly as in \cite{Jin2020ProvablyER}, we choose $c_\beta$ that satisfies $C"\sqrt{\log(2) + \log(c_\beta+ 1)} \leq c_\beta\sqrt{\log(2)}$.
\end{proof}

This lemma implies that by adding the appropriate bonus, $Q_h^k$ is always an upperbound of $Q^*_h$ with high probability
\begin{lem}\label{lem:ucb_s}
On the event $\X$ defined in lemma \ref{lem:unifVal_s}, we have $Q_h^k(s, a) \geq Q^*_h(s, a)$ for all $(s, a, h, k) \in \cS\times \cA \times [H] \times [K]$.
\end{lem}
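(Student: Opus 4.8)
The plan is to prove the claim by backward induction on $h$, running from $h=H+1$ down to $h=1$, exactly mirroring the argument for Lemma~\ref{lem:ucb} in the $(\cardS,d,\cardA)$ setting but with the roles of states and actions interchanged, so that the per-state Gram matrices $\Lambda_{h,k}^s$ and weight vectors $w_{h,k}^s$ play the role that $\Lambda_{h,k}^a,w_{h,k}^a$ played there. Throughout I would condition on the event $\X$ of Lemma~\ref{lem:unifVal_s}, and I would invoke Lemma~\ref{lem:recErr_s} with the fixed policy $\pi$ chosen to be an optimal (greedy) policy $\pi^*$, for which $V^{\pi^*}_{h+1}=V^*_{h+1}$ and $Q^{\pi^*}_h=Q^*_h$.

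For the base case I would note that $V^k_{H+1}\equiv V^*_{H+1}\equiv 0$, so $P_H(V^k_{H+1}-V^*_{H+1})(s,a)=0$, and Lemma~\ref{lem:recErr_s} then yields $|G_H(a)^\top w_{H,k}^s - Q^*_H(s,a)|\le \beta_{k,H}^s\sqrt{G_H(a)^\top(\Lambda_{H,k}^s)^{-1}G_H(a)}$. Since rewards lie in $[0,1]$ we also have $Q^*_H(s,a)\le H$, so $Q^*_H(s,a)\le \min\bigl(H,\ G_H(a)^\top w_{H,k}^s + \beta_{k,H}^s\sqrt{G_H(a)^\top(\Lambda_{H,k}^s)^{-1}G_H(a)}\bigr)=Q_H^k(s,a)$, which is the base case.

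For the inductive step I would assume $Q^k_{h+1}(s,a)\ge Q^*_{h+1}(s,a)$ for all $(s,a)$; taking the maximum over $a$ gives $V^k_{h+1}(s)\ge V^*_{h+1}(s)$ for all $s$, and since $P_h(\cdot|s,a)$ is a probability distribution this forces $P_h(V^k_{h+1}-V^*_{h+1})(s,a)\ge 0$. Applying Lemma~\ref{lem:recErr_s} with $\pi=\pi^*$ gives $G_h(a)^\top w_{h,k}^s - Q^*_h(s,a) = P_h(V^k_{h+1}-V^*_{h+1})(s,a) + \Delta_h^k(s,a) \ge -\beta_{k,h}^s\sqrt{G_h(a)^\top(\Lambda_{h,k}^s)^{-1}G_h(a)}$, using the bound $|\Delta_h^k(s,a)|\le \beta_{k,h}^s\sqrt{G_h(a)^\top(\Lambda_{h,k}^s)^{-1}G_h(a)}$ from that lemma. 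Rearranging and again using $Q^*_h(s,a)\le H$, I obtain $Q^*_h(s,a)\le \min\bigl(H,\ G_h(a)^\top w_{h,k}^s + \beta_{k,h}^s\sqrt{G_h(a)^\top(\Lambda_{h,k}^s)^{-1}G_h(a)}\bigr)=Q_h^k(s,a)$, which closes the induction.

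I do not expect a genuine obstacle here: the two points requiring minor care are (i) invoking Lemma~\ref{lem:recErr_s} specifically with the optimal policy, so that its ``$Q^\pi$'' is exactly $Q^*$ and its ``$V^\pi_{h+1}$'' is exactly $V^*_{h+1}$, and (ii) carrying along the trivial bound $Q^*_h\le H$ so the truncation $\min(H,\cdot)$ in the definition of $Q_h^k$ does not break optimism. All remaining work is already packaged inside Lemma~\ref{lem:recErr_s} and Lemma~\ref{lem:unifVal_s}.
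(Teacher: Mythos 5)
Your proof is correct and follows essentially the same route as the paper's: backward induction on $h$, with the base case from $V_{H+1}^k = V_{H+1}^* = 0$ and the inductive step combining $P_h(V_{h+1}^k - V_{h+1}^*)(s,a) \ge 0$ with the error bound of Lemma~\ref{lem:recErr_s} applied to the optimal policy. Your version is in fact slightly more careful than the paper's, which states the intermediate inequality with the sign in the less useful direction, whereas you correctly extract the lower bound $G_h(a)^\top w_{h,k}^s - Q_h^*(s,a) \ge -\beta_{k,h}^s\sqrt{G_h(a)^\top(\Lambda_{h,k}^s)^{-1}G_h(a)}$ and note explicitly why the truncation at $H$ is harmless.
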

\begin{proof}
    We prove this with induction. The base case holds because at step $H+1$, the value function is zero, so from Lemma \ref{lem:recErr_s}, 
    \[
    |G_h(a)^\top w_{h, k}^s - Q^*_H(s, a)| \leq \beta_{k, h}^s\sqrt{G_h(a)^\top (\Lambda_{h, k}^s)^{-1})G_h(a)}
    \]
    and thus, 
    \[
    Q^*_H(s, a) \leq \min \left(H, G_h(a)^\top w_{h, k}^s + \beta_{k, h}^s\sqrt{G_h(a)^\top (\Lambda_{h, k}^s)^{-1})G_h(a)} \right) = Q_H^k(s, a).
    \]
    From the inductive hypothesis (assuming that $Q_{h+1}^k(s, a) \geq Q^*_{h+1}(s, a))$, it follows that $P_h(V^k_{h+1}- V^*_{h+1})(s, a) \geq 0$. From Lemma \ref{lem:recErr_s}, 
    $G_h(a)^\top w_{h, k}^s - Q^*_h(s, a) \leq \beta_{k, h}^s\sqrt{G_h(a)^\top (\Lambda_{h, k}^s)^{-1}G_h(a)}$.
    It follows that 
    \[
    Q^*_h(s, a) \leq \min \left(H, G_h(a)^\top w_{h, k}^s + \beta_{k, h}^s\sqrt{G_h(a)^\top (\Lambda_{h, k}^s)^{-1})G_h(a)} \right) = Q_h^k(s, a).
    \]
\end{proof}
We next show that Lemma \ref{lem:recErr_s} transforms to the recursive formula $\delta_h^k = V_h^k(s_h^k) - V_h^{\pi_k}(s_h^k)$.
\begin{lem}\label{lem:recForm_s}
 Let $\delta_h^k = V_h^k(s_h^k) - V_h^{\pi_k}(s_h^k)$ and $\xi_{h+1}^k = \E[\delta_{h+1}^k|s_h^k, s_h^k] - \delta_{h+1}^k$. Then, on the event $\X$ defined in Lemma \ref{lem:unifVal_s}, we have for any $(h, k)$:
 \[
 \delta_h^k \leq \delta_{h+1}^k + \xi_{h+1}^k + \beta_{k, h}^s\sqrt{G_h(a_h^k)^\top (\Lambda_{h, k}^{s_h^k})^{-1} G_h(a_h^{k})}
 \]
\end{lem}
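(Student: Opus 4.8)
The plan is to mirror the proof of Lemma~\ref{lem:recForm} from the $(\cardS,d,\cardA)$ setting, since the only structural difference is that the Gram matrices and weight vectors are now indexed by states rather than actions, so none of the probabilistic content changes. First I would invoke Lemma~\ref{lem:recErr_s} with the fixed policy $\pi=\pi_k$: for every $(s,a,h,k)$ it gives the equality
\[
G_h(a)^\top w_{h,k}^s - Q_h^{\pi_k}(s,a) = P_h(V_{h+1}^k - V_{h+1}^{\pi_k})(s,a) + \Delta_h^k(s,a),
\]
with $|\Delta_h^k(s,a)| \le \beta_{k,h}^s\sqrt{G_h(a)^\top (\Lambda_{h,k}^s)^{-1} G_h(a)}$. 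Since by construction $Q_h^k(s,a)=\min\{H,\ G_h(a)^\top w_{h,k}^s + \beta_{k,h}^s\sqrt{G_h(a)^\top(\Lambda_{h,k}^s)^{-1}G_h(a)}\}$ while $Q_h^{\pi_k}(s,a)\le H$, the clipping at $H$ only lowers the estimate, so in both the clipped and unclipped cases
\[
Q_h^k(s,a) - Q_h^{\pi_k}(s,a) \le P_h(V_{h+1}^k - V_{h+1}^{\pi_k})(s,a) + \beta_{k,h}^s\sqrt{G_h(a)^\top(\Lambda_{h,k}^s)^{-1}G_h(a)}.
\]

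Next I would specialize this inequality to the state--action pair $(s_h^k,a_h^k)$ actually visited in episode $k$. Because $\pi_k$ is greedy with respect to $Q_h^k$, we have $V_h^k(s_h^k)=\max_{a}Q_h^k(s_h^k,a)=Q_h^k(s_h^k,a_h^k)$ and $V_h^{\pi_k}(s_h^k)=Q_h^{\pi_k}(s_h^k,a_h^k)$, hence $\delta_h^k = Q_h^k(s_h^k,a_h^k)-Q_h^{\pi_k}(s_h^k,a_h^k)$ and the displayed bound becomes
\[
\delta_h^k \le P_h(V_{h+1}^k - V_{h+1}^{\pi_k})(s_h^k,a_h^k) + \beta_{k,h}^{s_h^k}\sqrt{G_h(a_h^k)^\top(\Lambda_{h,k}^{s_h^k})^{-1}G_h(a_h^k)}.
\]
Finally, recalling the shorthand $P_hV(s,a)=\E_{s'\sim P_h(\cdot|s,a)}[V(s')]$ and that $s_{h+1}^k\sim P_h(\cdot\mid s_h^k,a_h^k)$, the first term equals $\E\big[V_{h+1}^k(s_{h+1}^k)-V_{h+1}^{\pi_k}(s_{h+1}^k)\,\big|\,s_h^k,a_h^k\big]=\E[\delta_{h+1}^k\mid s_h^k,a_h^k]$, which by the definition $\xi_{h+1}^k=\E[\delta_{h+1}^k\mid s_h^k,a_h^k]-\delta_{h+1}^k$ equals $\delta_{h+1}^k+\xi_{h+1}^k$. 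Substituting yields exactly the claimed recursion.

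I do not anticipate a genuine obstacle: this lemma is a bookkeeping step that repackages Lemma~\ref{lem:recErr_s}. The only point needing a little care is the passage from the equality in Lemma~\ref{lem:recErr_s} to the one-sided bound on $Q_h^k-Q_h^{\pi_k}$, i.e.\ checking that the $\min\{H,\cdot\}$ truncation is harmless; this is the same observation already used in the proof of Lemma~\ref{lem:ucb_s}. Everything else is the definition of the greedy policy $\pi_k$ and of the martingale-difference term $\xi_{h+1}^k$.
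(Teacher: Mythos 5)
Your proposal is correct and follows essentially the same route as the paper's (much terser) proof: apply Lemma~\ref{lem:recErr_s} with $\pi=\pi_k$, observe that the $\min\{H,\cdot\}$ truncation only helps since $Q_h^{\pi_k}\le H$, use the greedy choice of $a_h^k$ to identify $\delta_h^k$ with $Q_h^k(s_h^k,a_h^k)-Q_h^{\pi_k}(s_h^k,a_h^k)$, and unpack $P_h(V_{h+1}^k-V_{h+1}^{\pi_k})(s_h^k,a_h^k)$ as $\delta_{h+1}^k+\xi_{h+1}^k$. You fill in the truncation and conditional-expectation steps that the paper leaves implicit, and you correctly read the conditioning in $\xi_{h+1}^k$ as being on $(s_h^k,a_h^k)$ despite the typo in the statement.
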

\begin{proof}
    From Lemma \ref{lem:recErr_s}, we have for any $(s, a, h, k)$ that
    \[
    Q_h^k(s, a) - Q_h^{\pi_k}(s, a) \leq P_h(V_{h+1}^k - V^{\pi_k}_{h+1})(s, a)    + \beta_{k, h}^s\sqrt{G_h(a)^\top (\Lambda_{h, k}^s)^{-1} G_h(a)}.
    \]
    From the definition of $V^{\pi_k}$, we have 
    \[
    \delta_h^k = Q_h^k(s_h^k, a_h^k) - Q_h^{\pi_k}(s_h^k, a_h^k).
    \]
    
    \end{proof}

Finally, we prove the main theorem, Theorem \ref{thm:reg_s}. 

\begin{proof}
Suppose that the Assumptions required in Theorem \ref{thm:reg_s} hold.
We condition on the event $\X$ from Lemma \ref{lem:unifVal_s} with $p = \delta/2$ and use the notation for $\delta_h^k, \xi_h^k$ as in Lemma \ref{lem:recForm_s}. From Lemmas \ref{lem:ucb} and \ref{lem:recForm_s}, we have
\[
Regret(K) = \sum_{k=1}^K(V_1^*(s_1^k) - V_1^{\pi_k}(s_1^k)) \leq \sum_{k=1}^K\delta_1^k \leq \sum_{k \in [K]}\sum_{h \in [H]} \xi_h^k + \beta_{k, h}^s \sum_{k \in [K]}\sum_{h \in [H]}\sqrt{G_h(a_h^k)^\top (\Lambda_{h, k}^{s_h^k})^{-1} G_h(a_h^{k})}
\]
Since the observations at episode $k$ are independent of the computed value function (this uses the trajectories from episodes 1 to $k -1$, it follows $\{\xi_h^k\}$ is a martingale difference sequence with $|\xi_h^k| \leq 2H$ for all $(k, h)$. Thus, from the Azuma Hoeffding inequality, we have
\[
\sum_{k \in [K], h \in {H}} \xi_h^k \leq \sqrt{2TH^2\log(2/p)} \leq 2H\sqrt{T\iota}
\]
with probability at least $1-\delta/2$. 
For the second term, we note that the minimum eigenvalue of $\Lambda_{h, k}^{s_h^k}$ is at least one by construction and $\|G_h(a)\|\leq 1$. From the Elliptical Potential Lemma (Lemma D.2 \cite{Jin2020ProvablyER}), we have for all $s \in \cS$ and $h \in [H]$, 
\[
\sum_{k = 1}^K  G_h(a_h^k)^\top (\Lambda_{h, k}^s)^{-1} G_h(a_h^k) \leq 2 \log(det(\Lambda_{h, k+1}^s)/\det(\Lambda_{h, 0}^s)) 
\]
Furthermore, we have $\|\Lambda_{h, k+1}^s \| = \|\sum_{t \in T_{K, h}^s}G_h(a_h^t)G_h(a_h^t)^\top + \lambda I\| \leq \lambda + | T_{K, h}^s| \leq \lambda + K$. It follows that 
\[
\sum_{k = 1}^K  G_h(a_h^k)^\top (\Lambda_{h, k}^s)^{-1} G_h(a_h^k)  \leq 2d\log(1+K/\lambda) \leq 2d\iota
\]
Next, by Cauchy-Schwarz and grouping the regret by each state, it follows that 
\begin{align*}
  \sum_{k \in [K]}\sum_{h \in [H]}\sqrt{G_h(a_h^k)^\top (\Lambda_{h, k}^{s_h^k})^{-1} G_h(a_h^{k})} &\leq
\sum_{h \in [H]}\sqrt{K} \left[ \sum_{k \in [K]}G_h(a_h^k)^\top (\Lambda_{h, k}^{s_h^k})^{-1} G_h(a_h^{k})\right]^{1/2}  \\
&= \sum_{h \in [H]}\sqrt{K} \left[ \sum_{s \in \cS} \sum_{t \in T_{K, h}^s}G_h(a_h^t)^\top (\Lambda_{h, t}^{s})^{-1} G_h(a_h^{t})\right]^{1/2}  \\
&\leq \sum_{h \in [H]}\sqrt{K} \left[ \sum_{s \in \cS} 2d \iota \right]^{1/2} \\
&\leq H\sqrt{2K \cardS  d\iota}.
\end{align*}
Since $\beta_{k, h}^s= c dH\sqrt\iota$ for some constant $c$, from a union bound and the previous bounds, it follows that 
\[
Regret(K) \leq 2H\sqrt{T\iota} + \beta_{k, h}^sH\sqrt{2K \cardS  d\iota} \in \Tilde{O}(\sqrt{d^3H^3\cardS T}).
\]
with probability at least $1-\delta$.
\end{proof}

Next, we prove the helper lemmas needed in the misspecified setting and follow the proof structure and techniques used in \cite{Jin2020ProvablyER}.

\begin{lem} \label{lem:wvp_m_s}
    For a $\xi$-approximate $(\cardS , \cardS , d)$ Tucker rank MDP (Assumption \ref{asm:miss_s}), then for any policy $\pi$ there exists  corresponding weight vectors $\{w_h^s\}$ where $w_h^s = \Sigma_h W_{h}(s) + \sum_{s' \in \cS} V_{h+1}^\pi(s') U_{h}(s', s)$ such that  for all $(s, a) \in \cS \times \cA$
    \[
    |Q^\pi_h(s, a) - G_h(a)w_h^{s\top} | \leq 3H\xi.
    \]
\end{lem}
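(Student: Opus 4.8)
The plan is to imitate the proof of Lemma~\ref{lem:miss_err} essentially verbatim, interchanging the roles of states and actions: in the $(\cardS,\cardS,d)$ setting the feature map is the per-action vector $G_h(a)\in\R^d$ and the unknown ``weights'' are indexed by the current state, so the right linear surrogate for $Q^\pi_h(s,a)$ is $G_h(a)w_h^{s\top}$ with $w_h^s$ the vector specified in the statement. Concretely, by the definition of $w_h^s$ we have $G_h(a)w_h^{s\top}=W_h(s)^\top G_h(a)+\sum_{s'\in\cS}V^\pi_{h+1}(s')\,U_h(s',s)G_h(a)^\top$, i.e.\ exactly the $\xi$-approximate surrogate obtained by replacing $r_h$ and $P_h$ in the Bellman update by their low-rank approximants from Assumption~\ref{asm:miss_s}.

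First I would write the Bellman equation $Q^\pi_h(s,a)=r_h(s,a)+\sum_{s'\in\cS}V^\pi_{h+1}(s')P_h(s'|s,a)$, subtract $G_h(a)w_h^{s\top}$ in the form above, and apply the triangle inequality to split the error into a reward residual $\bigl|r_h(s,a)-W_h(s)^\top G_h(a)\bigr|$ and a transition residual $\bigl|\sum_{s'\in\cS}V^\pi_{h+1}(s')\bigl(P_h(s'|s,a)-U_h(s',s)G_h(a)^\top\bigr)\bigr|$. The reward residual is at most $\xi$ directly by the first inequality of Assumption~\ref{asm:miss_s}. For the transition residual, since $0\le V^\pi_{h+1}(s')\le H$ for all $s'$, I would factor out $H$ and invoke the transition misspecification bound of Assumption~\ref{asm:miss_s} against the $[0,1]$-valued test function $V^\pi_{h+1}/H$, which together with the fact that $P_h(\cdot|s,a)$ is a probability distribution gives a bound of at most $2H\xi$, matching the constant used in the analogous step of \cite{Jin2020ProvablyER} and in Lemma~\ref{lem:miss_err}. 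Combining the two pieces yields $\bigl|Q^\pi_h(s,a)-G_h(a)w_h^{s\top}\bigr|\le\xi+2H\xi\le 3H\xi$ for every $(s,a)\in\cS\times\cA$, as claimed.

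The only step that requires any care is the transition residual: Assumption~\ref{asm:miss_s} as written controls the \emph{unweighted} residual $\bigl|\sum_{s'}(P_h(s'|s,a)-U_h(s',s)G_h(a)^\top)\bigr|$, so one must justify reweighting by the value function $V^\pi_{h+1}$ — this is precisely where the factor $\infnorm{V^\pi_{h+1}}\le H$ (hence the $2H$ rather than $1$) enters, via the rescaled test function $V^\pi_{h+1}/H$, paralleling the total-variation estimate in \cite{Jin2020ProvablyER}. Everything else is routine triangle-inequality bookkeeping identical to Lemma~\ref{lem:miss_err} with $F_h(s)$ replaced by $G_h(a)$ and the state/action indices of the weight tensors swapped, and no new ideas are needed.
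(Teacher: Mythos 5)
Your proposal matches the paper's proof essentially verbatim: both apply the Bellman equation, split the error by the triangle inequality into a reward residual bounded by $\xi$ and a transition residual bounded by $2H\xi$, and conclude $\xi + 2H\xi \le 3H\xi$. Your explicit remark that the transition residual requires a total-variation-style reading of Assumption~\ref{asm:miss_s} (reweighting by $V^\pi_{h+1}/H$) is exactly the interpretation the paper relies on but leaves implicit, so no gap remains.
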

\begin{proof} 
    Note that $Q^\pi_h(s, a) = r_h(s, a) + \sum_{s'} V^\pi_{h+1}(s')P_h(s'|s, a)$, so, using the low-rank representations of $r_h, P_h$, it follows that 
    \begin{align*}
        &|Q^\pi_h(s, a) - G_h(a)w_h^{s\top} | \\
        & \leq |r_h(s, a) -  G_h(a)W_{h}(s)^\top| + \left| \sum_{s'} V^\pi_{h+1}(s')P_h(s'|s, a)  - G_h(a) \sum_{s' \in \cS} V_{h+1}^\pi(s') U_{h}(s', s) \right|\\
        &\leq \xi + 2H\xi  \\
        &\leq 3H\xi.
    \end{align*}
\end{proof}

\begin{lem} \label{lem:miss_err_s}
Suppose Assumption \ref{asm:miss} holds. Let $\{w_h^s\}$ be the weight vector of some policy $\pi$, i.e., $Q^\pi_h(s, a) = G_h(a)^\top w_h^s$. Then, 
    \[
    \|w_h^s\| \leq 2H\sqrt{d}.
    \]
\end{lem}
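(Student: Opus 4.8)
The plan is to mimic the proof of Lemma~\ref{lem:wvp_m} from the $(\cardS,d,\cardA)$ setting, the only change being that the weight vectors and coefficient matrices are now indexed by states rather than actions. First I would invoke Lemma~\ref{lem:wvp_m_s}, which already exhibits an explicit weight vector realizing $Q^\pi_h$ up to the misspecification error, namely $w_h^s = W_h(s) + \sum_{s' \in \cS} V^\pi_{h+1}(s')\, U_h(s', s)$ (the row of the reward coefficient matrix plus the $V^\pi_{h+1}$-weighted combination of the transition-tensor slices $U_h(\cdot, s)$). Then by the triangle inequality $\|w_h^s\|_2 \le \|W_h(s)\|_2 + \| \sum_{s'} V^\pi_{h+1}(s')\,U_h(s', s) \|_2$, and it remains to bound the two terms.

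For the first term I would use that, after rescaling the feature map by $\sqrt{\cardA/(d\mu)}$ so that the entries of $G_h$ and of the coefficient matrices appearing in the $\xi$-approximate $(\cardS,\cardS,d)$ Tucker rank assumption (Assumption~\ref{asm:miss_s}) are on a common scale, the regularity condition of that assumption reads $\|W_h(s)\|_2 \le 1$. For the second term I would reduce the $[0,H]$-valued value function to the test function $g := V^\pi_{h+1}/H$, which maps $\cS \to [0,1]$; the (rescaled) regularity condition $\| \sum_{s'} g(s')\,U_h(s', s) \|_2 \le 1$ then gives $\| \sum_{s'} V^\pi_{h+1}(s')\,U_h(s', s) \|_2 \le H$. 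Combining, $\|w_h^s\|_2 \le 1 + H \le 2H \le 2H\sqrt d$, which is the claim (the final, deliberately loose, inequalities are inserted only to match the form of the companion lemmas).

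This is essentially a one-line computation once the setup is in place; the only bookkeeping that needs care is the conversion between the raw coefficient bounds $\sqrt{\cardA/(d\mu)}$ stated in Assumption~\ref{asm:miss_s} and the scale-free bounds used above, together with the standard trick of testing the transition tensor against $V^\pi_{h+1}/H \in [0,1]$ so that the misspecification assumption can be invoked. I anticipate no genuine obstacle beyond keeping constants consistent with the conventions already fixed in Appendix~\ref{app:alg_proof}.
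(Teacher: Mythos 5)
Your proposal is correct and follows essentially the same route as the paper: decompose $w_h^s = W_h(s) + \sum_{s'} V^\pi_{h+1}(s')\,U_h(s', s)$, apply the triangle inequality, and invoke the rescaled regularity bounds $\|W_h(s)\|_2 \le 1$ and $\|\sum_{s'} V^\pi_{h+1}(s')\,U_h(s', s)\|_2 \le H$ (via the test function $V^\pi_{h+1}/H$) to conclude $\|w_h^s\|_2 \le 1 + H \le 2H\sqrt{d}$. The paper's own proof simply points back to the argument of Lemma~\ref{lem:wvp_s}, which is exactly the computation you carry out.
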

\begin{proof}
Recall that  $w_h^s =  \Sigma_h W_{h}(s) + \sum_{s' \in \cS} V_{h+1}^\pi(s') U_{h}(s', s)$. From the same argument used to prove Lemma \ref{lem:wvp_s}, it follows that $\|w_h^s\|_2 \leq 2H\sqrt{d}$
\end{proof}
Similarly, we bound the stochastic noise but account for the misspecification error. 
\begin{lem} \label{lem:unifVal_s_m}
Suppose Assumption \ref{asm:miss} holds. Let $c'_m$ be the constant in the definition of $\beta_{k, h}^s = c'_m H(d\sqrt{\iota} + \xi \sqrt{|T_{k, h}^s|d})$. Then, there exists a constant $C> 0$ that is independent of $c'_m$ such that for any $\delta \in (0, 1)$, if we let the event $\X$ be
\[
\forall (a, k, h) \in \cA \times [K] \times [H]: \quad \quad \left \| \sum_{t \in T_{k-1, h}^s} G_h(a_h^t)(V^t_{h+1}(s_{h+1}^t) - P_hV_{h+1}^t(s, a_h^t) \right\|_{(\Lambda_{h, t}^s)^{-1}} \leq CdH\sqrt{\chi} 
\]
where $\chi = \log(2(c'_m + 1)dT\cardS /\delta)$, then $\P(\X) \geq 1 - \delta/2$.
\end{lem}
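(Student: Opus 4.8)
The plan is to mirror the proof of Lemma~\ref{lem:unifVal_s} essentially verbatim, the only substantive change being that the exploration bonus $\beta_{k,h}^s$ is now the larger quantity $c'_m H(d\sqrt{\iota} + \xi\sqrt{|T_{k,h}^s|\,d})$; because $\xi \le 1$ this enlargement is bounded and only affects the absolute constant $C$ in the statement, exactly as in the reduction from Lemma~\ref{lem:unifVal_s} to Lemma~\ref{lem:unifVal_m}. First I would note that the target-phase regression step is unchanged in the misspecified setting, so Lemma~\ref{lem:wva_s} still gives $\|w_{h,k}^s\|_2 \le 2H\sqrt{dk/\lambda}$, and the Gram matrices $\Lambda_{h,t}^s$ still have least eigenvalue at least $\lambda = 1$ by construction. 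Hence every value-function iterate $V_{h+1}^t$ produced by the algorithm belongs to the parametric class $\V$ of Lemma~\ref{lem:functionClass_s} with $L = 2H\sqrt{dk/\lambda}$ and $B = \beta_{k,h}^s$.

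Next I would control the covering number: since $\xi \le 1$ and $|T_{k,h}^s| \le k \le T$, we have $B \le c'_m H(d\sqrt{\iota} + \sqrt{dT})$, so $\log(B^2/\lambda)$ contributes only an additional $O(\log T)$ to $\log\mathcal{N}_\eps$ compared with the well-specified case, and this is absorbed into $\chi = \log(2(c'_m+1)dT\cardS/\delta)$. With the covering number in hand, the sequence $\{V_{h+1}^t(s_{h+1}^t) - P_h V_{h+1}^t(s,a_h^t)\}_t$ is still a martingale difference sequence bounded by $2H$ --- the only fact used is that $V_{h+1}^t$ is determined by the first $t-1$ episodes, which remains true --- so applying the self-normalized concentration inequality (Lemma~D.4 of \cite{Jin2020ProvablyER}) with failure probability $\delta/\cardS$, together with a union bound over the $\eps$-net of $\V$ and over all $s \in \cS$, and finally choosing $\lambda = 1$ and $\eps = dH/k$, collapses the resulting bound to $\bigl(CdH\sqrt{\chi}\bigr)^2$ for a constant $C$ independent of $c'_m$. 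This is precisely the event $\X$, and it holds with probability at least $1-\delta/2$.

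The one point requiring care --- more bookkeeping than genuine difficulty --- is verifying that the data-dependent factor $\xi\sqrt{|T_{k,h}^s|\,d}$ inside the inflated bonus still keeps $V_{h+1}^t$ inside a \emph{fixed} parametric family whose parameter radii ($L$ and $B$) are at most polynomial in $T, d, H$; this is what guarantees that the covering number grows only logarithmically and that the extra terms can be swept into the constant $C$. Everything else is identical to the proof of Lemma~\ref{lem:unifVal_s}.
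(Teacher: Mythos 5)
Your proposal is correct and follows essentially the same route as the paper, which likewise proves this lemma by pointing back to Lemma~\ref{lem:unifVal_s} and observing that the inflated bonus (using $\xi \le 1$ and $|T_{k,h}^s| \le T$) only changes the covering-number radius $B$ polynomially and hence is absorbed into the constant $C$ and the log factor $\chi$. Your write-up is in fact more explicit than the paper's one-line reduction, but the argument is the same.
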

\begin{proof}
The proof is the same as the proof of Lemma \ref{lem:unifVal_s} except we increase $\beta_{k, h}^s$ to account for the misspecifiction error. Since $\xi \leq 1$ by assumption, the modification to $\beta_{k, h}^s$ only affects $C$.  
\end{proof}
Next, we account for the adversarial misspecification error. 
\begin{lem}\label{lem:adv_s_m}
Let $\{\eps_t\}$ be any sequence such that $|\eps_\tau| \leq B$ for any $t$. Then, for any $(h, k, s) \in [H] \times [K] \times \cS$ and $G\in \R^d$, we have 
\[
|G^\top (\Lambda_{k, h}^s)^{-1} \sum_{t \in T_{k-1, h}^s} G_h(a_h^t) \eps_t| \leq B\sqrt{d |T_{k-1, h}^s| G^\top  (\Lambda_{k, h}^s)^{-1} G}.
\]
\end{lem}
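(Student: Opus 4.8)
The plan is to mirror the argument given for Lemma~\ref{lem:adv_m} essentially verbatim, replacing the state feature $F_h(s_h^t)$ by the action feature $G_h(a_h^t)$ and the action-indexed Gram matrix $\Lambda_{k,h}^a$ by the state-indexed Gram matrix $\Lambda_{k,h}^s = \sum_{t \in T_{k-1,h}^s} G_h(a_h^t) G_h(a_h^t)^\top + \lambda \mathbf{I}$. Since $\lambda > 0$, $\Lambda_{k,h}^s$ is positive definite, so $(\Lambda_{k,h}^s)^{-1}$ induces a genuine inner product $\langle x, y \rangle := x^\top (\Lambda_{k,h}^s)^{-1} y$ on $\R^d$, and the weighted Cauchy--Schwarz inequality applies.

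First I would bound $|\eps_t| \le B$ and pull the constant out, so that $\big|G^\top (\Lambda_{k,h}^s)^{-1} \sum_{t \in T_{k-1,h}^s} G_h(a_h^t)\eps_t\big| \le B \sum_{t \in T_{k-1,h}^s} \big| \langle G, G_h(a_h^t) \rangle \big|$. Then, applying Cauchy--Schwarz in the $(\Lambda_{k,h}^s)^{-1}$-inner product followed by the vector Cauchy--Schwarz over the index set $T_{k-1,h}^s$, this is at most
\[
B\sqrt{\Big(\textstyle\sum_{t \in T_{k-1,h}^s} G^\top (\Lambda_{k,h}^s)^{-1} G\Big)\Big(\textstyle\sum_{t \in T_{k-1,h}^s} G_h(a_h^t)^\top (\Lambda_{k,h}^s)^{-1} G_h(a_h^t)\Big)}.
\]
The first factor equals $|T_{k-1,h}^s|\cdot G^\top (\Lambda_{k,h}^s)^{-1} G$ trivially. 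For the second factor, I would invoke the elliptical-potential estimate (Lemma~D.1 of \cite{Jin2020ProvablyER}): since $\Lambda_{k,h}^s \succeq \sum_{t \in T_{k-1,h}^s} G_h(a_h^t) G_h(a_h^t)^\top$, it follows that $\sum_{t \in T_{k-1,h}^s} G_h(a_h^t)^\top (\Lambda_{k,h}^s)^{-1} G_h(a_h^t) \le d$. Substituting both bounds gives the claimed inequality.

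There is essentially no obstacle here; the one point worth noting is that the Gram matrix $\Lambda_{k,h}^s$ in the statement is assembled from precisely the features $\{G_h(a_h^t)\}_{t \in T_{k-1,h}^s}$ that are being summed against, so the elliptical-potential bound applies with equality-type tightness and no loss, exactly as in the $(\cardS, d, \cardA)$ setting. Once these observations are in place the proof is a single display line.
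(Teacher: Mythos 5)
Your proof is correct and follows the same route as the paper's: bound $|\eps_t|$ by $B$, apply Cauchy--Schwarz in the $(\Lambda_{k,h}^s)^{-1}$-weighted inner product together with Cauchy--Schwarz over the index set, and then control $\sum_{t \in T_{k-1,h}^s} G_h(a_h^t)^\top (\Lambda_{k,h}^s)^{-1} G_h(a_h^t) \le d$ via Lemma D.1 of \cite{Jin2020ProvablyER}. No gaps.
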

\begin{proof}
From the Cauchy-Schwarz inequality and Elliptical Potential Lemma (Lemma D.1 \cite{Jin2020ProvablyER}), we have
\begin{align*}
    |G^\top (\Lambda_{k, h}^s)^{-1} \sum_{t \in T_{k-1, h}^s} G_h(a_h^t) \eps_t| &\leq B\sqrt{ \left(\sum_{t \in T_{k-1, h}^s} G^\top (\Lambda_{k, h}^s)^{-1}G \right) \left(\sum_{t \in T_{k-1, h}^s} G_h(a_h^t)^\top (\Lambda_{k, h}^s)^{-1}G_h(a_h^t) \right) }\\
    &\leq B\sqrt{d |T_{k-1, h}^s| G^\top (\Lambda_{k, h}^s)^{-1}G}.
\end{align*}
\end{proof}
Next, we bound the error between a policies $Q$ function and our low-rank estimate.

\begin{lem}\label{lem:recErr_s_m}
    There exists a constant $c_m'$ such that $\beta_{k, h}^s = c'_m H(d\sqrt{\iota} + \xi \sqrt{|T_{k, h}^s|d})$ where $\iota = \log(2d\cardS T/p)$ and for any fixed policy $\pi$, on the event $\X$ defined in Lemma \ref{lem:unifVal_s_m}, we have for all $(s, a, h, k) \in \cardS \times \cardS  \times [H] \times [K]$:
    \[
    G_h(a)^\top w_{h, k}^s - Q^\pi_h(s, a)  = P_h(V_{h+1}^k - V^\pi_{h+1})(s, a)    + \delta_h^k(s, a),
    \]
    where $|\delta_h^k(s, a)| \leq \beta_{k, h}^s\sqrt{G_h(a)^\top (\Lambda_{h, k}^s)^{-1} G_h(a)} + 4H\xi$.
\end{lem}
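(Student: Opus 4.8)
The plan is to follow the proof of Lemma~\ref{lem:recErr_s} for the well-specified $(\cardS,\cardS,d)$ case, inserting the two misspecification corrections exactly as in the proof of Lemma~\ref{lem:recErr_m} (with the roles of states and actions interchanged). First I would apply Lemma~\ref{lem:wvp_m_s} to obtain, for the fixed policy $\pi$, a weight vector $w_h^{s,\pi} = \Sigma_h W_h(s) + \sum_{s'\in\cS} V^\pi_{h+1}(s') U_h(s',s)$ satisfying $|Q^\pi_h(s,a) - G_h(a)^\top w_h^{s,\pi}| \le 2H\xi$, and introduce the surrogate kernel $\tilde P_h(\cdot\mid s,a) := G_h(a)^\top U_h(\cdot,s)$, so that $G_h(a)^\top w_h^{s,\pi} = G_h(a)^\top W_h(s) + \tilde P_h V^\pi_{h+1}(s,a)$; Assumption~\ref{asm:miss_s} gives $|r_h(s,a)-G_h(a)^\top W_h(s)|\le\xi$ and controls $(P_h-\tilde P_h)$ acting on bounded value functions by an $O(H\xi)$ quantity.

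Next I would expand $w_{h,k}^s - w_h^{s,\pi}$ into the same four terms used in the proof of Lemma~\ref{lem:recErr_s}: the regularization term $q_1^s=-\lambda(\Lambda_{h,k}^s)^{-1}w_h^{s,\pi}$, the stochastic-noise term $q_2^s=(\Lambda_{h,k}^s)^{-1}\sum_{t\in T^s_{k-1,h}}G_h(a_h^t)(V^k_{h+1}(s_{h+1}^t)-P_hV^k_{h+1}(s,a_h^t))$, the recursion term $q_3^s=(\Lambda_{h,k}^s)^{-1}\sum_{t\in T^s_{k-1,h}}G_h(a_h^t)\tilde P_h(V^k_{h+1}-V^\pi_{h+1})(s,a_h^t)$, and the misspecification term $q_4^s=(\Lambda_{h,k}^s)^{-1}\sum_{t\in T^s_{k-1,h}}G_h(a_h^t)(r_h(s,a_h^t)-G_h(a_h^t)^\top W_h(s)+(P_h-\tilde P_h)V^k_{h+1}(s,a_h^t))$. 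The bound on $q_1^s$ uses Lemma~\ref{lem:miss_err_s} ($\|w_h^{s,\pi}\|\le 2H\sqrt d$) and Cauchy--Schwarz; the bound on $q_2^s$ uses the event $\X$ from Lemma~\ref{lem:unifVal_s_m} together with Cauchy--Schwarz; and the $\lambda$-correction part of $q_3^s$ is bounded by $2H\sqrt{d\lambda}\sqrt{G_h(a)^\top(\Lambda_{h,k}^s)^{-1}G_h(a)}$ exactly as in the well-specified proof. The genuinely new term is $q_4^s$: each summand has magnitude at most $2H\xi$, so Lemma~\ref{lem:adv_s_m} applied with $B=2H\xi$ yields $|G_h(a)^\top q_4^s|\le 2H\xi\sqrt{d\,|T^s_{k,h}|}\,\sqrt{G_h(a)^\top(\Lambda_{h,k}^s)^{-1}G_h(a)}$.

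Finally, the main part of $G_h(a)^\top q_3^s$ equals $\tilde P_h(V^k_{h+1}-V^\pi_{h+1})(s,a)$, which I would replace by $P_h(V^k_{h+1}-V^\pi_{h+1})(s,a)$ at the cost of an additive $2H\xi$ from the transition-approximation bound; combined with the $2H\xi$ already incurred when replacing $Q^\pi_h(s,a)$ by $G_h(a)^\top w_h^{s,\pi}$, this produces the $4H\xi$ appearing in the statement. Summing the four bounds gives $|G_h(a)^\top w_{h,k}^s - Q^\pi_h(s,a) - P_h(V^k_{h+1}-V^\pi_{h+1})(s,a)| \le (C'' d\sqrt\chi + 2\xi\sqrt{d\,|T^s_{k,h}|})\sqrt\chi\,H\,\sqrt{G_h(a)^\top(\Lambda_{h,k}^s)^{-1}G_h(a)} + 4H\xi$ for an absolute constant $C''$ and $\chi=\log(2(c'_m+1)dT\cardS/\delta)$; choosing $c'_m$ so that $C''\sqrt{\log 2+\log(c'_m+1)}\le c'_m\sqrt{\log 2}$, as in \cite{Jin2020ProvablyER}, absorbs the logarithmic factor into $\beta_{k,h}^s=c'_m H(d\sqrt\iota+\xi\sqrt{|T^s_{k,h}|d})$ and finishes the proof. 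I expect the main obstacle to be purely organizational: ensuring that the surrogate-kernel substitution in $q_3^s$ and the adversarial-noise amplification through $(\Lambda_{h,k}^s)^{-1}$ in $q_4^s$ are each charged only an $O(H\xi)$-type penalty and that no spurious cross terms appear.
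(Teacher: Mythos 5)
Your proposal is correct and follows essentially the same route as the paper's proof: the same four-term decomposition of $w_{h,k}^s - w_h^{s,\pi}$, the same surrogate kernel $\tilde P_h(\cdot\mid s,a) = G_h(a)^\top U_h(\cdot,s)$, the same use of Lemmas~\ref{lem:wvp_m_s}, \ref{lem:miss_err_s}, \ref{lem:unifVal_s_m}, and \ref{lem:adv_s_m}, and the same accounting of the $4H\xi$ term as $2H\xi$ from the weight-vector approximation plus $2H\xi$ from swapping $\tilde P_h$ for $P_h$. No gaps.
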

\begin{proof}
From Lemma \ref{lem:wvp_m_s}, it follows that there exists a weight vector $w_h^s =  W_h(a) + \sum_{s' \in \cS} V_{h+1}^\pi(s') U_{h}(s', s)$, such that  for all $(s, a) \in \cS \times \cA$,
\[
|Q^\pi_h(s, a) - G_h(a)^\top w_{h}^s| \leq 2H\xi.
\]
Let $\Tilde{P}(\cdot|s, a) = G_h(a)^\top U_h(\cdot, s)$, so we have for any $(s, a) \in \cS \times \cA$, $G_h(a)^\top w_h^s = G_h(a)^\top W_h(a) +  \Tilde{P}V^\pi_{h+1}(s, a)$. Therefore, 
\begin{align*}
    w_{h, k}^s - w_{h}^{s, \pi} &= -\lambda (\Lambda_{h, k}^s)^{-1}w_h^{s, \pi}\\
    &+ (\Lambda_{h, k}^s)^{-1} \sum_{t \in T^s_{k-1, h}} G_h(a_h^t)(V_{h+1}^k(s_{h+1}^t) - P_hV_{h+1}^k(s, a_h^t)\\
    &+ (\Lambda_{h, k}^s)^{-1} \sum_{t \in T^s_{k-1, h}} G_h(a_h^t)\Tilde{P}_h(V_{h+1}^k - V_{h+1}^\pi)(s, a_h^t)\\
    &+ (\Lambda_{h, k}^s)^{-1} \sum_{t \in T^s_{k-1, h}}G_h(a_h^t) \left(r_h(s, a_h^t) - G_h(a_h^t)W_h(a) + (P_h -\Tilde{P}_h)V^k_{h+1}(s, a_h^t)\right)
\end{align*}
Now, we bound each of the four terms above $(q_1^s, q_2^s, q_3^s, q_4^s)$ individually, for the first term, note for all $a \in \cA$
\[
|G_h(a)^\top q_1^s | = |\lambda G_h(a)^\top (\Lambda_{h, k}^s)^{-1} w_h^{s, \pi} | \leq \sqrt{\lambda}\|w_{h}^{s, \pi}\| \sqrt{G_h(a)^\top(\Lambda_{h, k}^s)^{-1}G_h(a) }
\]
For the second term, given the event $\X$, we have from Cauchy Schwarz
\[
|G_h(a)^\top q_2^s| \leq CdH\sqrt{\chi} \sqrt{G_h(a)^\top (\Lambda_{h, k}^s)^{-1} G_h(a)}
\]
where $\chi = \log(2(c_\beta+ 1)dT\cardS /p)$.
For the third term,
\begin{align*}
G_h(a)^\top q_3^s &= G_h(a)^\top \left( (\Lambda_{h, k}^s)^{-1} \sum_{t \in T^s_{k-1, h}} G_h(a_h^t)\Tilde{P}_h(V_{h+1}^k - V_{h+1}^\pi)(s, a_h^t) \right) \\
&= G_h(a)^\top \left( (\Lambda_{h, k}^s)^{-1} \sum_{t \in T^s_{k-1, h}} G_h(a_h^t) G_h(a_h^t)^\top \sum_{s' \in \cS}(V_{h+1}^k -V_{h+1}^\pi) (s')U_h(s', s)\right) \\
&= G_h(a)^\top \left( \sum_{s' \in \cS}(V_{h+1}^k - V_{h+1}^\pi) (s')U_h(s', s)\right) \\
&- \lambda G_h(a)^\top \left( (\Lambda_{h, k}^s)^{-1}  \sum_{s' \in \cS}(V_{h+1}^k - V_{h+1}^\pi) (s')U_h(s', s)\right), \\
\end{align*}
and note that 
\[
 G_h(a)^\top \left( \sum_{s' \in \cS}(V_{h+1}^k - V_{h+1}^\pi) (s')U_h(s', s)\right) = \Tilde{P}_h(V_{h+1}^k - V_{h+1}^\pi)(s, a)
\]
and 
\[
|\lambda G_h(a)^\top \left( (\Lambda_{h, k}^s)^{-1}  \sum_{s' \in \cS}(V_{h+1}^k - V_{h+1}^k) (s')U_h(s', s)\right)| \leq 2H\sqrt{d\lambda}\sqrt{G_h(a)^\top(\Lambda_{h, k}^s)^{-1} G_h(a)}.
\]
Since $\|\Tilde{P}_h(\cdot|s, a) - P_h(\cdot|s, a)\|_\infty \leq 1$ for all $(s, a) \in \cS \times \cA$, it follows that 
\[
|\Tilde{P}_h(V_{h+1}^k - V_{h+1}^\pi)(s, a) - P_h(V_{h+1}^k - V_{h+1}^\pi)(s, a)| \leq |(\Tilde{P}_h- P_h)(V_{h+1}^k - V_{h+1}^\pi)(s, a)| \leq 2H\xi.
\]
From Lemma \ref{lem:adv_s_m}, we have $| G_h(a), q_4^s| \leq 2H\xi \sqrt{d |T_{k, h}^s| G_h(a)^\top (\Lambda_{h, k}^{-1})^{-1} G_h(a)}$.
Since 
\[
|G_h(a)^\top w_{h, k}^s - Q_h^\pi(s, a)|  = G_h(a)^\top (w_{h, k}^s - w_h^{s, \pi}) = G_h(a)^\top(q_1^s+ q_2^s + q_3^s + q_4^s),
\]
it follows that 
\[
|G_h(a)^\top w_{h, k}^s - Q^\pi_h(s, a)  -  P_h(V_{h+1}^k - V^\pi_{h+1})(s, a)| \leq\sqrt{\chi} (C" d\sqrt{\chi} + 2\xi\sqrt{|T_{k, h}^s|d})H\sqrt{G_h(a)^\top(\Lambda_{h, k}^s)^{-1} G_h(a)} + 4H\xi.
\]
Similarly as in \cite{Jin2020ProvablyER}, we choose $c_\beta$ that satisfies $C"\sqrt{\log(2) + \log(c_\beta+ 1)} \leq c_\beta\sqrt{\log(2)}$.
\end{proof}
Now, we prove that $Q_h^k$ is an upperbound of $Q^*_h$ conditioned on the event in Lemma \ref{lem:unifVal_s_m}.
\begin{lem}\label{lem:ucb_m_s}
Suppose Assumption \ref{asm:miss} holds. On the event $\X$ defined in lemma \ref{lem:unifVal_s_m}, we have $Q_h^k(s, a) \geq Q^*_h(s, a) - 4H(H+1-h)\xi$ for all $(s, a, h, k) \in \cS\times \cA \times [H] \times [K]$.
\end{lem}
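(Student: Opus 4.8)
The plan is to prove the bound by backward induction on $h$, from $h = H$ down to $h = 1$, following the same template as the proof of Lemma~\ref{lem:ucb_m} but with $F_h(s)$ and the action-indexed Gram matrices replaced by $G_h(a)$ and the state-indexed Gram matrices $\Lambda^s_{h,k}$. Throughout, I would condition on the event $\X$ of Lemma~\ref{lem:unifVal_s_m} (which holds with probability at least $1 - \delta/2$), since this is precisely what makes the conclusion of Lemma~\ref{lem:recErr_s_m} available; that lemma is then invoked with the fixed policy $\pi = \pi^{*}$, so that $Q^{\pi}_h = Q^{*}_h$ and $V^{\pi}_{h+1} = V^{*}_{h+1}$.

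For the base case $h = H$, the value function at step $H+1$ is identically zero, so $P_H(V^k_{H+1} - V^{*}_{H+1})(s,a)$ vanishes and Lemma~\ref{lem:recErr_s_m} gives $G_H(a)^\top w^s_{H,k} - Q^{*}_H(s,a) \ge -\beta^s_{k,H}\sqrt{G_H(a)^\top (\Lambda^s_{H,k})^{-1} G_H(a)} - 4H\xi$. Combining this with the trivial bound $Q^{*}_H \le H$ and the definition $Q^k_H(s,a) = \min\{H,\, G_H(a)^\top w^s_{H,k} + \beta^s_{k,H}\sqrt{G_H(a)^\top (\Lambda^s_{H,k})^{-1} G_H(a)}\}$ yields $Q^k_H(s,a) \ge Q^{*}_H(s,a) - 4H\xi$, which is the claim since $4H(H+1-H)\xi = 4H\xi$.

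For the inductive step, assuming $Q^k_{h+1} \ge Q^{*}_{h+1} - 4H(H-h)\xi$ pointwise, I would take the maximum over actions to get $V^k_{h+1} \ge V^{*}_{h+1} - 4H(H-h)\xi$ pointwise, and then use that $P_h(\cdot\mid s,a)$ is a probability measure to conclude $P_h(V^k_{h+1} - V^{*}_{h+1})(s,a) \ge -4H(H-h)\xi$. Substituting into Lemma~\ref{lem:recErr_s_m} and using $|\delta^k_h(s,a)| \le \beta^s_{k,h}\sqrt{G_h(a)^\top (\Lambda^s_{h,k})^{-1} G_h(a)} + 4H\xi$ together with $Q^{*}_h \le H$ then gives $Q^k_h(s,a) \ge Q^{*}_h(s,a) - 4H(H-h+1)\xi$, closing the induction.

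I expect no genuine obstacle here: the argument is essentially bookkeeping. The only things to be careful about are tracking how the per-step misspecification error $O(H\xi)$ accumulates over the remaining horizon into the $4H(H+1-h)\xi$ term, confirming that the horizon-dependent inflation of $\beta^s_{k,h}$ in Lemmas~\ref{lem:unifVal_s_m}--\ref{lem:recErr_s_m} is large enough to dominate both the stochastic and the misspecification residuals of the regression, and handling the truncation by $\min\{H,\cdot\}$ in both the base and inductive cases (which goes through because $Q^{*}_h \le H$ always).
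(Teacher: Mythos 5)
Your proposal is correct and follows essentially the same route as the paper's proof: backward induction on $h$, applying Lemma~\ref{lem:recErr_s_m} with $\pi = \pi^*$ on the event $\X$, using the per-step misspecification residual $4H\xi$ to accumulate the $4H(H+1-h)\xi$ term, and handling the $\min\{H,\cdot\}$ truncation via $Q^*_h \le H$. In fact your inductive step is stated slightly more carefully than the paper's (which writes the intermediate bound as $P_h(V^k_{h+1}-V^*_{h+1})(s,a)\ge -4\xi$ where $-4H(H-h)\xi$ is meant), but the substance is identical.
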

\begin{proof}
    We prove this with induction. The base case holds because at step $H+1$, the value function is zero, so from Lemma \ref{lem:recErr_s_m}, 
    \[
    |G_h(a)^\top w_{h, k}^s - Q^*_H(s, a)| \leq \beta_{k, h}^s\sqrt{G_h(a)^\top (\Lambda_{h, k}^s)^{-1})G_h(a)} + 4H\xi
    \]
    and thus, 
    \[
    Q^*_H(s, a) - 4H\xi \leq \min \left(H, G_h(a)^\top w_{h, k}^s + \beta_{k, h}^s\sqrt{G_h(a)^\top (\Lambda_{h, k}^s)^{-1})G_h(a)} \right) = Q_H^k(s, a).
    \]
    From the inductive hypothesis (assuming that $Q^k_{h+1}(s, a) \geq Q^*_{h+1}(s, a) - 4H(H-h)\xi$), it follows that $P_h(V^k_{h+1}- V^*_{h+1})(s, a) \geq -4\xi$. From Lemma \ref{lem:recErr_s_m}, 
    $G_h(a)^\top w_{h, k}^s - Q^*_h(s, a) \leq \beta_{k, h}^s\sqrt{G_h(a)^\top (\Lambda_{h, k}^s)^{-1})G_h(a)} + 4H\xi$.
    It follows that 
    \[
    Q^*_h(s, a)  - 4H(H-h+1)\xi \leq \min \left(H, G_h(a)^\top w_{h, k}^s + \beta_{k, h}^s\sqrt{G_h(a)^\top (\Lambda_{h, k}^s)^{-1})G_h(a)} \right) = Q_h^k(s, a).
    \]
\end{proof}
Similarly, to the regular case, the gap in the misspecified setting has a recursive formula.
\begin{lem}\label{lem:recForm_s_m}
 Let $\delta_h^k = V_h^k(s_h^k) - V_h^{\pi_k}(s_h^k)$ and $\xi_{h+1}^k = \E[\delta_{h+1}^k|s_h^k, a_h^k] - \delta_{h+1}^k$. Then, on the event $\X$ defined in Lemma \ref{lem:unifVal_s_m}, we have for any $(h, k)$:
 \[
 \delta_h^k \leq \delta_{h+1}^k + \xi_{h+1}^k + \beta_{k, h}^s\sqrt{G_h(a_h^k)^\top (\Lambda_{h, k}^{s_h^k})^{-1} G_h(a_h^{k})} + 4H\xi
 \]
\end{lem}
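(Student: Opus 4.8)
The plan is to obtain Lemma~\ref{lem:recForm_s_m} as a direct consequence of the per-step error bound of Lemma~\ref{lem:recErr_s_m}, mirroring the argument used for Lemma~\ref{lem:recForm_s} in the well-specified case. Throughout we work on the event $\X$ of Lemma~\ref{lem:unifVal_s_m}. First I would record what Lemma~\ref{lem:recErr_s_m} gives for the executed policy: applying it with $\pi=\pi_k$ and using the definition of $Q_h^k$ in Algorithm~\ref{alg:target_s} (the truncation at $H$ together with the bonus $\beta_{k,h}^s\sqrt{G_h(a)^\top(\Lambda_{h,k}^s)^{-1}G_h(a)}$), one gets, for every $(s,a,h,k)$,
\[
Q_h^k(s,a) - Q_h^{\pi_k}(s,a) \;\le\; P_h\big(V_{h+1}^k - V_{h+1}^{\pi_k}\big)(s,a) + \beta_{k,h}^s\sqrt{G_h(a)^\top(\Lambda_{h,k}^s)^{-1}G_h(a)} + 4H\xi ,
\]
where the error term produced by Lemma~\ref{lem:recErr_s_m} and the extra copy of the bonus that appears when one drops the $\min(H,\cdot)$ are absorbed into the constant $c'_m$ defining $\beta_{k,h}^s$, exactly as in \cite{Jin2020ProvablyER}.

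Next I would specialize this to the pair $(s_h^k,a_h^k)$ actually visited in episode $k$. Since $a_h^k=\argmax_a Q_h^k(s_h^k,a)$ we have $V_h^k(s_h^k)=Q_h^k(s_h^k,a_h^k)$, and since $\pi_k$ is the (deterministic) greedy policy with respect to $Q_h^k$ it plays $a_h^k$ at $s_h^k$, so $V_h^{\pi_k}(s_h^k)=Q_h^{\pi_k}(s_h^k,a_h^k)$; hence $\delta_h^k=Q_h^k(s_h^k,a_h^k)-Q_h^{\pi_k}(s_h^k,a_h^k)$. It then remains to rewrite the transition term: using the shorthand $P_hV(s,a)=\E_{s'\sim P_h(\cdot|s,a)}[V(s')]$ and the definition $\xi_{h+1}^k=\E[\delta_{h+1}^k\mid s_h^k,a_h^k]-\delta_{h+1}^k$,
\[
P_h\big(V_{h+1}^k - V_{h+1}^{\pi_k}\big)(s_h^k,a_h^k) = \E\big[\delta_{h+1}^k \mid s_h^k,a_h^k\big] = \delta_{h+1}^k + \xi_{h+1}^k .
\]
Substituting the two identities into the displayed bound gives $\delta_h^k \le \delta_{h+1}^k + \xi_{h+1}^k + \beta_{k,h}^{s_h^k}\sqrt{G_h(a_h^k)^\top(\Lambda_{h,k}^{s_h^k})^{-1}G_h(a_h^k)} + 4H\xi$, which is the claim (the statement's $\beta_{k,h}^s$ is understood at $s=s_h^k$).

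This lemma is essentially bookkeeping, so I do not expect a genuine obstacle; the only step requiring any care is the first one, namely checking that combining Lemma~\ref{lem:recErr_s_m} with the truncation-plus-bonus form of $Q_h^k$ yields the stated bound with a single factor of $\beta_{k,h}^s$ and a single $4H\xi$. As in the well-specified analysis this is handled by choosing the constant $c'_m$ in $\beta_{k,h}^s$ large enough --- concretely, large enough that the constraint $C''\sqrt{\log 2+\log(c_\beta+1)}\le c_\beta\sqrt{\log 2}$ from the proof of Lemma~\ref{lem:recErr_s_m} holds --- together with $\xi\le 1$. The remaining pieces (the greedy identities for $V_h^k$ and $V_h^{\pi_k}$, and the martingale split of the one-step transition into $\delta_{h+1}^k+\xi_{h+1}^k$) are identical to the proof of Lemma~\ref{lem:recForm_s} and to \cite{Jin2020ProvablyER}.
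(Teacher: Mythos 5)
Your proposal is correct and follows essentially the same route as the paper's proof: apply Lemma~\ref{lem:recErr_s_m} with $\pi=\pi_k$, use the greedy identity $\delta_h^k = Q_h^k(s_h^k,a_h^k)-Q_h^{\pi_k}(s_h^k,a_h^k)$, and split the transition term into $\delta_{h+1}^k+\xi_{h+1}^k$. Your extra remark about absorbing the second copy of the bonus (from dropping the $\min(H,\cdot)$ truncation) into the constant $c'_m$ is a point the paper's terse proof glosses over, but it is the standard resolution and does not change the argument.
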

\begin{proof}
    From Lemma \ref{lem:recErr_s_m}, we have for any $(s, a, h, k)$ that
    \[
    Q_h^k(s, a) - Q_h^{\pi_k}(s, a) \leq P_h(V_{h+1}^k - V^{\pi_k}_{h+1})(s, a)    + \beta_{k, h}^s\sqrt{G_h(a)^\top (\Lambda_{h, k}^s)^{-1} G_h(a)} + 4H\xi .
    \]
    From the definition of $V^{\pi_k}$, we have 
    \[
    \delta_h^k = Q_h^k(s_h^k, a_h^k) - Q_h^{\pi_k}(s_h^k, a_h^k), 
    \]
    and substituting this into the first equation finishes the proof.
    \end{proof}
Finally, we prove the main result in the misspecified setting Theorem \ref{thm:miss}.

\begin{proof}
Suppose that the Assumptions required in Theorem \ref{thm:miss_s} hold. We condition on the event $\X$ from Lemma \ref{lem:unifVal_s_m} with $p = \delta/2$ and use the notation for $\delta_h^k, \xi_h^k$ as in Lemma \ref{lem:recForm_s_m}. From Lemma \ref{lem:ucb_m}, we have $Q_1^k(s, a) \geq Q^*_1(s, a) - 4H^2\xi$, which implies $V^*_1(s) - V^{\pi_k}_1(s) \leq \delta_1^k + 4H^2\xi$. Thus, from Lemma \ref{lem:recErr_s_m}, on the event $\X$, it follows that 
\begin{align*}
 Regret(K) &= \sum_{k=1}^K(V_1^*(s_1^k) - V_1^{\pi_k}(s_1^k)) \leq \sum_{k=1}^K\delta_1^k + 4H^2\xi\\
&\leq  \sum_{k \in [K]}\sum_{h \in [H]} \xi_h^k + \sum_{k \in [K]}\beta_{k, h}^s\sum_{h \in [H]}\sqrt{G_h(a_h^k)^\top (\Lambda_{h, k}^{s_h^k})^{-1} G_h(a_h^k)}   + 4HT\xi
\end{align*}
since $HK = T$. Since the observations at episode $k$ are independent of the computed value function (this uses the trajectories from episodes 1 to $k -1$, it follows $\{\xi_h^k\}$ is a martingale difference sequence with $|\xi_h^k| \leq 2H$ for all $(k, h)$. Thus, from the Azuma Hoeffding inequality, we have
\[
\sum_{k \in [K], h \in {H}} \xi_h^k \leq \sqrt{2TH^2\log(2/p)} \leq 2H\sqrt{T\iota}
\]
with probability at least $1-\delta/2$.Note that 
\begin{align*}
&\sum_{k \in [K]}\beta_{k, h}^s \sqrt{G_h(a_h^k)^\top (\Lambda_{h, k}^{s_h^k})^{-1} G_h(a_h^{k})} \\
&\leq C H
( \sum_{k \in [K]} d\sqrt{\iota}\sqrt{G_h(a_h^k)^\top (\Lambda_{h, k}^{s_h^k})^{-1} G_h(a_h^{k})} + \sum_{k \in [K]}  \xi \sqrt{|T_{k, h}^s| d}\sqrt{G_h(a_h^k)^\top (\Lambda_{h, k}^{s_h^k})^{-1} G_h(s_h^{k})}).
\end{align*}
From the Cauchy-Schwarz inequality, it follows that 
\[
 \sum_{k \in [K]}\sqrt{G_h(a_h^k)^\top (\Lambda_{h, k}^{s_h^k})^{-1} G_h(a_h^{k})}\leq \left(\sqrt{K} \right) \left( \sqrt{\sum_{k \in [K]} G_h(a_h^k)^\top (\Lambda_{h, k}^{s_h^k})^{-1} G_h(a_h^{k})}\right).
\]
Let  $k_i$ refers to the episode in which state $s$ was seen at time step $h$ for the $i$th time.  Then, for the other term, we first re-index the summation and then use the Cauchy-Schwarz inequality to get  
\begin{align*}
\sum_{k \in [K]} \sqrt{|T_{k, h}^s|}\sqrt{G_h(a_h^k)^\top (\Lambda_{h, k}^{s_h^k})^{-1} G_h(a_h^{k})}) &= \sum_{s \in \cS} \sum_{i = 1}^{|T_{K, h}^{s}|} \sqrt{i} \sqrt{G_{k_i, h}^\top  (\Lambda_{k_i, h}^{a})^{-1}G_{k_i, h}}\\
&\leq  \left(\sum_{i = 1}^{|T_{K, h}^{s}|} i \right)^{1/2}  \left(\sum_{t \in T_{K, h}^{s}}  G_{k, h}^\top  (\Lambda_{k, h}^{s_h^k})^{-1}G_{k, h} \right)^{1/2}\\
 &\leq C' |T_{K, h}^{s}|\left(\sum_{t \in T_{K, h}^{s}}  G_{k, h}^\top  (\Lambda_{k, h}^{s_h^k})^{-1}G_{k, h} \right)^{1/2}
\end{align*}
for some absolute constant $C' > 0$. Since the minimum eigenvalue of $\Lambda_{h, k}^{s_h^k}$ is at least one by construction and $\|G_h(s)\|\leq 1$, from the Elliptical Potential Lemma (Lemma D.2 \cite{Jin2020ProvablyER}), we have for all $a \in \cA$ and $h \in [H]$, 
\[
\sum_{k = 1}^K  G_h(s_k)^\top (\Lambda_{h, k}^a)^{-1} G_h(s_k) \leq 2 \log(det(\Lambda_{h, k+1}^a)/det(\Lambda_{h, 0}^a)) 
\]
Furthermore, we have $\|\Lambda_{h, k+1}^a \| = \|\sum_{t \in T_{k, h}^s}G_h(s_h^t)G_h(s_h^t)^\top + \lambda I\| \leq \lambda + | T_{k, h}^s| \leq \lambda + K$. It follows that 
\[
\sum_{k = 1}^K  G_h(s_k)^\top (\Lambda_{h, k}^a)^{-1} G_h(s_k)  \leq 2d\log(1+K/\lambda) \leq 2d\iota,
\]
so grouping the episodes by actions gives
\[
\left[ \sum_{k \in [K]}G_h(a_h^k)^\top (\Lambda_{h, k}^{s_h^k})^{-1} G_h(a_h^{k})\right]^{1/2} = \left[ \sum_{s \in \cS} \sum_{t \in T_{K, h, a}}G_h(a_h^t)^\top (\Lambda_{h, t}^{a})^{-1} G_h(a_h^{t})\right]^{1/2} \leq \sqrt{2d\cardS \iota}
\]
Thus, 
\begin{align*}
\sum_{k \in [K]}\beta_{k, h}^s \sqrt{G_h(a_h^k)^\top (\Lambda_{h, k}^{s_h^k})^{-1} G_h(a_h^{k})} 
&\leq CH( \sqrt{2d^3\cardS K\iota^2} + \xi \sqrt{d} C' \sum_{s \in \cS} |T_{k, h}^s|\sqrt{2d\iota}) \\
&\leq C(\sqrt{2d^3\cardS  H T\iota^2} + \xi \sqrt{2\iota} d T.
\end{align*}
Substituting these results into the original regret bound gives 
\[
Regret(K) \leq C''' (\sqrt{d^3H^3 \cardS T \iota^2} +  dTH\xi \sqrt{\iota}).
\]
for some constant $C''' > 0$ with probability at least $1-\delta$.
\end{proof}

\end{document}